\def\eqref#1{equation~\ref{#1}}
\def\1{\bm{1}}
\def\vs{{\bm{s}}}
\DeclareMathAlphabet{\mathsfit}{\encodingdefault}{\sfdefault}{m}{sl}
\SetMathAlphabet{\mathsfit}{bold}{\encodingdefault}{\sfdefault}{bx}{n}
\DeclareMathOperator*{\argmax}{arg\,max}
\def\eg{e.g.\xspace}
\def\ie{i.e.\xspace}
\def\vs{vs.\xspace}
\def\wrt{w.r.t.\xspace}
\theoremstyle{plain}
\newtheorem{theorem}{Theorem}[section]
\newtheorem{lemma}[theorem]{Lemma}
\theoremstyle{definition}
\newtheorem{definition}[theorem]{Definition}
\newtheorem{assumption}[theorem]{Assumption}
\theoremstyle{remark}
\icmltitlerunning{Effects of Exponential Gaussian Distribution on (Double Sampling) Randomized Smoothing}
\begin{document}
\twocolumn[
\icmltitle{	
Effects of Exponential Gaussian Distribution on \\
(Double Sampling) Randomized Smoothing}

\icmlsetsymbol{corresponding}{\dag}

\begin{icmlauthorlist}
\icmlauthor{Youwei Shu}{thu}
\icmlauthor{Xi Xiao}{thu,corresponding}
\icmlauthor{Derui Wang}{data61}
\icmlauthor{Yuxin Cao}{thu}
\icmlauthor{Siji Chen}{thu}
\icmlauthor{Minhui Xue}{data61}
\icmlauthor{Linyi Li}{uiuc,sfu}
\icmlauthor{Bo Li}{uiuc,uchi}
\end{icmlauthorlist}

\icmlaffiliation{thu}{Shenzhen International Graduate School, Tsinghua University}
\icmlaffiliation{data61}{CSIRO’s Data61}
\icmlaffiliation{uiuc}{University of Illinois Urbana-Champaign}
\icmlaffiliation{uchi}{University of Chicago}
\icmlaffiliation{sfu}{Simon Fraser University}

\icmlcorrespondingauthor{Youwei Shu}{shuyw21@mails.tsinghua.edu.com}
\icmlcorrespondingauthor{Xi Xiao}{xiaox@sz.tsinghua.edu.cn}

\icmlkeywords{randomized smoothing, certified robustness, statistical distribution, curse of dimensionality}

\vskip 0.3in]

\printAffiliationsAndNotice{\icmlCorrespondingAuthor} 

\doparttoc 
\faketableofcontents 
\thispagestyle{empty}

\begin{abstract}
Randomized Smoothing (RS) is currently a scalable certified defense method providing robustness certification against adversarial examples. 
Although significant progress has been achieved in providing defenses against $\ell_p$ adversaries,
the interaction between the smoothing distribution and the robustness certification still remains vague.
In this work, we comprehensively study the effect of two families of distributions, named Exponential Standard Gaussian (ESG) and Exponential General Gaussian (EGG) distributions, on Randomized Smoothing and Double Sampling Randomized Smoothing (DSRS). 
We derive an analytic formula for ESG's certified radius, which converges to the origin formula of RS as the dimension $d$ increases.  
Additionally, we prove that EGG can provide tighter constant factors than DSRS in providing $\Omega(\sqrt{d})$ lower bounds of $\ell_2$ certified radius, and thus further addresses the curse of dimensionality in RS. 
Our experiments on real-world datasets confirm our theoretical analysis of the ESG distributions, that they provide almost the same certification under different exponents $\eta$ for both RS and DSRS. In addition, EGG brings a significant improvement to the DSRS certification, but the mechanism can be different when the classifier properties are different.
Compared to the primitive DSRS, the increase in certified accuracy provided by EGG is prominent, up to 6.4\% on ImageNet. Our code is available at \url{https://github.com/tdano1/eg-on-smoothing}.
\end{abstract}

\section{Introduction}

Deep neural networks (DNNs) have achieved great success in various applications. 
However, DNNs are susceptible to adversarial perturbations in their inputs. 
To tackle the problem of adversarial attacks, a series of empirical defenses, such as adversarial training~\citep{goodfellow2014, kurakin2016, madry2017}, have been proposed. 
Nevertheless, this strategy quickly evolved into an arms race because no matter how robust the DNNs are, well-crafted adversarial examples are capable of bypassing the defenses~\citep{carlini2017, athalye2018, uesato2018}.
Recently, researchers proposed and developed certified defenses~\citep{wong2018a, wong2018b, raghunathan2018}, a series of methodologies that can output the bounds of perturbed inputs, and provide provable robustness for classifiers. Aligned with these exact certified defense methods, randomized smoothing (RS)~\citep{lecuyer2019, li2019, cohen2019} appears as a certifying tool based on probability, and gains in popularity since it can provide scalable robustness certifications for black-box functions.
~\citet{cohen2019} first introduced the Neyman-Pearson (NP) lemma into the certification, which provided tight $\ell_2$ certified radii for linear classifiers. 
Later, a series of attempts further extended the certification process of RS using functional optimization frameworks~\citep{zhang2020, dvijothamframework}.

However, current research on the distributions for randomized smoothing is far from sufficient. After \citet{zhang2020}, \citet{yang2020} and \citet{li2022}, this branch of study seems to be dormant. Intuitively, investigating the interrelationship of distribution and the RS method is not only beneficial for understanding and solving the limit of RS, but also liable to excavate the mathematical rules hidden deep in RS. In this work, we systematically study the ESG and EGG distributions in the RS framework, providing a detailed theoretical analysis for both the new distribution families. In addition, we conduct extensive experiments on real-world datasets, testify our theory on ESG, and complete the analysis for EGG. Overall, we conclude that the ESG distributions share almost the same certification for RS, and EGG's certification can be significantly improved using different exponents $\eta$, showing the better potential to lessen the curse of dimensionality than the SOTA solution \citep{li2022}.

\begin{figure}[t!]
	\centering
	\begin{subfigure}{0.49\linewidth}
		\centering
		\includegraphics[height=1.0\linewidth, width=1.0\linewidth]{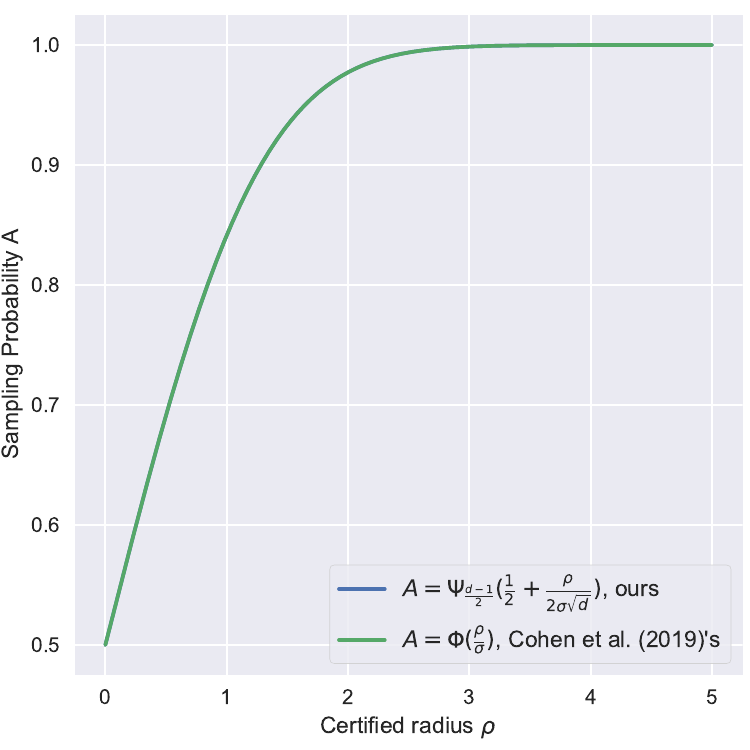}
	\end{subfigure}
	\centering
	\begin{subfigure}{0.49\linewidth}
		\centering
		\includegraphics[height=1.0\linewidth, width=1.0\linewidth]{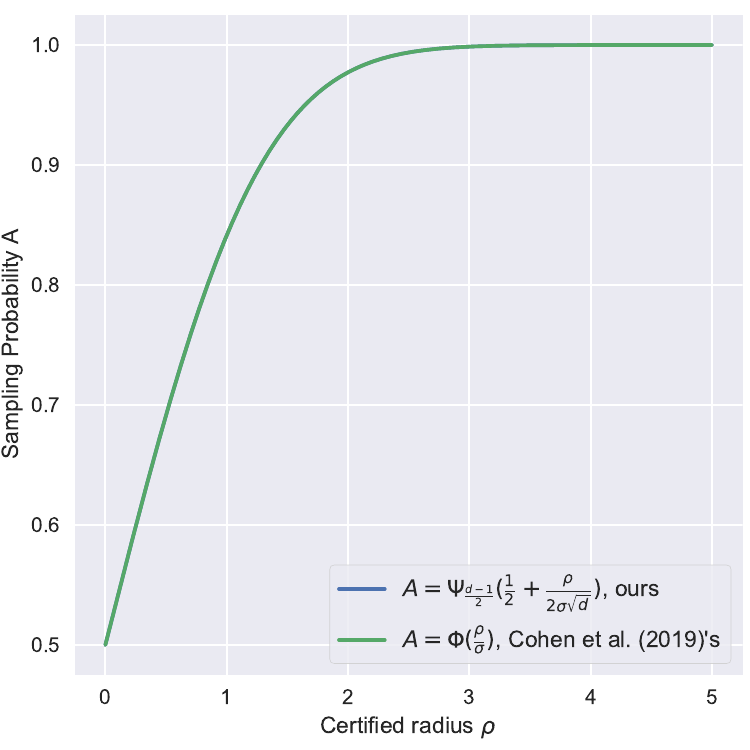}
	\end{subfigure}
	\caption{Our analytic formula for ESG highly approximates \citet{cohen2019}'s at a sufficiently large dimension. (Both $\sigma=1.0$. Left: $d=3072$, Right: $d=150224$.)}
 \label{approx_cohen}
\end{figure}
The ESG distributions are extensions of Gaussian by generalizing the exponent $\eta$ from $2$ to $\mathbb{R}_+$. As shown by \citep{yang2020}, the Gaussian distribution provides the SOTA certification for $\ell_2$ certified radius in RS. In this work, we report that their views can be augmented because in addition to Gaussian, the family of ESG distributions can provide almost identical certification compared to Gaussian. Namely, we find that ESG can tie the SOTA distribution for RS in a high-dimensional setting. Concretely, we comprehensively analyze the computational method for the certified radius in ESG. By reversing the traditional calculation procedure and proposing two asymptotically mild assumptions, we figure out the simple analytic relation between sampling probability and certified radius, which is quite rare among all the distributions. Our theoretical analysis is perfectly enhanced by numerical simulations and experiments on real-world datasets. Moreover, we find the analytic formula for ESG converges to the results derived by \citet{cohen2019}, which reveals a mathematical connection between the beta distribution and the normal distribution.

Likewise, the EGG distributions are derivations of the General Gaussian distribution from the perspective of the exponent $\eta$. General Gaussian was introduced by \citet{zhang2020}, and exploited by DSRS \citep{li2022} to treat the curse of dimensionality in randomized smoothing. In this work, we further address the problem of the curse of dimensionality by tightening the lower bounds offered by DSRS via EGG. On the whole, the effect of EGG on RS certification largely depends on whether the base classifier satisfies a concentration property. In a more ideal case, where the concentration property is almost completely satisfied, EGG with a smaller $\eta$ can provide tighter constant factors for the lower bounds provided by DSRS, which further alleviates the curse of dimensionality. But for more general cases, especially when real classifiers do not satisfy the concentration property well, EGG with a larger $\eta$ gives better certified accuracy. To sum it up, despite different mechanisms, the introduction of EGG comprehensively improves robustness certifications provided by General Gaussian, both theoretically and practically. 

Our main contributions include:
\begin{itemize}
    \item For sufficiently large dimensions, we derive the analytic relation between certified radius and sampling probability for ESG distributions in the RS framework. 
    The analytic formula obtained from ESG is in essence convergent to the formula derived by \citet{cohen2019}.

    \item Our theoretical analysis of EGG shows that the current solution to the curse of dimensionality in RS can be deepened: injecting some EGG into DSRS can tighten the constant factors of the lower bound of the certified radius.

    \item Extensive experiments on real datasets and classifiers verify our conclusion that the certification from ESG remains almost unchanged with $\eta$, and EGG provides comprehensively improved certifications compared to General Gaussian. On ImageNet, the increase in certified accuracy brought about by EGG can reach up to 6.4\% compared to the baseline.
	
 \end{itemize}
 
 \begin{figure}[t!]
	\centering
	\begin{subfigure}{0.49\linewidth}
		\centering
		\includegraphics[height=1.0\linewidth, width=1.0\linewidth]{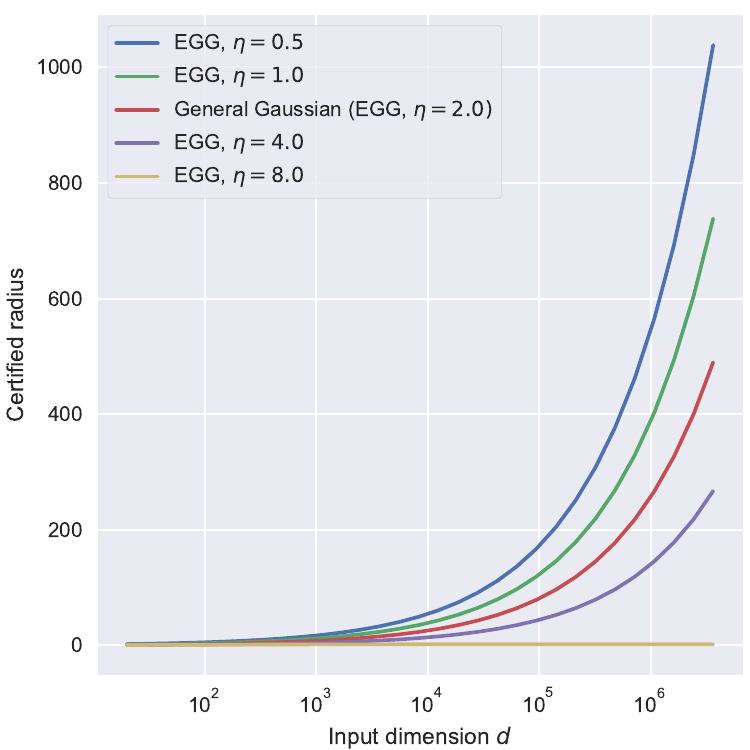}
	\end{subfigure}
	\centering
	\begin{subfigure}{0.49\linewidth}
		\centering
		\includegraphics[height=1.0\linewidth, width=1.0\linewidth]{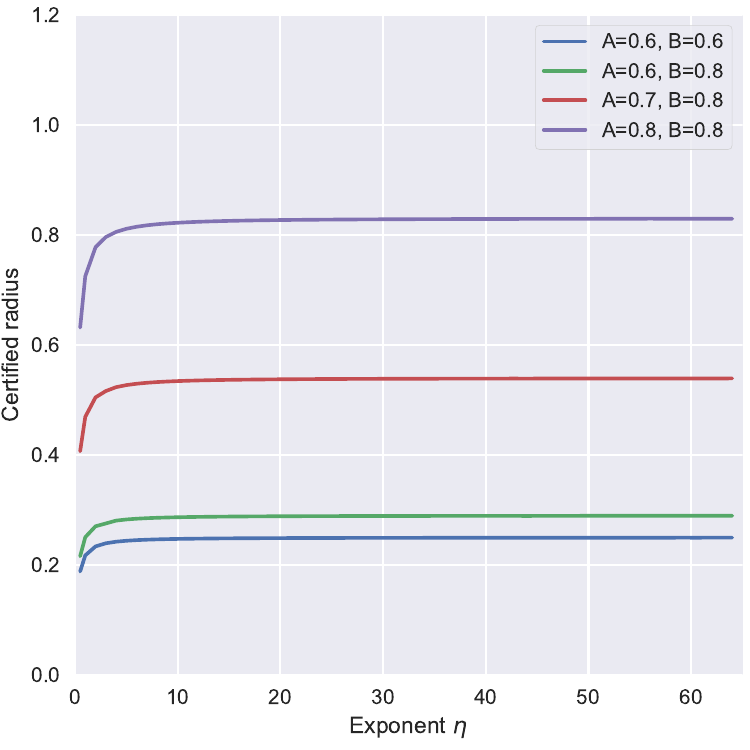}
	\end{subfigure}
	\caption{Numerical simulations for EGG in DSRS. Left: the concentration property holds ($B=1$), smaller $\eta$ provides tighter lower bounds. Right: the concentration property does not hold ($B<1$), larger $\eta$ provides better certified radius. For definitions of $A$ and $B$, see Equation~(\ref{primaldsrs}).}
 \label{main_sim}
\end{figure}

\section{Related work}
Randomized smoothing was first proposed as an extension for differential privacy, which provides a certified robustness bound for classifiers~\citep{lecuyer2019}. 
Subsequently, a series of improvements were made to obtain tighter $\ell_2$ norm certificates through the R\'enyi divergence and the Neyman-Pearson Lemma~\citep{li2019,cohen2019}.
Furthermore, there are methods modeling the certification process as functional optimization problems~\citep{zhang2020, dvijothamframework}.

A line of work focuses on extending the robustness certification from $\ell_2$-only to certifications against adversaries with other $\ell_p$ norms.
$\ell_0$ radius certification was made possible by constructing Neyman-Pearson sets for discrete random variables~\citep{lee2019, jia2020}.
Different from $\ell_2$ certified regions, the $\ell_1$ certified region is asymmetric in the space, which poses a new challenge to the certification algorithm. 
By perturbing input data under other noise distributions, such as Laplace and uniform distributions, the previous works have obtained $\ell_1$ certificates~\citep{teng2020, yang2020, levine2021}.
In addition to the asymmetry, some work discovered the phenomenon of the curse of dimensionality when trying to certify against $\ell_p$ adversaries whose $p>2$~\citep{yang2020, blum2020, kumar2020}.
To deal with this issue, a recent work offered an $\Omega(1)$ bound \wrt the input dimension for $\ell_\infty$ certified radius by introducing a supplementary smoothing distribution~\citep{li2022}, which breaks the curse of dimensionality theoretically for the first time. 
Lately, a study found that the computation of certified radius can be improved by incorporating the geometric information from adjacent decision domains of the same class~\citep{cullen2022}. 

There were also investigations using anisotropic or sample-specific smoothing noise to improve the certification~\citep{eiras2022, sukenik2022intriguing}.
In addition, a chain of work focused on improving the performance of base classifiers through adopting better training techniques~\citep{salman2019, zhai2020, jeong2020, jeong2021}, or introducing denoising modules~\citep{Hadi2020certified, carlini2022, wu2023}.
There were also attempts to adapt RS to broader application scenarios. 
RS was extended into defenses against adversarial patches~\citep{levine2020, yatsura2022} and semantic perturbations~\citep{li2021, hao2022, alfarra2022,pautov2022}. 
Moreover, RS has been shown to be capable of providing provable guarantees for tasks such as object detection, semantic segmentation, and watermarking~\citep{chiang2020,fischer2021,bansal2022}.

\section{Preliminaries}
\textbf{Problem setup.~}
We focus on the typical multi-class classification task in this work.
Let $x_i \in \mathbb{R}^d$ be the $i$-th $d$-dimensional data point and $y_i \in \mathcal{Y} = \{1, 2, \cdots, N\}$ be its corresponding ground-truth label.
We assume a dataset $\mathcal{J}$ contains data pairs $(x_i, y_i), i \in  \mathbb{N}_{\leq n}$ that are i.i.d drawn from the 
sample space $\mathbb{R}^{d} \times \mathcal{Y}$. 
A $N$-way \emph{base classifier} (neural networks in this work) $f:\mathbb{R}^d \to \mathcal{Y}$
can be trained to maximize the empirical classification accuracy $\frac{1}{\vert \mathcal{J} \vert}\sum_{(x, y) \in \mathcal{J}}\mathds{1}_{f(x) = y}$ on dataset $\mathcal{J}$.
Given an arbitrary data point $x$ and its label $y$, it is known that in practice, most classifiers trained using standard training techniques are susceptible to adversarial perturbations within a small $\epsilon$-ball. 

\textbf{Randomized smoothing.~} To mitigate adversarial perturbations, RS has been employed as a certified defense method that can provide a robustness guarantee on the correctness of the classification results from classifiers.
It provides the robustness certification for the base classifier $f$ by constructing its smoothed counterpart $\bar{f}$. Given a base classifier $f$, an input $x_0 \in \mathbb{R}^d$ and a smoothing distribution $\mathcal{P}$, the \emph{smoothed classifier} is defined as follows: 
\begin{equation}
\bar{f}_{\mathcal{P}}(x_0)={\mathop{\mathrm{argmax}}\limits_{a\in \mathcal{Y}}}\ \mathbb{P}_{z\sim\mathcal{P}}\{f(x_0+z)=a\}.
\end{equation}
With the definition of $\bar{f}$, we can evaluate its $\ell_p$ certified robustness by $\ell_p$ certified radius defined below.
\begin{definition}
Given a base classifier $f: \mathbb{R}^d \to \mathcal{Y}$, its smoothed counterpart $\bar{f}_{\mathcal{P}}:\mathbb{R}^d \to \mathcal{Y}$ under a distribution $\mathcal{P}$ and a labeled example $(x_0, y_0) \in \mathbb{R}^d \times \mathcal{Y}$. Then $r$ is called \textbf{$\ell_p$ certified radius} of $\bar{f}_{\mathcal{P}}$ if
\begin{equation}
\forall x,\ \Vert x - x_0 \Vert_p < r,\ \bar{f}_\mathcal{P}(x) = y_0.
\end{equation}
\end{definition}

To make the most conservative use of the information from smoothed classifiers (\eg, sampling probability $A$), we can consider a \emph{true binary classifier}. Given a base classifier $f:  \mathbb{R}^d\to \mathcal{Y}$, we call $\tilde{f}_{x_0}: \mathbb{R}^d\to \{0, 1\}$ a true binary classifier of $f$ if for $(x_0, y_0) \in \mathbb{R}^d\times \mathcal{Y}$ and random vector $z \in \mathbb{R}^d$:
\begin{equation}
\tilde{f}_{x_0}(z) = \mathds{1}_{f(x_0+z)=y_0}.
\end{equation}
In short, we need to construct the \emph{worst} true binary classifier to compute the certified radius in RS. For example, \citet{cohen2019} introduced the Neyman-Pearson lemma and successfully found such a worst true binary classifier. Given $\tilde{f}_{x_0}$ from the function space $\mathcal{F} = \{h(x)\mid h(x) \in [0, 1], \forall x \in \mathbb{R}^d\}$, their idea can be formulated as 
\begin{equation}\label{primal}
\small
\begin{aligned}
\min_{\tilde{f}_{x_0}\in \mathcal{F}} \quad & \mathbb{E}_{z\sim \mathcal{P}}\left(\tilde{f}_{x_0}(\delta + z)\right),\\
\textrm{s.t.} \quad & \mathbb{E}_{z\sim \mathcal{P}}\left(\tilde{f}_{x_0}(z)\right) = A.
\end{aligned}
\end{equation}

\begin{table*}[t] 
\scriptsize 
  \centering
   \caption{Properties and definitions of distributions.}\label{tb1} 
  \begin{tabular}{cccc}
    \toprule
    Distribution   & PDF     & Notation  & Formal Variance \\
    \midrule
    Standard Gaussian & $\propto \exp\left(-\frac{r^2}{2\sigma^2}\right)$ & $\mathcal{N}(\sigma)$ & $\sigma$ \\
    Exponential Standard Gaussian & $\propto \exp\left(-\frac{r^\eta}{2\sigma_s^\eta}\right)$& $\mathcal{S}(\sigma, \eta)$ & $\sigma_s=2 ^{-\frac{1}{\eta}} \sqrt{\frac{d\Gamma(\frac{d}{\eta})}{\Gamma(\frac{d+2}{\eta})}}\sigma$\\
    Truncated Exponential Standard Gaussian & $\propto \exp\left(-\frac{r^\eta}{2\sigma_s^\eta}\right)\mathds{1}_{r \leq T}$& $\mathcal{S}_t(\sigma, \eta, T)$ & $\sigma_s=2 ^{-\frac{1}{\eta}} \sqrt{\frac{d\Gamma(\frac{d}{\eta})}{\Gamma(\frac{d+2}{\eta})}}\sigma$\\
    Exponential General Gaussian & $\propto r^{-2k}\exp\left(-\frac{r^\eta}{2\sigma_g^\eta}\right)$& $\mathcal{G}(\sigma, \eta, k)$ & $\sigma_g=2 ^{-\frac{1}{\eta}} \sqrt{\frac{d\Gamma(\frac{d - 2k}{\eta})}{\Gamma(\frac{d-2k+2}{\eta})}}\sigma$\\
    Truncated Exponential General Gaussian & $\propto r^{-2k}\exp\left(-\frac{r^\eta}{2\sigma_g^\eta}\right)\mathds{1}_{r \leq T}$& $\mathcal{G}_t(\sigma, \eta, k, T)$ & $\sigma_g=2 ^{-\frac{1}{\eta}} \sqrt{\frac{d\Gamma(\frac{d - 2k}{\eta})}{\Gamma(\frac{d-2k+2}{\eta})}}\sigma$\\
    \bottomrule
  \end{tabular}
\end{table*}

\textbf{Double sampling randomized smoothing.}
Essentially, the DSRS framework is based on a generalization of the Neyman-Pearson Lemma~\citep{chernoff1952} that introduces one more subjection into the system. More specifically, DSRS provides a method to construct the worst true binary classifier based on the sampling probabilities of the base classifier under two different distributions. Like the Neyman-Pearson lemma, the problem of DSRS can be formulated as a functional optimization problem as below:
\begin{equation}\label{primaldsrs}
\begin{small}
\begin{aligned}              
\min_{\tilde{f}_{x_0}\in \mathcal{F}} \quad & \mathbb{E}_{z\sim \mathcal{P}}\left(\tilde{f}_{x_0}(\delta + z)\right),\\
\textrm{s.t.} \quad & \mathbb{E}_{z\sim \mathcal{P}}\left(\tilde{f}_{x_0}(z)\right) = A,\\ 
& \mathbb{E}_{z\sim \mathcal{Q}}\left(\tilde{f}_{x_0}(z)\right) = B.
\end{aligned}
\end{small}
\end{equation}
In the equations (\ref{primaldsrs}), $A, B\in [0,1]$ are probabilities that the base classifier $f$ outputs the right label $y_0$ for example $x_0$ under noise distributions $\mathcal{P}$, $\mathcal{Q}$, respectively. Practically, they are usually estimated by the Monte Carlo sampling. For an example $x_0$ and a given combination of $\mathcal{P}$, $\mathcal{Q}$, $A$, $B$, we are able to derive a unique $\tilde{f}_{x_0}$. Finally, by finding the maximum $\Vert\delta\Vert_2$ that satisfies $\mathbb{P}_{z\sim \mathcal{P}}\{f({x_0} + \delta + z) = y_0\} > 0.5$, we obtain the certified radius of $x_0$. 

\section{Exponential Gaussian distributions}
In this section, we show the definition of ESG and EGG distributions used throughout the paper. In brief, ESG distributions are generalizations of the Gaussian distribution, which provides the SOTA performance in providing $\ell_2$ certified radius for randomized smoothing \citep{cohen2019, yang2020}. Likewise, EGG distributions are generalizations of the General Gaussian distribution, which was introduced into the DSRS framework to provide a theoretical solution to the curse of dimensionality in randomized smoothing \citep{li2022}. To the best of our knowledge, the EGG distributions belong to the Kotz-type distribution \citep{kotz1975}, and we are the first to investigate their performance on randomized smoothing. In summary, our work systematically studies the interaction between multivariate distributions and (DS)RS through the lens of the exponent of distributions.

For conventional randomized smoothing tasks, only ESG or EGG distributions will be used.
Additionally, under the DSRS framework, when using ESG as the smoothing distribution, the Truncated Exponential Standard Gaussian (TESG) distribution is employed as the supplementary distribution.
Similarly, Truncated Exponential General Gaussian (TEGG) serves as the supplementary distribution when adopting EGG as the smoothing distribution.
We let $\mathcal{S}(\sigma, \eta)$ and $\mathcal{G}(\sigma, \eta, k)$ be the probability density functions (PDFs) of ESG and EGG, respectively. 
Table~\ref{tb1} shows the definitions and basic properties of the distributions. In the table, $r,T, \sigma, \eta \in \mathbb{R}_+$ and $d\in \mathbb{N}_+$. 
$\Gamma(\cdot)$ is the gamma function. 
Following the settings in the previous studies~\citep{yang2020,li2022}, we set the formal variance to ensure $\mathbb{E}r^2$ is a constant for all the smoothing distributions. 
We let $\sigma_s$ and $\sigma_g$ be the formal variances of EGG and ESG, respectively.
The CDFs of the beta distribution ${\rm Beta}(\alpha, \alpha)$ and the gamma distribution $\Gamma(\alpha, 1)$ are denoted respectively by $\Psi_\alpha(\cdot)$ and $\Lambda_\alpha(\cdot)$.
We write $\phi_s(r)$ and $\phi_g(r)$ corresponding to the PDFs of $\mathcal{S}(\sigma, \eta)$ and $\mathcal{G}(\sigma, \eta, k)$ respectively. More details for the distributions are deferred to Appendix \ref{appdis}.

\section{ESG's certifications: hardly changes with the exponent $\eta$}\label{ESG_theory}
In this section, we provide an analysis of ESG's certifications on randomized smoothing. Overall, for sufficiently large dimensions $d$, the certified radius offered by ESG has almost nothing to do with the exponent $\eta$, despite it being a significant hyperparameter for ESG. In the end, we derive a concise analytic formula that reveals the relationship between sampling probability and certified radius for the ESG distributions, which highly approximates the formula derived by \citep{cohen2019} for the Gaussian distribution.

To begin with, we consider the dual problem of Problem (\ref{primal}) for the ESG distributions. Let $\mathcal{P}=\mathcal{S}(\sigma,\eta)$,\ $p(x)$ be the PDF of ESG, and denote $\mathcal{V}$ for $\{z\mid p(z-\delta) + \nu p(z) < 0\}$ where $\nu\in\mathbb{R}$. Formally, to calculate the certified radius $r$, we have
\begin{subequations}\label{dual}
\begin{align}
r = \argmax_{\Vert\delta\Vert_2}\max_{\nu\in\mathbb{R}} \quad &\mathbb{P}_{z \sim \mathcal{P}+\delta}\{z\in\mathcal{V}\} \geq \frac{1}{2},\label{npdual1}\\
\textrm{s.t.} \quad &\mathbb{P}_{z\sim \mathcal{P}}\{z\in\mathcal{V}\} = A. \label{npdual2}
\end{align}
\end{subequations}
The proof of duality is omitted since it was provided in \citet{zhang2020}. Herein, we directly give the solution to this problem in the following theorem and defer the derivation to Appendix \ref{apesgNP}. 
\begin{theorem}\label{thesgNP}
For $\delta\in\mathbb{R}^d $ and $\rho\in\mathbb{R}_+$, letting $\delta=(\rho, 0, \cdots, 0)^T$, we have
\begin{subequations}
\begin{align}
\mathbb{P}_{z \sim \mathcal{P}+\delta}\{z\in\mathcal{V}\} = \mathbb{E}_{u\sim\Gamma(\frac{d}{\eta}, 1)}\omega_\sharp(u, \nu),\\
\mathbb{P}_{z \sim \mathcal{P}}\{z\in\mathcal{V}\}=\mathbb{E}_{u\sim\Gamma(\frac{d}{\eta}, 1)}\omega_\natural(u, \nu),
\end{align}
\end{subequations}
where
\begin{subequations}\
\scriptsize
\begin{align}
\omega_\sharp(u, \nu) = \left\{
\begin{aligned}\label{oshtxt}
&\Psi_{\frac{d-1}{2}}\left(\frac{2^{\frac{2}{\eta}}\sigma_s^2(u + \ln(-\nu))^{\frac{2}{\eta}} - (\sigma_s(2u)^\frac{1}{\eta} - \rho)^2}{4\rho \sigma_s(2u)^\frac{1}{\eta}}\right), \\
&\hspace*{16em} u + \ln(-\nu) \geq 0,\\
&0, \hspace*{15em} u + \ln(-\nu) < 0.\\
\end{aligned}\right.\\ 
\omega_\natural(u, \nu) = \left\{
\begin{aligned}\label{onattxt}
&\Psi_{\frac{d-1}{2}}\left(\frac{(\rho + \sigma_s(2u)^\frac{1}{\eta})^2-2^{\frac{2}{\eta}}\sigma_s^2(u - \ln(-\nu))^{\frac{2}{\eta}}}{4\rho \sigma_s(2u)^\frac{1}{\eta}}\right), \\
&\hspace*{16em} u-\ln(-\nu) \geq 0,\\
&1, \hspace*{15em} u-\ln(-\nu) < 0.
\end{aligned}\right.\\ \nonumber
\end{align}
\end{subequations}
\vspace{-12mm}
\end{theorem}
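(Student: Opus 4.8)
The plan is to turn both probabilities in the statement into one-dimensional computations involving the radius $\|z\|$ of an ESG sample and the first coordinate of its direction, and then to read off the closed forms from two classical facts. First, after the substitution $u=r^{\eta}/(2\sigma_s^{\eta})$, i.e. $r=\sigma_s(2u)^{1/\eta}$, the radial profile $r^{d-1}e^{-r^{\eta}/(2\sigma_s^{\eta})}$ of $\mathcal{S}(\sigma,\eta)$ becomes, up to normalization, the density of $\Gamma(\tfrac d\eta,1)$; this is why $u\sim\Gamma(\tfrac d\eta,1)$ appears. Second, the first coordinate $\Theta_1$ of a uniformly random $\Theta\in S^{d-1}$ has density $\propto(1-t^{2})^{(d-3)/2}$ on $[-1,1]$, so $(\Theta_1+1)/2\sim{\rm Beta}(\tfrac{d-1}2,\tfrac{d-1}2)$ and $\mathbb{P}(\Theta_1<t)=\Psi_{(d-1)/2}\!\left(\tfrac{t+1}2\right)$, where $\Psi_\alpha$ is extended to equal $0$ on $(-\infty,0]$ and $1$ on $[1,\infty)$.

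I would start by noting that since $p=\phi_s\ge 0$, the set $\mathcal{V}=\{z:\ p(z-\delta)+\nu p(z)<0\}$ is empty unless $\nu<0$, so $\ln(-\nu)$ makes sense; plugging $p(x)\propto\exp(-\|x\|^{\eta}/(2\sigma_s^{\eta}))$ into the membership condition rewrites it as $\|z-\delta\|^{\eta}-\|z\|^{\eta}>-2\sigma_s^{\eta}\ln(-\nu)$ (the boundary has Lebesgue measure zero, so strictness is immaterial). Decomposing $z=R\Theta$ with $R=\|z\|$ independent of $\Theta$ uniform on $S^{d-1}$, setting $u=R^{\eta}/(2\sigma_s^{\eta})$, and fixing $\delta=(\rho,0,\dots,0)^{\top}$, I would condition on $u$ and handle the two cases. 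For $z\sim\mathcal{P}$: here $\|z\|^{\eta}=2\sigma_s^{\eta}u$ and $\|z-\delta\|^{2}=r^{2}-2r\rho\,\Theta_1+\rho^{2}$, so the event is $\|z-\delta\|^{\eta}>2\sigma_s^{\eta}(u-\ln(-\nu))$; when $u-\ln(-\nu)<0$ it holds for all $\Theta$, giving the branch $\omega_\natural=1$, and otherwise raising both sides to the power $2/\eta$ (legitimate because $\eta>0$ and both sides are nonnegative) and solving the resulting linear inequality yields $\Theta_1<\big(r^{2}+\rho^{2}-2^{2/\eta}\sigma_s^{2}(u-\ln(-\nu))^{2/\eta}\big)/(2r\rho)$. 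For $z\sim\mathcal{P}+\delta$: write $z=\delta+w$ with $w\sim\mathcal{P}$ and $w=R\Theta$, so $\|z-\delta\|^{\eta}=\|w\|^{\eta}=2\sigma_s^{\eta}u$ while $\|z\|^{2}=\|w+\delta\|^{2}=r^{2}+2r\rho\,\Theta_1+\rho^{2}$; the event becomes $\|w+\delta\|^{\eta}<2\sigma_s^{\eta}(u+\ln(-\nu))$, which is impossible when $u+\ln(-\nu)<0$ (branch $\omega_\sharp=0$) and otherwise reduces to $\Theta_1<\big(2^{2/\eta}\sigma_s^{2}(u+\ln(-\nu))^{2/\eta}-r^{2}-\rho^{2}\big)/(2r\rho)$.

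Applying the beta identity for $\Theta_1$ and simplifying finishes the conditional computation: adding $1$ and halving each threshold converts $r^{2}\pm 2r\rho+\rho^{2}$ into $(r\pm\rho)^{2}$, so with $r=\sigma_s(2u)^{1/\eta}$ substituted, $\mathbb{P}(z\in\mathcal{V}\mid u)$ equals exactly $\omega_\natural(u,\nu)$ resp. $\omega_\sharp(u,\nu)$ as written in (\ref{onattxt}) and (\ref{oshtxt}); the clamping convention on $\Psi_\alpha$ absorbs the cases where the threshold leaves $[-1,1]$ (e.g. $(r+\rho)^{2}\ge 4r\rho$ by AM--GM when $u=\ln(-\nu)$, forcing $\omega_\natural=1$ there). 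Taking the expectation over $u\sim\Gamma(\tfrac d\eta,1)$ by the tower property gives the two displayed equalities. I expect the main obstacle to be organizational rather than conceptual: one must keep the normalization of $\sigma_s$ consistent through the radial change of variables, split correctly on the signs of $u\mp\ln(-\nu)$ to produce the degenerate branches, and carry out the $(t+1)/2$ bookkeeping while respecting the extension of $\Psi_\alpha$ outside $[0,1]$.
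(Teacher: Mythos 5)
Your proposal is correct and follows essentially the same route as the paper's proof: radial decomposition with the change of variables $u=r^{\eta}/(2\sigma_s^{\eta})$ yielding the $\Gamma(\tfrac{d}{\eta},1)$ law, the ${\rm Beta}(\tfrac{d-1}{2},\tfrac{d-1}{2})$ identity for the first spherical coordinate, and the same case split on the signs of $u\mp\ln(-\nu)$ producing the degenerate branches. The only cosmetic difference is that you take logarithms of the exponential densities directly, whereas the paper phrases the same step through the inverse PDF $\phi_s^{-1}$ and its domain bound $U$; the resulting thresholds and $\Psi_{\frac{d-1}{2}}$ arguments agree exactly.
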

Conventionally, we obtain the certified radius $r$ by known sampling probability $A$. The procedure can be outlined as a two-layer binary search, the outer one of which searches the value of $\Vert\delta\Vert_2$, and the inner one finds the maximum $\nu$ for the specified $\Vert\delta\Vert_2$. However, if we think about this problem in reverse, the relationship between certified radius $\rho$ and sampling probability $A$ can become very obvious. That is, we do not consider calculating the certified radius $r$ from $A$, but get $A$ from a given $\rho$. For ESG, we have the following theorem:
\begin{theorem}\label{invprob}
\begin{subequations}
Let $\omega_{\sharp}(u, \nu)$ and $\omega_{\natural}(u, \nu)$ be defined as in Theorem \ref{thesgNP}. Then the sampling probability $A$ can be calculated from the certified radius $\rho$ by 
\begin{align}
A = &\mathbb{E}_{u\sim\Gamma(\frac{d}{\eta}, 1)}\omega_{\natural}(u, \nu), \label{geta}\\ 
\textrm{s.t.}\ \ &\mathbb{E}_{u\sim\Gamma(\frac{d}{\eta}, 1)}\omega_{\sharp}(u, \nu) = \frac{1}{2}. \label{getnu}
\end{align}
\end{subequations}
\end{theorem}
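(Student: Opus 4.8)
The plan is to read Theorem~\ref{invprob} as the inversion of the dual characterization~(\ref{dual}) combined with the closed forms supplied by Theorem~\ref{thesgNP}. The key observation is that at the certified radius the inequality in~(\ref{npdual1}) is active, i.e. the shifted probability $\mathbb{P}_{z\sim\mathcal{P}+\delta}\{z\in\mathcal{V}\}$ equals exactly $\frac{1}{2}$.

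First I would make precise how $r$ is obtained in the usual (forward) direction. Writing $\delta=(\rho,0,\dots,0)^T$ and $\mathcal{V}=\{z\mid p(z-\delta)+\nu p(z)<0\}$, note that for $\nu\geq 0$ the set $\mathcal{V}$ is empty, while for $\nu<0$ it is the region where the likelihood ratio $p(z-\delta)/p(z)$ falls below $-\nu$, which grows monotonically as $-\nu$ increases; hence $\nu\mapsto\mathbb{P}_{z\sim\mathcal{P}}\{z\in\mathcal{V}\}$ is strictly increasing from $0$ to $1$, so the constraint~(\ref{npdual2}) determines a unique $\nu=\nu(\rho,A)$ for each $\rho$ and each $A\in(0,1)$. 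The certified radius is then the supremum of those $\rho$ for which the objective~(\ref{npdual1}), evaluated at $\nu(\rho,A)$, is still $\geq\frac{1}{2}$; since this objective is continuous in $\rho$ and cannot increase as the mean is shifted away, it equals exactly $\frac{1}{2}$ at $\rho=r$.

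Next I would run this backwards. Fix $\rho$ and substitute the expressions of Theorem~\ref{thesgNP}, so that the shifted probability becomes $\mathbb{E}_{u\sim\Gamma(\frac{d}{\eta},1)}\omega_\sharp(u,\nu)$ and the unshifted one $\mathbb{E}_{u\sim\Gamma(\frac{d}{\eta},1)}\omega_\natural(u,\nu)$. Imposing the boundary condition just identified gives Equation~(\ref{getnu}); I would then check that it pins down $\nu$ uniquely, by showing that for each fixed $u$ the map $\nu\mapsto\omega_\sharp(u,\nu)$ is monotone — both the argument of $\Psi_{\frac{d-1}{2}}$ in~(\ref{oshtxt}) and the support threshold (whether $u+\ln(-\nu)\geq 0$) move monotonically in $\ln(-\nu)$ — so that $\mathbb{E}_u\omega_\sharp(u,\nu)$ is monotone, and by a limit computation as $\nu\to 0^-$ and $\nu\to-\infty$ its range covers $(0,1)\ni\frac{1}{2}$. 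With this unique $\nu$, the quantity $A:=\mathbb{E}_{u\sim\Gamma(\frac{d}{\eta},1)}\omega_\natural(u,\nu)$ is precisely the level of the constraint~(\ref{npdual2}) that makes $\rho$ the certified radius, which is Equation~(\ref{geta}).

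Finally I would close the loop: the pair $(\rho,A)$ so produced is genuinely a radius--probability pair because $(\rho,\nu)$ satisfies~(\ref{npdual2}) with this $A$ and the objective equals $\frac{1}{2}\geq\frac{1}{2}$ (feasibility), while no larger $\rho'$ is admissible for the same $A$ by the monotone decay in $\rho$ of $\mathbb{E}_u\omega_\sharp(u,\nu(\rho',A))$ from the first step (optimality). I expect the main obstacle to be making these monotonicity statements fully rigorous — in particular that $\mathbb{E}_{u\sim\Gamma(\frac{d}{\eta},1)}\omega_\sharp(u,\nu)$ is strictly monotone in $\nu$ and strictly decreasing in $\rho$ — since this requires carefully tracking how the argument of $\Psi_{\frac{d-1}{2}}$ and the indicator threshold in~(\ref{oshtxt})--(\ref{onattxt}) depend on both parameters; one also relies on the Neyman--Pearson optimality of the likelihood-ratio region $\mathcal{V}$ (the duality of~(\ref{dual}), proved in \citet{zhang2020}) so that the single-multiplier construction is tight and not merely feasible.
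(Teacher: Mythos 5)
Your proposal is correct and follows essentially the same route as the paper, which presents Theorem~\ref{invprob} simply as the reversal of the two-layer search implicit in the dual problem~(\ref{dual}) and Theorem~\ref{thesgNP}: fix $\rho$, use the activeness of the objective at the certified radius to pin down $\nu$ via $\mathbb{E}_u\omega_\sharp(u,\nu)=\tfrac{1}{2}$, and then read off $A=\mathbb{E}_u\omega_\natural(u,\nu)$. In fact you supply more detail than the paper does (the monotonicity in $\nu$ and $\rho$ and the uniqueness of $\nu$ are left implicit there), and your identification of those monotonicity claims as the only points needing care is accurate.
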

In fact, Theorem \ref{invprob} is still not succinct enough to demonstrate the computation from $\rho$ to $A$. To simplify it, we make two assumptions, both of which are very mild.
\begin{assumption}\label{as1}
For the ESG distribution $\mathcal{S}(\sigma, \eta)$ defined on $\mathbb{R}^d$, assume $\ d \gg \eta$ and $\sigma\in (0,1]$. 
\end{assumption}
This assumption is reasonable in the high-dimensional machine learning setting. For instance, if we add Gaussian noises to ImageNet, we have $d=150224, \eta=2$, and $\sigma$ is usually 0.5 or 1.0. Based on this, it is quite simple to approximate the formal variance $\sigma_s$. We see the following lemma:
\begin{lemma}\label{sigmasbound}
Under Assumption \ref{as1}, let $d$ be the dimension, and $\sigma_s$ be defined as in Table \ref{tb1}, and we have $\sigma_s = \Theta(d^{\frac{1}{2} - \frac{1}{\eta}})$.
\end{lemma}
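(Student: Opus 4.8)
The plan is to reduce the statement to an asymptotic estimate of the Gamma-function ratio $G := \Gamma(d/\eta)\,/\,\Gamma((d+2)/\eta)$. Indeed, from Table~\ref{tb1} we have $\sigma_s = 2^{-1/\eta}\sigma\sqrt{dG}$, and under Assumption~\ref{as1} the factors $2^{-1/\eta}$ and $\sigma\in(0,1]$ are positive and bounded, so once $\eta$ is fixed they affect only the hidden constants. Thus it suffices to prove $G = \Theta(d^{-2/\eta})$, which immediately yields $\sigma_s = \Theta\bigl(\sqrt{d\cdot d^{-2/\eta}}\bigr) = \Theta(d^{1/2-1/\eta})$.

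To estimate $G$, I would set $x := d/\eta$ and $a := 2/\eta$, so that $G = \Gamma(x)/\Gamma(x+a)$ with $x\to\infty$ — this is exactly where the hypothesis $d\gg\eta$ enters. Writing $\ln(1/G) = \ln\Gamma(x+a) - \ln\Gamma(x) = \int_x^{x+a}\psi(t)\,dt$ and using the elementary digamma bounds $\ln t - 1/t < \psi(t) < \ln t$ for $t>0$, I would compute $\int_x^{x+a}\ln t\,dt = (x+a)\ln(x+a) - x\ln x - a = a\ln x + O(a^2/x)$ (a one-line expansion of $(x+a)\ln(1+a/x)$), while the error term contributes only $\int_x^{x+a}O(1/t)\,dt = O(a/x)$. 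Hence $\ln(1/G) = a\ln x + O((a^2+a)/x) = a\ln x + o(1)$, so $G = x^{-a}e^{o(1)} = \Theta\bigl((d/\eta)^{-2/\eta}\bigr) = \Theta(d^{-2/\eta})$, the factor $\eta^{2/\eta}$ being a constant. Alternatively, a completely non-asymptotic version follows by feeding the Stirling bounds $\sqrt{2\pi}\,t^{t-1/2}e^{-t}\le\Gamma(t)\le\sqrt{2\pi}\,t^{t-1/2}e^{-t+1/(12t)}$ into the numerator and denominator of $G$ and simplifying: the dominant factor is $\bigl(\tfrac{x}{x+a}\bigr)^{x}(x+a)^{-a}$, and $\bigl(\tfrac{x}{x+a}\bigr)^{x}$ increases monotonically to $e^{-a}$, giving matching upper and lower bounds of the form $c_\eta\, x^{-a}$.

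The argument is routine; the only point requiring care is the bookkeeping of the hidden constants. I would state explicitly that $\Theta(\cdot)$ is taken as $d\to\infty$ with $\eta$ fixed, so the constants may depend on $\eta$; the only genuinely $\eta$-dependent factors, $2^{-1/\eta}$ and $\eta^{1/\eta}$, are harmless, and $\sigma\in(0,1]$ contributes a factor bounded between a positive constant and $1$. The main (mild) obstacle is to pick an error-controlled form of Stirling's formula, or of the digamma bound, so that the exponent $a=2/\eta$ — which can exceed $1$ when $\eta<2$ — is handled uniformly; this rules out simply quoting a Gautschi-type inequality stated only for exponents in $(0,1)$, but the direct Stirling estimate above sidesteps the issue entirely.
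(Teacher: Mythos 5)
Your proposal is correct and follows essentially the same route as the paper: both reduce the claim to the asymptotic $\Gamma(d/\eta)/\Gamma((d+2)/\eta)=\Theta(d^{-2/\eta})$ and establish it by a Stirling-type estimate (the paper computes the limit $\sigma_s/d^{\frac12-\frac1\eta}\to(\eta/2)^{1/\eta}\sigma$ directly from Stirling's formula, while your primary variant via $\int_x^{x+a}\psi(t)\,dt$ with digamma bounds is just a more error-explicit way of obtaining the same Gamma-ratio asymptotic, and your stated alternative is exactly the paper's argument). No gaps; the extracted constant $2^{-1/\eta}\eta^{1/\eta}\sigma=(\eta/2)^{1/\eta}\sigma$ agrees with the paper's.
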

The proof is based on Stirling's formula and we leave it to Appendix \ref{accsig}. With this property, talking $\sigma_s=(\frac{\eta}{2})^\frac{1}{\eta}\sigma d^{\frac{1}{2} - \frac{1}{\eta}}$ will only introduce infinitesimal errors. Next, another assumption we need is: 
\begin{assumption}\label{as2}
Let $\nu$ be found by Equation (\ref{npdual2}), and we assume $\frac{d}{\eta}\gg \ln(-\nu)$.
\end{assumption}
Appendix \ref{mildas2} has more on the mildness of the assumption above. It helps us exploit the concentration property of the gamma distributions:
\begin{lemma}\label{lemmag1}
(Bilateral Concentration of the Gamma Distribution) Let $X\sim \Gamma(\frac{d}{\eta}, 1)$ be a random variable, where $\eta \in \mathbb{R}_+, d \in \mathbb{N}_+$. Let $\iota=\frac{\eta}{\epsilon^2d}$, then for any $\iota \in (0, 1)$, the following inequality holds:
	\begin{equation}\label{130}
	\mathbb{P}\{(1-\epsilon)\frac{d}{\eta} < X < (1+\epsilon)\frac{d}{\eta}\} \geq 1-\iota.
	\end{equation}
\end{lemma}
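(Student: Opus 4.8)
The plan is to obtain this from \textbf{Chebyshev's inequality}, exploiting the fact that $\Gamma(\frac{d}{\eta},1)$ has a closed-form mean and variance. First I would record that for $X\sim\Gamma(\alpha,1)$ with shape $\alpha=\frac{d}{\eta}$ and unit rate, $\mathbb{E}[X]=\alpha=\frac{d}{\eta}$ and $\Var(X)=\alpha=\frac{d}{\eta}$ (these hold for every real $\alpha>0$, so non-integrality of $\frac{d}{\eta}$ is no obstacle).

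Next I would set $a=\epsilon\,\frac{d}{\eta}>0$ and apply Chebyshev's inequality to the centered variable:
\begin{equation*}
\mathbb{P}\!\left\{\left|X-\tfrac{d}{\eta}\right|\geq a\right\}\leq \frac{\Var(X)}{a^{2}}=\frac{d/\eta}{\epsilon^{2}d^{2}/\eta^{2}}=\frac{\eta}{\epsilon^{2}d}=\iota .
\end{equation*}
Taking complements and observing that the event $\{|X-\frac{d}{\eta}|<a\}$ is exactly $\{(1-\epsilon)\frac{d}{\eta}<X<(1+\epsilon)\frac{d}{\eta}\}$ yields $\mathbb{P}\{(1-\epsilon)\frac{d}{\eta}<X<(1+\epsilon)\frac{d}{\eta}\}\geq 1-\iota$, which is the claimed bound; the hypothesis $\iota\in(0,1)$ (equivalently $\epsilon^{2}d>\eta$) is only needed so that the right-hand side is a meaningful probability.

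I do not expect a genuine obstacle here — the content is essentially a one-line second-moment estimate. The only points requiring a little care are (i) matching the \emph{strict} inequalities in the statement with the complement of the non-strict Chebyshev event, and (ii) using the rate-$1$ parametrization consistently so that $\Var(X)=\frac{d}{\eta}$ rather than $\frac{d}{\eta}\cdot(\text{scale})^2$. If a sharper, sub-exponential tail were ever required downstream one could substitute a Bernstein/Chernoff-type bound for gamma variables, but since the subsequent use of the lemma is phrased in terms of $\iota=\frac{\eta}{\epsilon^2 d}$, the Chebyshev bound is exactly what is needed.
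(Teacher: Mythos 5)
Your proof is correct and matches the paper's approach: the paper proves this lemma exactly as you do, via Chebyshev's inequality using $\mathbb{E}X=\Var(X)=\frac{d}{\eta}$ (it simply defers the details by noting the argument is "very similar to that of" its unilateral version, Lemma \ref{lemmae4}, which is the same second-moment bound). Your additional remarks on strict versus non-strict inequalities and the rate-$1$ parametrization are sound but not needed beyond what the paper already does.
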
 
The proof is very similar to that of Lemma \ref{lemmae4}. We still take Gaussian noises on ImageNet to illustrate Lemma \ref{lemmag1}. In fact, the Gaussian distribution will lead $u\sim\Gamma(\frac{d}{2}, 1)$ in Equation (\ref{getnu}), which satisfies $\mathbb{P}\{0.99 \cdot \frac{d}{2} < u < 1.01 \cdot \frac{d}{2}\}>0.9935$. In other words, Assumption \ref{as2} essentially guarantees $\mathbb{P}\{u + \ln(-v)<0\}$ and $\mathbb{P}\{u - \ln(-v)<0\}$ are almost $0$, thus we can omit these branches in Equations (\ref{oshtxt}) and (\ref{onattxt}) when estimating the integral. WLOG, we let $u$ be a constant value, and then let 
\begin{equation}
 \Psi_{\frac{d-1}{2}}\left(\frac{2^{\frac{2}{\eta}}\sigma_s^2(u + \ln(-\nu))^{\frac{2}{\eta}} - (\rho - \sigma_s(2u)^\frac{1}{\eta})^2}{4\rho \sigma_s(2u)^\frac{1}{\eta}}\right) = \frac{1}{2}.
\end{equation}
Substituting $\sigma_s=(\frac{\eta}{2})^\frac{1}{\eta}\sigma d^{\frac{1}{2} - \frac{1}{\eta}}$ and $u=\frac{d}{\eta}$ into the equation above, we get
\begin{equation}
\frac{\sigma^2d(1+\frac{\eta\ln(-\nu)}{d})^{\frac{2}{\eta}} - (\rho-\sigma\sqrt{d})^2}{4\rho\sigma\sqrt{d}}=\frac{1}{2}.
\end{equation}
By Assumption \ref{as2}, recalling the equivalent infinitesimal replacement $(1+x)^a\sim 1+ax$ when $x\to 0$, we see 
\begin{equation}\label{approxnu}
\ln(-\nu)\approx\frac{\rho^2}{2\sigma^2}.
\end{equation}
The solution for $\ln{(-\nu)}$ also verifies the Assumption \ref{as2} that $\frac{d}{\eta} \gg \ln(-\nu)$ is mild, considering practically we seldom see $\ell_2$ certified radius $\rho>5$ in RS. WLOG, injecting $\ln(-\nu)=\frac{\rho^2}{2\sigma^2}$ into Equation (\ref{geta}), we finally obtain 
\begin{equation}\label{approximation}
A=\Psi_{\frac{d-1}{2}}(\frac{1}{2}+\frac{\rho}{2\sigma\sqrt{d}}).
\end{equation}
In fact, this estimation is convergent to \citet{cohen2019}'s formula when $d$ is sufficiently large:
\begin{equation}
\Psi_{\frac{d-1}{2}}(\frac{1}{2}+\frac{\rho}{2\sigma\sqrt{d}})\xrightarrow{d\to\infty}\Phi(\frac{\rho}{\sigma}),
\end{equation}
whose differential case is shown by \citet{Ryder2012}. Interestingly, though the exponent $\eta$ is a significant parameter for the derivation on ESG, this estimation (\ref{approximation}) is irrelevant to $\eta$. Furthermore, our experiments on real-world datasets in Section \ref{exs} show that ESG's certifications are highly inert to $\eta$ for both RS and DSRS, which corroborates our theoretical analysis, and indicates that there exist similar $A(\rho)$ relationships in DSRS.
 We show proof and more details for this approximation in Appendix \ref{extapp}. 

\section{EGG's certifications: significantly improve with the exponent $\eta$}\label{EGG_theory}

It is proved that taking the General Gaussian distribution as the smoothing distribution in DSRS provides an $\Omega(\sqrt d)$ lower bound for the $\ell_2$ certified radius~\citep{li2022}. This bound can be converted to an $\Omega(1)$ lower bound for the $\ell_{\infty}$ certified radius ~\citep{kumar2020}, which breaks the curse of dimensionality against $\ell_\infty$ adversaries in high-dimensional settings.
Nevertheless, though \citet{li2022} proposed a theoretical solution to the curse of dimensionality for the first time, the study on the curse of dimensionality is still lacking. In this work, we investigate the further alleviation of the curse of dimensionality on the basis of DSRS, showing that theoretically, EGG can provide tighter lower bounds for $\ell_2$ certified radius than the original DSRS. 
Primarily, we define a ($\sigma, p, \eta$)-concentration property to start our analysis: 
\begin{definition}\label{cct}
(($\sigma, p, \eta$)-Concentration Property) Let $f:  \mathbb{R}^d\to \mathcal{Y}$ be an arbitrarily determined base classifier, $(x_0, y_0) \in \mathbb{R}^d \times \mathcal{Y}$ be a labeled example. 
We say $f$ satisfies ($\sigma, p, \eta$)-concentration assumption at $(x_0, y_0)$ if for $p \in (0,1)$ and $T$ satisfying
\begin{equation}
\mathbb{P}_{z\sim\mathcal{S}(\sigma, \eta)}\{\Vert z\Vert_2\leq T\} = p,
\end{equation} 
$f$ satisfies
\begin{equation}\label{eq:all_one_prob}
\mathbb{P}_{z\sim \mathcal{S}(\sigma, \eta)}\{f(x_0+z)=y_0\mid \Vert z\Vert_2\leq T\}=1.
\end{equation} 
\end{definition}
The concentration property essentially defines a partly robust classifier that makes few mistakes on examples perturbed by limited noises. For instance, letting $\eta=2$, if a classifier satisfies the ($\sigma, p, 2$)-concentration property on example $(x_0, y_0)$, it will predict almost entirely correctly on perturbed examples with Gaussian noise $z$ with $\Vert z\Vert_2<T$. Despite the assumption being seemingly strict, it is satisfied well in the light of observations from \citet{li2022}: some robust classifiers provide almost perfect predictions for some examples from ImageNet when the $\ell_2$ length $\Vert z\Vert_2<\sigma\sqrt{d}$, but the accuracy intensely declines when the noise is stronger.

Herein, we first let the base classifier $f$ satisfy the ($\sigma, p, 2$)-concentration property, the original assumption used in \citet{li2022}. 
Then we show a theorem that some EGG distributions with their corresponding truncated counterparts can certify the $\ell_2$ radius with $\Omega(\sqrt d)$ lower bounds by DSRS. In other words, the current solution to the curse of dimensionality provided by DSRS can be significantly augmented, that all EGG with $\eta \in (0, 2)$ have the potential to break the curse with better lower bounds. 

\begin{theorem}[EGG with $\eta \in(0,2)$ can certify $\Omega(\sqrt{d})$ lower bounds]\label{fixbase}
Let $d \in \mathbb{N}_+$ be a sufficiently large input dimension, $(x_0, y_0) \in \mathbb{R}^d \times \mathcal{Y}$ be a labeled example and $f: \mathbb{R}^d\to \mathcal{Y}$ be a base classifier that satisfies the ($\sigma, p, 2$)-concentration property \wrt $(x_0, y_0)$. For the DSRS method, let $\mathcal{P} = \mathcal{G}(\sigma, \eta, k)$ be the smoothing distribution to give a smoothed classifier $\bar{f}_{\mathcal{P}}$, and $\mathcal{Q}=\mathcal{G}_t(\sigma, \eta, k, T)$ be the supplementary distribution with $T=\sigma\sqrt{2\Lambda_{\frac{d}{2}}^{-1}(p)}, d - 2k \in [1, 30] \cap \mathbb{N}$ and $\eta \in \{1, \frac{1}{2}, \frac{1}{3}, \cdots, \frac{1}{50}\}$. Then for the smoothed classifier $\bar{f}_{\mathcal{P}}(x)$, the certified $\ell_2$ radius satisfies
\begin{equation}
r_{DSRS} \geq 0.02\sigma \sqrt{d}.
\end{equation}
\end{theorem}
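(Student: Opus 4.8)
The plan is to convert the concentration hypothesis into (near-)exact values of the two DSRS constraint levels $A$ and $B$, observe that at those levels the DSRS optimization degenerates to a single feasible classifier, and then evaluate the resulting certified radius as $d\to\infty$; the constant $0.02$ will come out as a safe underestimate of the limiting one, with the slack absorbing finite-$d$ errors.

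First I would read off what $(\sigma,p,2)$-concentration gives about $\mathcal{P}=\mathcal{G}(\sigma,\eta,k)$ and $\mathcal{Q}=\mathcal{G}_t(\sigma,\eta,k,T)$. The radius $T=\sigma\sqrt{2\Lambda_{d/2}^{-1}(p)}$ is exactly the one appearing in Definition~\ref{cct} with exponent $2$, so the hypothesis is the pointwise statement that $f(x_0+z)=y_0$ for almost every $z$ with $\Vert z\Vert_2\le T$; being pointwise, it is insensitive to which smoothing law we later use. Hence $B=\mathbb{P}_{z\sim\mathcal{Q}}\{f(x_0+z)=y_0\}=1$ exactly (TEGG is supported on $\Vert z\Vert_2\le T$), while $A=\mathbb{P}_{z\sim\mathcal{P}}\{f(x_0+z)=y_0\}\ge\underline{A}:=\mathbb{P}_{z\sim\mathcal{G}(\sigma,\eta,k)}\{\Vert z\Vert_2\le T\}$, a gamma-CDF value since $\Vert z\Vert_2^\eta/(2\sigma_g^\eta)\sim\Gamma(\tfrac{d-2k}{\eta},1)$. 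The DSRS-certified radius is monotone in $A$, so it suffices to bound it at $(A,B)=(\underline{A},1)$; there the feasible set of the optimization~(\ref{primaldsrs}) collapses to a single point, since $\mathbb{E}_{z\sim\mathcal{Q}}\tilde{f}_{x_0}=1$ forces $\tilde{f}_{x_0}\equiv1$ a.e.\ on $\{\Vert z\Vert_2\le T\}$ and then $\mathbb{E}_{z\sim\mathcal{P}}\tilde{f}_{x_0}=\underline{A}=\mathbb{P}_{\mathcal{P}}\{\Vert z\Vert_2\le T\}$ forces $\tilde{f}_{x_0}\equiv0$ a.e.\ outside, so the worst-case true binary classifier is $\tilde{f}_{x_0}=\mathds{1}_{\Vert z\Vert_2\le T}$ and
\[
r_{DSRS}\ \ge\ \sup\Big\{\rho>0:\ \mathbb{P}_{z\sim\mathcal{G}(\sigma,\eta,k)}\{\Vert z+\delta\Vert_2\le T\}>\tfrac12,\ \Vert\delta\Vert_2=\rho\Big\}.
\]

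Next I would reduce the right-hand side to one dimension exactly as in Theorem~\ref{thesgNP}: take $\delta=(\rho,0,\dots,0)^T$, write $z=R\theta$ with $R=\Vert z\Vert_2$ and $\theta$ uniform on the sphere, so $R^\eta/(2\sigma_g^\eta)\sim\Gamma(\tfrac{d-2k}{\eta},1)$ and $\tfrac{1+\theta_1}{2}\sim\mathrm{Beta}(\tfrac{d-1}{2},\tfrac{d-1}{2})$; since $\Vert z+\delta\Vert_2^2=R^2+2\rho R\theta_1+\rho^2$, the event $\{\Vert z+\delta\Vert_2\le T\}$ equals $\{\theta_1\le(T^2-R^2-\rho^2)/(2\rho R)\}$ and
\[
\mathbb{P}_{z\sim\mathcal{G}(\sigma,\eta,k)}\{\Vert z+\delta\Vert_2\le T\}=\mathbb{E}_{u\sim\Gamma(\frac{d-2k}{\eta},1)}\,\Psi_{\frac{d-1}{2}}\!\left(\frac12+\frac{T^2-\sigma_g^2(2u)^{2/\eta}-\rho^2}{4\rho\,\sigma_g(2u)^{1/\eta}}\right).
\]
Set $\alpha:=(d-2k)/\eta$ (bounded, as $d-2k\le30$) and $\beta:=\Gamma(\alpha)/\Gamma(\alpha+2/\eta)$, so $\sigma_g/(\sigma\sqrt{d})=2^{-1/\eta}\sqrt{\beta}$ is a fixed constant, and take $\rho=c\,\sigma\sqrt{d}$. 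By the gamma concentration of Lemma~\ref{lemmag1} one has $T^2/(\sigma^2 d)=2\Lambda_{d/2}^{-1}(p)/d\to1$, and $\mathrm{Beta}(\tfrac{d-1}{2},\tfrac{d-1}{2})$ concentrates at $\tfrac12$ exactly as in the ESG analysis of Section~\ref{ESG_theory}, so the argument of $\Psi_{\frac{d-1}{2}}$ tends to $\tfrac12+(1-\beta u^{2/\eta}-c^2)/(4c\sqrt{\beta}\,u^{1/\eta})$ while $\Psi_{\frac{d-1}{2}}(\tfrac12+x)\to\mathds{1}_{x>0}$; bounded convergence (the integrand lies in $[0,1]$ and $u$ has a fixed law) yields
\[
\mathbb{P}_{z\sim\mathcal{G}(\sigma,\eta,k)}\{\Vert z+\delta\Vert_2\le T\}\ \xrightarrow{\,d\to\infty\,}\ \mathbb{P}_{u\sim\Gamma(\alpha,1)}\{\beta u^{2/\eta}<1-c^2\}=\Lambda_\alpha\!\left(\big((1-c^2)/\beta\big)^{\eta/2}\right),
\]
so the limiting certified radius is $c^{\star}(\eta,k)\,\sigma\sqrt{d}$ with $c^{\star}(\eta,k)=\sqrt{1-\beta\,\Lambda_\alpha^{-1}(\tfrac12)^{2/\eta}}$.

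It then remains to verify $c^{\star}(\eta,k)>0.02$ over the finite grid $\eta\in\{1,\tfrac12,\dots,\tfrac1{50}\}$, $d-2k\in[1,30]\cap\mathbb{N}$. Using the gamma-median estimate $\Lambda_\alpha^{-1}(\tfrac12)\approx\alpha-\tfrac13$ and $\Gamma(\alpha+2/\eta)/\Gamma(\alpha)=\prod_{j=0}^{2/\eta-1}(\alpha+j)$, one gets $c^{\star}(\eta,k)\gtrsim\sqrt{1-\exp(-\tfrac{2}{d-2k}(\tfrac13+\tfrac1\eta))}$, which is $\ge0.2$ throughout the grid (worst corner $\eta=1,\ d-2k=30$; note $c^{\star}$ grows as $\eta$ shrinks, which is the source of the tighter-constant claim) — comfortably above $0.02$, so a single threshold on $d$, namely the maximum of the finitely many per-pair thresholds, absorbs the $o(1)$ errors from the gamma-quantile asymptotics, the Beta-concentration, and the bounded-convergence step. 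I expect this last passage to the limit to be the main obstacle: for each fixed $(\eta,k)$ one must make quantitative the rate at which $\Psi_{\frac{d-1}{2}}(\tfrac12+x_d)\to\mathds{1}_{x>0}$ near the transition $x_d\to0$ and at which $T^2/(\sigma^2 d)\to1$, which is the technical heart of the argument; recovering a constant sharper than the deliberately loose $0.02$ would additionally need a numerical sweep of $c^{\star}(\eta,k)$.
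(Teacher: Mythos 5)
Your reduction is the same as the paper's: $(\sigma,p,2)$-concentration forces $B=1$ for the truncated distribution $\mathcal{Q}$ (the paper's Lemma \ref{lemmaforq}), which collapses the DSRS feasible set onto the indicator $\mathds{1}_{\Vert z\Vert_2\le T}$ (the paper drops the $A$-constraint in Lemma \ref{lemmae2}, which is equivalent to your monotonicity-in-$A$ remark since any feasible $\tilde f$ with $B=1$ dominates the indicator pointwise), and the level-set/beta computation then yields exactly the discriminant of Equation (\ref{dis8}). Where you genuinely diverge is in how that inequality is verified at $\rho=0.02\sigma\sqrt d$. The paper bounds $\Psi_{\frac{d-1}{2}}(\cdot)\ge\theta\,\mathbb{I}(\cdot\ge\tau)$ and $T\ge\sigma\sqrt{\beta d}$ via Chebyshev (Lemmas \ref{lemmae3}, \ref{lemmae4}) with hand-picked constants $\theta=0.999$, $\beta=0.99$, $\tau=0.6$, reduces to $\theta\Lambda_{\frac{d-2k}{\eta}}(m)$ for an explicit $m$, and checks all $30\times 50$ grid values numerically in Table \ref{t3}. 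You instead pass to the $d\to\infty$ limit by dominated convergence (legitimate here because $(d-2k)/\eta$ is fixed, so the gamma law of $u$ does not move with $d$, and the exceptional set where the limiting argument of $\Psi$ vanishes has measure zero), obtaining the closed-form asymptotic constant $c^{\star}=\sqrt{1-\beta\,\Lambda_\alpha^{-1}(\tfrac12)^{2/\eta}}$ and checking it exceeds $0.02$ analytically at the worst grid corner. Your route buys the sharp limiting constant (which also cleanly explains why smaller $\eta$ tightens the factor, cf.\ Figure \ref{incmu}) at the cost of being purely asymptotic with no explicit $d_0$; the paper's route gives an effective finite-$d$ certificate but a looser constant and a numerical table. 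Two minor cautions: your closed-form estimate $c^{\star}\gtrsim\sqrt{1-\exp(-\tfrac{2}{d-2k}(\tfrac13+\tfrac1\eta))}$ is actually a slight \emph{over}estimate at the worst corner $(\eta,d-2k)=(1,30)$ (it gives $\approx0.29$ whereas the exact value is $\approx0.23$), so you should either compute that corner directly or use a genuine one-sided bound — the conclusion $c^{\star}>0.02$ survives either way; and the claim $B=1$ needs the small measure-theoretic remark (handled in Lemma \ref{lemmaforq}) that the a.e.-Gaussian statement transfers to the TEGG measure despite its density singularity at the origin.
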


\begin{figure}[t!]
  \centering
  \includegraphics[width=3in]{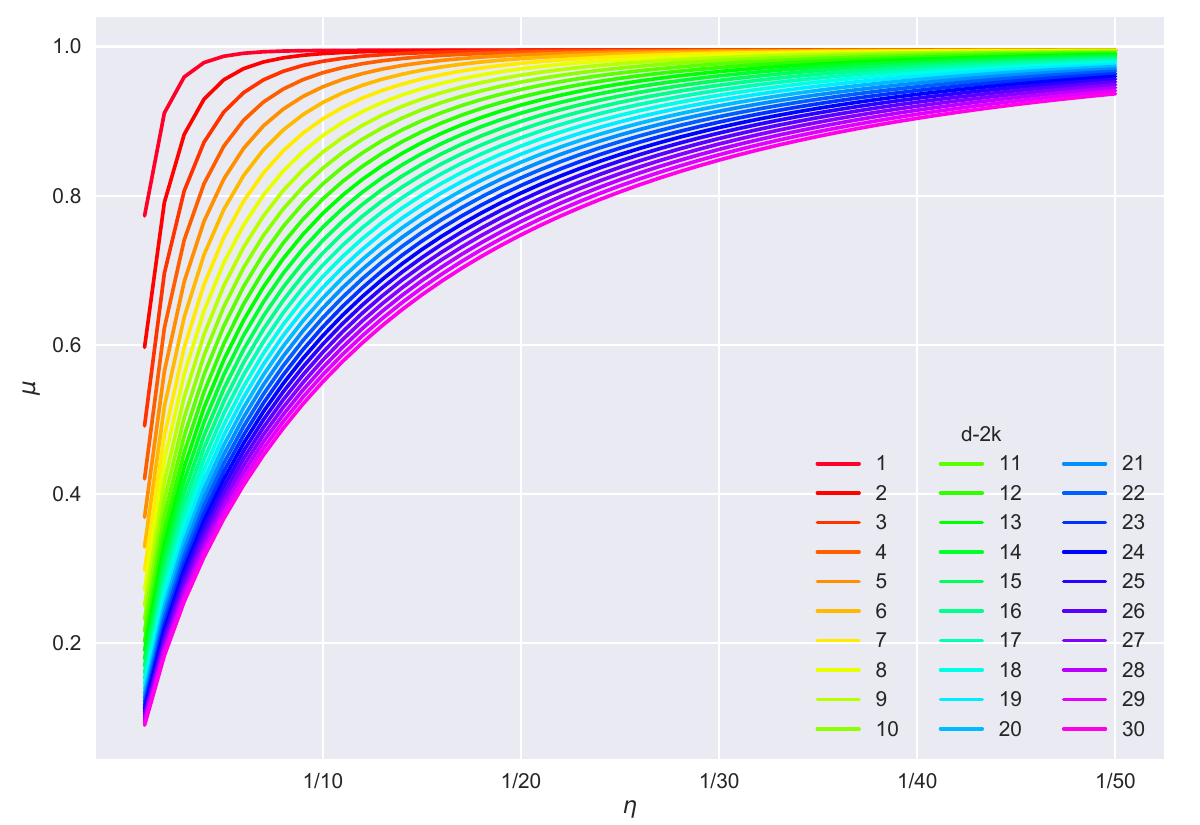}
  \caption{Tight factor $\mu$ grows as $\eta$ shrinks, for most $d-2k \in [1, 30] \cap \mathbb{N}$.}
  \label{incmu}
\end{figure}

\begin{figure*}[htbp!]
  \centering
  \includegraphics[width=0.8\linewidth]{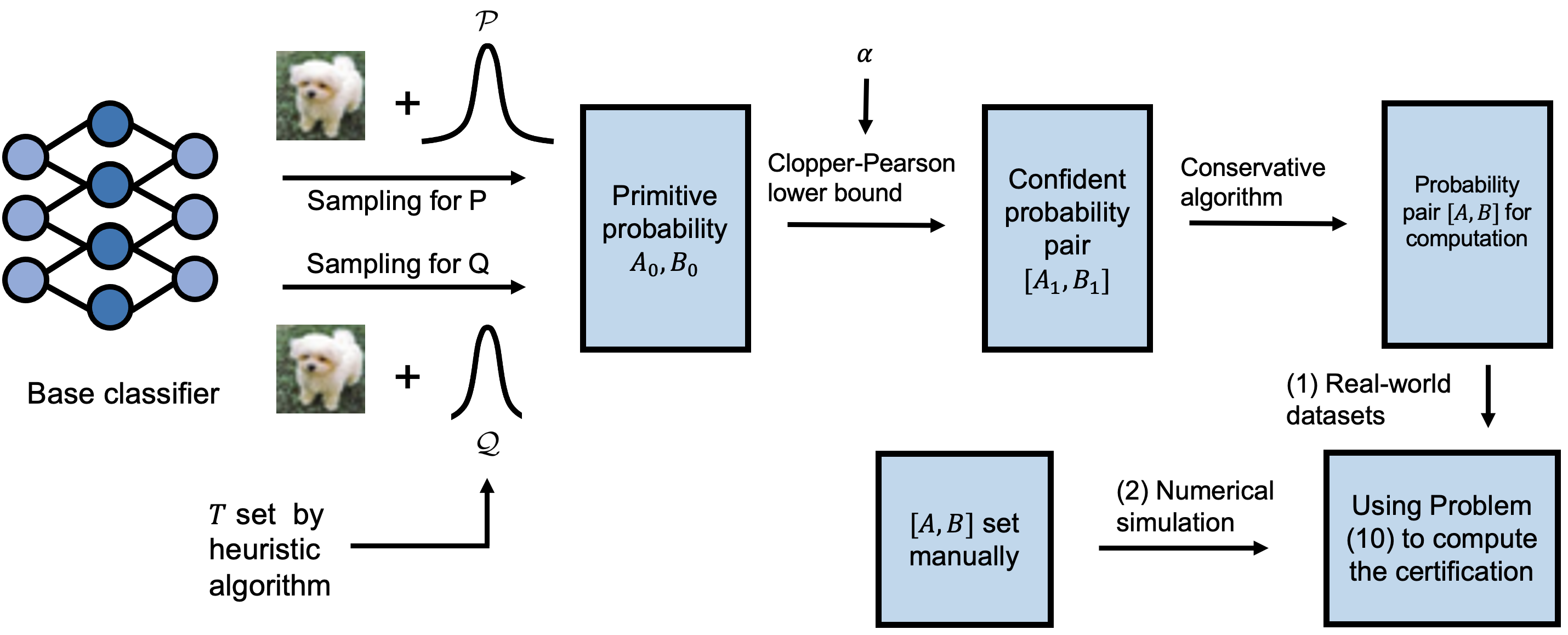}
  \caption{Illustration for experiments.}
 \label{ex_pcd}
 \vspace{-5mm}
\end{figure*}

We briefly summarize the proof here and leave the details in Appendix \ref{appth}.
In essence, Theorem 1 is generalizing Theorem 2 of \citet{li2022} to EGG.
Practically, it is intractable to derive an analytic solution for the certified radius $r_{DSRS}$ for EGG. 
Therefore, the problem is simplified by introducing the concentration assumption, whereupon we solve Problem (\ref{primaldsrs}) only considering the truncated distribution $\mathcal{Q}$. The derivation is very similar to the solution of Problem (\ref{primaldsrs}), and the certification of the radius $\rho$ for EGG is finally dependent on the discriminant
\begin{equation}\label{dis8}
\mathbb{E}_{u\sim \Gamma(\frac{d-2k}{\eta}, 1)}\Psi_{\frac{d-1}{2}}\left(\frac{T^2-(\sigma_g(2u)^{\frac{1}{\eta}}-\rho)^2}{4\rho \sigma_g(2u)^\frac{1}{\eta}}\right) \geq \frac{1}{2}.
\end{equation}
With this formulation, whether a radius $\rho$ can be certified can be directly judged since the LHS of Equation (\ref{dis8}) is a function of $\rho$. 
For Theorem \ref{fixbase}, we just need to substitute $\rho=0.02\sigma\sqrt{d}$ into Equation (\ref{dis8}), and check whether the inequality holds. If true, then $\rho=0.02\sigma\sqrt{d}$ is certified, and vice versa.
For the detailed derivation, please see the proof of Lemma \ref{lemmae2} in the appendix.

Plus, we explore the effects of EGG under the ($\sigma, p, \eta$)-concentration assumption, where we can construct $\Omega(d^{1/\eta})$ lower bounds for the certified radius. However, though it is formally tighter than $\Omega(\sqrt{d}$), we find these two lower bounds are fundamentally equivalent. See Theorem \ref{thcorres} in Appendix \ref{apthcores}. 

\noindent\textbf{Tighter constant factors of lower bounds.~}
Besides certifying the lower bound radius $0.02\sigma\sqrt{d}$, the proof of Theorem \ref{fixbase} naturally contains an approach to determining tight constant factors for each EGG distribution (Algorithm \ref{alg:tightmu}). In fact, the value of the LHS of Equation (\ref{dis8}) monotonically decreases with $\rho$, meaning that performing a simple binary search on $\rho$ can provide the accurate certified radius. Therefore, we consider parameterizing the radius $\rho$ into $\mu\sigma\sqrt{d}$, thus binary searching on the constant factor will derive the tight constant factor $\mu$ for the $\Omega(\sqrt{d})$ lower bound (Algorithm \ref{alg:tightmu}). We report computational results for $\mu$ in Figure \ref{incmu}, where for values of $d-2k$ except $1$, the tight constant factor $\mu$ increases monotonically as the $\eta$ decreases. Essentially, these results demonstrate that the solution to the curse of dimensionality provided by \citet{li2022} (with General Gaussian, namely $\eta=2$ in EGG) can be improved by choosing smaller $\eta\in (0,2)$, for most $d-2k \in [1, 30] \cap \mathbb{N}$. 

Overall, the theoretical analysis shows that when the base classifier satisfies the concentration property, EGG with $\eta \in(0,2)$ brings significant enhancement for the lower bound of the certified radius offered by DSRS \citep{li2022}. Obviously, it is hard for the realistic model to be perfectly \emph{concentrated}, thus EGG takes different effects under real classifiers. However, what remains unchanged is that EGG also comprehensively improves the certification of DSRS. See Section \ref{exs} for details.

\section{Experiments}\label{exs}

\begin{table*}[t!] 
\centering
\caption{Certified radius at $r$ for standardly augmented models, certified by ESG under DSRS}\label{best_ESG_std}
\resizebox{1.0\linewidth}{!}{%
\begin{tabular}{ccccc ccccc ccccc c} 
\toprule
\multirow{2}{*}{Dataset} & \multirow{2}{*}{Method} &\multicolumn{14}{c}{Certified accuracy at $r$}\\
\cline{3-16}
&&  0.25  &  0.50  &  0.75  &  1.00  &  1.25  &  1.50  &  1.75  &  2.00  &  2.25  &  2.50  &  2.75  &  3.00  &  3.25  &  3.50 \\
\hline
\multirow{4}{*}{CIFAR10}& ESG, $\eta=1.0$ &57.6\% &42.6\% &31.3\% &21.5\% &15.8\% &12.8\% &8.6\% &6.8\% &4.3\% &2.3\% &1.3\% &0.8\% &0.3\% &0.1\% \\
& ESG, $\eta=2.0$ (Gaussian) &57.6\% &42.6\% &31.6\% &21.5\% &15.8\% &12.7\% &8.8\% &6.8\% &4.5\% &2.4\% &1.3\% &0.7\% &0.2\% &0.2\%\\
& ESG, $\eta=4.0$ &57.6\% &42.6\% &31.3\% &21.5\% &15.9\% &12.9\% &8.6\% &6.9\% &4.3\% &2.4\% &1.3\% &0.8\% &0.2\% &0.1\%   \\
& ESG, $\eta=8.0$ &57.8\% &42.6\% &31.6\% &21.6\% &15.9\% &12.9\% &8.9\% &6.7\% &4.2\% &2.4\% &1.3\% &0.9\% &0.2\% &0.1\%  \\
\hline
\multirow{4}{*}{ImageNet}& ESG, $\eta=1.0$ &59.6\% &51.5\% &43.2\% &37.9\% &33.0\% &26.8\% &23.1\% &21.5\% &19.9\% &17.4\% &13.8\% &11.5\% &10.3\% &7.7\% \\
& ESG, $\eta=2.0$, (Gaussian)&59.6\% &51.6\% &43.1\% &38.0\% &32.9\% &26.9\% &23.1\% &21.5\% &19.7\% &17.4\% &13.6\% &11.4\% &10.1\% &8.3\% \\
& ESG, $\eta=4.0$ &59.6\% &51.5\% &43.2\% &38.0\% &32.9\% &27.2\% &23.1\% &21.6\% &19.9\% &17.2\% &13.6\% &11.4\% &10.2\% &8.0\% \\
& ESG, $\eta=8.0$ &59.6\% &51.5\% &43.2\% &38.0\% &33.0\% &26.8\% &23.1\% &21.6\% &19.7\% &17.3\% &13.6\% &11.5\% &10.1\% &8.4\% \\
\bottomrule
\end{tabular}
}
\vspace{-3mm}
\end{table*}

\begin{table*}[t] 
\centering
\caption{Certified radius at $r$ for standardly augmented models, certified by EGG under DSRS}\label{best_EGG_std}
\resizebox{1.0\linewidth}{!}{%
\begin{tabular}{ccccc ccccc ccccc c} 
\toprule 
\multirow{2}{*}{Dataset} & \multirow{2}{*}{Method} &\multicolumn{14}{c}{Certified accuracy at $r$}\\
\cline{3-16}
&&  0.25  &  0.50  &  0.75  &  1.00  &  1.25  &  1.50  &  1.75  &  2.00  &  2.25  &  2.50  &  2.75  &  3.00  &  3.25  &  3.50 \\
\hline
\multirow{7}{*}{CIFAR10}& EGG, $\eta=0.25$ &54.2\% &37.6\% &23.5\% &16.5\% &9.4\% &4.5\% &0.5\% &0.1\% &0.0\% &0.0\% &0.0\% &0.0\% &0.0\% &0.0\% \\
& EGG, $\eta=0.5$ &55.5\% &40.4\% &25.2\% &19.1\% &13.4\% &8.5\% &5.5\% &2.0\% &0.4\% &0.1\% &0.0\% &0.0\% &0.0\% &0.0\%\\
& EGG, $\eta=1.0$ &56.3\% &41.7\% &28.2\% &20.0\% &15.1\% &10.5\% &7.1\% &4.2\% &1.9\% &0.9\% &0.1\% &0.0\% &0.0\% &0.0\%  \\
& DSRS~\citep{li2022} (EGG, $\eta=2.0$) &56.7\% &42.4\% &29.3\% &20.2\% &15.7\% &11.5\% &8.0\% &5.5\% &2.6\% &1.5\% &0.6\% &0.1\% &0.0\% &0.0\%  \\
& EGG, $\eta=4.0$ &57.5\% &42.5\% &30.0\% &20.2\% &15.9\% &12.2\% &8.5\% &6.5\% &3.4\% &1.8\% &0.9\% &0.4\% &0.0\% &0.0\%  \\
&Ours (EGG, $\eta=8.0$) &\textbf{57.6\%} &\textbf{42.5\%} &\textbf{30.9\%} &\textbf{20.6\%} &\textbf{15.8\%} &\textbf{12.3\%} &\textbf{8.6\%} &\textbf{6.6\%} &\textbf{3.7\%} &\textbf{2.1\%} &\textbf{1.1\%} &\textbf{0.5\%} &\textbf{0.2\%} &0.0\% \\
\hline
\multirow{6}{*}{ImageNet}& EGG, $\eta=0.25$ &53.8\% &41.4\% &28.4\% &20.1\% &7.1\% &0.8\% &0.0\% &0.0\% &0.0\% &0.0\% &0.0\% &0.0\% &0.0\% &0.0\% \\
& EGG, $\eta=0.5$ &54.9\% &46.3\% &36.4\% &26.3\% &22.1\% &15.2\% &8.7\% &3.1\% &0.8\% &0.0\% &0.0\% &0.0\% &0.0\% &0.0\% \\
& EGG, $\eta=1.0$ &57.0\% &47.8\% &39.9\% &32.8\% &24.9\% &22.0\% &18.5\% &13.1\% &9.2\% &5.0\% &2.1\% &0.5\% &0.0\% &0.0\% \\
& DSRS~\citep{li2022} (EGG, $\eta=2.0)$ &58.4\% &48.5\% &41.5\% &35.2\% &28.9\% &23.3\% &21.3\% &18.8\% &14.1\% &11.1\% &8.9\% &6.1\% &2.2\% &1.4\%  \\
& EGG, $\eta=4.0$ &58.7\% &49.9\% &42.6\% &36.4\% &31.0\% &23.9\% &22.3\% &20.2\% &17.3\% &13.2\% &10.7\% &9.2\% &6.8\% &4.0\% \\
&Ours (EGG, $\eta=8.0$) &\textbf{59.1\%} &\textbf{50.8\%} & \textbf{42.9\%} &\textbf{36.8\%} &\textbf{31.8\%} &\textbf{24.6\%} &\textbf{22.6\%} &\textbf{20.7\%} &\textbf{18.9\%} &\textbf{14.5\%} &\textbf{11.7\%} &\textbf{10.1\%} &\textbf{8.6\%} &\textbf{5.2\%} \\
\bottomrule 
\end{tabular}
}
\vspace{-2mm}
\end{table*}

In this section, we report the effects of ESG and EGG distributions on the certified radius. Figure \ref{ex_pcd}
sketches the outline of our experiments. Here, we only focused on real-world datasets and leave details for numerical simulation experiments in Appendix \ref{apspns}. 
For the case of real-world datasets, $A$ and $B$ in Problem (\ref{dualdsrs}) are initially from Monte Carlo sampling results on a given base classifier, then reduced to respective Clopper-Pearson confidence intervals~\citep{clopper1934}, and finally determined by a conservative algorithm~\citep{li2022}. The procedure for real-world datasets can also be seen in Algorithm \ref{alg:DSRS} in Appendix \ref{DSRSalg}.

\textbf{Experimental setups.~} 
All base classifiers used in this work are trained by CIFAR-10~\citep{krizhevsky2009} or ImageNet~\citep{russakovsky2015}, taking EGG with $\eta=2$ as the noise distribution. 

We choose NP certification as the baseline since it is the state-of-the-art method for single distribution certification. The sampling distribution for the NP method is $\mathcal{P}=\mathcal{S}(\sigma, \eta)$ for ESG, and $\mathcal{P}=\mathcal{G}(\sigma, \eta, k)$ for EGG. For fairness, the sampling number $N$ is set to $100000$, with the significance level $\alpha=0.001$. 

In the double-sampling process, we set $k=1530$ and $k=75260$ for CIFAR-10 and ImageNet, respectively, in consistent with base classifiers. The threshold parameter $T$ for $\mathcal{Q}$ is determined by a heuristic algorithm from~\citet{li2022}. Specifically, we set 
\begin{equation}
\begin{aligned}
T_{\mathcal{S}} &= \sigma_s(2\Lambda^{-1}_{\frac{d}{\eta}}(\kappa))^{\frac{1}{\eta}} = \sigma\sqrt{\frac{d\Gamma(\frac{d}{\eta})}{\Gamma(\frac{d+2}{\eta})}}(\Lambda^{-1}_{\frac{d}{\eta}}(\kappa))^{\frac{1}{\eta}},\\
T_{\mathcal{G}} &= \sigma_g(2\Lambda^{-1}_{\frac{d-2k}{\eta}}(\kappa))^{\frac{1}{\eta}} = \sigma\sqrt{\frac{d\Gamma(\frac{d - 2k}{\eta})}{\Gamma(\frac{d-2k+2}{\eta})}}(\Lambda^{-1}_{\frac{d-2k}{\eta}}(\kappa))^{\frac{1}{\eta}},
\end{aligned}
\end{equation}
where $\kappa$ is determined by the heuristic algorithm~\citep{li2022}, a simple function of Monte Carlo sampling probability from $\mathcal{P}$. The sampling numbers $N_1, N_2$ are 50000, and the significance levels $\alpha_1, \alpha_2$ are 0.0005 for Monte Carlo sampling, equal for $\mathcal{P}$ and $\mathcal{Q}$. The error bound $e$ for certified radius is set at $1 \times 10^{-6}$. 

The settings of the exponent $\eta$ for ESG and EGG are slightly different. We choose $\eta\in\{1.0, 2.0, 4.0, 8.0\}$ as the exponent of ESG distributions. 
For EGG, we take $\mathcal{P}=\mathcal{G}(\sigma, \eta, k)$ and $\mathcal{Q}=\mathcal{G}_t(\sigma, \eta, k, T)$. To display the increasing trends more clearly, we choose $\eta\in\{0.25, 0.5, 1.0, 2.0, 4.0, 8.0\}$ for them. For the convenience of comparison, we fix the base classifier for each group of experiments, and all the sampling distributions keep $\mathbb{E}r^2$ the same with the base classifier following the setting of the previous work~\citep{yang2020, li2022}.

\textbf{Evaluation metrics.} We consider the $\ell_2$ certified radius in all experiments. To evaluate the certified robustness of smoothed classifiers, we take \emph{certified accuracy} $\triangleq CA(r, \mathcal{J})$ at radius $r$ on test dataset $\mathcal{J}$ as the basic metric~\citep{cohen2019, zhang2020, li2022}. For $(x_i, y_i)$ in $\mathcal{J}$, if the certified radius computed by a certification method (e.g. NP, DSRS) is $r_i$, and the output for $x_i$ through the smoothed classifier is $y_i$, we say $(x_i, y_i)$ is certified accurate at radius $r_i$ for the smoothed classifier. On this basis, $CA(r, \mathcal{J})$ is the ratio of examples in $\mathcal{J}$ whose certified radius $r_i\geq r$. Defined on the certified accuracy, \emph{Average Certified Radius} (ACR)~\citep{zhai2020, jeong2020} is another main metric that we use to show results of different distributions for real-world datasets. Formally, we have $ACR \triangleq \int_{r\geq0}CA(r, \mathcal{J})\cdot {\rm d}r$. 

\textbf{Integral methods.~}
The \texttt{scipy} package loses precision when calculating integrals for the $\Gamma(a, 1)$ distribution with large parameters (say, $a > 500$ ) on infinite intervals. To solve this problem, we implement a Linear Numerical Integration (LNI) method to compute the expectations fast and accurately based on Lemma \ref{lemmag1}. With this great property, we can compute the integral for the gamma distribution by only considering $1 - \iota$ total mass. Though primitive, we find the LNI method that uniformly segments the integration interval provides good precision for certifications on CIFAR-10 and ImageNet. For all the ESG experiments, we set the number of segments to $256$, and $\iota=10^{-4}$. We illustrate the effect of the segment number on CIFAR-10 and ImageNet in Figure \ref{effofseg}. 
For EGG distributions, we inherit the integral method from~\citet{li2022}, where we compute the integrals on the interval $(0, +\infty)$ by the $\texttt{scipy}$ package. The computational methods for ESG and EGG are left to Appendix \ref{computation}.

\begin{figure*}[t!]
	\centering
	\begin{subfigure}{0.24\linewidth}
		\centering
		\includegraphics[height=0.8\linewidth, width=1.0\linewidth]{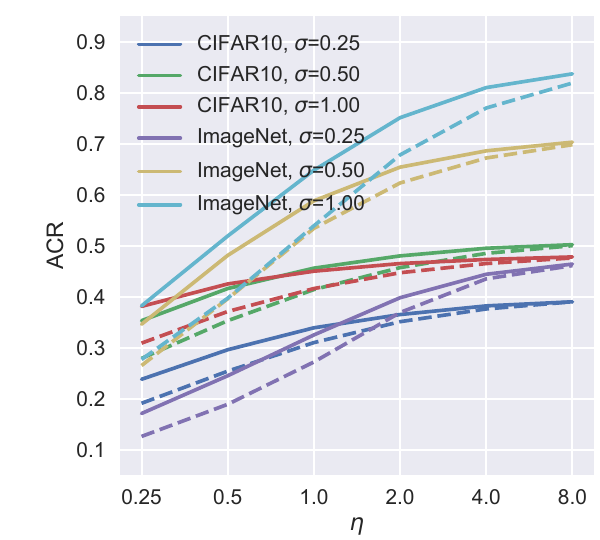}
		\caption{}\label{f4a}
	\end{subfigure}
	\centering
	\begin{subfigure}{0.24\linewidth}
		\centering
		\includegraphics[height=0.8\linewidth, width=1.0\linewidth]{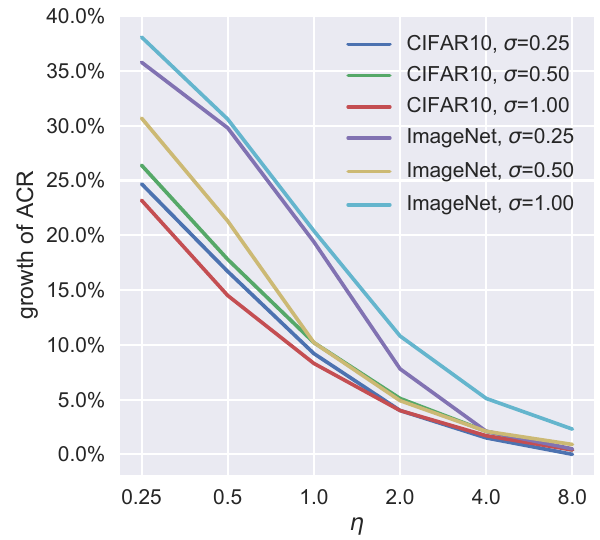}
		\caption{}\label{f4b}
	\end{subfigure}
    \centering
	\begin{subfigure}{0.24\linewidth}
		\centering
		\includegraphics[height=0.8\linewidth, width=1.0\linewidth]{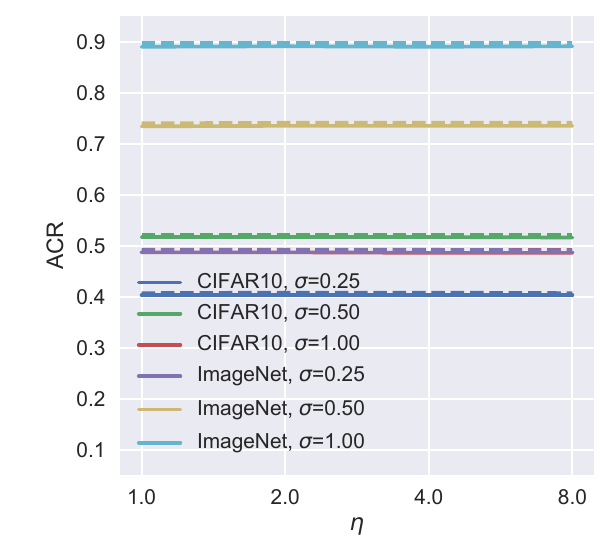}
		\caption{}\label{f4c}
	\end{subfigure}
    \centering
	\begin{subfigure}{0.24\linewidth}
		\centering
		\includegraphics[height=0.8\linewidth, width=1.0\linewidth]{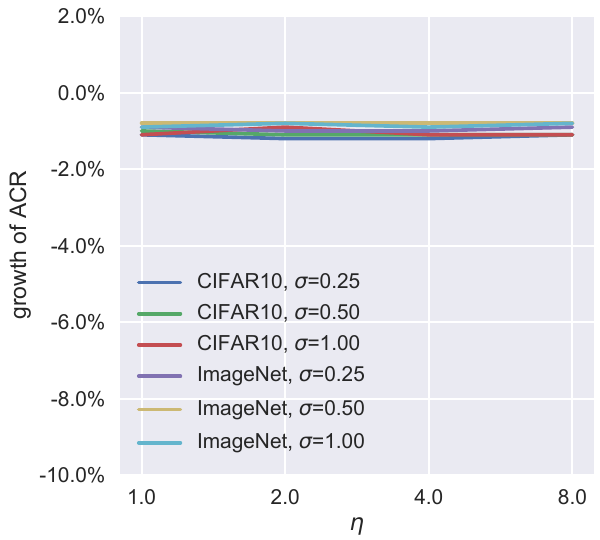}
		\caption{}\label{f4d}
	\end{subfigure}
	\caption{ACR results on real-world datasets. \textbf{(a).} ACR monotonically increases with $\eta$ in EGG. \textbf{(b).} The ACR growth gain from DSRS relative to NP shrinks with $\eta$ in EGG. \textbf{(c).} ACR stays almost constant in ESG. \textbf{(d).} The ACR growth gain from DSRS remains almost constant in ESG. For (a) and (c), solid lines represent results from DSRS, and dotted lines represent results from NP.}
	\label{demo_main}
\end{figure*}

\begin{figure}[t!]
  \centering
  \includegraphics[width=1.0\linewidth]{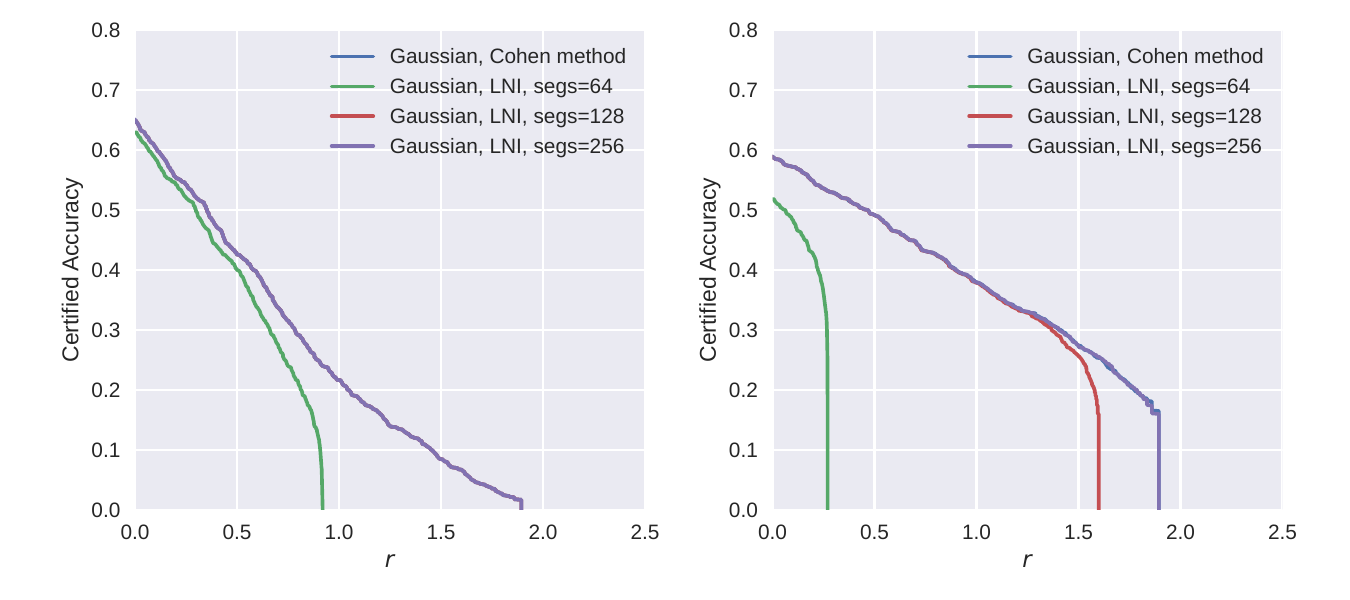}
  \caption{Results for the $\ell_2$ certified radius under different numbers of segment settings of LNI, both base clasifiers are standardly augmented by General Gaussian with $\sigma=0.50$. Left: For CIFAR-10, the curves show segs $\geq$ 128 is enough for CIFAR-10. Right: on ImageNet. The curves of Cohen's formula and segs = 256 are almost overlapped.} 
  \label{effofseg}
 \label{figmix}
 \vspace{-0.5cm} 
\end{figure}

\textbf{Experimental results.~}
We only discuss the effect of ESG and EGG in DSRS, since their influence on NP is similar. Table \ref{best_ESG_std} and Table \ref{best_EGG_std} report the maximum certified accuracy among base classifiers with $\sigma\in\{0.25, 0.50, 1.00\}$, which is widely adopted by previous work to show experimental results of certified robustness. To show the universality of our results, we provide experimental results on Consistency \citep{jeong2020} and SmoothMix \citep{jeong2021} models in Appendix \ref{supex}. We leave the detailed results on NP and DSRS in Appendix \ref{full_data}.

\emph{The exponent of ESG has little impact on the certification.}
As shown in Table \ref{best_ESG_std}, on both CIFAR10 and ImageNet, the certification provided by ESG distributions is highly insensitive to the alternation of $\eta$, which echos our theoretical analysis in Section \ref{ESG_theory}. Currently, the mainstream view believes Gaussian is the best distribution to provide the $\ell_2$ certified radius for RS, and our results show many ESG can provide the best as well.

\emph{The larger the exponent in EGG, the better the certification on real classifiers.}
Table \ref{best_EGG_std} reveals the phenomenon that certified accuracy at $r$ increases with the $\eta$ of EGG. On both CIFAR10 and ImageNet, our strategy to use EGG with a larger $\eta$ (8.0 in the tables) performs obviously better than General Gaussian (EGG with $\eta=2.0$) used in DSRS~\citep{li2022}. This is different from our theoretical analysis in \ref{EGG_theory}, because real classifiers do not perfectly satisfy the concentration property. Our Figure \ref{figrelax} has more details on this.

\emph{Better sampling probability + better probability utilization $\Rightarrow$ better certification.} The certified accuracy provided by (DS)RS is decided by two steps: sampling and certifying. For ESG, the stable results under changing $\eta$ indicate different ESG noise gives almost the same prediction accuracy on classifiers. Moreover, our results show that the larger $\eta$ in EGG offers both higher sampling accuracy and a better ability to use the probability. 

\emph{The improvement brought by EGG is finite.}
For EGG, though the certification improves with $\eta$, Figure \ref{f4b} reveals the growth brought by DSRS relative to NP shrinks with $\eta$, despite the fact that certified accuracy provided by DSRS keeps increasing. Furthermore, we see there is likely to be an upper bound for the DSRS certification: Gaussian's certification (in this work, $\eta=2$ for ESG). Additionally, the right of Figure \ref{primaldsrs} also implies the convergence. Therefore, chances are high that the growth of EGG's certification is not endless, and it is hard for EGG to challenge ESG's position as the optimal distribution in (DS)RS.

\section{Conclusion}
We report in this paper that the exponent $\eta$ in ESG is almost unable to affect the certification in RS and DSRS, from both theoretical and experimental perspectives. We derive the analytic sampling probability-certified radius relation under high-dimensional assumptions, creatively bridging the beta distribution and the normal distribution, and broadening the current optimal distribution in RS from Gaussian to many of ESG.
In addition, we find EGG distributions provide significant amelioration on the current solution to the curse of dimensionality on RS, and they can improve the $\ell_2$ certified radii of smoothed classifiers on real datasets. 
The working mechanism of EGG can be quite different, depending on whether the classifier satisfies a concentration property.

\section*{Impact Statement}
This paper is dedicated to advancing the certified robustness of machine learning, and we do not think any of its societal impacts must be highlighted here.

\section*{Acknowledgements}
This work is supported in part by the Overseas Research Cooperation Fund of Tsinghua Shenzhen International Graduate School (HW2021013).

\bibliography{final_paper}

\begin{thebibliography}{47}
\providecommand{\natexlab}[1]{#1}
\providecommand{\url}[1]{\texttt{#1}}
\expandafter\ifx\csname urlstyle\endcsname\relax
  \providecommand{\doi}[1]{doi: #1}\else
  \providecommand{\doi}{doi: \begingroup \urlstyle{rm}\Url}\fi

\bibitem[Alfarra et~al.(2022)Alfarra, Bibi, Khan, Torr, and
  Ghanem]{alfarra2022}
Alfarra, M., Bibi, A., Khan, N., Torr, P. H.~S., and Ghanem, B.
\newblock Deformrs: Certifying input deformations with randomized smoothing.
\newblock In \emph{Thirty-Sixth {AAAI} Conference on Artificial Intelligence,
  {AAAI} 2022, Thirty-Fourth Conference on Innovative Applications of
  Artificial Intelligence, {IAAI} 2022, The Twelveth Symposium on Educational
  Advances in Artificial Intelligence, {EAAI} 2022 Virtual Event, February 22 -
  March 1, 2022}, pp.\  6001--6009. {AAAI} Press, 2022.

\bibitem[Athalye et~al.(2018)Athalye, Carlini, and Wagner]{athalye2018}
Athalye, A., Carlini, N., and Wagner, D.~A.
\newblock Obfuscated gradients give a false sense of security: Circumventing
  defenses to adversarial examples.
\newblock In Dy, J.~G. and Krause, A. (eds.), \emph{Proceedings of the 35th
  International Conference on Machine Learning, {ICML} 2018,
  Stockholmsm{\"{a}}ssan, Stockholm, Sweden, July 10-15, 2018}, volume~80 of
  \emph{Proceedings of Machine Learning Research}, pp.\  274--283. {PMLR},
  2018.

\bibitem[Bansal et~al.(2022)Bansal, Chiang, Curry, Jain, Wigington, Manjunatha,
  Dickerson, and Goldstein]{bansal2022}
Bansal, A., Chiang, P., Curry, M.~J., Jain, R., Wigington, C., Manjunatha, V.,
  Dickerson, J.~P., and Goldstein, T.
\newblock Certified neural network watermarks with randomized smoothing.
\newblock In Chaudhuri, K., Jegelka, S., Song, L., Szepesv{\'{a}}ri, C., Niu,
  G., and Sabato, S. (eds.), \emph{International Conference on Machine
  Learning, {ICML} 2022, 17-23 July 2022, Baltimore, Maryland, {USA}}, volume
  162 of \emph{Proceedings of Machine Learning Research}, pp.\  1450--1465.
  {PMLR}, 2022.

\bibitem[Blum et~al.(2020)Blum, Dick, Manoj, and Zhang]{blum2020}
Blum, A., Dick, T., Manoj, N., and Zhang, H.
\newblock Random smoothing might be unable to certify l{\(\infty\)} robustness
  for high-dimensional images.
\newblock \emph{J. Mach. Learn. Res.}, 21:\penalty0 211:1--211:21, 2020.

\bibitem[Carlini \& Wagner(2017)Carlini and Wagner]{carlini2017}
Carlini, N. and Wagner, D.
\newblock Adversarial examples are not easily detected: Bypassing ten detection
  methods.
\newblock In \emph{Proceedings of the 10th ACM workshop on artificial
  intelligence and security}, pp.\  3--14, 2017.

\bibitem[Carlini et~al.(2023)Carlini, Tram{\`{e}}r, Dvijotham, Rice, Sun, and
  Kolter]{carlini2022}
Carlini, N., Tram{\`{e}}r, F., Dvijotham, K.~D., Rice, L., Sun, M., and Kolter,
  J.~Z.
\newblock (certified!!) adversarial robustness for free!
\newblock In \emph{The Eleventh International Conference on Learning
  Representations, {ICLR} 2023, Kigali, Rwanda, May 1-5, 2023}. OpenReview.net,
  2023.

\bibitem[Chernoff \& Scheffe(1952)Chernoff and Scheffe]{chernoff1952}
Chernoff, H. and Scheffe, H.
\newblock A generalization of the neyman-pearson fundamental lemma.
\newblock \emph{The Annals of Mathematical Statistics}, pp.\  213--225, 1952.

\bibitem[Chiang et~al.(2020)Chiang, Curry, Abdelkader, Kumar, Dickerson, and
  Goldstein]{chiang2020}
Chiang, P., Curry, M.~J., Abdelkader, A., Kumar, A., Dickerson, J., and
  Goldstein, T.
\newblock Detection as regression: Certified object detection with median
  smoothing.
\newblock In Larochelle, H., Ranzato, M., Hadsell, R., Balcan, M., and Lin, H.
  (eds.), \emph{Advances in Neural Information Processing Systems 33: Annual
  Conference on Neural Information Processing Systems 2020, NeurIPS 2020,
  December 6-12, 2020, virtual}, 2020.

\bibitem[Clopper \& Pearson(1934)Clopper and Pearson]{clopper1934}
Clopper, C.~J. and Pearson, E.~S.
\newblock The use of confidence or fiducial limits illustrated in the case of
  the binomial.
\newblock \emph{Biometrika}, 26\penalty0 (4):\penalty0 404--413, 1934.

\bibitem[Cohen et~al.(2019)Cohen, Rosenfeld, and Kolter]{cohen2019}
Cohen, J., Rosenfeld, E., and Kolter, J.~Z.
\newblock Certified adversarial robustness via randomized smoothing.
\newblock In Chaudhuri, K. and Salakhutdinov, R. (eds.), \emph{Proceedings of
  the 36th International Conference on Machine Learning, {ICML} 2019, 9-15 June
  2019, Long Beach, California, {USA}}, volume~97 of \emph{Proceedings of
  Machine Learning Research}, pp.\  1310--1320. {PMLR}, 2019.

\bibitem[Cullen et~al.(2022)Cullen, Montague, Liu, Erfani, and
  Rubinstein]{cullen2022}
Cullen, A.~C., Montague, P., Liu, S., Erfani, S.~M., and Rubinstein, B. I.~P.
\newblock Double bubble, toil and trouble: Enhancing certified robustness
  through transitivity.
\newblock In Koyejo, S., Mohamed, S., Agarwal, A., Belgrave, D., Cho, K., and
  Oh, A. (eds.), \emph{Advances in Neural Information Processing Systems 35:
  Annual Conference on Neural Information Processing Systems 2022, NeurIPS
  2022, New Orleans, LA, USA, November 28 - December 9, 2022}, 2022.

\bibitem[Dvijotham et~al.(2020)Dvijotham, Hayes, Balle, Kolter, Qin,
  Gy{\"{o}}rgy, Xiao, Gowal, and Kohli]{dvijothamframework}
Dvijotham, K.~D., Hayes, J., Balle, B., Kolter, J.~Z., Qin, C., Gy{\"{o}}rgy,
  A., Xiao, K., Gowal, S., and Kohli, P.
\newblock A framework for robustness certification of smoothed classifiers
  using f-divergences.
\newblock In \emph{8th International Conference on Learning Representations,
  {ICLR} 2020, Addis Ababa, Ethiopia, April 26-30, 2020}. OpenReview.net, 2020.

\bibitem[Eiras et~al.(2022)Eiras, Alfarra, Torr, Kumar, Dokania, Ghanem, and
  Bibi]{eiras2022}
Eiras, F., Alfarra, M., Torr, P. H.~S., Kumar, M.~P., Dokania, P.~K., Ghanem,
  B., and Bibi, A.
\newblock {ANCER:} anisotropic certification via sample-wise volume
  maximization.
\newblock \emph{Trans. Mach. Learn. Res.}, 2022, 2022.

\bibitem[Fischer et~al.(2021)Fischer, Baader, and Vechev]{fischer2021}
Fischer, M., Baader, M., and Vechev, M.~T.
\newblock Scalable certified segmentation via randomized smoothing.
\newblock In Meila, M. and Zhang, T. (eds.), \emph{Proceedings of the 38th
  International Conference on Machine Learning, {ICML} 2021, 18-24 July 2021,
  Virtual Event}, volume 139 of \emph{Proceedings of Machine Learning
  Research}, pp.\  3340--3351. {PMLR}, 2021.

\bibitem[Goodfellow et~al.(2015)Goodfellow, Shlens, and
  Szegedy]{goodfellow2014}
Goodfellow, I.~J., Shlens, J., and Szegedy, C.
\newblock Explaining and harnessing adversarial examples.
\newblock In Bengio, Y. and LeCun, Y. (eds.), \emph{3rd International
  Conference on Learning Representations, {ICLR} 2015, San Diego, CA, USA, May
  7-9, 2015, Conference Track Proceedings}, 2015.

\bibitem[Hao et~al.(2022)Hao, Ying, Dong, Su, Song, and Zhu]{hao2022}
Hao, Z., Ying, C., Dong, Y., Su, H., Song, J., and Zhu, J.
\newblock Gsmooth: Certified robustness against semantic transformations via
  generalized randomized smoothing.
\newblock In Chaudhuri, K., Jegelka, S., Song, L., Szepesv{\'{a}}ri, C., Niu,
  G., and Sabato, S. (eds.), \emph{International Conference on Machine
  Learning, {ICML} 2022, 17-23 July 2022, Baltimore, Maryland, {USA}}, volume
  162 of \emph{Proceedings of Machine Learning Research}, pp.\  8465--8483.
  {PMLR}, 2022.

\bibitem[Jeong \& Shin(2020)Jeong and Shin]{jeong2020}
Jeong, J. and Shin, J.
\newblock Consistency regularization for certified robustness of smoothed
  classifiers.
\newblock In Larochelle, H., Ranzato, M., Hadsell, R., Balcan, M., and Lin, H.
  (eds.), \emph{Advances in Neural Information Processing Systems 33: Annual
  Conference on Neural Information Processing Systems 2020, NeurIPS 2020,
  December 6-12, 2020, virtual}, 2020.

\bibitem[Jeong et~al.(2021)Jeong, Park, Kim, Lee, Kim, and Shin]{jeong2021}
Jeong, J., Park, S., Kim, M., Lee, H., Kim, D., and Shin, J.
\newblock Smoothmix: Training confidence-calibrated smoothed classifiers for
  certified robustness.
\newblock In Ranzato, M., Beygelzimer, A., Dauphin, Y.~N., Liang, P., and
  Vaughan, J.~W. (eds.), \emph{Advances in Neural Information Processing
  Systems 34: Annual Conference on Neural Information Processing Systems 2021,
  NeurIPS 2021, December 6-14, 2021, virtual}, pp.\  30153--30168, 2021.

\bibitem[Jia et~al.(2020)Jia, Wang, Cao, and Gong]{jia2020}
Jia, J., Wang, B., Cao, X., and Gong, N.~Z.
\newblock Certified robustness of community detection against adversarial
  structural perturbation via randomized smoothing.
\newblock In Huang, Y., King, I., Liu, T., and van Steen, M. (eds.),
  \emph{{WWW} '20: The Web Conference 2020, Taipei, Taiwan, April 20-24, 2020},
  pp.\  2718--2724. {ACM} / {IW3C2}, 2020.

\bibitem[Kotz(1975)]{kotz1975}
Kotz, S.
\newblock Multivariate distributions at a cross road.
\newblock In \emph{A Modern Course on Statistical Distributions in Scientific
  Work: Volume 1—Models and Structures Proceedings of the NATO Advanced Study
  Institute held at the University of Calgagry, Calgary, Alberta, Canada July
  29--August 10, 1974}, pp.\  247--270. Springer, 1975.

\bibitem[Krizhevsky et~al.(2009)Krizhevsky, Hinton, et~al.]{krizhevsky2009}
Krizhevsky, A., Hinton, G., et~al.
\newblock Learning multiple layers of features from tiny images.
\newblock 2009.

\bibitem[Kumar et~al.(2020)Kumar, Levine, Goldstein, and Feizi]{kumar2020}
Kumar, A., Levine, A., Goldstein, T., and Feizi, S.
\newblock Curse of dimensionality on randomized smoothing for certifiable
  robustness.
\newblock In \emph{Proceedings of the 37th International Conference on Machine
  Learning, {ICML} 2020, 13-18 July 2020, Virtual Event}, volume 119 of
  \emph{Proceedings of Machine Learning Research}, pp.\  5458--5467. {PMLR},
  2020.

\bibitem[Kurakin et~al.(2017)Kurakin, Goodfellow, and Bengio]{kurakin2016}
Kurakin, A., Goodfellow, I.~J., and Bengio, S.
\newblock Adversarial machine learning at scale.
\newblock In \emph{International Conference on Learning Representations}, 2017.

\bibitem[L{\'{e}}cuyer et~al.(2019)L{\'{e}}cuyer, Atlidakis, Geambasu, Hsu, and
  Jana]{lecuyer2019}
L{\'{e}}cuyer, M., Atlidakis, V., Geambasu, R., Hsu, D., and Jana, S.
\newblock Certified robustness to adversarial examples with differential
  privacy.
\newblock In \emph{2019 {IEEE} Symposium on Security and Privacy, {SP} 2019,
  San Francisco, CA, USA, May 19-23, 2019}, pp.\  656--672. {IEEE}, 2019.

\bibitem[Lee et~al.(2019)Lee, Yuan, Chang, and Jaakkola]{lee2019}
Lee, G., Yuan, Y., Chang, S., and Jaakkola, T.~S.
\newblock Tight certificates of adversarial robustness for randomly smoothed
  classifiers.
\newblock In Wallach, H.~M., Larochelle, H., Beygelzimer, A.,
  d'Alch{\'{e}}{-}Buc, F., Fox, E.~B., and Garnett, R. (eds.), \emph{Advances
  in Neural Information Processing Systems 32: Annual Conference on Neural
  Information Processing Systems 2019, NeurIPS 2019, December 8-14, 2019,
  Vancouver, BC, Canada}, pp.\  4911--4922, 2019.

\bibitem[Levine \& Feizi(2020)Levine and Feizi]{levine2020}
Levine, A. and Feizi, S.
\newblock (de)randomized smoothing for certifiable defense against patch
  attacks.
\newblock In Larochelle, H., Ranzato, M., Hadsell, R., Balcan, M., and Lin, H.
  (eds.), \emph{Advances in Neural Information Processing Systems 33: Annual
  Conference on Neural Information Processing Systems 2020, NeurIPS 2020,
  December 6-12, 2020, virtual}, 2020.

\bibitem[Levine \& Feizi(2021)Levine and Feizi]{levine2021}
Levine, A. and Feizi, S.
\newblock Improved, deterministic smoothing for l\({}_{\mbox{1}}\) certified
  robustness.
\newblock In Meila, M. and Zhang, T. (eds.), \emph{Proceedings of the 38th
  International Conference on Machine Learning, {ICML} 2021, 18-24 July 2021,
  Virtual Event}, volume 139 of \emph{Proceedings of Machine Learning
  Research}, pp.\  6254--6264. {PMLR}, 2021.

\bibitem[Li et~al.(2019)Li, Chen, Wang, and Carin]{li2019}
Li, B., Chen, C., Wang, W., and Carin, L.
\newblock Certified adversarial robustness with additive noise.
\newblock In Wallach, H.~M., Larochelle, H., Beygelzimer, A.,
  d'Alch{\'{e}}{-}Buc, F., Fox, E.~B., and Garnett, R. (eds.), \emph{Advances
  in Neural Information Processing Systems 32: Annual Conference on Neural
  Information Processing Systems 2019, NeurIPS 2019, December 8-14, 2019,
  Vancouver, BC, Canada}, pp.\  9459--9469, 2019.

\bibitem[Li et~al.(2021)Li, Weber, Xu, Rimanic, Kailkhura, Xie, Zhang, and
  Li]{li2021}
Li, L., Weber, M., Xu, X., Rimanic, L., Kailkhura, B., Xie, T., Zhang, C., and
  Li, B.
\newblock {TSS:} transformation-specific smoothing for robustness
  certification.
\newblock In Kim, Y., Kim, J., Vigna, G., and Shi, E. (eds.), \emph{{CCS} '21:
  2021 {ACM} {SIGSAC} Conference on Computer and Communications Security,
  Virtual Event, Republic of Korea, November 15 - 19, 2021}, pp.\  535--557.
  {ACM}, 2021.

\bibitem[Li et~al.(2022)Li, Zhang, Xie, and Li]{li2022}
Li, L., Zhang, J., Xie, T., and Li, B.
\newblock Double sampling randomized smoothing.
\newblock In Chaudhuri, K., Jegelka, S., Song, L., Szepesv{\'{a}}ri, C., Niu,
  G., and Sabato, S. (eds.), \emph{International Conference on Machine
  Learning, {ICML} 2022, 17-23 July 2022, Baltimore, Maryland, {USA}}, volume
  162 of \emph{Proceedings of Machine Learning Research}, pp.\  13163--13208.
  {PMLR}, 2022.

\bibitem[Madry et~al.(2018)Madry, Makelov, Schmidt, Tsipras, and
  Vladu]{madry2017}
Madry, A., Makelov, A., Schmidt, L., Tsipras, D., and Vladu, A.
\newblock Towards deep learning models resistant to adversarial attacks.
\newblock In \emph{6th International Conference on Learning Representations,
  {ICLR} 2018, Vancouver, BC, Canada, April 30 - May 3, 2018, Conference Track
  Proceedings}. OpenReview.net, 2018.

\bibitem[Pautov et~al.(2022)Pautov, Tursynbek, Munkhoeva, Muravev, Petiushko,
  and Oseledets]{pautov2022}
Pautov, M., Tursynbek, N., Munkhoeva, M., Muravev, N., Petiushko, A., and
  Oseledets, I.~V.
\newblock {CC-CERT:} {A} probabilistic approach to certify general robustness
  of neural networks.
\newblock In \emph{Thirty-Sixth {AAAI} Conference on Artificial Intelligence,
  {AAAI} 2022, Thirty-Fourth Conference on Innovative Applications of
  Artificial Intelligence, {IAAI} 2022, The Twelveth Symposium on Educational
  Advances in Artificial Intelligence, {EAAI} 2022 Virtual Event, February 22 -
  March 1, 2022}, pp.\  7975--7983. {AAAI} Press, 2022.

\bibitem[Raghunathan et~al.(2018)Raghunathan, Steinhardt, and
  Liang]{raghunathan2018}
Raghunathan, A., Steinhardt, J., and Liang, P.
\newblock Semidefinite relaxations for certifying robustness to adversarial
  examples.
\newblock In Bengio, S., Wallach, H.~M., Larochelle, H., Grauman, K.,
  Cesa{-}Bianchi, N., and Garnett, R. (eds.), \emph{Advances in Neural
  Information Processing Systems 31: Annual Conference on Neural Information
  Processing Systems 2018, NeurIPS 2018, December 3-8, 2018, Montr{\'{e}}al,
  Canada}, pp.\  10900--10910, 2018.

\bibitem[Russakovsky et~al.(2015)Russakovsky, Deng, Su, Krause, Satheesh, Ma,
  Huang, Karpathy, Khosla, Bernstein, et~al.]{russakovsky2015}
Russakovsky, O., Deng, J., Su, H., Krause, J., Satheesh, S., Ma, S., Huang, Z.,
  Karpathy, A., Khosla, A., Bernstein, M., et~al.
\newblock Imagenet large scale visual recognition challenge.
\newblock \emph{International journal of computer vision}, 115:\penalty0
  211--252, 2015.

\bibitem[Ryder(2012)]{Ryder2012}
Ryder, R.
\newblock Theorem: The beta(b, b) distribution converges to the normal
  distribution when $b\to\infty$, 2012.
\newblock URL
  \url{https://www.math.wm.edu/~leemis/chart/UDR/PDFs/BetaNormal.pdf}.

\bibitem[Salman et~al.(2019)Salman, Li, Razenshteyn, Zhang, Zhang, Bubeck, and
  Yang]{salman2019}
Salman, H., Li, J., Razenshteyn, I.~P., Zhang, P., Zhang, H., Bubeck, S., and
  Yang, G.
\newblock Provably robust deep learning via adversarially trained smoothed
  classifiers.
\newblock In Wallach, H.~M., Larochelle, H., Beygelzimer, A.,
  d'Alch{\'{e}}{-}Buc, F., Fox, E.~B., and Garnett, R. (eds.), \emph{Advances
  in Neural Information Processing Systems 32: Annual Conference on Neural
  Information Processing Systems 2019, NeurIPS 2019, December 8-14, 2019,
  Vancouver, BC, Canada}, pp.\  11289--11300, 2019.

\bibitem[Salman et~al.(2020)Salman, Sun, Yang, Kapoor, and
  Kolter]{Hadi2020certified}
Salman, H., Sun, M., Yang, G., Kapoor, A., and Kolter, J.~Z.
\newblock Denoised smoothing: A provable defense for pretrained classifiers.
\newblock In \emph{Advances in Neural Information Processing Systems},
  volume~33, pp.\  21945--21957. Curran Associates, Inc., 2020.

\bibitem[S{\'{u}}ken{\'{\i}}k et~al.(2022)S{\'{u}}ken{\'{\i}}k, Kuvshinov, and
  G{\"{u}}nnemann]{sukenik2022intriguing}
S{\'{u}}ken{\'{\i}}k, P., Kuvshinov, A., and G{\"{u}}nnemann, S.
\newblock Intriguing properties of input-dependent randomized smoothing.
\newblock In Chaudhuri, K., Jegelka, S., Song, L., Szepesv{\'{a}}ri, C., Niu,
  G., and Sabato, S. (eds.), \emph{International Conference on Machine
  Learning, {ICML} 2022, 17-23 July 2022, Baltimore, Maryland, {USA}}, volume
  162 of \emph{Proceedings of Machine Learning Research}, pp.\  20697--20743.
  {PMLR}, 2022.

\bibitem[Teng et~al.()Teng, Lee, and Yuan]{teng2020}
Teng, J., Lee, G.-H., and Yuan, Y.
\newblock $\ell_1$ adversarial robustness certificates: a randomized smoothing
  approach.

\bibitem[Uesato et~al.(2018)Uesato, O'Donoghue, Kohli, and van~den
  Oord]{uesato2018}
Uesato, J., O'Donoghue, B., Kohli, P., and van~den Oord, A.
\newblock Adversarial risk and the dangers of evaluating against weak attacks.
\newblock In Dy, J.~G. and Krause, A. (eds.), \emph{Proceedings of the 35th
  International Conference on Machine Learning, {ICML} 2018,
  Stockholmsm{\"{a}}ssan, Stockholm, Sweden, July 10-15, 2018}, volume~80 of
  \emph{Proceedings of Machine Learning Research}, pp.\  5032--5041. {PMLR},
  2018.

\bibitem[Wong \& Kolter(2018)Wong and Kolter]{wong2018a}
Wong, E. and Kolter, J.~Z.
\newblock Provable defenses against adversarial examples via the convex outer
  adversarial polytope.
\newblock In Dy, J.~G. and Krause, A. (eds.), \emph{Proceedings of the 35th
  International Conference on Machine Learning, {ICML} 2018,
  Stockholmsm{\"{a}}ssan, Stockholm, Sweden, July 10-15, 2018}, volume~80 of
  \emph{Proceedings of Machine Learning Research}, pp.\  5283--5292. {PMLR},
  2018.

\bibitem[Wong et~al.(2018)Wong, Schmidt, Metzen, and Kolter]{wong2018b}
Wong, E., Schmidt, F.~R., Metzen, J.~H., and Kolter, J.~Z.
\newblock Scaling provable adversarial defenses.
\newblock In Bengio, S., Wallach, H.~M., Larochelle, H., Grauman, K.,
  Cesa{-}Bianchi, N., and Garnett, R. (eds.), \emph{Advances in Neural
  Information Processing Systems 31: Annual Conference on Neural Information
  Processing Systems 2018, NeurIPS 2018, December 3-8, 2018, Montr{\'{e}}al,
  Canada}, pp.\  8410--8419, 2018.

\bibitem[Wu et~al.(2023)Wu, Ye, Gu, Zhang, Wang, and He]{wu2023}
Wu, Q., Ye, H., Gu, Y., Zhang, H., Wang, L., and He, D.
\newblock Denoising masked autoencoders help robust classification.
\newblock In \emph{The Eleventh International Conference on Learning
  Representations, {ICLR} 2023, Kigali, Rwanda, May 1-5, 2023}. OpenReview.net,
  2023.

\bibitem[Yang et~al.(2020)Yang, Duan, Hu, Salman, Razenshteyn, and
  Li]{yang2020}
Yang, G., Duan, T., Hu, J.~E., Salman, H., Razenshteyn, I.~P., and Li, J.
\newblock Randomized smoothing of all shapes and sizes.
\newblock In \emph{Proceedings of the 37th International Conference on Machine
  Learning, {ICML} 2020, 13-18 July 2020, Virtual Event}, volume 119 of
  \emph{Proceedings of Machine Learning Research}, pp.\  10693--10705. {PMLR},
  2020.

\bibitem[Yatsura et~al.(2023)Yatsura, Sakmann, Hua, Hein, and
  Metzen]{yatsura2022}
Yatsura, M., Sakmann, K., Hua, N.~G., Hein, M., and Metzen, J.~H.
\newblock Certified defences against adversarial patch attacks on semantic
  segmentation.
\newblock In \emph{The Eleventh International Conference on Learning
  Representations, {ICLR} 2023, Kigali, Rwanda, May 1-5, 2023}. OpenReview.net,
  2023.

\bibitem[Zhai et~al.(2020)Zhai, Dan, He, Zhang, Gong, Ravikumar, Hsieh, and
  Wang]{zhai2020}
Zhai, R., Dan, C., He, D., Zhang, H., Gong, B., Ravikumar, P., Hsieh, C., and
  Wang, L.
\newblock {MACER:} attack-free and scalable robust training via maximizing
  certified radius.
\newblock In \emph{8th International Conference on Learning Representations,
  {ICLR} 2020, Addis Ababa, Ethiopia, April 26-30, 2020}. OpenReview.net, 2020.

\bibitem[Zhang et~al.(2020)Zhang, Ye, Gong, Zhu, and Liu]{zhang2020}
Zhang, D., Ye, M., Gong, C., Zhu, Z., and Liu, Q.
\newblock Black-box certification with randomized smoothing: {A} functional
  optimization based framework.
\newblock In Larochelle, H., Ranzato, M., Hadsell, R., Balcan, M., and Lin, H.
  (eds.), \emph{Advances in Neural Information Processing Systems 33: Annual
  Conference on Neural Information Processing Systems 2020, NeurIPS 2020,
  December 6-12, 2020, virtual}, 2020.

\end{thebibliography}
\bibliographystyle{icml2024}

\newpage
\appendix
\onecolumn

\addcontentsline{toc}{section}{Appendix} 
\part{Appendix} 

\section{Supplementary for definitions of distributions}\label{appdis}
\subsection{Derivation for PDFs}
Let $\phi_s(r), \phi_s(r, T)$ be PDFs of $\mathcal{S}(\sigma, \eta)$ and $\mathcal{S}_t(\sigma, \eta, T)$ respectively. Given $\mathcal{S}(\sigma, \eta)\propto \exp\left(-\frac{r^\eta}{2\sigma_s^\eta}\right)$, we have
\begin{subequations}
\begin{align}
\phi_s(r) &= \frac{\eta}{2}\frac{1}{(2\sigma_s^\eta)^{\frac{d}{\eta}}\pi^{\frac{d}{2}}}\frac{\Gamma(\frac{d}{2})}{\Gamma(\frac{d}{\eta})}\exp(-\frac{1}{2}(\frac{r}{\sigma_s})^\eta),\label{67a}\\
\phi_s^{-1}(r) &=  \sigma_s\left(-2\ln\left(\frac{\Gamma(\frac{d}{\eta})}{\Gamma(\frac{d}{2})}2^{\frac{d}{\eta}+1}\sigma_s^d\pi^{\frac{d}{2}}\frac{r}{\eta}\right)\right) ^ {\frac{1}{\eta}},\label{phis}\\
\phi_s(r, T) &= \frac{\eta}{2}\frac{1}{(2\sigma_s^\eta)^{\frac{d}{\eta}}\pi^{\frac{d}{2}}}\frac{\Gamma(\frac{d}{2})}{\gamma(\frac{d}{\eta}, \frac{T^\eta}{2\sigma_s^\eta})}\exp(-\frac{1}{2}(\frac{r}{\sigma_s})^\eta), \label{67b}
\end{align}
\end{subequations}
and the ratio constant
\begin{equation}
C_s= \frac{\phi_s(r, T)}{\phi_s(r)} =\frac{\Gamma(\frac{d}{\eta})}{\gamma(\frac{d}{\eta}, \frac{T^\eta}{2\sigma_s^\eta})}.
\end{equation}

Similarly, since $\phi_g(r) \propto r^{-2k}\exp\left(-\frac{r^\eta}{2\sigma_g^\eta}\right)$, we have
\begin{subequations}
\begin{align}
\phi_g(r) &= \frac{\eta}{2}\frac{1}{(2\sigma_g^\eta)^{\frac{d-2k}{\eta}}\pi^{\frac{d}{2}}}\frac{\Gamma(\frac{d}{2})}{\Gamma(\frac{d-2k}{\eta})}r^{-2k}\exp(-\frac{1}{2}(\frac{r}
{\sigma_g})^\eta)\label{phigr}, \\
\phi_g^{-1}(r) &=  \sigma_g\left(\frac{4k}{\eta}W\left(\frac{\eta}{2k}\left(\frac{\Gamma(\frac{d-2k}{\eta})}{\Gamma(\frac{d}{2})}2^{\frac{d}{\eta} + 1}\sigma_g^d\pi^{\frac{d}{2}}\frac{r}{\eta}\right)^{-\frac{\eta}{2k}}\right)\right)^{\frac{1}{\eta}}.\label{phig}
\end{align}
\end{subequations}
In the equations above, $W(\cdot)$ is the principal branch of the Lambert W function. Let $\phi_g(r,T)$ be PDF of $\mathcal{G}_t(\sigma, \eta, k, T)$, then 
\begin{equation}
\phi_g(r, T) = \frac{\eta}{2}\frac{1}{(2\sigma_g^\eta)^{\frac{d-2k}{\eta}}\pi^{\frac{d}{2}}}\frac{\Gamma(\frac{d}{2})}{\gamma(\frac{d-2k}{\eta}, \frac{T^\eta}{2\sigma_g^\eta})}r^{-2k}\exp(-\frac{1}{2}(\frac{r}{\sigma_g})^\eta), 
\end{equation}
where $\gamma(\cdot)$ is the lower incomplete gamma function. We see
\begin{equation}
C_g = \frac{\phi_g(r, T)}{\phi_g(r)} =\frac{\Gamma(\frac{d-2k}{\eta})}{\gamma(\frac{d-2k}{\eta}, \frac{T^\eta}{2\sigma_g^\eta})}.
\end{equation}

\subsection{Visualization for distributions}
We show the distributions for $r$ in the space by setting
\begin{equation}
y = \phi_s(r) \cdot V_d(r)
	= \frac{\eta}{2}\frac{1}{(2\sigma_s^\eta)^{\frac{d}{\eta}}\pi^{\frac{d}{2}}}\frac{\Gamma(\frac{d}{2})}{\Gamma(\frac{d}{\eta})}\exp(-\frac{1}{2}(\frac{r}{\sigma_s})^\eta) \cdot \frac{d\pi^{\frac{d}{2}}}{\Gamma(\frac{d}{2} + 1)}r^{d-1}
\end{equation}
for ESG, and
\begin{equation}
y = \phi_g(r) \cdot V_d(r)
	= \frac{\eta}{2}\frac{1}{(2\sigma_g^\eta)^{\frac{d-2k}{\eta}}\pi^{\frac{d}{2}}}\frac{\Gamma(\frac{d}{2})}{\Gamma(\frac{d-2k}{\eta})}r^{-2k}\exp(-\frac{1}{2}(\frac{r}
{\sigma_g})^\eta) \cdot \frac{d\pi^{\frac{d}{2}}}{\Gamma(\frac{d}{2} + 1)}r^{d-1}
\end{equation}
for EGG. See the definition of $V_d(r)$ in Lemma \ref{vol}. We provide a simple demonstration for ESG and EGG in Figure \ref{demo_for_dis}. Overall, under high-dimensional settings, the difference between $\phi(r)$ of exponential Gaussian distributions can be huge, we thus suggest using $\phi(r)\cdot V_d(r)$ to assist in observing the properties of the distributions. As shown in Figure \ref{vis2}, ESG tends to concentrate on a \emph{thin shell} in the space, while the $r^{-2k}$ term in EGG alleviates that property.

\begin{figure}[t!]
	\centering
	\begin{subfigure}{0.49\linewidth}
		\centering
		\includegraphics[height=0.5\linewidth, width=1.0\linewidth]{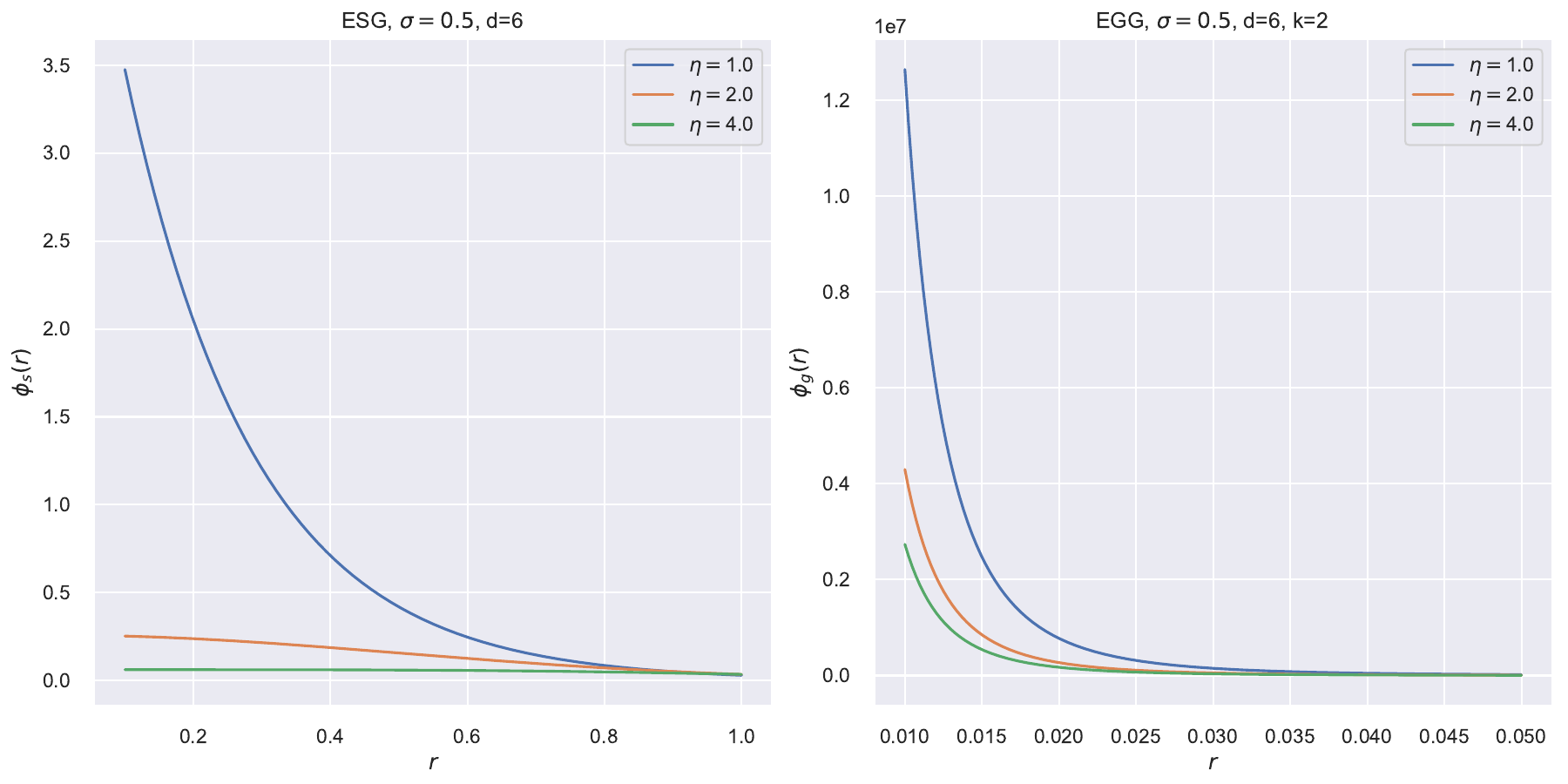}
		\caption{Visualization for $\phi(r)$.}\label{vis1}
	\end{subfigure}
	\centering
	\begin{subfigure}{0.49\linewidth}
		\centering
		\includegraphics[height=0.5\linewidth, width=1.0\linewidth]{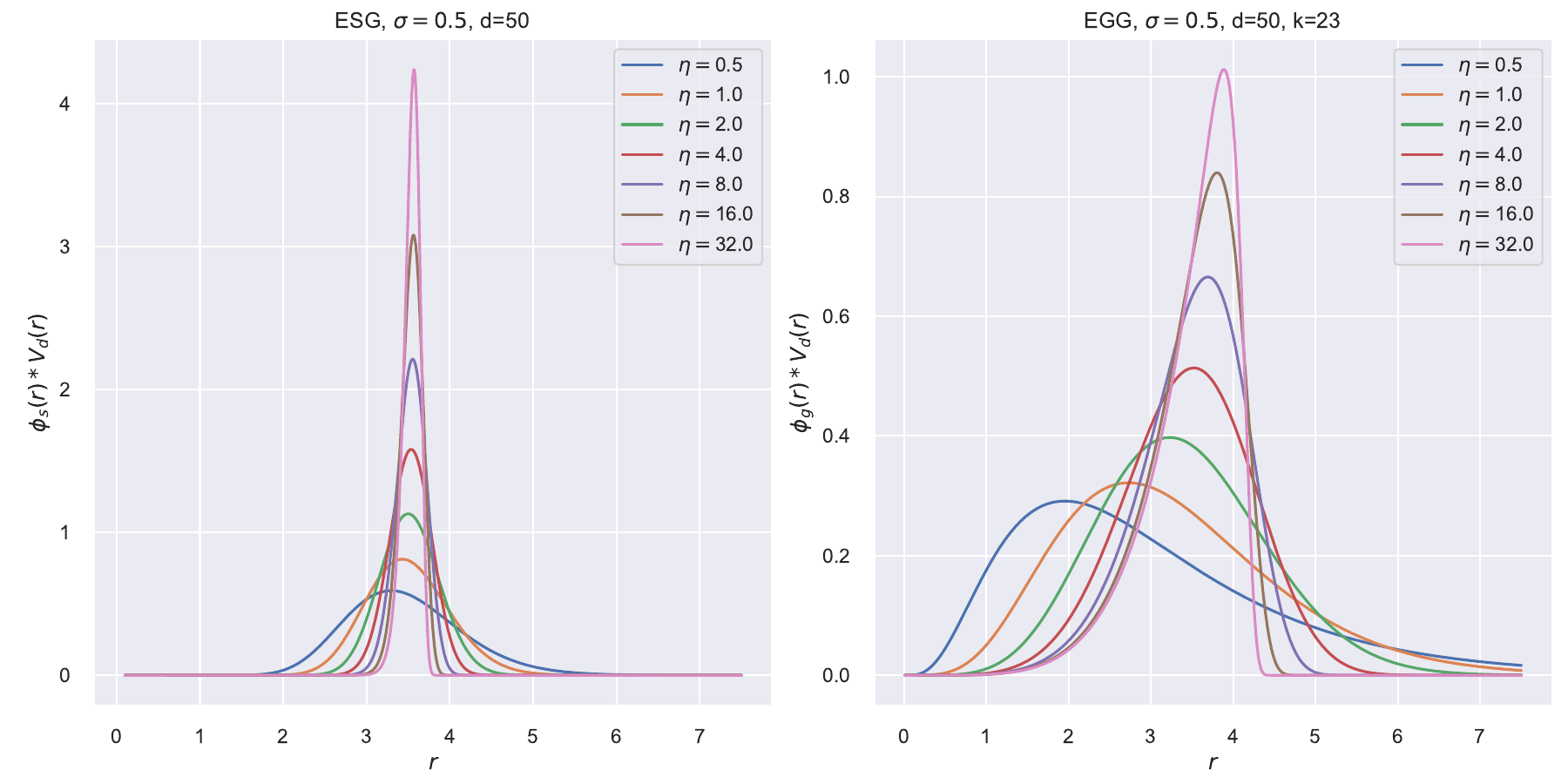}
		\caption{Visualization for $\phi(r)\cdot V_d(r)$.}\label{vis2}
	\end{subfigure}
	\caption{Demonstration for ESG and EGG.}
 \label{demo_for_dis}
\end{figure}

\section{Supplementary for the theoretical analysis of ESG}
\subsection{Proof of Theorem \ref{thesgNP}}\label{apesgNP}
The ESG distributions inherit the $\ell_2$-symmetry of the Gaussian distribution in high-dimensional space, thus the certified radius can be derived by the level-set method \citep{yang2020, li2022}. To begin with, we consider the problem for the Neyman-Pearson certification. Consider example $(x_0, y_0) \in \mathbb{R}^d \times \mathcal{Y}$. Let $\tilde{f}_{x_0}: \mathbb{R}^d\to \{0, 1\}$ be a true binary classifier of base classifier $f: \mathbb{R}^d \to \mathcal{Y}$ with respect to input $x_0$:
\begin{equation}\label{primal}
\begin{aligned}
\min_{\tilde{f}_{x_0}\in \mathcal{F}} \quad & \mathbb{E}_{z\sim \mathcal{S}_{(\sigma, \eta)}}\left(\tilde{f}_{x_0}(\delta + z)\right),\\
\textrm{s.t.} \quad & \mathbb{E}_{z\sim \mathcal{S}_{(\sigma, \eta)}}\left(\tilde{f}_{x_0}(z)\right) = A.
\end{aligned}
\end{equation}
We usually solve the problem above by transforming it into its dual problem:
\begin{subequations}\label{dual}
\begin{align}
\max_{\nu\in\mathbb{R}} \quad & \mathbb{P}_{z \sim \mathcal{S}_{(\sigma, \eta)}+\delta}\{p(z-\delta) + \nu p(z) < 0\},\label{npdual1}\\
\textrm{s.t.} \quad & \mathbb{P}_{z\sim \mathcal{S}_{(\sigma, \eta)}}\{p(z-\delta) + \nu p(z) < 0\} = A. \label{npdual2}
\end{align}
\end{subequations}
Unlike the DSRS case, the invariant set in Neyman-pearson certification is $\{z\mid p(z-\delta) + \nu p(z) < 0\}$, and we denote it as $\mathcal{V}$. We have the following lemma to calculate the volume of $d-1$ dimensional hypersphere.
\begin{lemma}\label{vol}
(Equation (16) in ~\citet{li2022}, Volume of Hypersphere, restated) Let $r \in \mathbb{R}_+, d \in \mathbb{N}_+$. The volume $V_d(r)$ of $d-1$ dimensional hypersphere with radius $r$ is 
\begin{equation}
V_d(r) = \frac{d\pi^{\frac{d}{2}}}{\Gamma(\frac{d}{2} + 1)}r^{d-1}.
\end{equation}
\end{lemma}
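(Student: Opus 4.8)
The plan is to derive the surface area of the radius-$r$ sphere $S^{d-1}\subset\mathbb{R}^d$ by the classical Gaussian-integral trick, and then read off $V_d(r)$ by homogeneity. First I would record that the scaling $x\mapsto x/r$ multiplies $(d-1)$-dimensional surface measure by $r^{d-1}$, so it suffices to compute $V_d(1)=:S_{d-1}$, the surface area of the unit sphere, after which $V_d(r)=S_{d-1}\,r^{d-1}$ immediately gives the radial dependence in the statement.

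To compute $S_{d-1}$, I would evaluate $I=\int_{\mathbb{R}^d}e^{-\pi\Vert x\Vert_2^2}\,dx$ in two ways. On one hand, Fubini together with the one-dimensional identity $\int_{\mathbb{R}}e^{-\pi t^2}\,dt=1$ gives $I=1$. On the other hand, the shell (polar) decomposition of Lebesgue measure, $\int_{\mathbb{R}^d}g(\Vert x\Vert_2)\,dx=S_{d-1}\int_0^\infty g(\rho)\,\rho^{d-1}\,d\rho$, gives $I=S_{d-1}\int_0^\infty e^{-\pi\rho^2}\rho^{d-1}\,d\rho$. The substitution $u=\pi\rho^2$ turns the remaining integral into $\tfrac12\pi^{-d/2}\int_0^\infty e^{-u}u^{d/2-1}\,du=\tfrac12\pi^{-d/2}\Gamma(\tfrac d2)$, so equating the two evaluations yields $S_{d-1}=\dfrac{2\pi^{d/2}}{\Gamma(d/2)}$.

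Finally I would convert this into the stated form using the functional equation $\Gamma(\tfrac d2+1)=\tfrac d2\,\Gamma(\tfrac d2)$, which gives $S_{d-1}=\dfrac{d\,\pi^{d/2}}{\Gamma(d/2+1)}$ and hence $V_d(r)=\dfrac{d\,\pi^{d/2}}{\Gamma(d/2+1)}\,r^{d-1}$, as claimed (equivalently, one may simply cite Equation~(16) of \citet{li2022}). There is essentially no obstacle here; the only step warranting a word of care is the polar/shell decomposition of Lebesgue measure on $\mathbb{R}^d$ — i.e., that integration of a radial function factors through the unit-sphere surface measure — but this is a standard change-of-variables fact that can be invoked directly, or established by induction on $d$ combined with Fubini.
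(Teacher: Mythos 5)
Your derivation is correct and complete: the Gaussian-integral computation gives $S_{d-1}=2\pi^{d/2}/\Gamma(d/2)$, and the functional equation $\Gamma(\tfrac d2+1)=\tfrac d2\Gamma(\tfrac d2)$ converts this to the stated form, with the $r^{d-1}$ factor following from homogeneity of $(d-1)$-dimensional surface measure. The paper itself offers no proof of this lemma --- it is simply restated from Equation~(16) of \citet{li2022} --- so you have supplied a self-contained argument where the paper only cites; the route you chose (two evaluations of $\int_{\mathbb{R}^d}e^{-\pi\Vert x\Vert_2^2}\,dx$ plus the polar decomposition of Lebesgue measure) is the standard one and there is nothing to object to in it.
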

For all integrals in this paper, WLOG we let $d$ be an even number. 

\textbf{For Equation (\ref{npdual2})}, we have
\begin{equation}\label{esgnp1}
\begin{aligned}
&\mathbb{P}_{z\sim\mathcal{S}_{(\sigma, \eta)}}\{z\in\mathcal{V}\} \\
	=&\int_0^\infty \phi_s(r)V_d(r)dr\cdot\mathbb{P}\{x \in \mathcal{V}\mid\Vert x
 \Vert_2=r\} \\
	=&\frac{1}{\Gamma(\frac{d}{\eta})}\int_0^\infty u^{\frac{d}{\eta} - 1}\exp(-u)du\cdot\mathbb{P}\{x \in \mathcal{V}\mid\Vert x\Vert_2=\sigma_s(2u)^\frac{1}{\eta}\} \\
	=& \mathbb{E}_{u\sim\Gamma(\frac{d}{\eta}, 1)}\mathbb{P}\{x \in \mathcal{V}\mid\Vert x\Vert_2=\sigma_s(2u)^\frac{1}{\eta}\}
\end{aligned}
\end{equation}
Next, we consider $\omega_\natural(u,\nu)=\mathbb{P}\{p(x -\delta) + \nu p(x) < 0\mid\Vert x\Vert_2 = \sigma_s(2u)^\frac{1}{\eta}\} $. Obviously, $\phi_s(r)$ is a monotonically decreasing function with respect to $r$. Since all PDFs of $\mathcal{S}(\sigma, \eta)$ are bounded functions, we set the upper bound to be $U$ for convenience. Let $r = 0$, we have
\begin{equation}
U = \frac{\eta}{2}\frac{1}{(2\sigma_s^\eta)^{\frac{d}{\eta}}\pi^{\frac{d}{2}}}\frac{\Gamma(\frac{d}{2})}{\Gamma(\frac{d}{\eta})}\end{equation}
for $\phi_s(r)$. Namely, $\forall x \in [0, \infty),\ p(x) \in (0, U]$, where $U < +\infty$ is a constant for any determined distribution. As a result, if for a specific $x$, we have $-\nu p(x) > U$, the probability $\mathbb{P}\{p(x-\delta) + \nu p(x) < 0\mid\Vert x\Vert_2 = \sigma_s(2u)^\frac{1}{\eta}\}$ will always be 1. Next, we suppose $-\nu p(x) \in (0, U]$, otherwise $-\nu p(x)$ is outside the domain of $\phi^{-1}_s(x)$. When $\Vert x\Vert_2 = \sigma_s(2u)^\frac{1}{\eta}$, we have
\begin{equation}
\begin{aligned}
&p(x-\delta) + \nu p(x) < 0 \\
\Longleftrightarrow & \phi_s(\Vert x -\delta\Vert_2) \leq -\nu\phi_s(\Vert x\Vert_2) \\
\Longleftrightarrow & \Vert x -\delta\Vert_2 \geq \phi_s^{-1}(-\nu U \exp(-u)).
\end{aligned}	
\end{equation}
Then we are ready to solve
\begin{equation}
0 < -\nu U \exp(-u) \leq U, 
\end{equation}
where $u \geq 0$. Since $\nu$ is always negative, the left side $0 < -\nu U \exp(-u)$ always holds. For the right side, we notice
\begin{equation}\label{77}
-\nu U \exp(-u) \leq U \Longleftrightarrow \exp(-u) \leq -\frac{1}{\nu} \Longleftrightarrow u - \ln(-\nu) \geq 0.
\end{equation}
Now we know for $\nu < 0$, 
\begin{equation}
u - \ln(-\nu) \geq 0  \Longleftrightarrow -\nu p(x) \in (0, U],
\end{equation}
and
\begin{equation}
u - \ln(-\nu) <  0  \Longleftrightarrow -\nu p(x) \in (U, \infty],
\end{equation}
which means $\mathbb{P}\{p(x-\delta) + \nu p(x) < 0\mid\Vert x\Vert_2 = \sigma_s(2u)^\frac{1}{\eta}\} = 1$  when $u - \ln(-\nu) <  0$. Therefore, 
\begin{equation}\label{omeganat}
\omega_\natural(u, \nu) = \left\{
\begin{aligned}
&\Psi_{\frac{d-1}{2}}\left(\frac{(\rho + \sigma_s(2u)^\frac{1}{\eta})^2-\left(\phi_s^{-1}(-\nu\phi_s(\sigma_s(2u)^{\frac{1}{\eta}}))\right)^2}{4\rho \sigma_s(2u)^\frac{1}{\eta}}\right)&, u-\ln(-\nu) \geq 0,\\
&1&, u-\ln(-\nu) < 0.
\end{aligned}\right.\\
\end{equation}
\textbf{For Equation (\ref{npdual1})}, we have
\begin{equation}
\begin{aligned}
&\mathbb{P}_{z\sim\mathcal{S}_{(\sigma, \eta)}}\{z\in\mathcal{V}+\delta\} \\
	=&\int_0^\infty \phi_s(r)V_d(r)dr\cdot\mathbb{P}\{x \in \mathcal{V}+\delta\mid\Vert x
 \Vert_2=r\} \\
	=&\frac{1}{\Gamma(\frac{d}{\eta})}\int_0^\infty u^{\frac{d}{\eta} - 1}\exp(-u)du\cdot\mathbb{P}\{x \in \mathcal{V}+\delta\mid\Vert x\Vert_2=\sigma_s(2u)^\frac{1}{\eta}\} \\
	=& \mathbb{E}_{u\sim\Gamma(\frac{d}{\eta}, 1)}\mathbb{P}\{x \in \mathcal{V}+\delta\mid\Vert x\Vert_2=\sigma_s(2u)^\frac{1}{\eta}\}
\end{aligned}
\end{equation}
We write $\omega_\sharp$ for $\mathbb{P}\{p(x) + \nu p(x+\delta) < 0 \mid\Vert x\Vert_2=\sigma_g(2u)^\frac{1}{\eta}\}$. When $-\frac{1}{\nu}p(x) \in (0, U]$, we have 
\begin{equation}
\begin{aligned}\label{esgnp2}
&p(x) + \nu p(x+\delta) < 0 \\
\Longleftrightarrow &p(x) \leq  -\nu p(x + \delta) \\
\Longleftrightarrow &\Vert x + \delta \Vert_2 \leq \phi_s^{-1} (-\frac{1}{\nu}\phi_s(\Vert x \Vert_2)).
\end{aligned}
\end{equation}
Injecting $\Vert x\Vert_2 = \sigma_s(2u)^{\frac{1}{\eta}}$ into inequalities above, we get 
\begin{equation}
-\frac{1}{\nu}p(x) \in (0, U] \Longleftrightarrow u + \ln(-\nu) \geq 0, 
\end{equation}
which is the boundary condition for $\omega_\sharp$. Though looks similar, it differs significantly from $\omega_\natural$. Let $-\frac{1}{\nu}\phi_s(\Vert x \Vert_2) \in (U, +\infty]$, then 
\begin{equation}
p(x) + \nu p(x+\delta) < 0 \Longleftrightarrow \phi_s(\Vert x + \delta \Vert_2) \geq U,
\end{equation}
which means $\mathbb{P}\{p(x) + \nu p(x+\delta) < 0 \mid\Vert x\Vert_2=\sigma_g(2u)^\frac{1}{\eta}\} = 0$ under $u + \ln(-\nu) < 0$. Finally, we have
\begin{equation}\label{omegasharp}
\omega_\sharp(u, \nu) = \left\{
\begin{aligned}
&\Psi_{\frac{d-1}{2}}\left(\frac{\phi_s^{-1}(-\frac{1}{\nu}\phi_s(\sigma_s(2u)^{\frac{1}{\eta}}))^2\} - (\sigma_s(2u)^\frac{1}{\eta} - \rho)^2}{4\rho \sigma_s(2u)^\frac{1}{\eta}}\right)&, u + \ln(-\nu) \geq 0,\\
&0&, u + \ln(-\nu) < 0.\\
\end{aligned}\right.
\end{equation}
\textbf{Calculation of $\phi_s^{-1}(r)$.}  
Appearing in both $\omega_\natural$ and $\omega_\sharp$ functions above, $\phi_s^{-1}(r)$ is an indispensable value in the system. Here we consider the case of $\omega_{\natural}$. Let $\xi$ be $\phi_s^{-1}(-\nu \phi_s(\sigma_s(2u)^\frac{1}{\eta})$ when $-\nu \phi_s(\sigma_s(2u)^\frac{1}{\eta}\in(0,U]$, then
\begin{equation} \label{69}
\phi_s(\xi) = -\nu \phi_s(\sigma_s(2u)^{\frac{1}{\eta}}).
\end{equation}
We thus have
\begin{equation}\label{70}
 \begin{aligned}
&\frac{\eta}{2}\frac{1}{(2\sigma_s^\eta)^{\frac{d}{\eta}}\pi^{\frac{d}{2}}}\frac{\Gamma(\frac{d}{2})}{\Gamma(\frac{d}{\eta})}\exp(-\frac{\xi^\eta}{2\sigma_s^\eta}) = -\frac{\nu\eta}{2}\frac{1}{(2\sigma_s^\eta)^{\frac{d}{\eta}}\pi^{\frac{d}{2}}}\frac{\Gamma(\frac{d}{2})}{\Gamma(\frac{d}{\eta})}\exp(-u) \\
	  \Longleftrightarrow &\exp(-\frac{1}{2}(\frac{\xi}{\sigma_s})^\eta) = -\nu\exp(-u).\\
\end{aligned}
\end{equation}
Obviously $\xi \geq 0$, then $\exp(-\frac{1}{2}(\frac{\xi}{\sigma_s})^\eta) \in (0, 1]$. When Equation (\ref{70}) has a solution, we need
\begin{equation}
0 < -\nu\exp(-u) \leq 1.
\end{equation}
Therefore, we get the boundary condition for $ \nu \textless 0$:
\begin{equation} \label{72}
u \geq \ln(-\nu).
\end{equation}
Under Equation (\ref{72}), Equation (\ref{69}) can be solved:
\begin{equation}\label{73}
\xi = 2^{\frac{1}{\eta}}\sigma_s(u - \ln(-\nu))^{\frac{1}{\eta}}.\\
\end{equation}
Substituting Equation (\ref{73}) into Equation (\ref{omeganat}) gives Equation (\ref{onattxt}), and the case of $\omega_{\sharp}$ can be derived similarly. 

\subsection{Mildness of Assumption \ref{as1}}\label{accsig}
We first provide the proof for Lemma \ref{sigmasbound}.
\begin{proof}
 By Stirling's approximation, we see 
\begin{equation}
\begin{aligned}
\lim_{d\to\infty}\frac{\sigma_s}{d^{\frac{1}{2}-\frac{1}{\eta}}} &= 2^{-\frac{1}{\eta}}\sqrt{\frac{d^{\frac{2}{\eta}}\Gamma({\frac{d}{\eta}})}{\Gamma(\frac{d+2}{\eta})}}\sigma \\
&=2^{-\frac{1}{\eta}}\sqrt{d^{\frac{2}{\eta}}\sqrt{\frac{d-\eta}{d+2-\eta}}\frac{(\frac{d-\eta}{e\eta})^{\frac{d-\eta}{\eta}}}{(\frac{d+2-\eta}{e\eta})^{\frac{d+2-\eta}{\eta}}}}\\
&= (\frac{\eta}{2})^\frac{1}{\eta}\sigma,
\end{aligned}
\end{equation}
which by definition means $\sigma_s = \Theta(d^{\frac{1}{2} - \frac{1}{\eta}})$.
\end{proof}
Lemma \ref{sigmasbound} provides a simple but very accurate approximation for computing integrals in Theorem \ref{thesgNP}.
Moreover, when $d\gg\eta$ and $\sigma$ is small, the constant factor above is rather accurate. Let absolute error be $|\sigma_s - (\frac{\eta}{2})^\frac{1}{\eta}\sigma d^{\frac{1}{2} - \frac{1}{\eta}}|$, and relative error be $\frac{|\sigma_s - (\frac{\eta}{2})^\frac{1}{\eta}\sigma d^{\frac{1}{2} - \frac{1}{\eta}}|}{\sigma_s}$. Table \ref{3072err} and Table \ref{150224err} show the errors for the approximation. From the tables, we observe that under common settings of variance and dimension for randomized smoothing, the approximation shows great accuracy. On CIFAR10, the maximum absolute error is $2.95 \times 10^{-3}$, and the maximum relative error is $4.88 \times 10^{-4}$. Furthermore, the errors are even lower on ImageNet, as we see that the maximum absolute error is $2.59 \times 10^{-4}$ and the maximum relative error is no more than $9.99\times10^{-6}$. It is reasonable that a larger dimension ensures less error for approximation since there are only two sources for the error: one is Stirling's approximation, which demands large $d$ to be precise, and another is that $d$ should be far greater than $\eta$, since larger $d$ benefits more since we fix $\eta\in\{1.0, 2.0, 4.0, 8.0\}$ in our work.
\begin{table}[htbp]
\scriptsize
\vspace{-5mm}
\centering
\caption{Errors for the approximation of $\sigma_s$, $d=3072$}\label{3072err}
\begin{tabular}{ccccccccccc}
\toprule
\multirow{2}{*}{$\sigma$} & \multicolumn{2}{c}{$\eta=0.5$} & \multicolumn{2}{c}{$\eta=1.0$} & \multicolumn{2}{c}{$\eta=2.0$} & \multicolumn{2}{c}{$\eta=4.0$} & \multicolumn{2}{c}{$\eta=8.0$} \\
\cmidrule(lr){2-3} \cmidrule(lr){4-5} \cmidrule(lr){6-7} \cmidrule(lr){8-9} 
\cmidrule(lr){10-11}
                          & AE             & RE            & AE             & RE            & AE             & RE            & AE             & RE            & AE             & RE            \\
\midrule
$0.12$                    &2.15e-11 & 4.88e-04 & 1.76e-07 & 1.63e-04 & 9.88e-14 & 8.23e-13 & 8.65e-05 & 8.14e-05 & 3.54e-04 & 1.22e-04 \\
$0.25$                    &4.48e-11 & 4.88e-04 & 3.67e-07 & 1.63e-04 & 2.06e-13 & 8.23e-13 & 1.80e-04 & 8.14e-05 & 7.37e-04 & 1.22e-04  \\
$0.50$                    &8.96e-11 & 4.88e-04 & 7.34e-07 & 1.63e-04 & 4.12e-13 & 8.23e-13 & 3.60e-04 & 8.14e-05 & 1.47e-03 & 1.22e-04 \\
$1.00$                    &1.79e-10 & 4.88e-04 & 1.47e-06 & 1.63e-04 & 8.23e-13 & 8.23e-13 & 7.21e-04 & 8.14e-05 & 2.95e-03 & 1.22e-04 \\     
\bottomrule
\end{tabular}
\end{table}

\begin{table}[htbp]
\scriptsize
\vspace{-5mm}
\centering
\caption{Errors for the approximation of $\sigma_s$, $d=150224$}\label{150224err}
\begin{tabular}{ccccccccccc}
\toprule
\multirow{2}{*}{$\sigma$} & \multicolumn{2}{c}{$\eta=0.5$} & \multicolumn{2}{c}{$\eta=1.0$} & \multicolumn{2}{c}{$\eta=2.0$} & \multicolumn{2}{c}{$\eta=4.0$} & \multicolumn{2}{c}{$\eta=8.0$} \\
\cmidrule(lr){2-3} \cmidrule(lr){4-5} \cmidrule(lr){6-7} \cmidrule(lr){8-9} 
\cmidrule(lr){10-11}
                          & AE             & RE            & AE             & RE            & AE             & RE            & AE             & RE            & AE             & RE            \\
\midrule
$0.12$                    &1.29e-15 & 9.99e-06 & 5.15e-10 & 3.33e-06 & 4.46e-12 & 3.72e-11 & 4.68e-06 & 1.66e-06 & 3.11e-05 & 2.50e-06  \\
$0.25$                    &2.68e-15 & 9.99e-06 & 1.07e-09 & 3.33e-06 & 9.30e-12 & 3.72e-11 & 9.74e-06 & 1.66e-06 & 6.48e-05 & 2.50e-06   \\
$0.50$                    &5.36e-15 & 9.99e-06 & 2.15e-09 & 3.33e-06 & 1.86e-11 & 3.72e-11 & 1.95e-05 & 1.66e-06 & 1.30e-04 & 2.50e-06  \\
$1.00$                    &1.07e-14 & 9.99e-06 & 4.29e-09 & 3.33e-06 & 3.72e-11 & 3.72e-11 & 3.90e-05 & 1.66e-06 & 2.59e-04 & 2.50e-06  \\     
\bottomrule
\end{tabular}
\end{table}

\subsection{Mildness of Assumption \ref{as2}}\label{mildas2}
\begin{figure}[htbp!]
  \centering
  \includegraphics[width=0.5\linewidth]{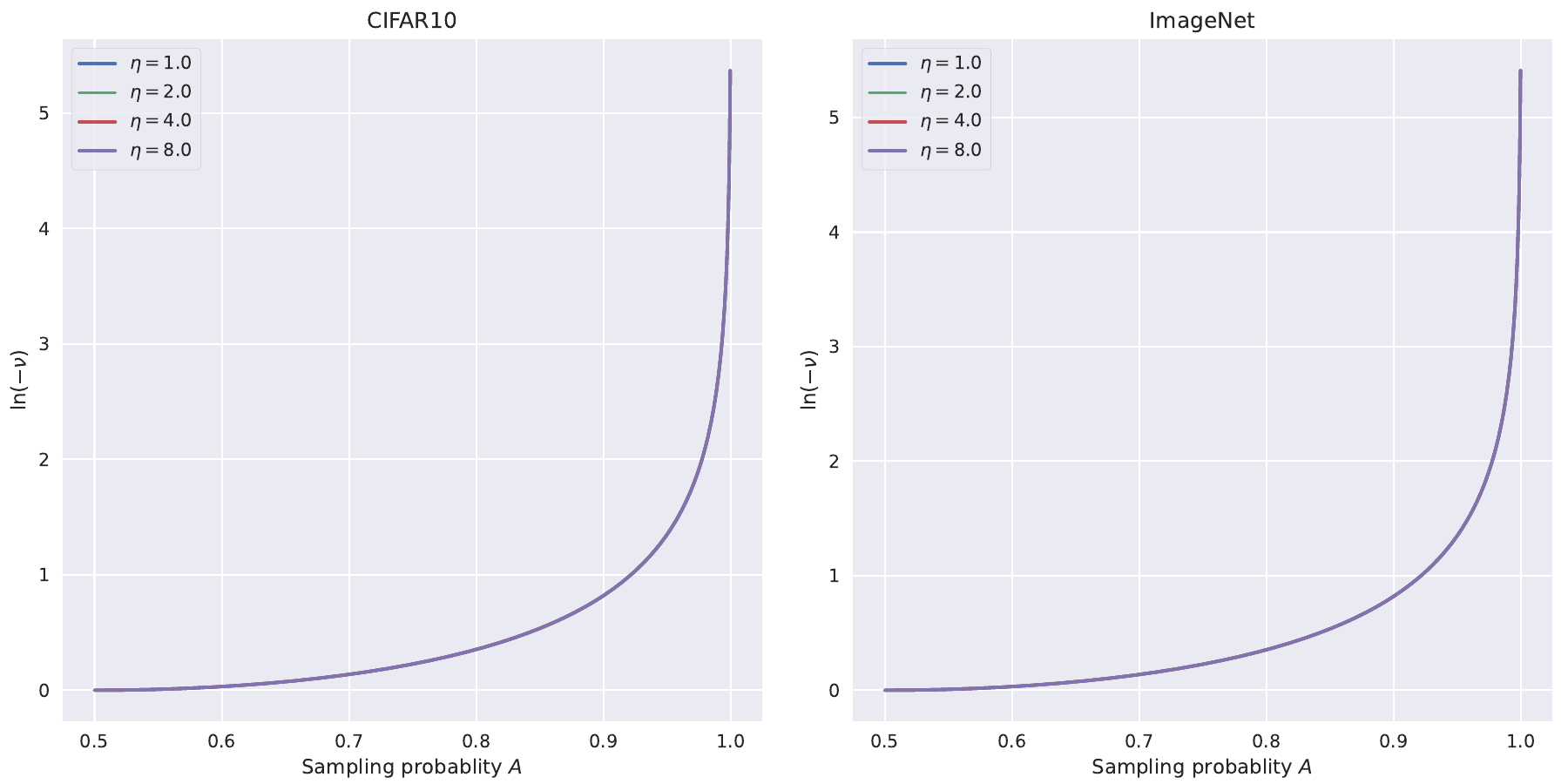}
  \caption{Under commonly used settings for RS, $\frac{d}{\eta}\gg \ln(-\nu)$ almost always holds.}
 \label{figfornu}
 \vspace{-5mm}
\end{figure}
In addition to Equation (\ref{approxnu}), we show Assumption \ref{as2} is very mild for practical use in RS computation. In Figure \ref{figfornu}, all the $\ln(-\nu)$ are computed by the conventional procedure by binary search without any approximation. For each subfigure, we uniformly select 1000 points in $(0.5, 1)$ to simulate the sampling probability $A$. The maximum $A$ we show in the figure is $0.9995$, which corresponds to $\ln(-\nu)\approx 5.4$ for the listed ESG distributions. Intuitively, the value of $\ln(-\nu)$ will increase exponentially when $A\to1^{-}$, but it is more than enough for the practical use of the RS framework (the confident lower bound of $A$ can hardly ever reach 1), especially for ImageNet. 
 
\subsection{Extensions for Equation (\ref{approximation})} \label{extapp}
Equation (\ref{approximation}) is an accurate approximation for \citet{cohen2019}'s formula not only for little $\rho$ that appears in the RS certification. For this, we consider
\begin{equation}
    A=\Psi_{\frac{d-1}{2}}(\frac{1}{2} + \frac{x}{2\sqrt{d}}),
\end{equation}
where $x\in(0, \sqrt{d})$. Here we provide a simple proof for the convergence in Equation (\ref{approximation}). 

\begin{theorem}
(Convergence of the Beta($d,d$) distribution's CDF) Let $\Psi_d(x)$ be the CDF of the beta($d,d$) distribution and $\Phi(x)$ be the CDF of the standard normal distribution, then for $x\in(-\sqrt{2d}, \sqrt{2d})$, $\lim_{d\to\infty}\Psi_d(\frac{1}{2} + \frac{x}{2\sqrt{2d}}) = \Phi(x)$, with an $O(1/\sqrt{d})$ error bound .
\end{theorem}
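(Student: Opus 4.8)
The plan is to reduce the statement to a Berry--Esseen estimate for a binomial via the classical Beta--Binomial identity. For a positive integer $d$ and $p\in(0,1)$ the regularized incomplete beta function with integer parameters has the closed form $\Psi_d(p)=I_p(d,d)=\sum_{j=d}^{2d-1}\binom{2d-1}{j}p^j(1-p)^{2d-1-j}=\mathbb{P}\{Y_d\ge d\}$, where $Y_d\sim\mathrm{Bin}(2d-1,p)$. Taking $p=\tfrac12+\tfrac{x}{2\sqrt{2d}}$, which lies in $(0,1)$ exactly because $|x|<\sqrt{2d}$, it then suffices to show $\mathbb{P}\{Y_d\le d-1\}=1-\Phi(x)+O(1/\sqrt d)$.

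First I would record $\mathbb{E}Y_d=(2d-1)p=d-\tfrac12+\tfrac{(2d-1)x}{2\sqrt{2d}}$ and $\Var(Y_d)=(2d-1)p(1-p)=\tfrac{2d-1}{4}\bigl(1-\tfrac{x^2}{2d}\bigr)$, and then Taylor-expand the standardized threshold
\begin{equation*}
t_d:=\frac{(d-1)-\mathbb{E}Y_d}{\sqrt{\Var(Y_d)}}=\frac{-1-(2d-1)x/\sqrt{2d}}{\sqrt{2d-1}\,\sqrt{1-x^2/(2d)}}=-x+O\!\left(\frac{1}{\sqrt d}\right),
\end{equation*}
where the leading correction comes from $\sqrt{(2d-1)/(2d)}=1-\tfrac1{4d}+\cdots$ and from the constant $-1$ in the numerator; for fixed $x$ the remaining terms are $O(1/d)$.

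Next I would invoke the Berry--Esseen theorem for the i.i.d.\ Bernoulli$(p)$ summands of $Y_d$. Since $p\to\tfrac12$, the normalized third absolute moment stays bounded, so $\sup_{k}\bigl|\mathbb{P}\{Y_d\le k\}-\Phi\bigl((k-\mathbb{E}Y_d)/\sqrt{\Var(Y_d)}\bigr)\bigr|\le C/\sqrt{(2d-1)p(1-p)}=O(1/\sqrt d)$. Combining this with the expansion of $t_d$ and the Lipschitz bound $|\Phi(a)-\Phi(b)|\le|a-b|/\sqrt{2\pi}$ yields
\begin{equation*}
\Psi_d(p)=1-\mathbb{P}\{Y_d\le d-1\}=1-\Phi(t_d)+O\!\left(\frac{1}{\sqrt d}\right)=1-\Phi(-x)+O\!\left(\frac{1}{\sqrt d}\right)=\Phi(x)+O\!\left(\frac{1}{\sqrt d}\right),
\end{equation*}
which is the claimed convergence with $O(1/\sqrt d)$ error. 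As an alternative that stays inside the Gamma toolkit already used in the paper, one can use $\mathrm{Beta}(d,d)\stackrel{d}{=}G_1/(G_1+G_2)$ with $G_1,G_2\sim\Gamma(d,1)$ independent, write $2\sqrt{2d}(X-\tfrac12)=\bigl((G_1-G_2)/\sqrt{2d}\bigr)\big/\bigl((G_1+G_2)/(2d)\bigr)$, apply Berry--Esseen to the numerator (a normalized sum of $d$ i.i.d.\ mean-zero variance-$2$ variables $E_i-E_i'$) and the exponential concentration of $G_1+G_2\sim\Gamma(2d,1)$ near $2d$ from Lemma~\ref{lemmag1} to the denominator, then conclude by a quantitative Slutsky argument.

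I expect the only real work to be error bookkeeping: verifying that the Taylor expansion of $t_d$, the Berry--Esseen remainder, the propagation through the Lipschitz map $\Phi$, and (in the Gamma route) the division of the two estimates each contribute only $O(1/\sqrt d)$, with constants that may depend on $x$ but do not blow up, so that they combine into a single $O(1/\sqrt d)$ bound for every fixed $x\in(-\sqrt{2d},\sqrt{2d})$; the de Moivre--Laplace continuity correction enters only at this same order and is absorbed silently.
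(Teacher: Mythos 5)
Your argument is correct, but it takes a genuinely different route from the paper's. The paper starts from the local (density-level) approximation of \citet{Ryder2012}, $\psi_d(x)=\phi(x)\left(1+O(1/d)\right)$, brackets the Beta density between $(1\pm c/d)\phi$, and integrates over $(-\sqrt{2d},x)$, paying a factor of the interval length $\leq 2\sqrt{2d}$ to land at the $O(1/\sqrt{d})$ CDF bound. You instead work directly at the CDF level: the exact identity $I_p(d,d)=\mathbb{P}\{\mathrm{Bin}(2d-1,p)\geq d\}$ turns the claim into a binomial tail estimate, which Berry--Esseen plus your (correct) expansion $t_d=-x+O(1/\sqrt{d})$ and the Lipschitz property of $\Phi$ settle; your Gamma-ratio alternative is a third valid route. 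What each buys: the paper's argument is shorter given Ryder's local limit theorem and, because the density error is $O(1/d)$, it is consistent with the empirically observed $O(1/d)$ CDF error the paper reports, whereas Berry--Esseen cannot see past $O(1/\sqrt{d})$ without an Edgeworth correction (here the skewness vanishes as $p\to 1/2$, so one could in principle push to $O(1/d)$, but your write-up does not). Your combinatorial route is self-contained modulo a textbook theorem and avoids citing a nonstandard local CLT. One caveat you should state explicitly: the Beta--Binomial identity requires the shape parameter to be a positive integer, while the paper actually invokes the theorem for $\Psi_{\frac{d-1}{2}}$, where the parameter is a half-integer when $d$ is even; your Gamma-representation variant (or the paper's density route) covers general real parameters, so you should either lead with that version or add an interpolation remark. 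Also, be precise that your $O(1/\sqrt{d})$ constants degrade as $|x|\to\sqrt{2d}$ (since $p(1-p)\to 0$ in the Berry--Esseen denominator), so the bound is pointwise in fixed $x$ rather than uniform over the whole interval — which matches the theorem as stated.
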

\begin{proof}
    Let $\psi_d(x)$ be the PDF of the $Beta(d, d)$ distribution and $\phi(x)$ be the PDF of the standard normal distribution. By \citet{Ryder2012}, we have 
    \begin{equation}
        \psi_d(x) = \phi(x)(1 + O(\frac{1}{d})).
    \end{equation}
    WLOG, we consider appropriate real numbers $c_1, c_2\in\mathbb{R}_+$, such that
     \begin{equation}
    (1-\frac{c_1}{d})\phi(x) < \psi_d(x) < (1+\frac{c_2}{d})\phi(x).
    \end{equation}
    Then we get
    \begin{equation}
    \lim_{d\to+\infty}\int_{-\sqrt{2d}}^{x}(1-\frac{c_1}{d})\phi(t)dt
    < \lim_{d\to+\infty}\int_{-\sqrt{2d}}^{x}\psi_d(t)dt
    <  \lim_{d\to+\infty}\int_{-\sqrt{2d}}^{x}(1+\frac{c_2}{d})\phi(t)dt.
    \end{equation}
    By $|\phi(x)| < \frac{1}{\sqrt{2\pi}}$, we have
    \begin{equation}
         \lim_{d\to+\infty}(\int_{-\sqrt{2d}}^{x}\phi(t)dt - \frac{(x + \sqrt{2d})c_1}{\sqrt{2\pi}d})  < \lim_{d\to+\infty}\int_{-\sqrt{2d}}^{x}\psi_d(t)dt < \lim_{d\to+\infty}(\int_{-\sqrt{2d}}^{x}\phi(t)dt + \frac{(x + \sqrt{2d})c_2}{\sqrt{2\pi}d}).
    \end{equation}
    Thus for $x\in(-\sqrt{2d}, \sqrt{2d})$, we have
    \begin{equation}
   \Phi(x)-\lim_{d\to+\infty}\frac{2c_1}{\sqrt{\pi d}}< \lim_{d\to+\infty}\int_{-\sqrt{2d}}^{x}\psi_d(t)dt <  \Phi(x)+\lim_{d\to+\infty}\frac{2c_2}{\sqrt{\pi d}},
   \end{equation}
   which by definition means 
   \begin{equation}
   \lim_{d\to\infty}\Psi_d(\frac{1}{2} + \frac{x}{2\sqrt{2d}}) = \Phi(x),
   \end{equation}
   and the error bound is $O(1/\sqrt{d})$.
\end{proof}

In addition, we show $\Psi_{\frac{d-1}{2}}(\frac{1}{2} + \frac{x}{2\sqrt{d}})$ and $\Phi(x)$ in the same figure for $d\in[10^1, 10^2, 10^3, 10^4, 10^5, 10^6]$ in Figure \ref{extend_psi}. From the figure, only when $d=10$ can we perceive the errors with eyes. Furthermore, to see the error more clearly, we uniformly select 100000 points in the interval $(0, \sqrt{d})$, and report the maximum absolute error and the maximum relative error among all the points in Table \ref{extension_table}, where we see both errors match the $O(1/d)$ error bound in \citet{Ryder2012}, slightly tighter than our proved $O(1/\sqrt{d})$ bound.

\begin{figure}[htbp!]
  \centering
  \includegraphics[width=1.0\linewidth]{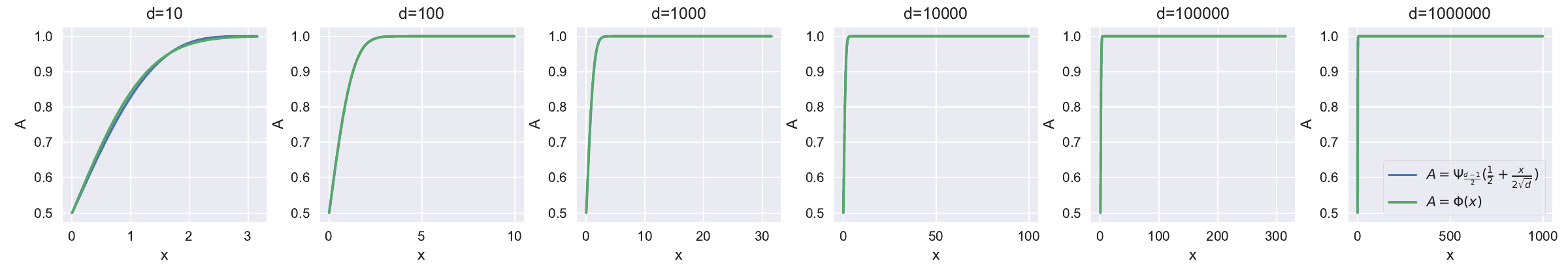}
  \caption{Comparison of $\Psi_{\frac{d-1}{2}}(\frac{1}{2} + \frac{x}{2\sqrt{d}})$ and $\Phi(x)$ for different $d$.}
 \label{extend_psi}
 \vspace{-5mm}
\end{figure}

\begin{table}[htbp!]
\centering
\caption{Maximum absolute errors and maximum relative errors in 100000 points in $(0, \sqrt{d})$ for different $d$}\label{extension_table}
\begin{tabular}{ccccccc}
\toprule
$d$  & $10^1$  & $10^2$  & $10^3$  & $10^4$  & $10^5$  & $10^6$  \\
\midrule
AE & 1.46e-2 & 1.38e-3 & 1.38e-4 & 1.38e-5 & 1.38e-6 & 1.38e-7 \\
RE & 1.93e-2 & 1.83e-3 & 1.82e-4 & 1.82e-5 & 1.82e-6 & 1.82e-7 \\
\bottomrule
\end{tabular}
\end{table}

\section{Computational methods for ESG and EGG in DSRS}\label{computation}
To evaluate the performance of ESG/EGG on real-world datasets, we consider solving Problem~(\ref{dualdsrs}), the strong dual problem of Problem (\ref{primaldsrs}):
\begin{small}
\begin{equation}\label{dualdsrs}
\begin{aligned}
\max_{\nu_1, \nu_2 \in\mathbb{R}} \quad & \mathbb{P}_{z \sim \mathcal{P+\delta}}\{p(z-\delta) + \nu_1 p(z) + \nu_2 q(z) < 0\},\\
\textrm{s.t.} \quad & \mathbb{P}_{z\sim \mathcal{P}}\{p(z-\delta) + \nu_1 p(z)  + \nu_2 q(z)< 0\} = A,\\
& \mathbb{P}_{z\sim \mathcal{Q}}\{p(z-\delta) + \nu_1 p(z)  + \nu_2 q(z)< 0\} = B.
\end{aligned}
\end{equation}
\end{small}

We do not elaborate on the solution process since previous works \citep{yang2020, li2022} have done well.
Herein, we directly give solutions to Problem (\ref{dualdsrs}) for ESG and EGG.
Overall, the case of EGG is a relatively straightforward generalization of General Gaussian, while the derivation for ESG includes nontrivial branches. When taking ESG as the smoothing distribution and TESG as the supplementary distribution, we solve Problem (\ref{dualdsrs}) by the following theorem:

\begin{theorem}[Integral form of Problem (\ref{dualdsrs}) with ESG]\label{thesg}
Let $\mathcal{P}= \mathcal{S}(\sigma, \eta)$ with PDF $p(\cdot)$, $\mathcal{Q}= \mathcal{S}_t(\sigma, \eta, T)$ with PDF $q(\cdot)$ and $C_s =  \frac{\Gamma(\frac{d}{\eta})}{\gamma(\frac{d}{\eta}, \frac{T^\eta}{2\sigma_s^\eta})}$, where $\gamma(\cdot, \cdot)$ is the lower incomplete gamma function. Let $\mathcal{W}\triangleq \{z\mid p(z-\delta) + \nu_1 p(z) + \nu_2 q(z) < 0\}$. Then
\begin{equation}
\scriptsize
\begin{aligned}
\mathbb{P}_{z\sim \mathcal{P}}\{z\in \mathcal{W}\} &= \mathbb{E}_{u \sim \Gamma(\frac{d}{\eta}, 1)}\left\{
	\begin{aligned}
       		&\omega_1(u, \nu_1), &u\geq \frac{T^\eta}{2\sigma_s^\eta}, \\
		&\omega_1(u, \nu_1 + C_s\nu_2), &u < \frac{T^\eta}{2\sigma_s^\eta},
	 \end{aligned}\right.\\
 \mathbb{P}_{z\sim \mathcal{Q}}\{z\in \mathcal{W}\} &= C_s\mathbb{E}_{u \sim \Gamma(\frac{d}{\eta}, 1)}\ \omega_1(u, \nu_1 + C_s\nu_2) \cdot \mathds{1}_{u\leq \frac{T^\eta}{2\sigma_s^\eta}},
\\
 \mathbb{P}_{z\sim \mathcal{P+\delta}}\{z\in \mathcal{W}\} &= \left\{
 \begin{aligned}
 	&\mathbb{E}_{u \sim \Gamma(\frac{d}{\eta}, 1)}\ \omega_2(u), &\nu_1 \geq 0,\\
	&\mathbb{E}_{u \sim \Gamma(\frac{d}{\eta}, 1)}\{\omega_2(u) + \omega_3(u)\}, &\nu_1 < 0,
 \end{aligned}\right.\\
 \end{aligned}
\end{equation}
where
\begin{tiny}
\begin{equation}
\begin{aligned}
&\omega_1(u, \nu) = \left\{
\begin{aligned}
&\Psi_{\frac{d-1}{2}}\left(\frac{(\rho + \sigma_s(2u)^\frac{1}{\eta})^2-2^{\frac{2}{\eta}}\sigma_s^2(u - \ln(-\nu))^{\frac{2}{\eta}}}{4\rho \sigma_s(2u)^\frac{1}{\eta}}\right)&, u-\ln(-\nu) \geq 0,\\
&1&, u-\ln(-\nu) < 0,
\end{aligned}\right.\\
&\omega_2(u) = \left\{
\begin{aligned}
&\Psi_{\frac{d-1}{2}}\left(\frac{\min \{T^2, 2^{\frac{2}{\eta}}\sigma_s^2(u + \ln(-\nu_1 -C_s\nu_2))^{\frac{2}{\eta}}\} - (\sigma_s(2u)^\frac{1}{\eta} - \rho)^2}{4\rho \sigma_s(2u)^\frac{1}{\eta}}\right)&, u + \ln(-(\nu_1 + C_s\nu_2)) \geq 0,\\
&0&, u + \ln(-(\nu_1 + C_s\nu_2)) < 0,\\
\end{aligned}\right.\\
&\omega_3(u) = \left\{
\begin{aligned} 
&\max\left\{\Psi_{\frac{d-1}{2}}\left(\frac{2^{\frac{2}{\eta}}\sigma_s^2(u + \ln(-\nu_1))^{\frac{2}{\eta}} - (\sigma_s(2u)^\frac{1}{\eta} - \rho)^2}{4\rho \sigma_s(2u)^\frac{1}{\eta}}\right) - \Psi_{\frac{d-1}{2}}\left(\frac{T^2-(\sigma_s(2u)^\frac{1}{\eta} - \rho)^2}{4\rho \sigma_s(2u)^\frac{1}{\eta}}\right), 0 \right\}&, u+ \ln(-\nu_1) \geq 0, \\
&0&, u + \ln(-\nu_1) < 0.
\end{aligned}\right.
 \end{aligned}
\end{equation}
\end{tiny}
\end{theorem}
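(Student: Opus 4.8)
The plan is to reuse, almost verbatim, the machinery from the proof of Theorem~\ref{thesgNP} (Appendix~\ref{apesgNP}), which rests on the $\ell_2$-symmetry of $\mathcal{S}(\sigma,\eta)$. As in Equation~(\ref{esgnp1}), for any $\ell_2$-symmetric event $E$ one writes $\mathbb{P}_{x\sim\mathcal{S}(\sigma,\eta)}(x\in E)=\int_0^\infty \phi_s(r)V_d(r)\,\mathbb{P}(x\in E\mid\Vert x\Vert_2=r)\,dr$ via Lemma~\ref{vol}, and the substitution $u=\tfrac12(r/\sigma_s)^\eta$ turns this into $\mathbb{E}_{u\sim\Gamma(\frac{d}{\eta},1)}\,\mathbb{P}(x\in E\mid\Vert x\Vert_2=\sigma_s(2u)^{1/\eta})$; moreover, whenever $E$ is of the type $\{\Vert x-\delta\Vert_2<c\}$, this conditional probability equals a value of $\Psi_{\frac{d-1}{2}}$, since for $x$ uniform on the sphere of radius $r$ the normalized angular coordinate $\tfrac{1}{2}(1+\langle x,\delta\rangle/(r\rho))$ follows $\mathrm{Beta}(\tfrac{d-1}{2},\tfrac{d-1}{2})$ and $\Vert x\pm\delta\Vert_2^2$ is affine in it. The only genuinely new feature compared with Theorem~\ref{thesgNP} is that $q=C_s\,\phi_s\cdot\mathds{1}_{\Vert\cdot\Vert_2\le T}$ (Appendix~\ref{appdis}), so the inequality cutting out $\mathcal{W}$ is piecewise on each sphere: on $\{\Vert z\Vert_2>T\}$, i.e.\ $u>\tfrac{T^\eta}{2\sigma_s^\eta}$, the $\nu_2 q$ term vanishes, while on $\{\Vert z\Vert_2\le T\}$ the multiplier of $p$ becomes $\nu_1+C_s\nu_2$. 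Throughout I read $\Psi_{\frac{d-1}{2}}$ as extended to $\mathbb{R}$ ($=0$ on $(-\infty,0]$, $=1$ on $[1,\infty)$) and $\ln$ of a nonpositive number as $-\infty$, so that degenerate branches collapse automatically.

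For $\mathbb{P}_{z\sim\mathcal{P}}\{z\in\mathcal{W}\}$ I would split the radial integral at $u=\tfrac{T^\eta}{2\sigma_s^\eta}$: on the outer piece the event is $\{p(z-\delta)+\nu_1 p(z)<0\}$, on the inner piece $\{p(z-\delta)+(\nu_1+C_s\nu_2)p(z)<0\}$, and in each case the conditional-on-sphere probability is exactly the function $\omega_\natural$ of Theorem~\ref{thesgNP} (written $\omega_1$ here) evaluated at the relevant multiplier, giving the stated two-branch formula. For $\mathbb{P}_{z\sim\mathcal{Q}}\{z\in\mathcal{W}\}$, the radial density of $\mathcal{Q}$ is $C_s\,\phi_s(r)V_d(r)\mathds{1}_{r\le T}$, so only the inner piece contributes; factoring out $C_s$ and using the inner-piece multiplier $\nu_1+C_s\nu_2$ reproduces $C_s\,\mathbb{E}_u\,\omega_1(u,\nu_1+C_s\nu_2)\mathds{1}_{u\le T^\eta/(2\sigma_s^\eta)}$.

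The term $\mathbb{P}_{z\sim\mathcal{P}+\delta}\{z\in\mathcal{W}\}$ is where the work lies. Writing $z=x+\delta$ with $x\sim\mathcal{P}$, the event becomes $\phi_s(r)+\nu_1\phi_s(s)+\nu_2 C_s\phi_s(s)\mathds{1}_{s\le T}<0$, with $r=\Vert x\Vert_2$ fixed on a sphere and $s=\Vert x+\delta\Vert_2$ ranging over $[\,|r-\rho|,\,r+\rho\,]$. Since $\phi_s$ is strictly decreasing with maximum $U:=\phi_s(0)$, inverting it (using the closed form of Equation~(\ref{73}) and its analogue, namely $\phi_s^{-1}(-\tfrac{1}{\nu}\phi_s(\sigma_s(2u)^{1/\eta}))=2^{1/\eta}\sigma_s(u+\ln(-\nu))^{1/\eta}$ when $u+\ln(-\nu)\ge0$, and infeasibility otherwise) reduces each half of the inequality to a threshold on $s^2$, whose angular probability is $\Psi_{\frac{d-1}{2}}\big(\tfrac{(\text{threshold})-(\sigma_s(2u)^{1/\eta}-\rho)^2}{4\rho\sigma_s(2u)^{1/\eta}}\big)$, just as for $\omega_\sharp$. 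I would then split on the sign of $\nu_1$. If $\nu_1\ge0$, then on $\{s>T\}$ one has $\phi_s(r)+\nu_1\phi_s(s)\ge\phi_s(r)>0$, so only $\{s\le T\}$ contributes, with event $\{\,s^2<\min\{T^2,\,2^{2/\eta}\sigma_s^2(u+\ln(-(\nu_1+C_s\nu_2)))^{2/\eta}\}\,\}$; this is precisely $\omega_2(u)$. If $\nu_1<0$, the part $\{s\le T\}$ contributes the same $\omega_2(u)$, and the part $\{s>T\}$ contributes the additional event $\{\,T<s<2^{1/\eta}\sigma_s(u+\ln(-\nu_1))^{1/\eta}\,\}$, whose probability is the difference of two beta CDFs clamped below at $0$ (it vanishes when the upper threshold does not exceed $T$, or when $u+\ln(-\nu_1)<0$); this is $\omega_3(u)$. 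As $\{s\le T\}$ and $\{s>T\}$ partition the sphere, the contributions add, giving $\omega_2(u)+\omega_3(u)$.

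The main obstacle I anticipate is purely the disciplined bookkeeping of degenerate sub-events, not any hard estimate. For each half-inequality one must distinguish (i) the coefficient of $\phi_s(s)$ being nonnegative, so the strict inequality is infeasible, from (ii) that coefficient being negative but the target $-\phi_s(r)/(\text{coefficient})$ exceeding $U$, so $\phi_s^{-1}$ is out of domain and the inequality is again infeasible; under the conventions above, both collapse to the ``$u+\ln(-\cdot)<0\Rightarrow 0$'' branches and to the $\max\{\cdot,0\}$ in $\omega_3$. One also checks the routine fact that the extended $\Psi_{\frac{d-1}{2}}$ absorbs the cases where a threshold on $s^2$ lies outside $[(r-\rho)^2,(r+\rho)^2]$, so no further casework comparing $T$ with $|r\pm\rho|$ is needed. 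Granting these conventions, the remaining steps — the change of variables, the beta-CDF identity for the angular coordinate, and the algebra turning $\phi_s^{-1}$ values into the $2^{2/\eta}\sigma_s^2(\cdot)^{2/\eta}$ thresholds — are identical to the computations already performed in Appendix~\ref{apesgNP}.
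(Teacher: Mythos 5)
Your proposal is correct and follows exactly the route the paper intends: the paper itself omits this proof (deferring to the level-set derivations of \citet{yang2020} and \citet{li2022}), and your argument is the natural extension of the Appendix~\ref{apesgNP} derivation of Theorem~\ref{thesgNP} --- splitting the radial integral at $u=\tfrac{T^\eta}{2\sigma_s^\eta}$ where $q=C_s\,p\,\mathds{1}_{\Vert\cdot\Vert_2\le T}$ switches the multiplier of $p$ from $\nu_1$ to $\nu_1+C_s\nu_2$, and performing the $\nu_1\ge 0$ versus $\nu_1<0$ casework for the shifted measure. The conventions you adopt (extended $\Psi_{\frac{d-1}{2}}$ and $\ln$ of a nonpositive argument as $-\infty$) correctly reproduce all the degenerate branches, including the $0$/$1$ cases and the $\max\{\cdot,0\}$ clamp in $\omega_3$.
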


Similarly, We have the solution to Problem (\ref{dualdsrs}) for EGG as follows:

\begin{theorem}[Integral form of Problem (\ref{dualdsrs}) with EGG]\label{thegg}
Let $\mathcal{P}= \mathcal{G}(\sigma, \eta, k)$ with PDF $p(\cdot)$, $\mathcal{Q}= \mathcal{G}_t(\sigma, \eta, k, T)$ with PDF $q(\cdot)$ and $C_g = \frac{\Gamma(\frac{(d-2k)}{\eta})}{\gamma(\frac{d-2k}{\eta}, \frac{T^\eta}{2\sigma_g^\eta})}$, where $\gamma(\cdot, \cdot)$ is the lower incomplete gamma function. Let $\mathcal{W}\triangleq \{z\mid p(z-\delta) + \nu_1 p(z) + \nu_2 q(z) < 0\}$. Then
\begin{equation}
\begin{scriptsize}
\begin{aligned}
\mathbb{P}_{z\sim \mathcal{P}}\{z\in \mathcal{W}\} &= \mathbb{E}_{u \sim \Gamma(\frac{d-2k}{\eta}, 1)}\left\{
\begin{aligned}
       	&\omega_1(u, \nu_1), &u\geq \frac{T^\eta}{2\sigma_g^\eta}, \\
	&\omega_1(u, \nu_1 + C_g\nu_2), &u < \frac{T^\eta}{2\sigma_g^\eta},
 \end{aligned}\right.\\
 \mathbb{P}_{z\sim \mathcal{Q}}\{z\in \mathcal{W}\} &= 
 C_g\mathbb{E}_{u \sim \Gamma(\frac{d-2k}{\eta}, 1)}\ \omega_1(u, \nu_1 + C_g\nu_2) \cdot \mathds{1}_{u\leq \frac{T^\eta}{2\sigma_g^\eta}}, \\
 \mathbb{P}_{z\sim \mathcal{P+\delta}}\{z\in \mathcal{W}\} &= \left\{
 \begin{aligned}
 	&\mathbb{E}_{u \sim \Gamma(\frac{d-2k}{\eta}, 1)}\ \omega_2(u), &\nu_1 \geq 0,\\
	&\mathbb{E}_{u \sim \Gamma(\frac{d-2k}{\eta}, 1)}\{\omega_2(u) + \omega_3(u)\}, &\nu_1 < 0,
 \end{aligned}\right.\\
 \end{aligned}
 \end{scriptsize}
\end{equation}
where
\begin{scriptsize}
\begin{equation}
\begin{aligned}
\omega_1(u, \nu) &= \Psi_{\frac{d-1}{2}}\left(\frac{(\rho + \sigma_g(2u)^\frac{1}{\eta})^2- \left(\frac{4kW(\frac{\eta u}{2k}(-\nu)^{\frac{\eta}{2k}}\exp(\frac{\eta u}{2k}))\sigma_g^\eta}{\eta}\right)^{\frac{2}{\eta}}}{4\rho\sigma_g (2u)^\frac{1}{\eta}}\right), \\
\omega_2(u) &= \Psi_{\frac{d-1}{2}}\left(\frac{\min \{T^2, \sigma_g^2(\frac{4kW(\frac{\eta u}{2k}(-\nu_1 - C_g\nu_2)^{\frac{\eta }{2k}}\exp(\frac{\eta u}{2k}))}{\eta})^{\frac{2}{\eta}}\} - (\sigma_g(2u)^\frac{1}{\eta} - \rho)^2}{4\rho \sigma_g(2u)^\frac{1}{\eta}}\right),\\
\omega_3(u) &= \Psi_{\frac{d-1}{2}}\left(\frac{\sigma_g^2(\frac{4kW(\frac{\eta u}{2k}(-\nu_1)^{\frac{\eta}{2k}}\exp(\frac{\eta u}{2k}))}{\eta})^{\frac{2}{\eta}} - (\sigma_g(2u)^\frac{1}{\eta} - \rho)^2}{4\rho \sigma_g(2u)^\frac{1}{\eta}}\right) - \Psi_{\frac{d-1}{2}}\left(\frac{T^2-(\sigma_g(2u)^\frac{1}{\eta} - \rho)^2}{4\rho \sigma_g(2u)^\frac{1}{\eta}}\right).
 \end{aligned}
\end{equation}
\end{scriptsize}
\end{theorem}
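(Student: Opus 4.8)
The plan is to reproduce the level-set (``Neyman--Pearson set'') derivation that proves Theorem~\ref{thesgNP} in Appendix~\ref{apesgNP}, now carrying the extra constraint contributed by the truncated distribution $\mathcal{Q}$. Two structural facts drive everything: $\mathcal{G}(\sigma,\eta,k)$ is $\ell_2$-symmetric, so integrals split into a radial and a spherical factor; and the truncated density is $q(z)=C_g\,p(z)\,\mathds{1}_{\{\Vert z\Vert_2\le T\}}$, where $C_g$ is the normalizing ratio recorded in Appendix~\ref{appdis}. As in the ESG proof I take $d$ even and $\delta=(\rho,0,\dots,0)^T$ without loss of generality. The first step is the radial change of variables: for any $\ell_2$-symmetric measurable set $S$, Lemma~\ref{vol} gives $\mathbb{P}_{z\sim\mathcal{P}}\{z\in S\}=\int_0^\infty \phi_g(r)V_d(r)\,dr\cdot\mathbb{P}\{x\in S\mid\Vert x\Vert_2=r\}$; the $r^{d-1}$ from $V_d$ multiplies the $r^{-2k}$ in $\phi_g$ to give $r^{d-2k-1}$, so the substitution $r=\sigma_g(2u)^{1/\eta}$ makes the radial marginal the $\Gamma(\frac{d-2k}{\eta},1)$ density and yields $\mathbb{P}_{z\sim\mathcal{P}}\{z\in S\}=\mathbb{E}_{u\sim\Gamma(\frac{d-2k}{\eta},1)}\mathbb{P}\{x\in S\mid\Vert x\Vert_2=\sigma_g(2u)^{1/\eta}\}$, the event $\{\Vert z\Vert_2\le T\}$ corresponding to $\{u\le T^\eta/(2\sigma_g^\eta)\}$. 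This is the EGG analogue of Equation~(\ref{esgnp1}).

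Next I would reduce membership in $\mathcal{W}$ to a level-set condition on a fixed sphere. Since $q=C_g\,p\,\mathds{1}_{\{\Vert\cdot\Vert_2\le T\}}$, the defining inequality of $\mathcal{W}$ reads $p(z-\delta)+\nu_1 p(z)<0$ on $\{\Vert z\Vert_2>T\}$ and $p(z-\delta)+(\nu_1+C_g\nu_2)p(z)<0$ on $\{\Vert z\Vert_2\le T\}$. Because $\phi_g$ is strictly decreasing on $(0,\infty)$ with range $(0,\infty)$ (its log-derivative is $-2k/r-\eta r^{\eta-1}/(2\sigma_g^\eta)<0$), for a negative coefficient $c$ the condition $p(z-\delta)+c\,p(z)<0$ is equivalent to $\Vert z-\delta\Vert_2>\phi_g^{-1}\!\big(-c\,\phi_g(\Vert z\Vert_2)\big)$ and is vacuous for $c\ge 0$; note that here, unlike the ESG case where $\phi_s$ is bounded and an extra branch must be split off, $\phi_g^{-1}$ is defined on all of $(0,\infty)$ and is given in closed form by Equation~(\ref{phig}) via the Lambert-$W$ function. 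Feeding $s=\phi_g^{-1}(-c\,\phi_g(\sigma_g(2u)^{1/\eta}))$ and $r=\sigma_g(2u)^{1/\eta}$ into the spherical-cap identity $\mathbb{P}\{\Vert x-\delta\Vert_2>s\mid\Vert x\Vert_2=r\}=\Psi_{\frac{d-1}{2}}\!\big(\frac{(r+\rho)^2-s^2}{4\rho r}\big)$ (itself a restatement of the Beta$(\frac{d-1}{2},\frac{d-1}{2})$ law of the first coordinate of a uniform point on the sphere) produces $\omega_1(u,\cdot)$. Assembling over the two radial regimes $u\gtrless T^\eta/(2\sigma_g^\eta)$ gives $\mathbb{P}_{z\sim\mathcal{P}}\{z\in\mathcal{W}\}$; and since $\mathcal{Q}$ is supported on $\{u\le T^\eta/(2\sigma_g^\eta)\}$ with density exactly $C_g$ times that of $\mathcal{P}$ there, only the term $C_g\,\omega_1(u,\nu_1+C_g\nu_2)\mathds{1}_{\{u\le T^\eta/(2\sigma_g^\eta)\}}$ survives in $\mathbb{P}_{z\sim\mathcal{Q}}\{z\in\mathcal{W}\}$.

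The remaining and hardest step is the shifted probability $\mathbb{P}_{z\sim\mathcal{P}+\delta}\{z\in\mathcal{W}\}$. Writing $z=\delta+w$ with $w\sim\mathcal{P}$, membership becomes $p(w)+\nu_1 p(w+\delta)+\nu_2 q(w+\delta)<0$, and now the truncation $\{\Vert w+\delta\Vert_2\le T\}$ is, at fixed $\Vert w\Vert_2=r$, a \emph{spherical-cap} condition that overlaps the level set $\{\Vert w+\delta\Vert_2<\phi_g^{-1}(\cdot)\}$ rather than being nested with it, so the fixed-radius analysis must be split on $\sign(\nu_1)$. If $\nu_1\ge 0$, the inequality is impossible once $\Vert w+\delta\Vert_2>T$ (both summands are then nonnegative), so the membership set at radius $r$ is the single cap $\{\Vert w+\delta\Vert_2<\min\{T,\phi_g^{-1}(-p(w)/(\nu_1+C_g\nu_2))\}\}$, whose probability via the cap identity is $\omega_2(u)$; if $\nu_1<0$, it is the disjoint union of that cap with the annular cap $\{T<\Vert w+\delta\Vert_2<\phi_g^{-1}(-p(w)/\nu_1)\}$, contributing the difference of two $\Psi_{\frac{d-1}{2}}$ terms, i.e. $\omega_3(u)$ (read, as in the ESG statement, as $0$ when that annulus is empty). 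Summing and taking the expectation over $u$ gives the stated formula. The main obstacle is precisely this bookkeeping: correctly partitioning the fixed-radius sphere into the sub-caps where $\nu_1$ versus $\nu_1+C_g\nu_2$ governs, checking disjointness, and verifying that the $\min$/clamp combinations reproduce exactly $\omega_2+\omega_3$ in every regime of the signs of $\nu_1$, $\nu_1+C_g\nu_2$ and of $\phi_g^{-1}(\cdot)$ relative to $T$; once this is settled, the rest — the change of variables, the explicit $\phi_g^{-1}$ through $W(\cdot)$, and the Beta-CDF form of the cap probability — is routine and mirrors the ESG derivation of Theorem~\ref{thesg} essentially line for line.
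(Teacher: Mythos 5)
Your proposal is correct and follows essentially the same route the paper intends: the paper itself omits this proof (deferring to the level-set arguments of prior work and to its own ESG derivation for Theorem \ref{thesgNP} and Lemma \ref{lemmae2}), and your reconstruction — radial change of variables to the $\Gamma(\frac{d-2k}{\eta},1)$ law, rewriting $q=C_g\,p\,\mathds{1}_{\{\Vert\cdot\Vert_2\le T\}}$ to split the level-set condition at $u=T^\eta/(2\sigma_g^\eta)$, the spherical-cap Beta-CDF identity, and the $\sign(\nu_1)$ case split for the shifted measure — is exactly that argument. You also correctly identify the one structural point that distinguishes EGG from ESG, namely that $\phi_g$ is unbounded near the origin so $\phi_g^{-1}$ (via the Lambert $W$ form of Equation \ref{phig}) is defined on all of $(0,\infty)$ and the branch conditions of Theorem \ref{thesg} disappear.
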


Compared to Theorem \ref{thegg}, we emphasize the following differences: (1) $p(\cdot)$ represents different PDFs in the two theorems; (2) there are more branches for $\omega$ functions in Theorem \ref{thesg}, which originates from different properties of the logarithmic function and the Lambert W function. 

\section{Proof of Theorem \ref{fixbase} and Theorem \ref{thcorres}}\label{appth}
We put proofs of Theorem \ref{fixbase} and Theorem \ref{thcorres} together since they share common thinking. Overall, this section includes 3 parts. We first introduce lemmas based mainly on the concentration properties of beta and gamma distributions, then derive the solution for a lower bound of Problem (\ref{primaldsrs}). Finally, we prove Theorem \ref{fixbase} and Theorem \ref{thcorres} respectively, on the basis of those introduced lemmas. Both proofs are essentially generalizations for appendix F.3  in~\citet{li2022}.
 
\subsection{Preliminaries}
We show lemmas for proofs in this section. In a nutshell, Lemma \ref{lemmae1} offers the probability that the mass of ESG is within $T$; Lemma \ref{lemmae2} gives a solution for a lower bound of Problem (\ref{primaldsrs}); Lemma~\ref{lemmae3} and Lemma \ref{lemmae4} reveal concentration properties of beta and gamma distributions; Lemma \ref{lemmae5} proves monotonicity of a function that appears in the proof of Theorem \ref{thcorres}.

\begin{lemma}\label{lemmae1}
For a random variable $z \sim \mathcal{S}(\sigma, \eta)$ and a determined threshold $T \in \mathbb{R}_+$,  
\begin{equation}
	\mathbb{P}\{\Vert z\Vert _2\leq T\}=\Lambda_{\frac{d}{\eta}}(\frac{T^\eta}{2\sigma_s^\eta}).
\end{equation}
\end{lemma}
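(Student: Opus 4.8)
The plan is to decompose the probability radially and recognize the resulting one-dimensional integral as the gamma CDF after a change of variables. Because $\mathcal{S}(\sigma,\eta)$ is $\ell_2$-symmetric, its law in $\mathbb{R}^d$ is completely described by the radial density $\phi_s(r)$ of Equation~(\ref{67a}) together with the sphere area element $V_d(r)$ of Lemma~\ref{vol}, so
\begin{equation}
\mathbb{P}\{\Vert z\Vert_2 \le T\} = \int_0^T \phi_s(r)\, V_d(r)\, dr .
\end{equation}
This is the same integrand that already appears in Equation~(\ref{esgnp1}), only with the finite upper limit $T$ in place of $\infty$; I would essentially reuse that computation with the truncated limit.

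First I would substitute the explicit formulas $\phi_s(r)=\frac{\eta}{2}\,\frac{1}{(2\sigma_s^\eta)^{d/\eta}\pi^{d/2}}\,\frac{\Gamma(d/2)}{\Gamma(d/\eta)}\exp(-\tfrac12(r/\sigma_s)^\eta)$ and $V_d(r)=\frac{d\pi^{d/2}}{\Gamma(d/2+1)}r^{d-1}$. The $\pi^{d/2}$ factors cancel and $\frac{d\,\Gamma(d/2)}{\Gamma(d/2+1)}=2$, so the product simplifies to $\phi_s(r)V_d(r)=\frac{\eta}{(2\sigma_s^\eta)^{d/\eta}\Gamma(d/\eta)}\, r^{d-1}\exp(-\tfrac12(r/\sigma_s)^\eta)$, with all dimension-dependent normalization absorbed into $1/\Gamma(d/\eta)$.

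Next I would apply the substitution $u=\frac{r^\eta}{2\sigma_s^\eta}$, equivalently $r=\sigma_s(2u)^{1/\eta}$, under which $r^{d-1}\,dr=\frac{(2\sigma_s^\eta)^{d/\eta}}{\eta}\,u^{d/\eta-1}\,du$ and $\exp(-\tfrac12(r/\sigma_s)^\eta)=e^{-u}$; this is exactly the change of variables used in Equation~(\ref{esgnp1}). All prefactors cancel and the integral collapses to $\frac{1}{\Gamma(d/\eta)}\int_0^{T^\eta/(2\sigma_s^\eta)} u^{d/\eta-1}e^{-u}\,du$, which by definition is $\Lambda_{d/\eta}\!\big(\tfrac{T^\eta}{2\sigma_s^\eta}\big)$, the CDF of $\Gamma(d/\eta,1)$ at $T^\eta/(2\sigma_s^\eta)$; this proves the lemma.

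I do not expect a genuine obstacle here: the only step deserving a word of care is confirming that the constant in $\phi_s(r)$ is precisely the one that normalizes $\mathcal{S}(\sigma,\eta)$ over $\mathbb{R}^d$ (equivalently, that sending $T\to\infty$ returns $\Lambda_{d/\eta}(\infty)=1$), but this is itself verified by the very same substitution and is recorded in Appendix~\ref{appdis}. Conceptually the lemma is just the statement that $\Vert z\Vert_2^\eta/(2\sigma_s^\eta)\sim\Gamma(d/\eta,1)$ for $z\sim\mathcal{S}(\sigma,\eta)$, and the computation above makes that explicit; the dependence on $\sigma$ enters only through $\sigma_s$, consistent with Table~\ref{tb1}.
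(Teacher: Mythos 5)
Your proof is correct and follows essentially the same route as the paper's: decompose the probability radially as $\int_0^T \phi_s(r)V_d(r)\,dr$ and apply the substitution $u = r^\eta/(2\sigma_s^\eta)$ to reduce it to the incomplete gamma integral defining $\Lambda_{d/\eta}$. Your write-up is in fact slightly more careful about the cancellation of the normalizing constants than the paper's one-line computation.
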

	
	\begin{proof} We have $\phi_s$, the PDF of $z$ from Equation (\ref{67a}) and $V_d(r)$ from Lemma \ref{vol}, then
	\begin{equation}
	\begin{aligned}
	\mathbb{P}\{\Vert z\Vert _2\leq T\} &=\int_0^T\phi_s(r)\exp(-\frac{r^\eta}{2\sigma_s^\eta})\frac{d\pi^{\frac{d}{2}}}{\Gamma(\frac{d}{2} + 1)}r^{d-1}dr \\
	&= \frac{1}{\Gamma(\frac{d}{\eta})}\int_0^{\frac{T^\eta}{2\sigma_s^\eta}}t^{\frac{d}{\eta}-1}\exp(-t)dt \\
	&= \Lambda_{\frac{d}{\eta}}(\frac{T^\eta}{2\sigma_s^\eta}). 
	\end{aligned}
	\end{equation}
\end{proof}

\begin{lemma}\label{lemmaforq}
Given a base classifier $f:\mathbb{R}^d \to \mathcal{Y}$ satisfies $(\sigma, p, \eta)$-concentration property at input $(x_0,y_0) \in \mathbb{R}^d$. For the supplementary distribution $\mathcal{Q}=\mathcal{G}_t(\sigma, \eta, k, T)$, where $T=\sigma_s(2\Lambda_{\frac{d}{\eta}}^{-1}(p))^{\frac{1}{\eta}}, \eta \in \mathbb{R}_+$ and $2d-k\in[1,30] \cap \mathbb{N}$, we have
\begin{equation}
\mathbb{P}_{z\sim\mathcal{Q}}\{f(x_0 + z) = y_0\} = 1.
\end{equation}
\end{lemma}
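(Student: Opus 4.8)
The plan is to observe that the truncation radius $T$ of the supplementary distribution $\mathcal{Q}$ is precisely the radius that makes $\mathcal{S}(\sigma,\eta)$ place mass $p$ inside the corresponding ball, so that the $(\sigma,p,\eta)$-concentration property (Definition~\ref{cct}) applies to this very $T$; one then transports ``$f(x_0+z)=y_0$ almost surely'' from the ball $\{\Vert z\Vert_2\le T\}$, where it holds under $\mathcal{S}(\sigma,\eta)$, to $\mathcal{Q}$ by a Lebesgue-measure argument. Concretely, I would first apply Lemma~\ref{lemmae1} with the stated threshold $T=\sigma_s(2\Lambda_{\frac{d}{\eta}}^{-1}(p))^{1/\eta}$ to get
\begin{equation}
\mathbb{P}_{z\sim\mathcal{S}(\sigma,\eta)}\{\Vert z\Vert_2\le T\}
 =\Lambda_{\frac{d}{\eta}}\!\left(\tfrac{T^\eta}{2\sigma_s^\eta}\right)
 =\Lambda_{\frac{d}{\eta}}\!\left(\Lambda_{\frac{d}{\eta}}^{-1}(p)\right)=p ,
\end{equation}
so that Definition~\ref{cct} directly yields $\mathbb{P}_{z\sim\mathcal{S}(\sigma,\eta)}\{f(x_0+z)=y_0\mid\Vert z\Vert_2\le T\}=1$ (the $T$ realizing ESG-mass $p$ is unique since $\Lambda_{d/\eta}$ is strictly increasing).

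Next I would pass from this conditional statement to a statement about Lebesgue measure. Set $E=\{z\in\mathbb{R}^d:\Vert z\Vert_2\le T,\ f(x_0+z)\ne y_0\}$. Since $p>0$, the displayed conditional probability forces $\mathbb{P}_{z\sim\mathcal{S}(\sigma,\eta)}\{z\in E\}=0$; and because the PDF $\phi_s$ of $\mathcal{S}(\sigma,\eta)$ is strictly positive on all of $\mathbb{R}^d$ (Equation~(\ref{67a})), this can only happen if $E$ has Lebesgue measure zero. Now $\mathcal{Q}=\mathcal{G}_t(\sigma,\eta,k,T)$ is supported inside the ball $\{\Vert z\Vert_2\le T\}$ and is absolutely continuous with respect to Lebesgue measure there, so $\mathbb{P}_{z\sim\mathcal{Q}}\{z\in E\}=0$, i.e. $\mathbb{P}_{z\sim\mathcal{Q}}\{f(x_0+z)=y_0\}=1$, which is the claim.

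The one point that actually requires the parameter hypothesis (the stated range forcing $d-2k\ge1$) is the absolute continuity of $\mathcal{Q}$: the TEGG density carries the factor $r^{-2k}$, which is singular at the origin, but against the radial volume element $r^{d-1}\,\mathrm{d}r$ it behaves like $r^{d-1-2k}$ and is integrable near $0$ precisely when $d-2k\ge1$; this guarantees $\mathcal{G}_t(\sigma,\eta,k,T)$ is a genuine probability density and hence absolutely continuous w.r.t.\ $\mathrm{Lebesgue}$ measure. Everything else is routine: matching $T$ to the concentration radius via Lemma~\ref{lemmae1}, and the observation that two distributions with positive densities on the same ball are mutually absolutely continuous there, so a probability-one event for one is a probability-one event for the other.
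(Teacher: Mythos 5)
Your proof is correct and follows essentially the same route as the paper: match $T$ to the concentration radius via Lemma~\ref{lemmae1}, use the everywhere-positive density of $\mathcal{S}(\sigma,\eta)$ to conclude that the exceptional set inside the ball is Lebesgue-null, and transfer to $\mathcal{Q}$ by absolute continuity. The only cosmetic difference is that the paper treats the origin as a separate case, explicitly computing $\mathbb{P}_{z\sim\mathcal{Q}}\{z=0\}=0$, whereas you absorb this into the integrability of $r^{d-1-2k}$ near $0$ — the same underlying fact.
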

\begin{proof} If $f$ satisfies the $(\sigma, p, \eta)$-concentration property, when $p$ is fixed, we have 
\begin{equation}\label{eq24}
\mathbb{P}_{z\sim S(\sigma, \eta)}\{f(x_0+z)=y_0\mid \Vert z\Vert_2\leq T\}=1
\end{equation}
for $T=\sigma_s(2\Lambda_{\frac{d}{\eta}}^{-1}(p))^{\frac{1}{\eta}}$ from Definition \ref{cct} and Lemma \ref{lemmae1}. Notice that though Equation (\ref{eq24}) is defined by $S(\sigma_s, \eta)$, $f(x_0+z)=y_0$ holds for almost all $\Vert z\Vert_2\leq T$ since distribution $\mathcal{S}$ has positive density almost everywhere. Consider the case $z\sim \mathcal{G}_t(\sigma, \eta, k, T)$; we thereby have
\begin{equation}
\mathbb{P}_{z \sim \mathcal{G}_t(\sigma, \eta, k, T)}\{f(x_0+z)\neq y_0\mid 0<\Vert z\Vert_2\leq T\}=0.
\end{equation}
By Equation (\ref{phigr}), we have
\begin{equation}
\begin{aligned}
&\mathbb{P}_{z\sim\mathcal{G}_t(\sigma, \eta, k, T)}\{z=0\}\\
=&\mathbb{P}_{z\sim\mathcal{G}(\sigma, \eta, k)}\{z=0\mid\Vert z\Vert_2\leq T\}\\
\leq& \lim_{t\to 0}\mathbb{P}_{z\sim \mathcal{G}(\sigma, \eta, k)}\{\Vert z\Vert_2\leq t\}\\
=&\lim_{t\to 0}\int_0^t\frac{\eta}{2}\frac{1}{(2\sigma_g^\eta)^{\frac{d-2k}{\eta}}\pi^{\frac{d}{2}}}\frac{\Gamma(\frac{d}{2})}{\Gamma(\frac{d-2k}{\eta})}r^{-2k}\exp(-\frac{1}{2}(\frac{r}{\sigma_g})^\eta)\frac{d\pi^{\frac{d}{2}}}{\Gamma(\frac{d}{2} + 1)}r^{d-1}dr \\
=&\frac{\eta}{2}\frac{1}{(2\sigma_g^\eta)^{\frac{d-2k}{\eta}}\pi^{\frac{d}{2}}}\frac{\Gamma(\frac{d}{2})}{\Gamma(\frac{d-2k}{\eta})}\frac{d\pi^{\frac{d}{2}}}{\Gamma(\frac{d}{2} + 1)}\lim_{t\to 0}\int_0^t\exp(-\frac{1}{2}(\frac{r}{\sigma_g})^\eta)r^{d-2k-1}dr \\
=& 0.
\end{aligned}
\end{equation}
Therefore, we see
\begin{equation}
\begin{aligned}
\mathbb{P}_{z\sim \mathcal{G}_t(\sigma, \eta, k, T)}\{f(x_0+z)=y_0\} =  \mathbb{P}_{z\sim \mathcal{G}(\sigma, \eta, k)}\{f(x_0+z)=y_0\mid 0 \leq \Vert z\Vert_2\leq T\} = 1,
\end{aligned}
\end{equation}
which concludes the proof.
 \end{proof}
Here we give a solution for a lower bound of Problem (\ref{primaldsrs}). The lower bound found with the help of Lemma \ref{lemmaforq} can be solved by the level set method, sharing the same thinking with proofs of Theorem \ref{thesg} and \ref{thegg}. We refer the readers to previous papers~\citep{yang2020, li2022} for more details. 
\begin{lemma}\label{lemmae2}
Under the setting of Lemma \ref{lemmaforq}, we let 
\begin{equation}
\begin{aligned}
R = \max \quad & \rho,\\
\rm{s.t.} \quad & \ \mathbb{E}_{u\sim \Gamma(\frac{d-2k}{\eta}, 1)}\Psi_{\frac{d-1}{2}}\left(\frac{T^2-(\sigma_g(2u)^{\frac{1}{\eta}}-\rho)^2}{4\rho \sigma_g(2u)^\frac{1}{\eta}}\right) \geq \frac{1}{2}.
\end{aligned}
\end{equation}
	If $R_D$ is the tightest $\ell_2$ certified radius when $\mathcal{P}=\mathcal{G}(\sigma, \eta, k)$ is the smoothing distribution by DSRS, then $R_D \geq R$.
\end{lemma}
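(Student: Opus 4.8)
The plan is to bound $R_D$ from below by \emph{relaxing} the DSRS functional optimization in Problem~(\ref{primaldsrs}): keep only the constraint coming from the supplementary distribution $\mathcal{Q}$ and discard the one coming from $\mathcal{P}$. By Lemma~\ref{lemmaforq}, the concentration hypothesis forces $B = \mathbb{P}_{z\sim\mathcal{Q}}\{f(x_0+z)=y_0\}=1$, so the $\mathcal{Q}$-constraint is $\mathbb{E}_{z\sim\mathcal{Q}}(\tilde f_{x_0}(z)) = 1$. Dropping the $\mathcal{P}$-constraint only enlarges the feasible set of (\ref{primaldsrs}), hence the optimal value of the relaxed problem is $\le$ the optimal value of the full DSRS problem at every $\delta$; consequently the $\ell_2$ radius certified by the relaxed problem is a lower bound for $R_D$, and it suffices to show this relaxed radius equals $R$.

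First I would solve the relaxed problem in closed form. Since $\mathcal{Q}=\mathcal{G}_t(\sigma,\eta,k,T)$ has a density that is strictly positive almost everywhere on the ball $\{\|z\|_2\le T\}$ and vanishes outside it, the constraint $\mathbb{E}_{z\sim\mathcal{Q}}(\tilde f_{x_0})=1$ together with $\tilde f_{x_0}\in[0,1]$ forces $\tilde f_{x_0}\equiv 1$ a.e.\ on $\{\|z\|_2\le T\}$; on the complement $\tilde f_{x_0}$ is free, so the objective $\mathbb{E}_{z\sim\mathcal{P}}(\tilde f_{x_0}(\delta+z))$ is minimized by taking $\tilde f_{x_0}\equiv 0$ there. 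Hence, taking $\delta=(\rho,0,\dots,0)^T$ and using the spherical symmetry of $\mathcal{P}=\mathcal{G}(\sigma,\eta,k)$, the optimal value is exactly $\mathbb{P}_{z\sim\mathcal{P}}\{\|z-\delta\|_2\le T\}$.

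Next I would evaluate this probability by the level-set/polar decomposition already used for Theorems~\ref{thesgNP}, \ref{thesg} and~\ref{thegg}. Writing $\|z\|_2=\sigma_g(2u)^{1/\eta}$ with $u\sim\Gamma(\tfrac{d-2k}{\eta},1)$ (the change of variables of Lemma~\ref{lemmae1}) and conditioning on the radius $r$, a point uniform on the sphere of radius $r$ satisfies $\|z-\delta\|_2\le T \iff \cos\theta \ge \tfrac{r^2+\rho^2-T^2}{2\rho r}$, while $\tfrac{1+\cos\theta}{2}\sim\mathrm{Beta}(\tfrac{d-1}{2},\tfrac{d-1}{2})$; using $\Psi_\alpha(x)=1-\Psi_\alpha(1-x)$ this conditional probability equals $\Psi_{\frac{d-1}{2}}\!\big(\tfrac{T^2-(r-\rho)^2}{4\rho r}\big)$, with the usual clipping of the argument to $[0,1]$. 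Integrating over $u$ gives
\begin{equation*}
\mathbb{P}_{z\sim\mathcal{P}}\{\|z-\delta\|_2\le T\}=\mathbb{E}_{u\sim\Gamma(\frac{d-2k}{\eta},1)}\Psi_{\frac{d-1}{2}}\!\left(\frac{T^2-(\sigma_g(2u)^{1/\eta}-\rho)^2}{4\rho\,\sigma_g(2u)^{1/\eta}}\right),
\end{equation*}
which is precisely the left-hand side of the inequality defining $R$. Thus a radius $\rho$ is certified by the relaxed problem exactly when this quantity exceeds $1/2$, and $R$ is the supremal such $\rho$.

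Finally I would assemble the pieces: since the DSRS optimal value dominates the relaxed one, for every $\rho<R$ the worst-case correct-classification probability under DSRS exceeds $1/2$ (invoking that the displayed expectation, i.e.\ the LHS of Equation~(\ref{dis8}), is continuous and non-increasing in $\rho$, so it is $>1/2$ for $\rho<R$), whence $R_D\ge\rho$; letting $\rho\uparrow R$ yields $R_D\ge R$. The only genuinely delicate step is the first one: verifying that the relaxed optimum is attained by the indicator of $\{\|z\|_2\le T\}$ — i.e.\ that positivity of the $\mathcal{Q}$-density on the ball forces $\tilde f_{x_0}\equiv 1$ inside, and that zeroing $\tilde f_{x_0}$ outside is feasible and optimal. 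The remainder is the standard Neyman--Pearson level-set integral together with routine beta-CDF algebra.
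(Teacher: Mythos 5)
Your proposal is correct and follows essentially the same route as the paper: drop the $\mathcal{P}$-constraint (justified because relaxation only shrinks the optimal value), use Lemma~\ref{lemmaforq} to reduce the $\mathcal{Q}$-constraint to $B=1$ so the worst classifier is the indicator of $\{\Vert z\Vert_2\le T\}$, and then evaluate $\mathbb{P}_{z\sim\mathcal{P}}\{\Vert z-\delta\Vert_2\le T\}$ by the standard polar/level-set decomposition with the ${\rm Beta}(\frac{d-1}{2},\frac{d-1}{2})$ projection lemma. The only additions beyond the paper's argument are your explicit remarks on clipping the beta-CDF argument and on the continuity/monotonicity in $\rho$ used to pass to the supremum, both of which are routine.
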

	
\begin{proof} According to Lemma \ref{lemmaforq}, we define $\mathcal{Q} = \mathcal{G}_t(\sigma, \eta, k, T)$ as the supplementary distribution for $\mathcal{P}$. Then the Problem (\ref{primaldsrs}) can be simplified as 
\begin{equation}
\begin{aligned}
	&\min_{\tilde{f}_{x_0}\in \mathcal{F}} \quad  \mathbb{E}_{z\sim \mathcal{P}}\left(\tilde{f}_{x_0}(\delta + z)\right),\\
&\textrm{s.t.} \quad  \mathbb{E}_{z\sim \mathcal{P}}\left(\tilde{f}_{x_0}(z)\right) = A,\ \mathbb{E}_{z\sim \mathcal{Q}}\left(\tilde{f}_{x_0}(z)\right) = B \\
\stackrel{(a)}{\geq} &\min_{\tilde{f}_{x_0}\in \mathcal{F}} \quad  \mathbb{E}_{z\sim \mathcal{P}}\left(\tilde{f}_{x_0}(\delta + z)\right),\\
&\textrm{s.t.} \quad  \mathbb{E}_{z\sim \mathcal{Q}}\left(\tilde{f}_{x_0}(z)\right) = 1, \\
\end{aligned}
\end{equation}
	where (a) is because the subjection $\mathbb{E}_{z\sim\mathcal{P}}[\tilde{f}_{x_0}(z)] = A$ offers extra information outside $\{x\mid \Vert x - x_0 \Vert_2 \leq T\}$ in $\mathbb{R}^n$, where $p(x)\geq 0$. It is obvious that the equality holds if $A=0$. For the simplified problem, we have the worst function $\tilde{f}^{*}_{x_0}(t)$: 
	\begin{equation}
	\tilde{f}^{*}_{x_0}(t) = \left\{
	\begin{aligned}
	1,\ &\Vert t\Vert _2\leq T,\\
	0,\ &\Vert t\Vert _2> T.
	\end{aligned}
	\right.
	\end{equation}
	With the worst classifier, a lower bound for Problem (\ref{primaldsrs}) can finally be written as 
 \begin{equation}\label{94}
\begin{aligned}
\mathbb{E}_{z\sim \mathcal{P}}\left(\tilde{f}^{*}_{x_0}(\delta + z)\right)
\end{aligned}
\end{equation}

	for a fixed $\delta$. Similar to Theorem \ref{thesg} and Theorem \ref{thegg}, the final simplified problem is solved by the level set method~\citep{yang2020}. We have 
	\begin{equation}\label{95}
	\begin{aligned}
	&\mathbb{E}_{z\sim \mathcal{P}}\left(\tilde{f}^{*}_{x_0}(\delta + z)\right)\\
	=&\mathbb{E}_{\epsilon\sim \mathcal{P}}\{\Vert\delta+x\Vert_2 \leq T\} \\
	=& \int_0^\infty \frac{\eta}{2}\frac{1}{(2\sigma_g^\eta)^{\frac{d-2k}{\eta}}\pi^{\frac{d}{2}}}\frac{\Gamma(\frac{d}{2})}{\Gamma(\frac{d-2k}{\eta})}r^{d-2k-1}\exp[-\frac{1}{2}(\frac{r}{\sigma_g})^\eta]\frac{d\pi^{\frac{d}{2}}}{\Gamma(\frac{d}{2} + 1)}dr\mathbb{P}\{\Vert x+\delta\Vert _2\leq T\mid \Vert x\Vert _2=r\} \\
	=&  \frac{1}{\Gamma(\frac{d-2k}{\eta})}\int_0^{\infty} u^{\frac{d-2k}{\eta} - 1}\exp(-u)du\mathbb{P}\{\Vert x+\delta\Vert _2\leq T\mid \Vert x\Vert _2=\sigma_g(2u)^\frac{1}{\eta}\}.
	\end{aligned}
	\end{equation}
	
	As $\Vert x+\delta\Vert _2 \leq T\Longleftrightarrow  x_1 \leq \frac{T^2 - \rho^2 - \sigma_g^2(2u)^\frac{2}{\eta}}{2\rho}$, we have 
	\begin{equation}
	\frac{1 + \frac{x_1}{\sigma_g(2u)^\frac{1}{\eta}}}{2} \sim {\rm Beta}(\frac{d-1}{2}, \frac{d-1}{2})
	\end{equation}
	
	by Lemma I.23 of \cite{yang2020}. Thus, we get 
	\begin{equation}\label{97}
	\mathbb{P}\{\Vert x+\delta\Vert _2\leq T\mid \Vert x\Vert _2=\sigma_g(2u)^\frac{1}{\eta}\}  = \Psi_{\frac{d-1}{2}}\left(\frac{T^2-(\sigma_g(2u)^\frac{1}{\eta} - \rho)^2}{4\rho \sigma_g(2u)^\frac{1}{\eta}}\right).
	\end{equation}
	
	Combining Equation (\ref{95}) and Equation (\ref{97}), we finally get 
	\begin{equation}
	\mathbb{E}_{z\sim \mathcal{P}}\left(\tilde{f}^{*}_{x_0}(\delta + z)\right) = \mathbb{E}_{u\sim \Gamma(\frac{d-2k}{\eta}, 1)}\Psi_{\frac{d-1}{2}}\left(\frac{T^2-(\sigma_g(2u)^\frac{1}{\eta} - \rho)^2}{4\rho \sigma_g(2u)^\frac{1}{\eta}}\right).
	\end{equation}
	If we can ensure that $\mathbb{E}_{z\sim \mathcal{P}}\left(\tilde{f}^{*}_{x_0}(\delta + z)\right) \geq 0.5$ for $\delta = (\rho, 0, 0, \cdots, 0)^T$, $\Vert \delta\Vert_2$ will be qualified as a $\ell_2$ certified radius due to $\ell_2$-symmetry in the $R^n$ space~\citep{zhang2020}. As $R$ is the solution to the lower-bound problem of Problem (\ref{primaldsrs}), we have $R_D \geq R$, which concludes the proof.  
\end{proof}

The next two lemmas describe the concentration of the gamma distribution and the beta distribution, both are required for the lower estimation of Problem (\ref{primaldsrs}). 
\begin{lemma}\label{lemmae3}
(Concentration of the beta distribution) Let $\tau \in (\frac{1}{2}, 1)$, $\theta \in (0, 1)$. Then there exist $d_0 \in \mathbb{N}_+$, for any $d \geq d_0$,
\begin{equation}
	\Psi_{\frac{d-1}{2}}(\tau) \geq \theta.
\end{equation}
\end{lemma}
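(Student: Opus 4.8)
The plan is to read $\Psi_{\frac{d-1}{2}}(\tau)$ as $\mathbb{P}\{X\le\tau\}$ for $X\sim{\rm Beta}\!\left(\tfrac{d-1}{2},\tfrac{d-1}{2}\right)$ and to exploit the fact that this law concentrates at its mean $\tfrac12$ as the dimension grows, so that its mass below any fixed $\tau>\tfrac12$ is forced arbitrarily close to $1$. First I would record the two elementary moment facts: by symmetry about $\tfrac12$ one has $\mathbb{E}[X]=\tfrac12$, and the standard variance formula $\mathrm{Var}({\rm Beta}(a,b))=\tfrac{ab}{(a+b)^2(a+b+1)}$ specializes, with $a=b=\tfrac{d-1}{2}$ (so that $a+b+1=d$), to $\mathrm{Var}(X)=\tfrac{1}{4d}$.

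Next I would apply Chebyshev's inequality: since $\tau-\tfrac12>0$,
\[
\mathbb{P}\{X>\tau\}\;\le\;\mathbb{P}\left\{\left|X-\tfrac12\right|\ge\tau-\tfrac12\right\}\;\le\;\frac{\mathrm{Var}(X)}{(\tau-\tfrac12)^2}\;=\;\frac{1}{4d\,(\tau-\tfrac12)^2},
\]
so that $\Psi_{\frac{d-1}{2}}(\tau)=1-\mathbb{P}\{X>\tau\}\ge 1-\tfrac{1}{4d(\tau-\frac12)^2}$. It then remains only to pick the threshold explicitly: choosing any $d_0\in\mathbb{N}_+$ with $d_0\ge \tfrac{1}{4(1-\theta)(\tau-\frac12)^2}$ makes the right-hand side at least $\theta$ for every $d\ge d_0$, which is exactly the claim. (A one-sided Cantelli bound would sharpen the constant slightly, but is unnecessary.)

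I do not expect a genuine obstacle here; the only steps needing care are bookkeeping ones — verifying the variance specialization $a+b+1=d$ when $a=b=\tfrac{d-1}{2}$, and making the dependence $d_0=d_0(\tau,\theta)$ explicit so the existential quantifier is honestly discharged. As an alternative route I could instead invoke the ${\rm Beta}(d,d)$ CDF-convergence theorem established earlier in Appendix~\ref{extapp}, evaluating it at the argument $x=2\sqrt{2d}\,(\tau-\tfrac12)$, which tends to $+\infty$ so that $\Phi(x)\to1$; but that forces one to control the stated $O(1/\sqrt d)$ error uniformly as $x$ grows, whereas the Chebyshev argument is completely self-contained, so I would present the latter.
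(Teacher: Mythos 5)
Your proof is correct and is essentially identical to the paper's: both compute $\mathbb{E}X=\tfrac12$ and $\mathrm{Var}(X)=\tfrac{1}{4d}$ for $X\sim{\rm Beta}(\tfrac{d-1}{2},\tfrac{d-1}{2})$, apply Chebyshev to bound $\mathbb{P}\{X>\tau\}\le \tfrac{1}{4d(\tau-1/2)^2}$, and take $d_0=\lceil \tfrac{1}{4(1-\theta)(\tau-1/2)^2}\rceil$. No differences worth noting.
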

	
	\begin{proof} Let $X\sim {\rm Beta}(\frac{d-1}{2},\frac{d-1}{2})$. By property of the beta distribution, we have $\mathbb{E}X = \frac{1}{2},\ \mathbb{D}X = \frac{1}{4d},$ and
	
	\begin{equation}
 \mathbb{P}\{X > \tau\} = \mathbb{P}\{X-\frac{1}{2} > \tau - \frac{1}{2}\} \leq \mathbb{P}\{\vert X - \frac{1}{2}\vert \geq \tau - \frac{1}{2}\} \leq \frac{1}{4d(\tau-\frac{1}{2})^2}.
	\end{equation}
	We then have
 \begin{equation}
 \Psi_{\frac{d-1}{2}}(\tau) = 1 - \mathbb{P}\{X>\tau\} \geq 1 - \frac{1}{4d(\tau-\frac{1}{2})^2}.
 \end{equation}
	Let $1 - \frac{1}{4d(\tau - \frac{1}{2})^2} \geq \theta$, then $d \geq \frac{1}{4(1-\theta)(\tau - \frac{1}{2})^2}$. Picking $d_0 = \lceil \frac{1}{4(1-\theta)(\tau - \frac{1}{2})^2} \rceil$ concludes the proof.
	\end{proof}
	
	\begin{lemma} \label{lemmae4}
	(Unilateral concentration of the gamma distribution) Let $p \in (0, 1),\ d \in \mathbb{N}_+,\ \eta \in \mathbb{R}_+$ and $\beta \in (0, 1)$. Then there exist $d_0 \in \mathbb{N}_+$, for any $d \geq d_0$, 
	\begin{equation}
	\Lambda_{\frac{d}{\eta}}\left(\frac{\beta d}{\eta}\right) \leq p.
	\end{equation}
	\end{lemma}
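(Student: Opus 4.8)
The plan is to mirror the proof of Lemma~\ref{lemmae3} almost verbatim, swapping the beta concentration for a Chebyshev bound on the gamma distribution. First I would record the two facts I need: if $X \sim \Gamma(\frac{d}{\eta}, 1)$, then $\mathbb{E}X = \frac{d}{\eta}$ and $\mathbb{D}X = \frac{d}{\eta}$, and the quantity $\Lambda_{\frac{d}{\eta}}(\frac{\beta d}{\eta})$ is by definition $\mathbb{P}\{X \leq \frac{\beta d}{\eta}\}$. Since $\beta \in (0,1)$, the threshold $\frac{\beta d}{\eta}$ sits strictly below the mean, at distance $(1-\beta)\frac{d}{\eta}$, so the event $\{X \leq \frac{\beta d}{\eta}\}$ is contained in the deviation event $\{|X - \frac{d}{\eta}| \geq (1-\beta)\frac{d}{\eta}\}$.

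Next I would apply Chebyshev's inequality:
\begin{equation}
\Lambda_{\frac{d}{\eta}}\!\left(\frac{\beta d}{\eta}\right) = \mathbb{P}\!\left\{X \leq \frac{\beta d}{\eta}\right\} \leq \mathbb{P}\!\left\{\left|X - \frac{d}{\eta}\right| \geq (1-\beta)\frac{d}{\eta}\right\} \leq \frac{\mathbb{D}X}{(1-\beta)^2 (d/\eta)^2} = \frac{\eta}{(1-\beta)^2 d}.
\end{equation}
Then I would force the right-hand side to be at most $p$, i.e. demand $d \geq \frac{\eta}{p(1-\beta)^2}$, and take $d_0 = \lceil \frac{\eta}{p(1-\beta)^2}\rceil$; for every $d \geq d_0$ the claimed inequality holds. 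As in Lemma~\ref{lemmae3}, the only point requiring any care is that $d_0$ must be made an explicit function of $p$, $\beta$, $\eta$, which the Chebyshev route supplies automatically.

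There is essentially no obstacle here. If a sharper bound were ever wanted, one could instead use a Cram\'er/Chernoff estimate for the lower tail of the gamma law — the same device behind the bilateral Lemma~\ref{lemmag1} — giving $\Lambda_{\frac{d}{\eta}}(\frac{\beta d}{\eta}) \leq \exp(-(\beta - 1 - \ln\beta)\frac{d}{\eta})$ with $\beta - 1 - \ln\beta > 0$; but this is strictly stronger than what the lemma asks for, so the elementary second-moment argument already suffices.
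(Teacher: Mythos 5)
Your proposal is correct and is essentially identical to the paper's own proof: both compute $\mathbb{E}X = \mathbb{D}X = \frac{d}{\eta}$, contain the lower-tail event in the two-sided deviation event, apply Chebyshev to get the bound $\frac{\eta}{(1-\beta)^2 d}$, and pick $d_0 = \lceil \frac{\eta}{(1-\beta)^2 p}\rceil$. The remark about a sharper Chernoff bound is a nice aside but, as you note, unnecessary for the statement.
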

	\begin{proof} Let $X \sim \Gamma(\frac{d}{\eta}, 1)$, then $\mathbb{E}X=\frac{d}{\eta}$, $\mathbb{D}X=\frac{d}{\eta}$, 
	\begin{equation}
	\Lambda_{\frac{d}{\eta}}\left(\frac{\beta d}{\eta}\right) = \mathbb{P}\left\{X\leq\frac{\beta d}{\eta}\right\}
	\leq \mathbb{P}\left\{|X - \frac{d}{\eta}|\geq \frac{(1 - \beta)d}{\eta}\right\} 
	\leq \frac{\eta}{(1 - \beta) ^ 2 d } .
	\end{equation}
	
	If $\frac{\eta}{(1 - \beta)^2d} \leq p$, then $d \geq \frac{\eta}{(1 - \beta)^2p}$. Let $d_0 = \lceil\frac{\eta}{(1 - \beta)^2p}\rceil$, for any $d \geq d_0$, we have $\frac{\eta}{(1 - \beta)^2d} \leq \frac{\eta}{(1 - \beta)^2d_0} \leq p$, which concludes the proof.
	\end{proof}
	\textbf{Remark.} Both Lemma \ref{lemmae3} and Lemma \ref{lemmae4} illustrate that random variables following beta and gamma distributions are highly concentrated towards their respective expectations in the high-dimensional setting. 
	
	The next lemma is required in the proof of Theorem \ref{thcorres}.
	\begin{lemma} \label{lemmae5}
	Let $x \in \mathbb{N}_+,\ \eta = \frac{1}{n},n \in \mathbb{N}_+,\ g(x) = \frac{x}{\left(\prod_{i=1}^{\frac{2}{\eta}}(\frac{x+2}{\eta} - i)\right)^{\frac{\eta}{2}}}$. Then $g(x)$ is a non-decreasing function with respect to $x$.
	\end{lemma}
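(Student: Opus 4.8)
The plan is to exploit the hypothesis $\eta = \tfrac1n$, which makes $\tfrac2\eta = 2n$ an integer, so that the denominator of $g$ is a genuine finite product of consecutive integers. First I would rewrite $g$: since $\tfrac{x+2}{\eta} - i = n(x+2) - i$ and $i$ ranges over $1,\dots,2n$, the factors $n(x+2)-i$ are exactly the integers $nx, nx+1, \dots, nx+2n-1$, so $\prod_{i=1}^{2n}\bigl(n(x+2)-i\bigr) = \prod_{j=nx}^{nx+2n-1} j$. Hence $g(x)^{2n} = x^{2n}\big/\prod_{j=nx}^{nx+2n-1} j$, and since every factor is positive (as $nx \ge 1$ for $x,n \in \mathbb{N}_+$), it suffices to prove $g(x+1)^{2n} \ge g(x)^{2n}$ for every $x \in \mathbb{N}_+$.

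Next I would form the ratio $g(x+1)^{2n}/g(x)^{2n} = \bigl(\tfrac{x+1}{x}\bigr)^{2n}\cdot\prod_{j=nx}^{nx+2n-1}j \big/ \prod_{j=nx+n}^{nx+3n-1}j$ and cancel the overlapping block $\prod_{j=nx+n}^{nx+2n-1} j$ that appears in both numerator and denominator. What survives is $\prod_{j=nx}^{nx+n-1}j \big/ \prod_{j=nx+2n}^{nx+3n-1}j = \prod_{i=0}^{n-1}\tfrac{nx+i}{nx+2n+i}$, so that $g(x+1)^{2n}/g(x)^{2n} = \prod_{i=0}^{n-1}\tfrac{(x+1)^2(nx+i)}{x^2(nx+2n+i)}$, a product of $n$ positive factors.

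Finally I would check that each factor is at least $1$: clearing denominators and using $(x+1)^2 = x^2 + 2x + 1$, the inequality $(x+1)^2(nx+i) \ge x^2(nx+2n+i)$ reduces to $(2x+1)(nx+i) \ge 2nx^2$, i.e. $2ix + nx + i \ge 0$, which is immediate for $x \ge 1$, $i \ge 0$, $n \ge 1$. Multiplying the $n$ factorwise inequalities gives $g(x+1)^{2n} \ge g(x)^{2n}$, hence $g(x+1) \ge g(x)$, so $g$ is non-decreasing (in fact strictly increasing).

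The only genuine obstacle is the bookkeeping: recognizing that $\eta = 1/n$ turns the symbolic product into a Pochhammer-type block of consecutive integers, and then correctly identifying which blocks cancel when passing from $x$ to $x+1$. Once that reduction is in place, monotonicity collapses to the one-line inequality $2ix+nx+i \ge 0$. One could instead argue directly with $g(x+1)/g(x)$ without raising to the power $2n$, but the consecutive-integer cancellation is the crux in any presentation.
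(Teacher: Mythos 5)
Your proof is correct, but it takes a genuinely different route from the paper's. The paper works with $h(x)=\ln g(x)$ and treats $x$ as a continuous variable: it computes
$h'(x)=\frac{1}{x}-\frac{\eta}{2}\sum_{i=1}^{2/\eta}\frac{1}{x+2-i\eta}$ and bounds each summand by $\frac{1}{x}$ using $i\eta\le 2$, so that $h'(x)\ge \frac{1}{x}-\frac{\eta}{2}\cdot\frac{2}{\eta}\cdot\frac{1}{x}=0$. You instead exploit $\eta=1/n$ to recognize the denominator as a block of $2n$ consecutive integers $nx,\dots,nx+2n-1$, raise to the $2n$-th power, and telescope the ratio $g(x+1)^{2n}/g(x)^{2n}$ down to $\prod_{i=0}^{n-1}\frac{(x+1)^2(nx+i)}{x^2(nx+2n+i)}$, where each factor exceeds $1$ by the inequality $2ix+nx+i\ge 0$. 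Both arguments are sound; in essence they rest on the same comparison (each factor of the product in the denominator grows no faster, proportionally, than the corresponding power of $x$). The paper's log-derivative bound is shorter, applies to real $x$, and needs only $2/\eta\in\mathbb{N}_+$ rather than $1/\eta\in\mathbb{N}_+$; your argument is purely arithmetic, avoids calculus entirely, and yields the slightly stronger conclusion that $g$ is strictly increasing on $\mathbb{N}_+$.
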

	\begin{proof} Obviously $g(x) > 0$. Let $h(x) = \ln g(x)$, then
	
	\begin{equation}
	\frac{{\rm d}h(x)}{{\rm d}x} \geq 0 \Longleftrightarrow \frac{1}{g(x)}\frac{{\rm d}g(x)}{{\rm d}x} \geq 0 \Longleftrightarrow \frac{{\rm d}g(x)}{{\rm d}x}\geq0,
	\end{equation}
Therefore, 
	\begin{equation}
	\frac{{\rm d}h(x)}{{\rm d}x} 
	= \frac{1}{x} - \frac{\eta}{2}\sum_{i = 1}^{\frac{2}{\eta}}\frac{1}{x+2-i\eta}
	\geq \frac{1}{x} - \frac{\eta}{2}\sum_{i = 1}^{\frac{2}{\eta}}\frac{1}{x} = 0.
	\end{equation}
	
	Thus, for all $x \in \mathbb{N}_+$, we have $\frac{{\rm d}g(x)}{{\rm d}x} \geq 0$, which concludes the proof. 
	\end{proof}
	\subsection{Proof of Theorem \ref{fixbase}}\label{pfth1}
	\begin{proof}
	We let the smoothing distribution $\mathcal{P}=\mathcal{G}(\sigma, \eta, k)$ and the supplementary distribution $\mathcal{Q}=\mathcal{G}_t(\sigma, \eta, k, T)$. In addition, 
 we suppose the base classifier satisfies $(\sigma, p, 2)$-concentration property, which implies the base classifier is highly robust for restricted Gaussian noises. For the convenience of future discussions, we parameterize 0.02 as $\mu$, and the worst classifier $\tilde{f}_{x_0}^*$ is defined the same as that in Lemma \ref{lemmaforq}. 
	
	We see the condition $\eta=2$ simplifies some lemmas above. Let $\eta=2$ in Lemma \ref{lemmae1}, we get $\mathbb{P}_{z\sim\mathcal{S}(\sigma, 2)}\{\Vert z\Vert _2\leq T\}=\Lambda_{\frac{d}{2}}(\frac{T^2}{2\sigma_s^2})$, which means
	\begin{equation}\label{123}
	T = \sigma\sqrt{2\Lambda_{\frac{d}{2}}^{-1}(p)}.
	\end{equation}
	By definition of $(\sigma, p, 2)$-concentration and Lemma \ref{lemmaforq}, we have $\mathbb{P}_{z\sim \mathcal{G}_t(\sigma, \eta, k, T)}\{f(x_0+z)=y_0\} =1$, thus we can find a lower bound to estimate Problem (\ref{primaldsrs}) by Lemma \ref{lemmae2}. Let $\eta=2$ in Lemma \ref{lemmae3}, we see $\Lambda_{\frac{d}{2}} (\frac{\beta d}{2}) \leq p$.
	We have thereby 
	\begin{equation}
	T \geq \sigma\sqrt{\beta d}\label{lbt}.
	\end{equation}
	Equation (\ref{lbt}) will be used to find the lower bound of  Problem (\ref{94}). We then consider the solution of the lower-bound Problem (\ref{94}) in Lemma \ref{lemmae2}. We have 
	\begin{equation}\label{106}
	\frac{T^2-(\sigma_g(2u)^\frac{1}{\eta} - \rho)^2}{4\rho \sigma_g(2u)^\frac{1}{\eta}} \geq \tau \Longleftrightarrow \sigma_g(2u)^{\frac{2}{\eta}}-(2-4\tau)\rho\sigma_g(2u)^{\frac{1}{\eta}} + \rho^2 - T^2 \leq 0.
	\end{equation} 
	
	Notice Equation (\ref{106}) is a one-variable quadratic inequality with respect to $\sigma_g(2u)^{\frac{1}{\eta}}$. Let $\rho = \mu\sigma \sqrt{d}$ where the constant $\mu \in \mathbb{R}_+$. 
 When the discriminant $\Delta$ for Equation (\ref{106}) is positive, the solution for it is 
	\begin{equation}\label{111}
	\sigma_g(2u)^{\frac{1}{\eta}} \in \left[0, (1-2\tau)\rho + \sqrt{T^2 + (4\tau^2 - 4\tau)\rho^2}\right) \Longleftrightarrow 0\leq u < \frac{1}{2}\left(\frac{(1 -2\tau)\rho + \sqrt{T^2 + (4\tau^2 - 4\tau)\rho^2} }{\sigma_g}\right)^{\eta}.
	\end{equation} 
	Now we are ready to show the minimization for $\mathbb{E}_{z\sim \mathcal{P}}\left(\tilde{f}^{*}_{x_0}(\delta + z)\right)$ for base classifier satisfies $(\sigma, p, 2)$-concentration property. We have
\begin{align}
	&\mathbb{E}_{z\sim \mathcal{P}}\left(\tilde{f}^{*}_{x_0}(\delta + z)\right) \nonumber\\
	=& \mathbb{E}_{u\sim \Gamma(\frac{d-2k}{\eta}, 1)}\Psi_{\frac{d-1}{2}}\left(\frac{T^2-(\sigma_g(2u)^\frac{1}{\eta} - \rho)^2}{4\rho \sigma_g(2u)^\frac{1}{\eta}}\right)\nonumber\\
	\geq &\theta \mathbb{E}_{u\sim \Gamma(\frac{d-2k}{\eta}, 1)}\mathbb{I}\left(\frac{T^2-(\sigma_g(2u)^\frac{1}{\eta} - \rho)^2}{4\rho \sigma_g(2u)^\frac{1}{\eta}} \geq \tau\right) \nonumber\\
	\geq & \theta \mathbb{E}_{u\sim \Gamma(\frac{d-2k}{\eta}, 1)}\mathbb{I} \left(u < \frac{1}{2}\left(\frac{(1 -2\tau)\rho + \sqrt{T^2 + (4\tau^2 - 4\tau)\rho^2} }{\sigma_g}\right)^{\eta}\right)\nonumber\\
	 \stackrel{(a)}{\geq} &\theta \mathbb{E}_{u\sim \Gamma(\frac{d-2k}{\eta}, 1)}\mathbb{I} \left(u < \frac{1}{2}\left(\frac{(1 -2\tau)\mu\sigma\sqrt{d} + \sqrt{\sigma^2\beta d + (4\tau^2 - 4\tau)\mu^2\sigma^2d} }{\sigma_g}\right)^{\eta}\right)\nonumber\\
	  \stackrel{(b)}{=}& \theta \mathbb{E}_{u\sim \Gamma(\frac{d-2k}{\eta}, 1)}\mathbb{I} \left(u < \left(\sqrt{\frac{\Gamma(\frac{d-2k+2}{\eta})}{\Gamma(\frac{d-2k}{\eta})}}\left((1 -2\tau)\mu + \sqrt{\beta + (4\tau^2 - 4\tau)\mu^2}\right) \right)^{\eta}\right),
	\end{align}
where $\mathbb{I}(\cdot)$ is the indicator function, (a) is by $\rho = \mu\sigma\sqrt{d}$ and Equation (\ref{lbt}); (b) is because
\begin{equation}\label{sigg}
	\sigma_g=2 ^{-\frac{1}{\eta}} \sqrt{\frac{d\Gamma(\frac{d - 2k}{\eta})}{\Gamma(\frac{d-2k+2}{\eta})}}\sigma
	\end{equation}
	by definition. We write
	 \begin{equation}\label{eqform}
	m = \left(\sqrt{\frac{\Gamma(\frac{d-2k+2}{\eta})}{\Gamma(\frac{d-2k}{\eta})}}\left((1 -2\tau)\mu + \sqrt{\beta + (4\tau^2 - 4\tau)\mu^2}\right) \right)^{\eta},
	\end{equation} 
	to get 
	\begin{equation}\label{115}
	\mathbb{E}_{z\sim \mathcal{P}}\left(\tilde{f}^{*}_{x_0}(\delta + z)\right)  \geq   \theta\mathbb{E}_{u\sim \Gamma(\frac{d-2k}{\eta}, 1)} \mathds{1}_{u<m} 
	=\theta\Lambda_{\frac{d -2k}{\eta}}(m).
	\end{equation} 

	Let $\theta = 0.999, \beta=0.99, \tau=0.6,\ \mu = 0.02$~\citep{li2022}. We show the value of $\Lambda_{\frac{d -2k}{\eta}}(m)$ when $d - 2k \in [1, 30] \cap \mathbb{Z}$ and $\eta = \frac{1}{n}, n \in [1, 50] \cap \mathbb{Z}$ in Table \ref{t3}. Observing that there's no value greater than $\frac{1}{2\theta}$, we have $\mathbb{E}_{z\sim \mathcal{P}}\left(\tilde{f}^{*}_{x_0}(\delta + z)\right)\geq \frac{1}{2}$, which concludes the proof. 
	\end{proof}

\begin{algorithm}
    \KwIn{input dimension $d$, hyperparameters $k, \beta, \tau, \theta$, exponent $\eta$, error limitation $e$}
    \KwOut{tight constant $\mu_l$ for a specified EGG}
    $\mu_l, \mu_r \leftarrow 0, 1$
    
\While{$\mu_r - \mu_l > e$}{
    $\mu_m \leftarrow (\mu_r + \mu_l) / 2$ 
         
    $m_m \leftarrow \theta\Lambda_{\frac{d-2k}{\eta}}\left(\left(\sqrt{\frac{\Gamma(\frac{d-2k+2}{\eta})}{\Gamma(\frac{d-2k}{\eta})}}\left((1 -2\tau)\mu_m + \sqrt{\beta + (4\tau^2 - 4\tau)\mu_m^2}\right) \right)^{\eta}\right)$
    
  \eIf{$m_m > 1/2$}
    {$\mu_l \leftarrow \mu_m$}
    {$\mu_r \leftarrow \mu_m$}
    }
$\mu\leftarrow \mu_l$

\Return $\mu$
    \caption{Algorithm for finding tight $\mu$ for the $\Omega({\sqrt{d}})$ lower bound}\label{alg:tightmu}
\end{algorithm}

	\textbf{Remark.} We exhaust the cases for $\eta$ since the analytic solution to Problem (\ref{94}) includes intractable gamma function terms. The proof above can easily be generalized to other $\eta \in \mathbb{R}_+$. \eg, we have tried the sequence of $\eta \in [0.02, 1]$ increasing by 0.001, no value greater than $\frac{1}{2\theta}$ is observed, meaning these $\eta$s are all qualified to provide $\Omega(\sqrt{d})$ lower bounds for the $\ell_2$ certified radius under the setting of Theorem \ref{fixbase}. We have also shown results for $\eta\in[2, 10] \cap \mathbb{N}$ in Table \ref{t3}, where the boundary value for $\frac{1}{2\theta}$ is marked red. It is remarkable that $\Lambda_{\frac{d -2k}{\eta}}(m)$ decreases significantly when $\eta > 2$, which is in line with Figure \ref{main_sim} (left) though we are only considering the lower bound for the certified radius. In addition to the fixed constant 0.02, the tight constant factor $\mu$ for each EGG distribution can be computed by Algorithm \ref{alg:tightmu}. 
		
	\subsection{Theorem \ref{thcorres}}\label{apthcores}

The following theorem introduces $d^{1/\eta}$ into the lower bound using ($\sigma, p, \eta$)-concentration assumption, which can be proved to be almost equivalent to Theorem \ref{fixbase}.
\begin{theorem}
\label{thcorres}
Let $d \in \mathbb{N}_+$ be a sufficiently large input dimension, $(x_0, y_0) \in \mathbb{R}^d \times \mathcal{Y}$ be a labeled example and $f: \mathbb{R}^d\to \mathcal{Y}$ be a base classifier which satisfies ($\sigma, p, \eta$)-concentration property \wrt $(x_0, y_0)$. For the DSRS method, let $\mathcal{P} = \mathcal{G}(\sigma, \eta, k)$ be the smoothing distribution to give a smoothed classifier $\bar{f}_{\mathcal{P}}$, and $\mathcal{Q}=\mathcal{G}_t(\sigma, \eta, k, T)$ be the supplementary distribution with $T=\sigma_s\sqrt{2\Lambda_{\frac{d}{\eta}}^{-1}(p)}, d - 2k \in [1, 30] \cap \mathbb{N}$ and $\eta \in \{1, \frac{1}{2}, \frac{1}{3}, \cdots, \frac{1}{50}\}$. Then for the smoothed classifier $\bar{f}_{\mathcal{P}}(x)$ the certified $\ell_2$ radius 
\begin{equation}
r_\eta \geq 0.02\sigma_sd^{\frac{1}{\eta}}, 
\end{equation}
where $\sigma_s$ is the formal variance of $\mathcal{S}(\sigma, \eta)$. When $\sigma_s$ is converted to $\sigma$ keeping $\mathbb{E}r^2$ a constant, we still have
\begin{equation}
r_{\eta} = \Omega(\sqrt{d}).
\end{equation}
\end{theorem}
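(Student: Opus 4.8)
The plan is to mimic the proof of Theorem~\ref{fixbase}, the only structural change being that the $(\sigma,p,2)$-concentration assumption is replaced by the $(\sigma,p,\eta)$-concentration assumption, so that the truncation threshold of the supplementary distribution is matched to the smoothing exponent $\eta$. First I would invoke Lemma~\ref{lemmaforq}: since $f$ satisfies the $(\sigma,p,\eta)$-concentration property, the supplementary distribution $\mathcal{Q}=\mathcal{G}_t(\sigma,\eta,k,T)$ with $T=\sigma_s\bigl(2\Lambda_{\frac{d}{\eta}}^{-1}(p)\bigr)^{1/\eta}$ obeys $\mathbb{P}_{z\sim\mathcal{Q}}\{f(x_0+z)=y_0\}=1$. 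This makes the $\mathcal{P}$-constraint in Problem~(\ref{primaldsrs}) redundant, so Lemma~\ref{lemmae2} applies and it remains only to show that $\rho=0.02\,\sigma_s\,d^{1/\eta}$ satisfies the discriminant $\mathbb{E}_{u\sim\Gamma(\frac{d-2k}{\eta},1)}\Psi_{\frac{d-1}{2}}\!\left(\frac{T^{2}-(\sigma_g(2u)^{1/\eta}-\rho)^{2}}{4\rho\,\sigma_g(2u)^{1/\eta}}\right)\ge\tfrac12$.

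Next I would bound the integrand from below exactly as in the proof of Theorem~\ref{fixbase}. By Lemma~\ref{lemmae4}, for all sufficiently large $d$ one has $\Lambda_{\frac{d}{\eta}}(\beta d/\eta)\le p$, hence $\Lambda_{\frac{d}{\eta}}^{-1}(p)\ge\beta d/\eta$ and $T^{2}\ge\sigma_s^{2}(2\beta/\eta)^{2/\eta}d^{2/\eta}$. Setting $\rho=\mu\,\sigma_s\,d^{1/\eta}$ with $\mu=0.02$ and fixing any $\tau\in(\tfrac12,1)$, Lemma~\ref{lemmae3} gives $\Psi_{\frac{d-1}{2}}(\tau)\ge\theta$ for large $d$, so the expectation is at least $\theta\,\mathbb{P}_{u\sim\Gamma(\frac{d-2k}{\eta},1)}\{u<m\}$, where $m$ comes from solving the one-variable quadratic inequality $\sigma_g(2u)^{2/\eta}-(2-4\tau)\rho\,\sigma_g(2u)^{1/\eta}+\rho^{2}-T^{2}\le0$ in the variable $\sigma_g(2u)^{1/\eta}$. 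Substituting the lower bound on $T$, the value $\rho=\mu\sigma_s d^{1/\eta}$, and the explicit $\sigma_s,\sigma_g$ of Table~\ref{tb1}, $m$ takes the form of $\tfrac12$ times a power of a ratio of gamma functions times a constant depending only on $\eta,\mu,\beta,\tau$.

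The main obstacle is controlling that gamma-function ratio uniformly in $d$. The device is the function $g$ of Lemma~\ref{lemmae5}: the ratio rearranges into $\frac{(d-2k)\,g(d)}{d\,g(d-2k)}$ times the $\eta$-only constant, Lemma~\ref{lemmae5} gives $g(d)\ge g(d-2k)$, and $g(d)\to\eta$ as $d\to\infty$; hence for $d$ large enough the threshold $m$ converges to $m_\infty=\tfrac{\eta}{2}\Bigl(\sqrt{\Gamma(\tfrac{d-2k+2}{\eta})/\Gamma(\tfrac{d-2k}{\eta})}\;\bigl[(1-2\tau)\mu+\sqrt{(2\beta/\eta)^{2/\eta}-4\tau(1-\tau)\mu^{2}}\bigr]\Bigr)^{\eta}$. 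It then suffices to verify, with $\theta=0.999,\ \beta=0.99,\ \tau=0.6,\ \mu=0.02$, that $\Lambda_{\frac{d-2k}{\eta}}(m_\infty)\ge\tfrac{1}{2\theta}$ for every $d-2k\in[1,30]\cap\mathbb{N}$ and $\eta\in\{1,\tfrac12,\dots,\tfrac{1}{50}\}$ — the same finite computational exhaustion (now carried out with the $\eta$-dependent $T$) that closes Theorem~\ref{fixbase}. This yields $\mathbb{E}_{z\sim\mathcal{P}}\bigl(\tilde{f}^{*}_{x_0}(\delta+z)\bigr)\ge\tfrac12$, so by $\ell_2$-symmetry $\Vert\delta\Vert_2=\rho$ is a certified radius, giving $r_\eta\ge0.02\,\sigma_s\,d^{1/\eta}$.

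For the $\Omega(\sqrt d)$ claim I would appeal to Lemma~\ref{sigmasbound}: since $\sigma_s=\Theta(d^{1/2-1/\eta})$ with limiting constant $(\eta/2)^{1/\eta}\sigma$, we get $0.02\,\sigma_s\,d^{1/\eta}=0.02\,(\eta/2)^{1/\eta}\sigma\sqrt d\,(1+o(1))=\Omega(\sqrt d)$, so the $\Omega(d^{1/\eta})$ bound and the $\Omega(\sqrt d)$ bound of Theorem~\ref{fixbase} describe the same asymptotic order and differ only by the multiplicative constant $(\eta/2)^{1/\eta}$.
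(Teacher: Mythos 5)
Your overall strategy is exactly the paper's: reduce to the single-constraint problem via Lemma~\ref{lemmaforq} and Lemma~\ref{lemmae2}, lower-bound $T$ by Lemma~\ref{lemmae4}, parameterize $\rho=\mu\sigma_s d^{1/\eta}$, solve the one-variable quadratic in $\sigma_g(2u)^{1/\eta}$, invoke Lemma~\ref{lemmae3} for the beta factor, and close with a finite numerical check over $d-2k\in[1,30]\cap\mathbb{N}$ and $\eta\in\{1,\tfrac12,\dots,\tfrac{1}{50}\}$; the final conversion to $\Omega(\sqrt d)$ via Lemma~\ref{sigmasbound} also matches the paper.

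The one step that does not go through as written is your treatment of the $d$-dependent gamma-function prefactor. After substituting $\sigma_g=\sqrt{\Gamma(\tfrac{d-2k}{\eta})\Gamma(\tfrac{d+2}{\eta})/(\Gamma(\tfrac{d}{\eta})\Gamma(\tfrac{d-2k+2}{\eta}))}\,\sigma_s$ and $\rho=\mu\sigma_s d^{1/\eta}$, the threshold takes the form $m=\frac{g(d)}{2}\bigl(\sqrt{\Gamma(\tfrac{d-2k+2}{\eta})/\Gamma(\tfrac{d-2k}{\eta})}\,(\cdots)\bigr)^{\eta}$ with $g$ as in Lemma~\ref{lemmae5}. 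Since $g$ is non-decreasing and $g(x)\to\eta$, you have $g(d)\le\eta$ for every $d$, so your $m_\infty$ (prefactor $\eta/2$) is an \emph{upper} bound on $m$, hence $\Lambda_{\frac{d-2k}{\eta}}(m)\le\Lambda_{\frac{d-2k}{\eta}}(m_\infty)$, and verifying $\Lambda_{\frac{d-2k}{\eta}}(m_\infty)\ge\tfrac{1}{2\theta}$ does not by itself give the inequality you need. You can repair this either by noting that the tabulated values exceed $\tfrac{1}{2\theta}$ strictly and appealing to $m\to m_\infty$ together with the ``sufficiently large $d$'' hypothesis, or---as the paper does---by fixing a concrete $\tilde d$ (it uses $\tilde d=25000$), using monotonicity of $g$ to get $m\ge\frac{g(\tilde d)}{2}(\cdots)^{\eta}$ for all $d\ge\tilde d$, and running the numerical verification against that provably smaller threshold. (Minor: your rearrangement $\frac{(d-2k)g(d)}{d\,g(d-2k)}$ carries a stray $d$ in the denominator; the combination is $g(d)\cdot\bigl(\Gamma(\tfrac{d-2k+2}{\eta})/\Gamma(\tfrac{d-2k}{\eta})\bigr)^{\eta/2}$, and the factor in $d-2k$ is kept explicit inside $m$ rather than absorbed through $g$.)
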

\begin{proof}
In this section, we first prove that a base classifier that satisfies a certain concentration property can certify $\Theta(d^{\frac{1}{\eta}})$ $\ell_2$ radii given the smoothing distribution $\mathcal{P}=\mathcal{G}(\sigma, \eta, k)$ and the supplementary distribution $\mathcal{Q}=\mathcal{G}_t(\sigma, \eta, k, T)$. Then, by converting $\sigma_s$ to $\sigma$,  we derive a $\Theta(\sqrt{d})$ lower bound for the certified radius. Like Appendix \ref{pfth1}, we find the lower bound for Problem (\ref{primaldsrs}) by Lemma \ref{lemmae2}. 
	
	In Lemma \ref{lemmae1}, when $\eta$ is an arbitrarily positive real number, we have 
	\begin{equation}
	p = \Lambda_{\frac{d}{\eta}}\left(\frac{T^\eta}{2\sigma_s^\eta}\right) \Longleftrightarrow T = \sigma_s(2\Lambda^{-1}_{\frac{d}{\eta}}(p))^{\frac{1}{\eta}}.
	\end{equation}
	 We thereby obtain
	\begin{equation}
	T \geq (\frac{2\beta}{\eta})^{\frac{1}{\eta}}\sigma_sd^{\frac{1}{\eta}}
	\end{equation} 
	by Lemma \ref{lemmae4}. Then we have Equation (\ref{106}) and Equation (\ref{111}) the same as in Appendix \ref{pfth1}. Let $\rho = \zeta\sigma_s d^{\frac{1}{\eta}}$ where $\zeta \in \mathbb{R}_+$. Let $\eta = \frac{1}{n}$, $n \in \mathbb{N}_+, \forall d \geq \tilde{d}$, where $\tilde{d}$ is a sufficiently large real integer which satisfies Lemma \ref{lemmae3} and Lemma \ref{lemmae4}. We then have the estimation
	 \begin{small}
 	\begin{align}\label{112}
	&\mathbb{E}_{z\sim \mathcal{P}}\left(\tilde{f}^{*}_{x_0}(\delta + z)\right) \nonumber \\ 
	\stackrel{(a)}{\geq} & \theta \mathbb{E}_{u\sim \Gamma(\frac{d-2k}{\eta}, 1)}\mathbb{I} \left(u < \frac{1}{2}\left(\frac{(1 -2\tau)\rho + \sqrt{T^2 + (4\tau^2 - 4\tau)\rho^2} }{\sigma_g}\right)^{\eta}\right) \nonumber \\
	\stackrel{(b)}{=} & \theta  \mathbb{E}_{u\sim \Gamma(\frac{d-2k}{\eta}, 1)} \mathbb{I}\left(u < \frac{1}{2}\left(\sqrt{\frac{\Gamma(\frac{d}{\eta})\Gamma(\frac{d-2k+2}{\eta})}{\Gamma(\frac{d+2}{\eta})\Gamma(\frac{d-2k}{\eta})}}\frac{(1 -2\tau)\rho + \sqrt{T^2 + (4\tau^2 - 4\tau)\rho^2} }{\sigma_s}\right)^{\eta}\right) \nonumber \\
	\stackrel{(c)}{\geq} & \theta  \mathbb{E}_{u\sim \Gamma(\frac{d-2k}{\eta}, 1)} \mathbb{I}\left(u < \frac{1}{2}\left( \sqrt{\frac{\Gamma(\frac{d}{\eta})\Gamma(\frac{d-2k+2}{\eta})}{\Gamma(\frac{d+2}{\eta})\Gamma(\frac{d-2k}{\eta})}}\frac{(1 -2\tau)\zeta\sigma_s d^{\frac{1}{\eta}} + \sqrt{(\frac{2\beta}{\eta})^{\frac{2}{\eta}}\sigma_s^2d^{\frac{2}{\eta}} + (4\tau^2 - 4\tau)(\zeta\sigma_s d^{\frac{1}{\eta}})^2} }{\sigma_s}\right)^{\eta}\right) \nonumber \\
	\stackrel{(d)}{\geq} & \theta  \mathbb{E}_{u\sim \Gamma(\frac{d-2k}{\eta}, 1)} \mathbb{I}\left(u < \frac{\tilde{d}}{2\left(\prod_{i=1}^{\frac{2}{\eta}}(\frac{\tilde{d}+2}{\eta} - i)\right)^{\frac{\eta}{2}}}\left(\sqrt{\frac{\Gamma(\frac{d-2k+2}{\eta})}{\Gamma(\frac{d-2k}{\eta})}}\left((1-2\tau)\zeta + \sqrt{(\frac{2\beta}{\eta})^{\frac{2}{\eta}}+(4\tau^2 - 4\tau)\zeta^2}\right)\right)^\eta\right). 
	\end{align}
 \end{small}
	In the equations above: (a) solve the inequality with respect to $u$ in the indicator function, whose solution is shown in Equation (\ref{111});   (b) for a constant $\mathbb{E}r^2$, we have
	\begin{equation}
	\sigma_g = \sqrt{\frac{\Gamma(\frac{d-2k}{\eta})\Gamma(\frac{d+2}{\eta})}{\Gamma(\frac{d}{\eta})\Gamma(\frac{d-2k+2}{\eta})}}\sigma_s;
	\end{equation} 
	(c) by Lemma \ref{lemmae4} and $\rho = \zeta\sigma_s d^{\frac{1}{\eta}}$; (d) by Lemma \ref{lemmae5}.
	
	We write
	 \begin{equation}
	m = \frac{\tilde{d}}{2\left(\prod_{i=1}^{\frac{2}{\eta}}(\frac{\tilde{d}+2}{\eta} - i)\right)^{\frac{\eta}{2}}}\left(\sqrt{\frac{\Gamma(\frac{d-2k+2}{\eta})}{\Gamma(\frac{d-2k}{\eta})}}\left((1-2\tau)\zeta + \sqrt{(\frac{2\beta}{\eta})^{\frac{2}{\eta}}+(4\tau^2 - 4\tau)\zeta^2}\right)\right)^\eta, 
	\end{equation} 
	and then we have 
	\begin{equation}
	\mathbb{E}_{z\sim \mathcal{P}}\left(\tilde{f}^{*}_{x_0}(\delta + z)\right) \geq  \Lambda_{\frac{d -2k}{\eta}}(m)
	\end{equation} 
	by Equation (\ref{112}). Notice the lower estimation is slightly different from Appendix \ref{pfth1} since there is a $\tilde{d}$ in the expression. Pick $p = 0.5,\ \tilde{d} = 25000, \theta = 0.999,  \beta=0.99, \tau=0.6,\ \zeta = 0.02$~\citep{li2022}. We show the value of $\Lambda_{\frac{d -2k}{\eta}}(m)$ when $d - 2k \in [1, 30] \cap \mathbb{Z}$ and $\eta = \frac{1}{n}, n \in [1, 50] \cap \mathbb{Z}$ in Table \ref{t2}. As a result, all values in Table \ref{t2} are greater than $\frac{1}{2\theta} \approx 0.5005$, which means 
	\begin{equation}
	\begin{aligned}
	\mathbb{E}_{z\sim \mathcal{P}}\left(\tilde{f}^{*}_{x_0}(\delta + z)\right)  \geq  
 \theta \cdot \frac{1}{2\theta}  = \frac{1}{2}.
	\end{aligned}
	\end{equation} 
	Recalling that our goal here is to check whether $\mathbb{E}_{z\sim \mathcal{P}}\left(\tilde{f}^{*}_{x_0}(\delta + z)\right) \geq \frac{1}{2}$ holds for some determined $\rho$, we have finished the proof that $R_{\eta} \geq \rho = \zeta\sigma_sd^{\frac{1}{\eta}}$. Superficially, we get an $\Omega(d^{\frac{1}{\eta}})$ bound for the $\ell_2$ certified radius, which seems tighter than the $\Omega(\sqrt{d})$ one. But by Lemma \ref{sigmasbound}, it is essentially equivalent to the $\Omega(\sqrt{d})$ lower bound; we thus remark that Theorem \ref{fixbase} and Theorem \ref{thcorres} are different manifestations of the same fact.
	\end{proof}
\textbf{Remark.} Different from Theorem \ref{fixbase}, this proof can not be generalized directly to $\eta>2$ due to the property of factorial. In essence, the $(\sigma, p, \eta)$-concentration assumption is slightly less strict for base classifiers than the $(\sigma, p, 2)$-concentration assumption.
		
	\begin{sidewaystable}
	\centering
	\caption{Value of $\Lambda_{\frac{d -2k}{\eta}}(m)$ for Theorem \ref{fixbase}.}\label{t3}
	\resizebox{20cm}{!}{\begin{tabular}{ccccccccccccccccccccccccccccccc}
	\toprule
	$\eta\backslash d-2k$ & 1 & 2 & 3 & 4 & 5 & 6 & 7 & 8 & 9 & 10 & 11 & 12 & 13 & 14 & 15 & 16 & 17 & 18 & 19 & 20 & 21 & 22 & 23 & 24 & 25 & 26 & 27 & 28 & 29 & 30 \\
	\midrule
	10 & 0.584 & 0.515 & \textcolor{red}{0.487} & 0.473 & 0.465 & 0.459 & 0.456 & 0.453 & 0.450 & 0.448 & 0.447 & 0.445 & 0.444 & 0.442 & 0.441 & 0.440 & 0.439 & 0.438 & 0.437 & 0.436 & 0.435 & 0.434 & 0.433 & 0.432 & 0.431 & 0.430 & 0.429 & 0.428 & 0.427 & 0.426 \\
	9 & 0.586 & 0.519 & \textcolor{red}{0.492} & 0.478 & 0.471 & 0.465 & 0.462 & 0.459 & 0.456 & 0.454 & 0.453 & 0.451 & 0.450 & 0.448 & 0.447 & 0.446 & 0.445 & 0.443 & 0.442 & 0.441 & 0.440 & 0.439 & 0.438 & 0.437 & 0.436 & 0.435 & 0.435 & 0.434 & 0.433 & 0.432 \\
	8 & 0.590 & 0.524 & \textcolor{red}{0.498} & 0.485 & 0.478 & 0.472 & 0.469 & 0.466 & 0.463 & 0.461 & 0.459 & 0.458 & 0.456 & 0.455 & 0.454 & 0.452 & 0.451 & 0.450 & 0.449 & 0.448 & 0.447 & 0.446 & 0.445 & 0.444 & 0.443 & 0.442 & 0.441 & 0.440 & 0.439 & 0.438 \\
	7 & 0.594 & 0.531 & 0.506 & \textcolor{red}{0.493} & 0.486 & 0.481 & 0.477 & 0.474 & 0.471 & 0.469 & 0.467 & 0.465 & 0.464 & 0.462 & 0.461 & 0.460 & 0.458 & 0.457 & 0.456 & 0.455 & 0.454 & 0.453 & 0.452 & 0.451 & 0.450 & 0.449 & 0.448 & 0.447 & 0.446 & 0.445 \\
	6 & 0.600 & 0.539 & 0.515 & 0.503 & \textcolor{red}{0.495} & 0.490 & 0.486 & 0.483 & 0.480 & 0.478 & 0.476 & 0.474 & 0.472 & 0.471 & 0.469 & 0.468 & 0.467 & 0.465 & 0.464 & 0.463 & 0.462 & 0.461 & 0.460 & 0.459 & 0.458 & 0.457 & 0.456 & 0.455 & 0.454 & 0.453 \\
	5 & 0.609 & 0.551 & 0.528 & 0.515 & 0.507 & 0.502 & \textcolor{red}{0.497} & 0.494 & 0.491 & 0.489 & 0.486 & 0.484 & 0.482 & 0.481 & 0.479 & 0.478 & 0.476 & 0.475 & 0.474 & 0.472 & 0.471 & 0.470 & 0.469 & 0.468 & 0.467 & 0.466 & 0.465 & 0.464 & 0.463 & 0.462 \\
	4 & 0.622 & 0.567 & 0.544 & 0.531 & 0.522 & 0.516 & 0.512 & 0.508 & 0.504 & 0.502 & \textcolor{red}{0.499} & 0.497 & 0.495 & 0.493 & 0.491 & 0.489 & 0.488 & 0.486 & 0.485 & 0.484 & 0.482 & 0.481 & 0.480 & 0.479 & 0.478 & 0.477 & 0.476 & 0.475 & 0.474 & 0.473 \\
	3 & 0.643 & 0.589 & 0.566 & 0.552 & 0.543 & 0.536 & 0.530 & 0.526 & 0.522 & 0.519 & 0.516 & 0.513 & 0.511 & 0.508 & 0.506 & 0.505 & 0.503 & 0.501 & \textcolor{red}{0.500} & 0.498 & 0.497 & 0.496 & 0.494 & 0.493 & 0.492 & 0.491 & 0.490 & 0.489 & 0.488 & 0.487 \\
	2 & 0.678 & 0.625 & 0.600 & 0.584 & 0.573 & 0.564 & 0.558 & 0.552 & 0.547 & 0.543 & 0.540 & 0.537 & 0.534 & 0.531 & 0.529 & 0.526 & 0.524 & 0.522 & 0.521 & 0.519 & 0.517 & 0.516 & 0.514 & 0.513 & 0.512 & 0.510 & 0.509 & 0.508 & 0.507 & 0.506 \\
	\hline
	1 & 0.754 & 0.697 & 0.666 & 0.646 & 0.631 & 0.619 & 0.610 & 0.602 & 0.596 & 0.590 & 0.585 & 0.581 & 0.577 & 0.573 & 0.570 & 0.567 & 0.564 & 0.561 & 0.559 & 0.557 & 0.555 & 0.553 & 0.551 & 0.549 & 0.547 & 0.546 & 0.544 & 0.543 & 0.541 & 0.540 \\
	1/2 & 0.841 & 0.782 & 0.745 & 0.720 & 0.701 & 0.685 & 0.673 & 0.662 & 0.654 & 0.646 & 0.639 & 0.633 & 0.628 & 0.623 & 0.618 & 0.614 & 0.610 & 0.607 & 0.604 & 0.601 & 0.598 & 0.595 & 0.593 & 0.590 & 0.588 & 0.586 & 0.584 & 0.582 & 0.580 & 0.578 \\
	1/3 & 0.891 & 0.835 & 0.797 & 0.769 & 0.747 & 0.730 & 0.716 & 0.704 & 0.693 & 0.684 & 0.676 & 0.669 & 0.663 & 0.657 & 0.652 & 0.647 & 0.642 & 0.638 & 0.634 & 0.631 & 0.628 & 0.624 & 0.621 & 0.619 & 0.616 & 0.613 & 0.611 & 0.609 & 0.607 & 0.605 \\
	1/4 & 0.924 & 0.872 & 0.834 & 0.805 & 0.783 & 0.764 & 0.749 & 0.736 & 0.724 & 0.714 & 0.706 & 0.698 & 0.691 & 0.684 & 0.678 & 0.673 & 0.668 & 0.663 & 0.659 & 0.655 & 0.651 & 0.648 & 0.644 & 0.641 & 0.638 & 0.635 & 0.633 & 0.630 & 0.628 & 0.625 \\
	1/5 & 0.946 & 0.899 & 0.863 & 0.834 & 0.811 & 0.792 & 0.776 & 0.762 & 0.750 & 0.739 & 0.730 & 0.721 & 0.714 & 0.707 & 0.700 & 0.695 & 0.689 & 0.684 & 0.679 & 0.675 & 0.671 & 0.667 & 0.664 & 0.660 & 0.657 & 0.654 & 0.651 & 0.648 & 0.645 & 0.643 \\
	1/6 & 0.961 & 0.920 & 0.885 & 0.858 & 0.835 & 0.815 & 0.799 & 0.784 & 0.772 & 0.761 & 0.751 & 0.742 & 0.734 & 0.727 & 0.720 & 0.714 & 0.708 & 0.702 & 0.697 & 0.693 & 0.688 & 0.684 & 0.680 & 0.677 & 0.673 & 0.670 & 0.667 & 0.664 & 0.661 & 0.658 \\
	1/7 & 0.972 & 0.936 & 0.904 & 0.877 & 0.854 & 0.835 & 0.818 & 0.804 & 0.791 & 0.779 & 0.769 & 0.760 & 0.752 & 0.744 & 0.737 & 0.730 & 0.724 & 0.719 & 0.713 & 0.708 & 0.704 & 0.699 & 0.695 & 0.691 & 0.688 & 0.684 & 0.681 & 0.678 & 0.675 & 0.672 \\
	1/8 & 0.979 & 0.948 & 0.919 & 0.893 & 0.871 & 0.852 & 0.835 & 0.821 & 0.808 & 0.796 & 0.785 & 0.776 & 0.767 & 0.759 & 0.752 & 0.745 & 0.739 & 0.733 & 0.728 & 0.723 & 0.718 & 0.713 & 0.709 & 0.705 & 0.701 & 0.697 & 0.694 & 0.691 & 0.687 & 0.684 \\
	1/9 & 0.985 & 0.958 & 0.931 & 0.907 & 0.886 & 0.867 & 0.850 & 0.836 & 0.822 & 0.811 & 0.800 & 0.790 & 0.782 & 0.773 & 0.766 & 0.759 & 0.752 & 0.746 & 0.741 & 0.735 & 0.730 & 0.726 & 0.721 & 0.717 & 0.713 & 0.709 & 0.706 & 0.702 & 0.699 & 0.696 \\
	1/10 & 0.989 & 0.966 & 0.942 & 0.919 & 0.898 & 0.880 & 0.863 & 0.849 & 0.836 & 0.824 & 0.813 & 0.803 & 0.794 & 0.786 & 0.779 & 0.771 & 0.765 & 0.759 & 0.753 & 0.747 & 0.742 & 0.737 & 0.733 & 0.728 & 0.724 & 0.720 & 0.717 & 0.713 & 0.710 & 0.706 \\
	1/11 & 0.992 & 0.972 & 0.950 & 0.929 & 0.909 & 0.891 & 0.875 & 0.861 & 0.848 & 0.836 & 0.825 & 0.815 & 0.806 & 0.798 & 0.790 & 0.783 & 0.776 & 0.770 & 0.764 & 0.758 & 0.753 & 0.748 & 0.743 & 0.739 & 0.735 & 0.731 & 0.727 & 0.723 & 0.720 & 0.716 \\
	1/12 & 0.994 & 0.977 & 0.957 & 0.937 & 0.919 & 0.902 & 0.886 & 0.872 & 0.859 & 0.847 & 0.836 & 0.826 & 0.817 & 0.809 & 0.801 & 0.794 & 0.787 & 0.780 & 0.774 & 0.769 & 0.763 & 0.758 & 0.753 & 0.749 & 0.744 & 0.740 & 0.736 & 0.733 & 0.729 & 0.725 \\
	1/13 & 0.995 & 0.982 & 0.964 & 0.945 & 0.927 & 0.911 & 0.895 & 0.882 & 0.869 & 0.857 & 0.846 & 0.836 & 0.827 & 0.819 & 0.811 & 0.803 & 0.797 & 0.790 & 0.784 & 0.778 & 0.773 & 0.768 & 0.763 & 0.758 & 0.754 & 0.749 & 0.745 & 0.741 & 0.738 & 0.734 \\
	1/14 & 0.997 & 0.985 & 0.969 & 0.952 & 0.935 & 0.919 & 0.904 & 0.890 & 0.878 & 0.866 & 0.856 & 0.846 & 0.837 & 0.828 & 0.820 & 0.813 & 0.806 & 0.799 & 0.793 & 0.787 & 0.782 & 0.776 & 0.771 & 0.767 & 0.762 & 0.758 & 0.754 & 0.750 & 0.746 & 0.742 \\
	1/15 & 0.997 & 0.988 & 0.973 & 0.957 & 0.941 & 0.926 & 0.912 & 0.899 & 0.886 & 0.875 & 0.864 & 0.854 & 0.845 & 0.837 & 0.829 & 0.821 & 0.814 & 0.808 & 0.801 & 0.796 & 0.790 & 0.785 & 0.780 & 0.775 & 0.770 & 0.766 & 0.762 & 0.758 & 0.754 & 0.750 \\
	1/16 & 0.998 & 0.990 & 0.977 & 0.962 & 0.947 & 0.933 & 0.919 & 0.906 & 0.894 & 0.883 & 0.872 & 0.862 & 0.853 & 0.845 & 0.837 & 0.829 & 0.822 & 0.816 & 0.809 & 0.803 & 0.798 & 0.792 & 0.787 & 0.783 & 0.778 & 0.773 & 0.769 & 0.765 & 0.761 & 0.758 \\
	1/17 & 0.999 & 0.992 & 0.980 & 0.967 & 0.953 & 0.939 & 0.925 & 0.913 & 0.901 & 0.890 & 0.880 & 0.870 & 0.861 & 0.852 & 0.844 & 0.837 & 0.830 & 0.823 & 0.817 & 0.811 & 0.805 & 0.800 & 0.795 & 0.790 & 0.785 & 0.781 & 0.776 & 0.772 & 0.768 & 0.765 \\
	1/18 & 0.999 & 0.993 & 0.983 & 0.970 & 0.957 & 0.944 & 0.931 & 0.919 & 0.907 & 0.897 & 0.886 & 0.877 & 0.868 & 0.860 & 0.852 & 0.844 & 0.837 & 0.830 & 0.824 & 0.818 & 0.812 & 0.807 & 0.802 & 0.797 & 0.792 & 0.788 & 0.783 & 0.779 & 0.775 & 0.771 \\
	1/19 & 0.999 & 0.994 & 0.985 & 0.974 & 0.961 & 0.949 & 0.937 & 0.925 & 0.914 & 0.903 & 0.893 & 0.883 & 0.875 & 0.866 & 0.858 & 0.851 & 0.844 & 0.837 & 0.831 & 0.825 & 0.819 & 0.814 & 0.808 & 0.803 & 0.799 & 0.794 & 0.790 & 0.786 & 0.782 & 0.778 \\
	1/20 & 0.999 & 0.995 & 0.987 & 0.977 & 0.965 & 0.953 & 0.941 & 0.930 & 0.919 & 0.909 & 0.899 & 0.890 & 0.881 & 0.873 & 0.865 & 0.857 & 0.850 & 0.844 & 0.837 & 0.831 & 0.826 & 0.820 & 0.815 & 0.810 & 0.805 & 0.800 & 0.796 & 0.792 & 0.788 & 0.784 \\
	1/21 & 1.000 & 0.996 & 0.989 & 0.979 & 0.969 & 0.957 & 0.946 & 0.935 & 0.924 & 0.914 & 0.904 & 0.895 & 0.887 & 0.878 & 0.871 & 0.863 & 0.856 & 0.850 & 0.843 & 0.837 & 0.832 & 0.826 & 0.821 & 0.816 & 0.811 & 0.806 & 0.802 & 0.798 & 0.794 & 0.790 \\
	1/22 & 1.000 & 0.997 & 0.990 & 0.982 & 0.972 & 0.961 & 0.950 & 0.939 & 0.929 & 0.919 & 0.910 & 0.901 & 0.892 & 0.884 & 0.876 & 0.869 & 0.862 & 0.855 & 0.849 & 0.843 & 0.837 & 0.832 & 0.827 & 0.822 & 0.817 & 0.812 & 0.808 & 0.804 & 0.799 & 0.795 \\
	1/23 & 1.000 & 0.997 & 0.992 & 0.984 & 0.974 & 0.964 & 0.954 & 0.944 & 0.934 & 0.924 & 0.915 & 0.906 & 0.897 & 0.889 & 0.882 & 0.874 & 0.868 & 0.861 & 0.855 & 0.849 & 0.843 & 0.837 & 0.832 & 0.827 & 0.822 & 0.818 & 0.813 & 0.809 & 0.805 & 0.801 \\
	1/24 & 1.000 & 0.998 & 0.993 & 0.986 & 0.977 & 0.967 & 0.957 & 0.947 & 0.938 & 0.928 & 0.919 & 0.911 & 0.902 & 0.894 & 0.887 & 0.880 & 0.873 & 0.866 & 0.860 & 0.854 & 0.848 & 0.843 & 0.838 & 0.833 & 0.828 & 0.823 & 0.819 & 0.814 & 0.810 & 0.806 \\
	1/25 & 1.000 & 0.998 & 0.994 & 0.987 & 0.979 & 0.970 & 0.960 & 0.951 & 0.942 & 0.932 & 0.924 & 0.915 & 0.907 & 0.899 & 0.892 & 0.885 & 0.878 & 0.871 & 0.865 & 0.859 & 0.853 & 0.848 & 0.843 & 0.838 & 0.833 & 0.828 & 0.824 & 0.819 & 0.815 & 0.811 \\
	1/26 & 1.000 & 0.998 & 0.995 & 0.989 & 0.981 & 0.972 & 0.963 & 0.954 & 0.945 & 0.936 & 0.928 & 0.919 & 0.911 & 0.904 & 0.896 & 0.889 & 0.882 & 0.876 & 0.870 & 0.864 & 0.858 & 0.853 & 0.848 & 0.842 & 0.838 & 0.833 & 0.829 & 0.824 & 0.820 & 0.816 \\
	1/27 & 1.000 & 0.999 & 0.995 & 0.990 & 0.983 & 0.975 & 0.966 & 0.957 & 0.948 & 0.940 & 0.931 & 0.923 & 0.915 & 0.908 & 0.901 & 0.894 & 0.887 & 0.881 & 0.874 & 0.869 & 0.863 & 0.857 & 0.852 & 0.847 & 0.842 & 0.838 & 0.833 & 0.829 & 0.825 & 0.821 \\
	1/28 & 1.000 & 0.999 & 0.996 & 0.991 & 0.984 & 0.977 & 0.969 & 0.960 & 0.952 & 0.943 & 0.935 & 0.927 & 0.919 & 0.912 & 0.905 & 0.898 & 0.891 & 0.885 & 0.879 & 0.873 & 0.867 & 0.862 & 0.857 & 0.852 & 0.847 & 0.842 & 0.838 & 0.833 & 0.829 & 0.825 \\
	1/29 & 1.000 & 0.999 & 0.996 & 0.992 & 0.986 & 0.979 & 0.971 & 0.963 & 0.955 & 0.946 & 0.938 & 0.931 & 0.923 & 0.916 & 0.909 & 0.902 & 0.895 & 0.889 & 0.883 & 0.877 & 0.872 & 0.866 & 0.861 & 0.856 & 0.851 & 0.847 & 0.842 & 0.838 & 0.834 & 0.830 \\
	1/30 & 1.000 & 0.999 & 0.997 & 0.993 & 0.987 & 0.980 & 0.973 & 0.965 & 0.957 & 0.949 & 0.942 & 0.934 & 0.927 & 0.919 & 0.912 & 0.906 & 0.899 & 0.893 & 0.887 & 0.881 & 0.876 & 0.870 & 0.865 & 0.860 & 0.855 & 0.851 & 0.846 & 0.842 & 0.838 & 0.834 \\
	1/31 & 1.000 & 0.999 & 0.997 & 0.994 & 0.988 & 0.982 & 0.975 & 0.967 & 0.960 & 0.952 & 0.945 & 0.937 & 0.930 & 0.923 & 0.916 & 0.909 & 0.903 & 0.897 & 0.891 & 0.885 & 0.880 & 0.874 & 0.869 & 0.864 & 0.860 & 0.855 & 0.850 & 0.846 & 0.842 & 0.838 \\
	1/32 & 1.000 & 1.000 & 0.998 & 0.994 & 0.989 & 0.983 & 0.977 & 0.970 & 0.962 & 0.955 & 0.947 & 0.940 & 0.933 & 0.926 & 0.919 & 0.913 & 0.907 & 0.901 & 0.895 & 0.889 & 0.884 & 0.878 & 0.873 & 0.868 & 0.863 & 0.859 & 0.854 & 0.850 & 0.846 & 0.842 \\
	1/33 & 1.000 & 1.000 & 0.998 & 0.995 & 0.990 & 0.985 & 0.978 & 0.972 & 0.964 & 0.957 & 0.950 & 0.943 & 0.936 & 0.929 & 0.923 & 0.916 & 0.910 & 0.904 & 0.898 & 0.893 & 0.887 & 0.882 & 0.877 & 0.872 & 0.867 & 0.863 & 0.858 & 0.854 & 0.850 & 0.846 \\
	1/34 & 1.000 & 1.000 & 0.998 & 0.995 & 0.991 & 0.986 & 0.980 & 0.973 & 0.967 & 0.960 & 0.953 & 0.946 & 0.939 & 0.932 & 0.926 & 0.920 & 0.913 & 0.907 & 0.902 & 0.896 & 0.891 & 0.885 & 0.880 & 0.876 & 0.871 & 0.866 & 0.862 & 0.858 & 0.853 & 0.849 \\
	1/35 & 1.000 & 1.000 & 0.998 & 0.996 & 0.992 & 0.987 & 0.981 & 0.975 & 0.969 & 0.962 & 0.955 & 0.948 & 0.942 & 0.935 & 0.929 & 0.923 & 0.917 & 0.911 & 0.905 & 0.899 & 0.894 & 0.889 & 0.884 & 0.879 & 0.874 & 0.870 & 0.865 & 0.861 & 0.857 & 0.853 \\
	1/36 & 1.000 & 1.000 & 0.999 & 0.996 & 0.993 & 0.988 & 0.983 & 0.977 & 0.970 & 0.964 & 0.957 & 0.951 & 0.944 & 0.938 & 0.932 & 0.926 & 0.920 & 0.914 & 0.908 & 0.903 & 0.897 & 0.892 & 0.887 & 0.882 & 0.878 & 0.873 & 0.869 & 0.865 & 0.860 & 0.856 \\
	1/37 & 1.000 & 1.000 & 0.999 & 0.997 & 0.993 & 0.989 & 0.984 & 0.978 & 0.972 & 0.966 & 0.959 & 0.953 & 0.947 & 0.940 & 0.934 & 0.928 & 0.922 & 0.917 & 0.911 & 0.906 & 0.901 & 0.895 & 0.890 & 0.886 & 0.881 & 0.877 & 0.872 & 0.868 & 0.864 & 0.860 \\
	1/38 & 1.000 & 1.000 & 0.999 & 0.997 & 0.994 & 0.990 & 0.985 & 0.980 & 0.974 & 0.968 & 0.962 & 0.955 & 0.949 & 0.943 & 0.937 & 0.931 & 0.925 & 0.920 & 0.914 & 0.909 & 0.904 & 0.899 & 0.894 & 0.889 & 0.884 & 0.880 & 0.875 & 0.871 & 0.867 & 0.863 \\
	1/39 & 1.000 & 1.000 & 0.999 & 0.997 & 0.995 & 0.991 & 0.986 & 0.981 & 0.975 & 0.969 & 0.963 & 0.957 & 0.951 & 0.945 & 0.939 & 0.934 & 0.928 & 0.922 & 0.917 & 0.912 & 0.906 & 0.901 & 0.897 & 0.892 & 0.887 & 0.883 & 0.879 & 0.874 & 0.870 & 0.866 \\
	1/40 & 1.000 & 1.000 & 0.999 & 0.998 & 0.995 & 0.991 & 0.987 & 0.982 & 0.977 & 0.971 & 0.965 & 0.959 & 0.953 & 0.948 & 0.942 & 0.936 & 0.930 & 0.925 & 0.920 & 0.914 & 0.909 & 0.904 & 0.900 & 0.895 & 0.890 & 0.886 & 0.882 & 0.877 & 0.873 & 0.869 \\
	1/41 & 1.000 & 1.000 & 0.999 & 0.998 & 0.995 & 0.992 & 0.988 & 0.983 & 0.978 & 0.973 & 0.967 & 0.961 & 0.955 & 0.950 & 0.944 & 0.938 & 0.933 & 0.928 & 0.922 & 0.917 & 0.912 & 0.907 & 0.902 & 0.898 & 0.893 & 0.889 & 0.885 & 0.880 & 0.876 & 0.872 \\
	1/42 & 1.000 & 1.000 & 0.999 & 0.998 & 0.996 & 0.993 & 0.989 & 0.984 & 0.979 & 0.974 & 0.969 & 0.963 & 0.957 & 0.952 & 0.946 & 0.941 & 0.935 & 0.930 & 0.925 & 0.920 & 0.915 & 0.910 & 0.905 & 0.900 & 0.896 & 0.892 & 0.887 & 0.883 & 0.879 & 0.875 \\
	1/43 & 1.000 & 1.000 & 0.999 & 0.998 & 0.996 & 0.993 & 0.990 & 0.985 & 0.981 & 0.975 & 0.970 & 0.965 & 0.959 & 0.954 & 0.948 & 0.943 & 0.938 & 0.932 & 0.927 & 0.922 & 0.917 & 0.912 & 0.908 & 0.903 & 0.899 & 0.894 & 0.890 & 0.886 & 0.882 & 0.878 \\
	1/44 & 1.000 & 1.000 & 1.000 & 0.998 & 0.997 & 0.994 & 0.990 & 0.986 & 0.982 & 0.977 & 0.972 & 0.966 & 0.961 & 0.956 & 0.950 & 0.945 & 0.940 & 0.935 & 0.930 & 0.925 & 0.920 & 0.915 & 0.910 & 0.906 & 0.901 & 0.897 & 0.893 & 0.889 & 0.885 & 0.881 \\
	1/45 & 1.000 & 1.000 & 1.000 & 0.999 & 0.997 & 0.994 & 0.991 & 0.987 & 0.983 & 0.978 & 0.973 & 0.968 & 0.963 & 0.957 & 0.952 & 0.947 & 0.942 & 0.937 & 0.932 & 0.927 & 0.922 & 0.917 & 0.913 & 0.908 & 0.904 & 0.900 & 0.895 & 0.891 & 0.887 & 0.884 \\
	1/46 & 1.000 & 1.000 & 1.000 & 0.999 & 0.997 & 0.995 & 0.992 & 0.988 & 0.984 & 0.979 & 0.974 & 0.969 & 0.964 & 0.959 & 0.954 & 0.949 & 0.944 & 0.939 & 0.934 & 0.929 & 0.924 & 0.920 & 0.915 & 0.911 & 0.906 & 0.902 & 0.898 & 0.894 & 0.890 & 0.886 \\
	1/47 & 1.000 & 1.000 & 1.000 & 0.999 & 0.997 & 0.995 & 0.992 & 0.989 & 0.985 & 0.980 & 0.976 & 0.971 & 0.966 & 0.961 & 0.956 & 0.951 & 0.946 & 0.941 & 0.936 & 0.931 & 0.927 & 0.922 & 0.917 & 0.913 & 0.909 & 0.905 & 0.900 & 0.896 & 0.893 & 0.889 \\
	1/48 & 1.000 & 1.000 & 1.000 & 0.999 & 0.998 & 0.996 & 0.993 & 0.989 & 0.985 & 0.981 & 0.977 & 0.972 & 0.967 & 0.962 & 0.957 & 0.953 & 0.948 & 0.943 & 0.938 & 0.933 & 0.929 & 0.924 & 0.920 & 0.915 & 0.911 & 0.907 & 0.903 & 0.899 & 0.895 & 0.891 \\
	1/49 & 1.000 & 1.000 & 1.000 & 0.999 & 0.998 & 0.996 & 0.993 & 0.990 & 0.986 & 0.982 & 0.978 & 0.973 & 0.969 & 0.964 & 0.959 & 0.954 & 0.949 & 0.945 & 0.940 & 0.935 & 0.931 & 0.926 & 0.922 & 0.918 & 0.913 & 0.909 & 0.905 & 0.901 & 0.897 & 0.894 \\
	1/50 & 1.000 & 1.000 & 1.000 & 0.999 & 0.998 & 0.996 & 0.994 & 0.991 & 0.987 & 0.983 & 0.979 & 0.975 & 0.970 & 0.965 & 0.961 & 0.956 & 0.951 & 0.947 & 0.942 & 0.937 & 0.933 & 0.928 & 0.924 & 0.920 & 0.916 & 0.911 & 0.907 & 0.904 & 0.900 & 0.896 \\
	\bottomrule
\end{tabular}
}
\end{sidewaystable}

	\begin{sidewaystable}
	\caption{Value of $\Lambda_{\frac{d -2k}{\eta}}(m)$ for Theorem \ref{thcorres}.}\label{t2}
	\centering
	\resizebox{22cm}{!}{
	\begin{tabular}{ccccccccccccccccccccccccccccccc}
	\toprule
	$\eta\backslash d-2k$ & 1 & 2 & 3 & 4 & 5 & 6 & 7 & 8 & 9 & 10 & 11 & 12 & 13 & 14 & 15 & 16 & 17 & 18 & 19 & 20 & 21 & 22 & 23 & 24 & 25 & 26 & 27 & 28 & 29 & 30 \\
	\midrule
	1 & 0.753 & 0.696 & 0.665 & 0.644 & 0.628 & 0.617 & 0.607 & 0.599 & 0.592 & 0.586 & 0.581 & 0.577 & 0.572 & 0.569 & 0.565 & 0.562 & 0.559 & 0.556 & 0.554 & 0.552 & 0.549 & 0.547 & 0.545 & 0.543 & 0.541 & 0.540 & 0.538 & 0.537 & 0.535 & 0.534 \\
	1/2 & 0.838 & 0.778 & 0.741 & 0.714 & 0.694 & 0.678 & 0.665 & 0.654 & 0.645 & 0.636 & 0.629 & 0.623 & 0.617 & 0.611 & 0.606 & 0.602 & 0.598 & 0.594 & 0.590 & 0.587 & 0.584 & 0.581 & 0.578 & 0.575 & 0.572 & 0.570 & 0.568 & 0.565 & 0.563 & 0.561 \\
	1/3 & 0.889 & 0.830 & 0.790 & 0.761 & 0.738 & 0.720 & 0.704 & 0.691 & 0.680 & 0.670 & 0.661 & 0.654 & 0.646 & 0.640 & 0.634 & 0.628 & 0.623 & 0.619 & 0.614 & 0.610 & 0.606 & 0.602 & 0.599 & 0.596 & 0.592 & 0.589 & 0.586 & 0.584 & 0.581 & 0.579 \\
	1/4 & 0.921 & 0.866 & 0.827 & 0.796 & 0.772 & 0.752 & 0.735 & 0.721 & 0.708 & 0.697 & 0.687 & 0.678 & 0.670 & 0.663 & 0.656 & 0.650 & 0.644 & 0.638 & 0.633 & 0.629 & 0.624 & 0.620 & 0.616 & 0.612 & 0.608 & 0.605 & 0.602 & 0.598 & 0.595 & 0.592 \\
	1/5 & 0.943 & 0.894 & 0.855 & 0.824 & 0.799 & 0.778 & 0.760 & 0.745 & 0.732 & 0.720 & 0.709 & 0.699 & 0.690 & 0.682 & 0.675 & 0.668 & 0.661 & 0.655 & 0.650 & 0.645 & 0.640 & 0.635 & 0.630 & 0.626 & 0.622 & 0.618 & 0.614 & 0.611 & 0.608 & 0.604 \\
	1/6 & 0.959 & 0.915 & 0.878 & 0.847 & 0.822 & 0.801 & 0.782 & 0.766 & 0.752 & 0.739 & 0.728 & 0.717 & 0.708 & 0.699 & 0.691 & 0.684 & 0.677 & 0.670 & 0.664 & 0.659 & 0.653 & 0.648 & 0.643 & 0.639 & 0.634 & 0.630 & 0.626 & 0.622 & 0.618 & 0.615 \\
	1/7 & 0.970 & 0.931 & 0.896 & 0.867 & 0.841 & 0.820 & 0.801 & 0.784 & 0.770 & 0.756 & 0.745 & 0.734 & 0.724 & 0.715 & 0.706 & 0.698 & 0.691 & 0.684 & 0.677 & 0.671 & 0.666 & 0.660 & 0.655 & 0.650 & 0.645 & 0.641 & 0.636 & 0.632 & 0.628 & 0.624 \\
	1/8 & 0.978 & 0.944 & 0.911 & 0.883 & 0.858 & 0.836 & 0.817 & 0.801 & 0.785 & 0.772 & 0.760 & 0.748 & 0.738 & 0.728 & 0.719 & 0.711 & 0.703 & 0.696 & 0.689 & 0.683 & 0.677 & 0.671 & 0.666 & 0.660 & 0.655 & 0.651 & 0.646 & 0.642 & 0.637 & 0.633 \\
	1/9 & 0.983 & 0.954 & 0.924 & 0.897 & 0.873 & 0.851 & 0.832 & 0.815 & 0.800 & 0.786 & 0.773 & 0.762 & 0.751 & 0.741 & 0.732 & 0.723 & 0.715 & 0.708 & 0.700 & 0.694 & 0.687 & 0.681 & 0.675 & 0.670 & 0.665 & 0.660 & 0.655 & 0.650 & 0.646 & 0.641 \\
	1/10 & 0.988 & 0.962 & 0.935 & 0.909 & 0.885 & 0.864 & 0.845 & 0.828 & 0.813 & 0.799 & 0.786 & 0.774 & 0.763 & 0.753 & 0.743 & 0.734 & 0.726 & 0.718 & 0.711 & 0.704 & 0.697 & 0.691 & 0.685 & 0.679 & 0.673 & 0.668 & 0.663 & 0.658 & 0.654 & 0.649 \\
	1/11 & 0.991 & 0.969 & 0.944 & 0.919 & 0.897 & 0.876 & 0.857 & 0.840 & 0.824 & 0.810 & 0.797 & 0.785 & 0.774 & 0.763 & 0.754 & 0.744 & 0.736 & 0.728 & 0.720 & 0.713 & 0.706 & 0.699 & 0.693 & 0.687 & 0.682 & 0.676 & 0.671 & 0.666 & 0.661 & 0.656 \\
	1/12 & 0.993 & 0.974 & 0.952 & 0.928 & 0.907 & 0.886 & 0.868 & 0.851 & 0.835 & 0.821 & 0.808 & 0.795 & 0.784 & 0.773 & 0.763 & 0.754 & 0.745 & 0.737 & 0.729 & 0.722 & 0.715 & 0.708 & 0.701 & 0.695 & 0.689 & 0.684 & 0.678 & 0.673 & 0.668 & 0.663 \\
	1/13 & 0.995 & 0.979 & 0.958 & 0.936 & 0.915 & 0.896 & 0.878 & 0.861 & 0.845 & 0.831 & 0.817 & 0.805 & 0.794 & 0.783 & 0.773 & 0.763 & 0.754 & 0.746 & 0.738 & 0.730 & 0.723 & 0.716 & 0.709 & 0.703 & 0.697 & 0.691 & 0.685 & 0.680 & 0.675 & 0.670 \\
	1/14 & 0.996 & 0.983 & 0.964 & 0.943 & 0.923 & 0.904 & 0.886 & 0.870 & 0.854 & 0.840 & 0.827 & 0.814 & 0.803 & 0.792 & 0.781 & 0.772 & 0.762 & 0.754 & 0.746 & 0.738 & 0.730 & 0.723 & 0.716 & 0.710 & 0.704 & 0.698 & 0.692 & 0.686 & 0.681 & 0.676 \\
	1/15 & 0.997 & 0.986 & 0.969 & 0.950 & 0.930 & 0.912 & 0.895 & 0.878 & 0.863 & 0.849 & 0.835 & 0.823 & 0.811 & 0.800 & 0.789 & 0.780 & 0.770 & 0.762 & 0.753 & 0.745 & 0.738 & 0.730 & 0.723 & 0.717 & 0.710 & 0.704 & 0.698 & 0.692 & 0.687 & 0.682 \\
	1/16 & 0.998 & 0.988 & 0.973 & 0.955 & 0.937 & 0.919 & 0.902 & 0.886 & 0.871 & 0.856 & 0.843 & 0.831 & 0.819 & 0.808 & 0.797 & 0.787 & 0.778 & 0.769 & 0.760 & 0.752 & 0.744 & 0.737 & 0.730 & 0.723 & 0.717 & 0.710 & 0.704 & 0.698 & 0.693 & 0.687 \\
	1/17 & 0.998 & 0.990 & 0.976 & 0.960 & 0.943 & 0.925 & 0.909 & 0.893 & 0.878 & 0.864 & 0.851 & 0.838 & 0.826 & 0.815 & 0.804 & 0.794 & 0.785 & 0.776 & 0.767 & 0.759 & 0.751 & 0.744 & 0.736 & 0.729 & 0.723 & 0.716 & 0.710 & 0.704 & 0.698 & 0.693 \\
	1/18 & 0.999 & 0.992 & 0.979 & 0.964 & 0.948 & 0.931 & 0.915 & 0.900 & 0.885 & 0.871 & 0.858 & 0.845 & 0.833 & 0.822 & 0.811 & 0.801 & 0.792 & 0.783 & 0.774 & 0.765 & 0.757 & 0.750 & 0.742 & 0.735 & 0.729 & 0.722 & 0.716 & 0.710 & 0.704 & 0.698 \\
	1/19 & 0.999 & 0.993 & 0.982 & 0.968 & 0.952 & 0.937 & 0.921 & 0.906 & 0.891 & 0.877 & 0.864 & 0.852 & 0.840 & 0.829 & 0.818 & 0.808 & 0.798 & 0.789 & 0.780 & 0.772 & 0.764 & 0.756 & 0.748 & 0.741 & 0.734 & 0.728 & 0.721 & 0.715 & 0.709 & 0.703 \\
	1/20 & 0.999 & 0.994 & 0.984 & 0.971 & 0.957 & 0.941 & 0.926 & 0.911 & 0.897 & 0.884 & 0.870 & 0.858 & 0.846 & 0.835 & 0.824 & 0.814 & 0.804 & 0.795 & 0.786 & 0.778 & 0.769 & 0.761 & 0.754 & 0.747 & 0.740 & 0.733 & 0.726 & 0.720 & 0.714 & 0.708 \\
	1/21 & 0.999 & 0.995 & 0.986 & 0.974 & 0.960 & 0.946 & 0.931 & 0.917 & 0.903 & 0.889 & 0.876 & 0.864 & 0.852 & 0.841 & 0.830 & 0.820 & 0.810 & 0.801 & 0.792 & 0.783 & 0.775 & 0.767 & 0.759 & 0.752 & 0.745 & 0.738 & 0.731 & 0.725 & 0.719 & 0.713 \\
	1/22 & 1.000 & 0.996 & 0.988 & 0.977 & 0.964 & 0.950 & 0.936 & 0.922 & 0.908 & 0.895 & 0.882 & 0.870 & 0.858 & 0.847 & 0.836 & 0.826 & 0.816 & 0.806 & 0.797 & 0.789 & 0.780 & 0.772 & 0.765 & 0.757 & 0.750 & 0.743 & 0.736 & 0.730 & 0.723 & 0.717 \\
	1/23 & 1.000 & 0.997 & 0.990 & 0.979 & 0.967 & 0.954 & 0.940 & 0.926 & 0.913 & 0.900 & 0.887 & 0.875 & 0.863 & 0.852 & 0.841 & 0.831 & 0.821 & 0.812 & 0.803 & 0.794 & 0.786 & 0.777 & 0.770 & 0.762 & 0.755 & 0.748 & 0.741 & 0.734 & 0.728 & 0.722 \\
	1/24 & 1.000 & 0.997 & 0.991 & 0.981 & 0.970 & 0.957 & 0.944 & 0.931 & 0.917 & 0.905 & 0.892 & 0.880 & 0.869 & 0.857 & 0.847 & 0.836 & 0.826 & 0.817 & 0.808 & 0.799 & 0.791 & 0.782 & 0.774 & 0.767 & 0.759 & 0.752 & 0.745 & 0.739 & 0.732 & 0.726 \\
	1/25 & 1.000 & 0.998 & 0.992 & 0.983 & 0.972 & 0.960 & 0.948 & 0.935 & 0.922 & 0.909 & 0.897 & 0.885 & 0.873 & 0.862 & 0.852 & 0.841 & 0.831 & 0.822 & 0.813 & 0.804 & 0.795 & 0.787 & 0.779 & 0.772 & 0.764 & 0.757 & 0.750 & 0.743 & 0.737 & 0.730 \\
	1/26 & 1.000 & 0.998 & 0.993 & 0.985 & 0.975 & 0.963 & 0.951 & 0.938 & 0.926 & 0.913 & 0.901 & 0.890 & 0.878 & 0.867 & 0.856 & 0.846 & 0.836 & 0.827 & 0.818 & 0.809 & 0.800 & 0.792 & 0.784 & 0.776 & 0.769 & 0.761 & 0.754 & 0.747 & 0.741 & 0.734 \\
	1/27 & 1.000 & 0.998 & 0.994 & 0.986 & 0.977 & 0.966 & 0.954 & 0.942 & 0.930 & 0.918 & 0.906 & 0.894 & 0.883 & 0.872 & 0.861 & 0.851 & 0.841 & 0.831 & 0.822 & 0.813 & 0.805 & 0.796 & 0.788 & 0.780 & 0.773 & 0.765 & 0.758 & 0.751 & 0.745 & 0.738 \\
	1/28 & 1.000 & 0.999 & 0.995 & 0.988 & 0.979 & 0.968 & 0.957 & 0.945 & 0.933 & 0.921 & 0.910 & 0.898 & 0.887 & 0.876 & 0.865 & 0.855 & 0.845 & 0.836 & 0.827 & 0.818 & 0.809 & 0.801 & 0.792 & 0.785 & 0.777 & 0.770 & 0.762 & 0.755 & 0.749 & 0.742 \\
	1/29 & 1.000 & 0.999 & 0.995 & 0.989 & 0.981 & 0.971 & 0.960 & 0.948 & 0.937 & 0.925 & 0.913 & 0.902 & 0.891 & 0.880 & 0.870 & 0.860 & 0.850 & 0.840 & 0.831 & 0.822 & 0.813 & 0.805 & 0.797 & 0.789 & 0.781 & 0.774 & 0.766 & 0.759 & 0.753 & 0.746 \\
	1/30 & 1.000 & 0.999 & 0.996 & 0.990 & 0.982 & 0.973 & 0.962 & 0.951 & 0.940 & 0.929 & 0.917 & 0.906 & 0.895 & 0.884 & 0.874 & 0.864 & 0.854 & 0.844 & 0.835 & 0.826 & 0.817 & 0.809 & 0.801 & 0.793 & 0.785 & 0.778 & 0.770 & 0.763 & 0.756 & 0.750 \\
	1/31 & 1.000 & 0.999 & 0.996 & 0.991 & 0.984 & 0.975 & 0.965 & 0.954 & 0.943 & 0.932 & 0.921 & 0.910 & 0.899 & 0.888 & 0.878 & 0.868 & 0.858 & 0.848 & 0.839 & 0.830 & 0.821 & 0.813 & 0.805 & 0.797 & 0.789 & 0.781 & 0.774 & 0.767 & 0.760 & 0.753 \\
	1/32 & 1.000 & 0.999 & 0.997 & 0.992 & 0.985 & 0.977 & 0.967 & 0.957 & 0.946 & 0.935 & 0.924 & 0.913 & 0.902 & 0.892 & 0.882 & 0.871 & 0.862 & 0.852 & 0.843 & 0.834 & 0.825 & 0.817 & 0.808 & 0.800 & 0.793 & 0.785 & 0.778 & 0.770 & 0.763 & 0.757 \\
	1/33 & 1.000 & 0.999 & 0.997 & 0.993 & 0.986 & 0.978 & 0.969 & 0.959 & 0.949 & 0.938 & 0.927 & 0.916 & 0.906 & 0.895 & 0.885 & 0.875 & 0.865 & 0.856 & 0.847 & 0.838 & 0.829 & 0.820 & 0.812 & 0.804 & 0.796 & 0.789 & 0.781 & 0.774 & 0.767 & 0.760 \\
	1/34 & 1.000 & 1.000 & 0.998 & 0.993 & 0.987 & 0.980 & 0.971 & 0.961 & 0.951 & 0.941 & 0.930 & 0.920 & 0.909 & 0.899 & 0.889 & 0.879 & 0.869 & 0.860 & 0.850 & 0.841 & 0.833 & 0.824 & 0.816 & 0.808 & 0.800 & 0.792 & 0.785 & 0.777 & 0.770 & 0.763 \\
	1/35 & 1.000 & 1.000 & 0.998 & 0.994 & 0.988 & 0.981 & 0.973 & 0.963 & 0.954 & 0.943 & 0.933 & 0.923 & 0.912 & 0.902 & 0.892 & 0.882 & 0.873 & 0.863 & 0.854 & 0.845 & 0.836 & 0.828 & 0.819 & 0.811 & 0.803 & 0.796 & 0.788 & 0.781 & 0.774 & 0.767 \\
	1/36 & 1.000 & 1.000 & 0.998 & 0.995 & 0.989 & 0.982 & 0.974 & 0.965 & 0.956 & 0.946 & 0.936 & 0.926 & 0.915 & 0.905 & 0.895 & 0.885 & 0.876 & 0.866 & 0.857 & 0.848 & 0.840 & 0.831 & 0.823 & 0.815 & 0.807 & 0.799 & 0.791 & 0.784 & 0.777 & 0.770 \\
	1/37 & 1.000 & 1.000 & 0.998 & 0.995 & 0.990 & 0.984 & 0.976 & 0.967 & 0.958 & 0.948 & 0.938 & 0.928 & 0.918 & 0.908 & 0.898 & 0.889 & 0.879 & 0.870 & 0.861 & 0.852 & 0.843 & 0.834 & 0.826 & 0.818 & 0.810 & 0.802 & 0.795 & 0.787 & 0.780 & 0.773 \\
	1/38 & 1.000 & 1.000 & 0.999 & 0.996 & 0.991 & 0.985 & 0.977 & 0.969 & 0.960 & 0.951 & 0.941 & 0.931 & 0.921 & 0.911 & 0.901 & 0.892 & 0.882 & 0.873 & 0.864 & 0.855 & 0.846 & 0.838 & 0.829 & 0.821 & 0.813 & 0.805 & 0.798 & 0.790 & 0.783 & 0.776 \\
	1/39 & 1.000 & 1.000 & 0.999 & 0.996 & 0.992 & 0.986 & 0.979 & 0.971 & 0.962 & 0.953 & 0.943 & 0.934 & 0.924 & 0.914 & 0.904 & 0.895 & 0.885 & 0.876 & 0.867 & 0.858 & 0.849 & 0.841 & 0.832 & 0.824 & 0.816 & 0.809 & 0.801 & 0.793 & 0.786 & 0.779 \\
	1/40 & 1.000 & 1.000 & 0.999 & 0.996 & 0.992 & 0.987 & 0.980 & 0.972 & 0.964 & 0.955 & 0.946 & 0.936 & 0.926 & 0.917 & 0.907 & 0.898 & 0.888 & 0.879 & 0.870 & 0.861 & 0.852 & 0.844 & 0.836 & 0.827 & 0.819 & 0.812 & 0.804 & 0.796 & 0.789 & 0.782 \\
	1/41 & 1.000 & 1.000 & 0.999 & 0.997 & 0.993 & 0.988 & 0.981 & 0.974 & 0.966 & 0.957 & 0.948 & 0.938 & 0.929 & 0.919 & 0.910 & 0.901 & 0.891 & 0.882 & 0.873 & 0.864 & 0.855 & 0.847 & 0.839 & 0.830 & 0.822 & 0.815 & 0.807 & 0.799 & 0.792 & 0.785 \\
	1/42 & 1.000 & 1.000 & 0.999 & 0.997 & 0.994 & 0.989 & 0.982 & 0.975 & 0.967 & 0.959 & 0.950 & 0.941 & 0.931 & 0.922 & 0.913 & 0.903 & 0.894 & 0.885 & 0.876 & 0.867 & 0.858 & 0.850 & 0.842 & 0.833 & 0.825 & 0.817 & 0.810 & 0.802 & 0.795 & 0.788 \\
	1/43 & 1.000 & 1.000 & 0.999 & 0.997 & 0.994 & 0.989 & 0.983 & 0.977 & 0.969 & 0.961 & 0.952 & 0.943 & 0.934 & 0.924 & 0.915 & 0.906 & 0.897 & 0.888 & 0.879 & 0.870 & 0.861 & 0.853 & 0.844 & 0.836 & 0.828 & 0.820 & 0.813 & 0.805 & 0.798 & 0.790 \\
	1/44 & 1.000 & 1.000 & 0.999 & 0.998 & 0.995 & 0.990 & 0.984 & 0.978 & 0.970 & 0.962 & 0.954 & 0.945 & 0.936 & 0.927 & 0.918 & 0.908 & 0.899 & 0.890 & 0.881 & 0.873 & 0.864 & 0.856 & 0.847 & 0.839 & 0.831 & 0.823 & 0.815 & 0.808 & 0.800 & 0.793 \\
	1/45 & 1.000 & 1.000 & 0.999 & 0.998 & 0.995 & 0.991 & 0.985 & 0.979 & 0.972 & 0.964 & 0.956 & 0.947 & 0.938 & 0.929 & 0.920 & 0.911 & 0.902 & 0.893 & 0.884 & 0.875 & 0.867 & 0.858 & 0.850 & 0.842 & 0.834 & 0.826 & 0.818 & 0.810 & 0.803 & 0.796 \\
	1/46 & 1.000 & 1.000 & 0.999 & 0.998 & 0.995 & 0.991 & 0.986 & 0.980 & 0.973 & 0.965 & 0.957 & 0.949 & 0.940 & 0.931 & 0.922 & 0.913 & 0.904 & 0.895 & 0.887 & 0.878 & 0.869 & 0.861 & 0.853 & 0.844 & 0.836 & 0.828 & 0.821 & 0.813 & 0.806 & 0.798 \\
	1/47 & 1.000 & 1.000 & 1.000 & 0.998 & 0.996 & 0.992 & 0.987 & 0.981 & 0.974 & 0.967 & 0.959 & 0.951 & 0.942 & 0.933 & 0.924 & 0.916 & 0.907 & 0.898 & 0.889 & 0.881 & 0.872 & 0.864 & 0.855 & 0.847 & 0.839 & 0.831 & 0.823 & 0.816 & 0.808 & 0.801 \\
	1/48 & 1.000 & 1.000 & 1.000 & 0.998 & 0.996 & 0.993 & 0.988 & 0.982 & 0.976 & 0.968 & 0.961 & 0.952 & 0.944 & 0.935 & 0.927 & 0.918 & 0.909 & 0.900 & 0.892 & 0.883 & 0.874 & 0.866 & 0.858 & 0.850 & 0.842 & 0.834 & 0.826 & 0.818 & 0.811 & 0.803 \\
	1/49 & 1.000 & 1.000 & 1.000 & 0.999 & 0.996 & 0.993 & 0.989 & 0.983 & 0.977 & 0.970 & 0.962 & 0.954 & 0.946 & 0.937 & 0.929 & 0.920 & 0.911 & 0.903 & 0.894 & 0.885 & 0.877 & 0.869 & 0.860 & 0.852 & 0.844 & 0.836 & 0.828 & 0.821 & 0.813 & 0.806 \\
	1/50 & 1.000 & 1.000 & 1.000 & 0.999 & 0.997 & 0.994 & 0.989 & 0.984 & 0.978 & 0.971 & 0.964 & 0.956 & 0.948 & 0.939 & 0.931 & 0.922 & 0.914 & 0.905 & 0.896 & 0.888 & 0.879 & 0.871 & 0.863 & 0.855 & 0.847 & 0.839 & 0.831 & 0.823 & 0.816 & 0.808 \\
	\bottomrule
	\end{tabular}
}
\end{sidewaystable}

\section{Supplementary for experimental methods}\label{supmethod}
\subsection{DSRS algorithm}\label{DSRSalg}
Taking EGG as an example, the DSRS algorithm used in this work is shown in Algorithm \ref{alg:DSRS}.
\begin{algorithm}[t!]
    \KwIn{base classifier $f$, example $x$, formal variance $\sigma$, exponent $\eta$, hyperparameter $k$, significance level $\alpha$, error bound for certified radius $e$, heuristic algorithm $H$, conservative algorithm $C$, DualBinarySearch algorithm $D$ (Algorithms $H, C, D$ from \citet{li2022}).}
    \KwOut{certified radius $r$}
    \setcounter{AlgoLine}{0}
    Initialize the smoothing distribution as EGG: $\mathcal{P}\leftarrow\mathcal{G}(\sigma, \eta, k)$
    
    $A_1\leftarrow$ SamplingUnderNoise $(f, x, \mathcal{P}, \alpha)$
    
    \tcc{$A_1$ is the Clopper-Pearson lower bound for the sampling result~\citep{clopper1934, cohen2019}}
    
    $T\leftarrow$ $H (A_1)$ \hfill\tcc{$T$ is the hyperparameter for the truncated distribution $\mathcal{Q}$}
    
    Initialize the supplementary distribution as TEGG: $\mathcal{Q}\leftarrow\mathcal{G}_t(\sigma, \eta, k, T)$
    
    $B_1\leftarrow$ SamplingUnderNoise $(f, x, \mathcal{Q}, \alpha)$ 
    
    $A, B \leftarrow C (A_1, B_1)$ \hfill\tcc{$A,B$ for Problem (\ref{dualdsrs}) are determined by algorithm $C$}
    
    $r_l \leftarrow 0, r_r \leftarrow I$
    
    \tcc{Initialization for the binary search on $r$, where $I$ is big enough}
    
    \While{$r_r - r_l > e$}{
        $r_m \leftarrow (r_r + r_l) / 2$ 
        
        $p_m \leftarrow D(A, B)$ \hfill\tcc{Problem (\ref{dualdsrs}) can be solved by $D$ with given $A, B$}
		\eIf{$p_m > 1/2$} { 
			$r_l \leftarrow r_m$ 
		}{
			$r_r \leftarrow r_m$
		}
	}
    $r\leftarrow r_l$
    
    \Return $r$
    \caption{Standard algorithm for Double Sampling Randomized Smoothing by Exponential General Gaussian (EGG) distributions on real-world datasets}\label{alg:DSRS}
\end{algorithm}

\subsection{Computational overhead}
All of our experiments on real-world datasets are composed of sampling and certification, which are finished with 4 NVIDIA RTX 3080 GPUs and CPUs. The most computationally intensive procedure is sampling. For $\sigma=0.50$ base classifiers, it takes about $5$s, $200$s to sample $50000$ times under noise distributions on CIFAR10, ImageNet with one GPU. Given that we uniformly pick $1000$ data points from each dataset, one sampling procedure takes around $1$ hour, $1$ day for $50000$ noises on CIFAR10, ImageNet respectively with one GPU. Naturally, the sampling time almost doubles for $100000$ noises. 

The computation for certification only relies on CPUs. The running time for certification includes NP certification and DSRS certification, and is basically constant for different datasets. Usually, it takes 1-2 days to complete. The overall computational time for standard DSRS certification is strictly larger than that for pure NP certification, with respect to a specific number of samples. For instance, if we compute a certificate for 100000 noises, we need one NP certification for 100000 noises, while we need one NP certification for 50000 noises and one DSRS certification for the remaining 50000 noises to finish the DSRS computation. Generally, the computational time for standard DSRS certification is one to two times that of pure NP certification.

\subsection{Transferability of the incremental effect}
We show results for real-world datasets on different base classifiers in this section. In Table \ref{transfer_inc}, we observe that $\eta=8.0$ shows better performance than General Gaussian ($\eta=2.0$) overall. This alludes to some defects in the current training method for General Gaussian because intuitively, if the models are trained by General Gaussian, then sampling with General Gaussian should have provided the best certified results among all $\eta$, but we find the incremental effect with $\eta$ still exists in General-Gaussian-augmented models.  We name the base classifiers by \{training method\}-\{distribution\} in experimental results for ACR. In all the experiments in this subsection, we set the training method $\in$ \{StdAug (standard augmentation), Consistency, SmoothMix\}, the training distribution $\in$ \{GS (Gaussian), GGS (General Gaussian)\}. We use $\sigma=0.50$ for all the training and sampling distributions.

\begin{table}[t!]
\tiny
\centering
\caption{The incremental effect with $\eta$ on different base classifiers.}\label{transfer_inc}
 \label{sim_imgnet}
 \resizebox{\textwidth}{!}{
\begin{tabular}{ccccc cccc ccccc} 
\toprule 
\multirow{2}{*}{Dataset}& \multirow{2}{*}{Model}&\multirow{2}{*}{$\eta$}&\multicolumn{10}{c}{Certified Accuracy at $r$} & \multirow{2}{*}{$ACR$} \\
\cline{4-13}
&&&0.15&0.30&0.45&0.60&0.75&0.90&1.05&1.20&1.35&1.50\\
\hline
\multirow{8}{*}{CIFAR-10}&\multirow{2}{*}{StdAug-GS}&2.0(\citep{li2022}, the SOTA)&56.9\% &        48.0\% &        40.9\% &        33.5\% &        26.4\% &        19.5\% &        13.8\% &        10.6\% &        7.4\% &         3.3\% & 0.437\\
&&8.0&58.1\% &        50.8\% &        42.8\% &        35.9\% &        30.6\% &        23.2\% &        18.1\% &        13.7\% &        10.1\% &        7.3\%  & 0.489\\
&\multirow{2}{*}{StdAug-GGS}&2.0(\citep{li2022}, the SOTA)&58.3\% &        51.7\% &        44.5\% &        38.1\% &        29.3\% &        22.7\% &        17.7\% &        13.1\% &        8.3\% &         3.8\% &0.480\\
&&8.0&58.2\% &        52.1\% &        44.4\% &        38.8\% &        30.9\% &        24.1\% &        19.1\% &        14.4\% &        10.8\% &        6.8\% &0.502\\
&\multirow{2}{*}{Consistency-GGS}&2.0(\citep{li2022}, the SOTA)&52.0\% &        48.8\% &        44.7\% &        42.1\% &        38.5\% &        36.0\% &        32.9\% &        28.6\% &        24.0\% &        19.6\% &0.618\\
&&8.0&51.7\% &        48.6\% &        44.7\% &        42.2\% &        38.8\% &        36.1\% &        34.1\% &        29.9\% &        26.8\% &        22.3\% &0.650\\
&\multirow{2}{*}{SmoothMix-GGS}&2.0(\citep{li2022}, the SOTA)&55.8\% &        52.3\% &        49.1\% &        45.3\% &        41.7\% &        37.7\% &        34.6\% &        30.0\% &        26.3\% &        21.2\% &0.662\\
&&8.0&55.5\% &        52.1\% &        49.1\% &        45.6\% &        42.0\% &        38.2\% &        35.2\% &        31.7\% &        27.8\% &        24.3\% &0.695\\
\hline
\multirow{2}{*}{ImageNet}&\multirow{2}{*}{StdAug-GGS}&2.0(\citep{li2022}, the SOTA)&56.7\% &        52.8\% &        49.4\% &        45.7\% &        41.5\% &        38.1\% &        34.0\% &        30.6\% &        25.4\% &        19.4\% &0.654\\
&&8.0&56.7\% &        52.9\% &        49.8\% &        46.3\% &        42.9\% &        39.3\% &        35.8\% &        32.7\% &        29.1\% &        24.6\% &0.703\\

\bottomrule 
\end{tabular}
}
\end{table}

\section{Numerical simulation}\label{apspns}
We also conduct a numerical simulation to explore the effects of EGG distributions on currently unattainable $A, B$ pairs from Problem (\ref{dualdsrs}).
We only show simulative experiments for the EGG distribution since we observe great monotonicity for certified robustness (and/or certified radius) \wrt $\eta$ from EGG, which does not occur in ESG.
We consider two cases: $B=1$ and $B < 1$ for $B$ in Problem (\ref{dualdsrs}). 

\textbf{Results and analyses.} (1) $B = 1$. This is actually an ideal case since it is impossible to train such a base classifier. Here we still present the results, as they demonstrate the theoretical performance of EGG distributions. From Figure \ref{main_sim} (left), we observe there is a monotonically decreasing tendency in the certified radius \wrt $\eta$, which seems contradictory to the results on real-world datasets. 
This is partly because the concentration assumption (\ie, $B=1$) is more friendly to the smaller $\eta$. 
In fact, the major mass of the EGG distribution with smaller $\eta$ gathers near $0$. This makes smaller $\eta$ more sensitive to relaxation of concentration assumption (see Figure \ref{figrelax}), which leads to worse performance on real-world datasets compared to larger $\eta$. (2) $B < 1$. As shown in Table \ref{sim_imgnet}, the monotonically increasing tendency \wrt $\eta$ continues to exist, manifesting the essential superiority of the large $\eta$ EGG distribution in the DSRS framework. We also notice that the increase is not endless, since $\eta=64.0$ shows a marginal increment to that of $\eta=32.0$ (see also Figure \ref{main_sim} (right)), which may imply some convergence due to the extremely slow growth. We do not show results for larger $\eta$ due to floating-point limitations and low necessity based on our observation. For all the distributions used for numerical simulation, we set $\sigma=1.0$ and $k=\frac{d}{2} - 5$ for fair comparison. 

\subsection{$B=1$ settings}
We need to modify Theorem \ref{thegg} slightly to get certified results. We see
\begin{equation}
\begin{aligned}
B = 1&\Longleftrightarrow C_g\mathbb{E}_{u \sim \Gamma(\frac{d-2k}{\eta}, 1)}\ \omega_1(u, \nu_1 + C_g\nu_2) \cdot \mathds{1}_{u\leq \frac{T^\eta}{2\sigma_g^\eta}} = 1\\
 &\Longleftrightarrow \nu_1 + C_g\nu_2 \to -\infty,
\end{aligned}
\end{equation}
which makes 
\begin{equation}
\omega_2(u) = \Psi_{\frac{d-1}{2}}\left(\frac{T^2- (t - \rho) ^ 2}{4\rho t}\right).
\end{equation}

Injecting $\omega_2(u)$ into Theorem \ref{thegg}, we can compute the certified radius as in general cases. In $B=1$ experiments, $T$ is set to $\sqrt{2\Lambda_{\frac{d}{2}}(0.5)}$ due to the assumption of the $(1, 0.5, 2)$-concentration property.

\subsection{$B<1$ settings}
Theorem \ref{thegg} can be directly used in computing the certified radius for this case. The values set for $A$ and $B$ are not completely random as there is an inherent constraint~\citep{li2022}:
\begin{equation}
\left\{\begin{aligned}&\frac{B}{C_g} \leq A \leq 1 - \frac{1 - B}{C_g},\\
&0 \leq B \leq 1.
\end{aligned}
\right.
\end{equation}
This constraint should also be considered when setting $A$ in case $B=1$. For each EGG distribution, we set $C_g=2$, meaning $T=\sigma_g(2\Lambda^{-1}_{\frac{d-2k}{\eta}}(0.5))^{\frac{1}{\eta}}$ for Figure \ref{main_sim} (right) and Table \ref{sim_imgnet}.

\begin{table}[!tb]
\scriptsize
\centering
\caption{Numerical simulation for EGG distributions (metric: certified radius).}
 \label{sim_imgnet}
\begin{tabular}{ccccccccccccc} 
\toprule 
\multirow{2}{*}{$\eta$}& \multicolumn{1}{c}{A} &\multicolumn{4}{c}{0.6} & \multicolumn{4}{c}{0.7} & \multicolumn{3}{c}{0.8}\\
\cmidrule(lr){2-2}\cmidrule(lr){3-6}\cmidrule(lr){7-10} \cmidrule(lr){11-13}
&\multicolumn{1}{c}{B}&0.6 & 0.7 & 0.8 & 0.9 & 0.6 & 0.7 & 0.8 & 0.9 & 0.7 & 0.8 & 0.9 \\
\hline
0.5 && 0.188& 0.194& 0.216& 0.273& 0.408& 0.391& 0.407& 0.471& 0.678& 0.632& 0.675\\
1.0 && 0.218& 0.225& 0.251& 0.320& 0.471& 0.451& 0.470& 0.546& 0.778& 0.726& 0.776\\
2.0 && 0.234& 0.242& 0.271& 0.346& 0.506& 0.485& 0.505& 0.589& 0.836& 0.779& 0.833\\
4.0 && 0.243& 0.251& 0.281& 0.360& 0.525& 0.502& 0.524& 0.611& 0.867& 0.807& 0.863\\
8.0 && 0.247& 0.255& 0.286& 0.367& 0.534& 0.511& 0.533& 0.622& 0.882& 0.821& 0.878\\
16.0 &&  0.249& 0.257& 0.288& 0.370& 0.538& 0.515& 0.537&  0.627&  0.889&  0.827&  0.885\\
32.0 && 0.250& 0.258& 0.289& 0.371& 0.540& 0.517& 0.539& 0.629& 0.893& 0.830& 0.887\\
64.0 && 0.250& 0.258& 0.290& 0.371& 0.541& 0.518& 0.539& 0.629& 0.894& 0.831& 0.888\\
\hline
Increase (64.0 to 2.0)&& 6.8\%& 6.6\%& 7.0\%& 7.2\%& 6.9\%& 6.8\%& 6.7\%& 6.8\%& 6.9\%& 6.7\%& 6.6\%\\

\bottomrule
\end{tabular}
\end{table}

\begin{figure}[t!]
	\centering
	\begin{subfigure}{0.39\linewidth}
		\centering
		\includegraphics[height=0.75\linewidth, width=1.0\linewidth]{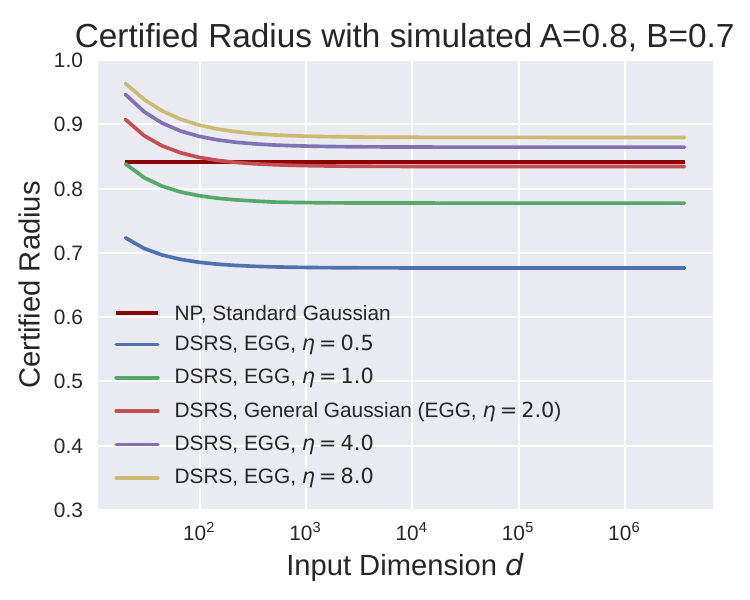}
		\caption{Certified radius \vs $d$, $A=0.8, B=0.7$.}\label{figdim}
	\end{subfigure}
	\centering
	\begin{subfigure}{0.59\linewidth}
		\centering
		\includegraphics[height=0.5\linewidth, width=1.0\linewidth]{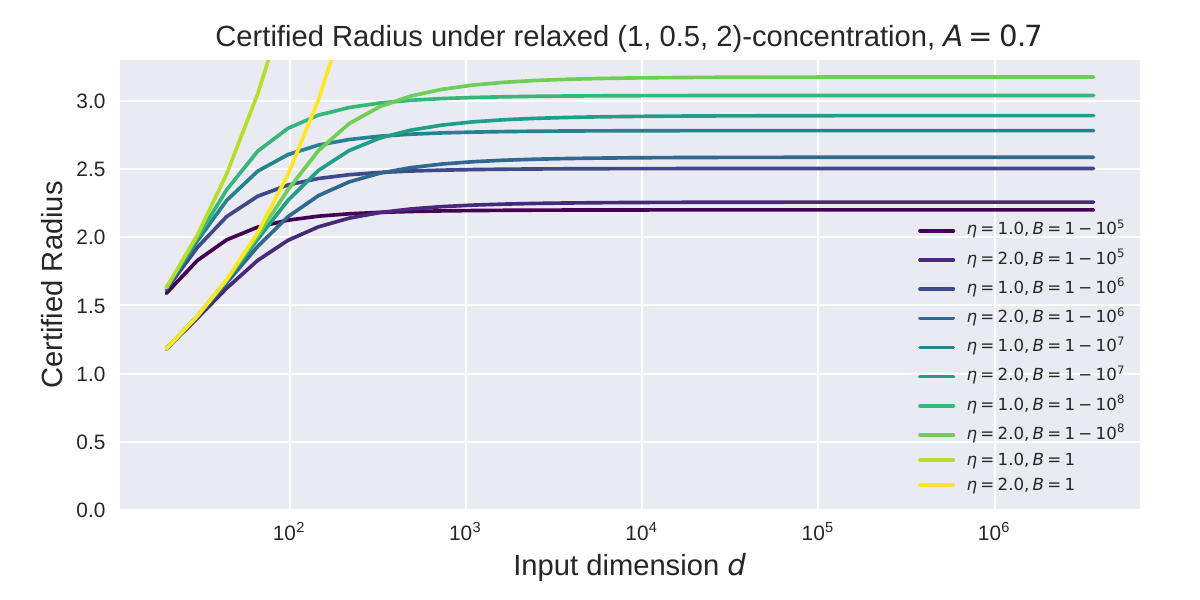}
		\caption{Certified radius with $\eta=1$ \vs $\eta=2$ under relaxed concentration assumption.}\label{figrelax}
	\end{subfigure}
	\caption{Results for numerical simulation.}
 \label{num_sim}
\end{figure}

\subsection{The effect of dimension}
The results in Table \ref{sim_imgnet} almost do not change on large $d$. That is, our results that certified radius increases monotonically with $\eta$ are general for common datasets like CIFAR-10 and ImageNet. See Figure \ref{figdim}. 

\subsection{Different sensitivity to relaxation for $\eta$} 
We observe contrary monotonicity for certified radius $\wrt$ $\eta$ under $B=1$ and $B<1$. A direct reason is that small $\eta$ in EGG is more susceptible to relaxation of $B=1$. We set $T=\sqrt{2\Lambda_{\frac{d}{2}}(0.5)}$ for $B<1$ curves in this figure for fair comparison with $B=1$ ones. Figure \ref{figrelax} demonstrates that the smaller $\eta$ performs better when $B=1$, while performs worse when $B<1$ than the larger $\eta$.  This indicates that the smaller $\eta$ suffers more from the relaxation of the concentration assumption than the larger $\eta$. 

\section{Comparison to other work}
\citet{li2022} is one of the cornerstones of this work. To the best of our knowledge, we are the first to generalize the DSRS certification systematically. For NP certification, though many attempts in the community have investigated the interrelationship between the smoothing distribution and certified radius, nobody has shown results as we do.~\citet{yang2020} trained models for each smoothing distribution, while we fixed the base classifier for the convenience of comparison. They also did not fully consider the distributions we use.~\citet{kumar2020} showed similar results to us for ESG in their Figure 5, but they only showed sampling results due to the lack of computing method, and they also trained base models for each distribution like~\citet{yang2020}. ~\citet{zhang2020} showed results for $\eta=2$ EGG distribution under a fixed-model setting, but they didn't consider $\eta\neq2$ cases. 

\section{Supplementary for experimental results}\label{apper}
\subsection{Certified radius at $r$ for Consistency~\citep{jeong2020} and SmoothMix~\citep{jeong2021} models, maximum results}\label{supex}
Table \ref{best_acc_consis} and Table \ref{best_acc_mix} show the experimental results for certified accuracy at radius $r$. Each data is the \textbf{maximum} one among base classifiers with $\sigma\in\{0.25, 0.50, 1.00\}$. All the base classifiers in both tables are trained under the General Gaussian distribution, by Consistency~\citep{jeong2020} and SmoothMix~\citep{jeong2021} respectively. From the tables, we can see the rule observed on classifiers augmented standardly continues to exist, that certified accuracy increases monotonically with the $\eta$ of EGG, and stays almost constant with the $\eta$ of ESG.
\begin{table}[t!] 
\centering
\caption{Maximum certified accuracy \wrt $\sigma$, Consistency models}\label{best_acc_consis}
\resizebox{1.0\linewidth}{!}{%
\begin{tabular}{ccccc ccccc ccccc c} 
\toprule 
\multirow{2}{*}{Dataset} & \multirow{2}{*}{Method} &\multicolumn{14}{c}{Certified accuracy at $r$}\\
\cline{3-16}
&&  0.25  &  0.50  &  0.75  &  1.00  &  1.25  &  1.50  &  1.75  &  2.00  &  2.25  &  2.50  &  2.75  &  3.00  &  3.25  &  3.50 \\
\hline
\multirow{4}{*}{CIFAR10}& EGG, $\eta=1.0$ &62.1\% &50.7\% &38.2\% &33.1\% &24.2\% &19.2\% &16.7\% &14.3\% &11.3\% &9.2\% &6.6\% &4.1\% &1.4\% &0.0\% \\
& EGG, $\eta=2.0$ &62.5\% &52.0\% &38.5\% &34.4\% &27.4\% &20.6\% &17.0\% &14.7\% &12.7\% &10.5\% &8.5\% &6.3\% &3.9\% &2.5\%\\
& EGG, $\eta=4.0$ &62.5\% &52.2\% &39.1\% &35.4\% &28.3\% &21.1\% &17.5\% &15.3\% &13.0\% &10.9\% &9.2\% &7.0\% &5.2\% &3.1\%   \\
& EGG, $\eta=8.0$ &62.5\% &52.6\% &40.4\% &35.3\% &28.6\% &22.3\% &17.6\% &15.5\% &13.2\% &11.3\% &9.6\% &7.8\% &5.8\% &3.9\%  \\
\hline
\multirow{4}{*}{CIFAR10}& ESG, $\eta=1.0$ &62.6\% &52.9\% &41.6\% &35.5\% &29.3\% &23.7\% &17.7\% &15.8\% &13.7\% &11.8\% &10.0\% &8.8\% &6.8\% &4.6\% \\
& ESG, $\eta=2.0$ &62.7\% &53.0\% &41.4\% &35.5\% &29.4\% &24.0\% &17.7\% &16.0\% &13.8\% &11.9\% &10.1\% &8.7\% &6.7\% &4.4\% \\
& ESG, $\eta=4.0$ &62.7\% &52.9\% &41.6\% &35.5\% &29.5\% &23.8\% &17.8\% &15.7\% &13.9\% &11.9\% &10.2\% &8.8\% &6.7\% &4.8\% \\
& ESG, $\eta=8.0$ &62.7\% &52.9\% &41.7\% &35.5\% &29.1\% &23.8\% &17.6\% &15.8\% &13.6\% &11.8\% &10.1\% &8.3\% &6.7\% &4.8\% \\
\bottomrule
\end{tabular}
}
\end{table}

\begin{table}[t!] 
\centering
\caption{Maximum certified accuracy \wrt $\sigma$, SmoothMix models}\label{best_acc_mix}
\resizebox{1.0\linewidth}{!}{%
\begin{tabular}{ccccc ccccc ccccc c} 
\toprule 
\multirow{2}{*}{Dataset} & \multirow{2}{*}{Method} &\multicolumn{14}{c}{Certified accuracy at $r$}\\
\cline{3-16}
&&  0.25  &  0.50  &  0.75  &  1.00  &  1.25  &  1.50  &  1.75  &  2.00  &  2.25  &  2.50  &  2.75  &  3.00  &  3.25  &  3.50 \\
\hline
\multirow{4}{*}{CIFAR10}& EGG, $\eta=1.0$ &63.7\% &53.8\% &40.9\% &34.3\% &26.6\% &21.1\% &17.0\% &14.2\% &10.5\% &7.7\% &4.0\% &1.5\% &0.1\% &0.0\% \\
& EGG, $\eta=2.0$ &64.5\% &55.0\% &41.7\% &35.6\% &28.9\% &21.3\% &18.0\% &15.2\% &12.3\% &9.7\% &6.4\% &3.7\% &1.4\% &0.4\%\\
& EGG, $\eta=4.0$ &64.4\% &55.5\% &43.0\% &35.9\% &29.5\% &23.4\% &18.3\% &15.8\% &12.8\% &10.2\% &7.7\% &4.6\% &2.1\% &0.9\%   \\
& EGG, $\eta=8.0$ &64.7\% &55.7\% &43.9\% &36.2\% &30.1\% &24.3\% &18.6\% &15.8\% &13.2\% &10.6\% &8.0\% &5.4\% &2.7\% &1.3\%  \\
\hline
\multirow{4}{*}{CIFAR10}& ESG, $\eta=1.0$ &64.6\% &56.5\% &46.5\% &36.6\% &31.3\% &25.6\% &19.1\% &16.2\% &13.4\% &11.4\% &8.9\% &6.3\% &4.0\% &1.8\% \\
& ESG, $\eta=2.0$ &64.7\% &56.3\% &45.8\% &36.6\% &31.2\% &25.6\% &18.9\% &16.3\% &13.4\% &11.6\% &8.8\% &6.2\% &4.0\% &1.7\% \\
& ESG, $\eta=4.0$ &64.6\% &56.3\% &46.0\% &36.5\% &31.0\% &25.6\% &19.0\% &16.2\% &13.6\% &11.5\% &9.2\% &6.0\% &3.9\% &1.8\% \\
& ESG, $\eta=8.0$ &64.6\% &56.1\% &46.5\% &36.4\% &31.1\% &25.5\% &18.9\% &16.3\% &13.5\% &11.3\% &9.0\% &6.2\% &4.1\% &1.6\% \\
\bottomrule 
\end{tabular}
}
\end{table}

\begin{figure}[t!]
	\centering
	\begin{subfigure}{0.24\linewidth}
		\centering
		\includegraphics[height=0.8\linewidth, width=1.0\linewidth]{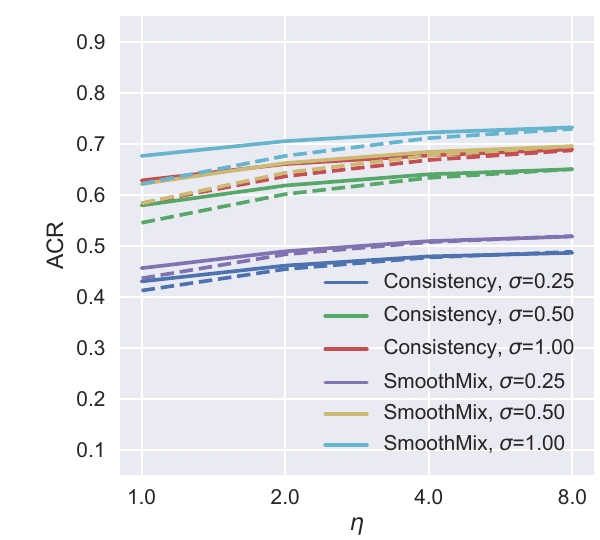}
		\caption{}\label{f11a}
	\end{subfigure}
	\centering
	\begin{subfigure}{0.24\linewidth}
		\centering
		\includegraphics[height=0.8\linewidth, width=1.0\linewidth]{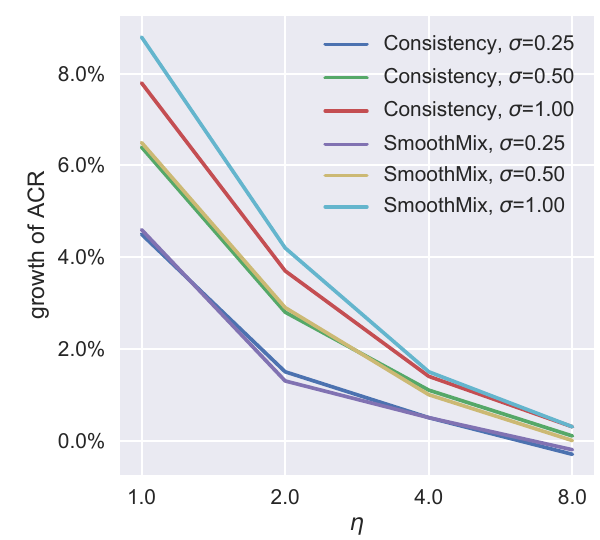}
		\caption{}\label{f11b}
	\end{subfigure}
    \centering
	\begin{subfigure}{0.24\linewidth}
		\centering
		\includegraphics[height=0.8\linewidth, width=1.0\linewidth]{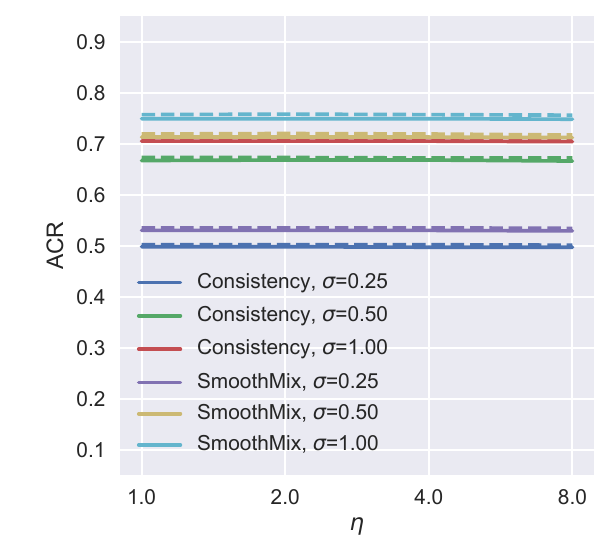}
		\caption{}\label{f11c}
	\end{subfigure}
    \centering
	\begin{subfigure}{0.24\linewidth}
		\centering
		\includegraphics[height=0.8\linewidth, width=1.0\linewidth]{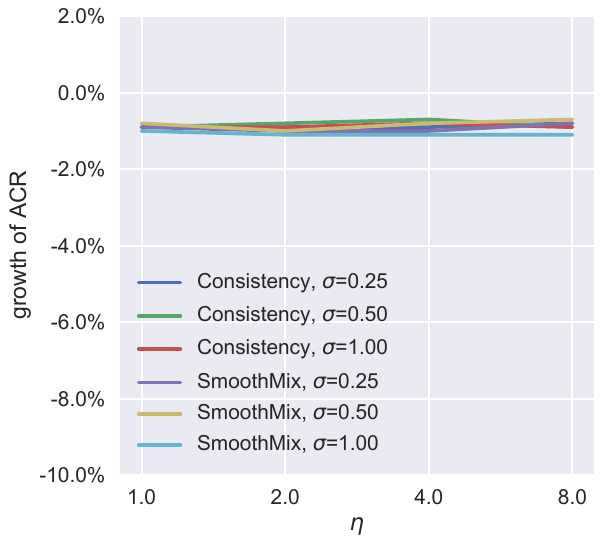}
		\caption{}\label{f11d}
	\end{subfigure}
 \vspace{-3mm}
	\caption{ACR results on Consistency and SmoothMix models. \textbf{(a).} ACR monotonically increases with $\eta$ in EGG. \textbf{(b).} The ACR growth gain from DSRS relative to NP shrinks with $\eta$ in EGG. \textbf{(c).} ACR stays almost constant in ESG. \textbf{(d).} The ACR growth gain from DSRS remains almost constant in ESG. For (a) and (c), solid lines represent results from DSRS, and dotted lines represent results from NP.}
	\label{demo_supp}
 \vspace{-3mm}
\end{figure}

\subsection{Full experimental results for certifications}\label{full_data}
We show full experimental results in this section, where the maximum results for certified accuracy in Table \ref{best_ESG_std}, Table \ref{best_EGG_std}, Table \ref{best_acc_consis} and Table \ref{best_acc_mix} originated from. All the base classifiers in this section are trained under the EGG distribution with $\eta=2$ (the General Gaussian distribution used in DSRS~\citep{li2022}). Our base classifiers are only affected by the dataset and the formal variance. For example, in Table \ref{full_EGG_std_CIFAR10}, all data under $\sigma=0.25$ use the same base classifier, no matter what $\eta$ is. In addition, for $\sigma=0.25$, the base classifiers for EGG and ESG are the same, which guarantees fair comparisons between distinctive distributions for certification. 
\begin{table}[htbp!] 
\centering
\caption{Full experimental results for certified accuracy, standard augmentation, EGG, CIFAR10}\label{full_EGG_std_CIFAR10}
\resizebox{1.0\linewidth}{!}{%
\begin{tabular}{ccccc cccc ccccc cccc} 
\toprule 
\multirow{2}{*}{$\sigma$} & \multirow{2}{*}{$\eta$} & Certification &\multicolumn{14}{c}{Certified accuracy at $r$}\\
\cline{4-17}
& &method &0.25  &  0.50  &  0.75  &  1.00  &  1.25  &  1.50  &  1.75  &  2.00  &  2.25  &  2.50  &  2.75  &  3.00  &  3.25  &  3.50 \\
\hline
\multirow{18}{*}{0.25}& \multirow{3}{*}{0.25} & NP &41.4\%	&0.5\%\\
&&DSRS&52.6\%	&7.9\%\\
&&(Growth)&11.2\% & 7.4\%\\
\cline{2-17}
&\multirow{3}{*}{0.5}&NP&51.6\%	&14.5\%	&0.1\%\\
&&DSRS&55.5\%	&29.5\%\\
&&(Growth)&3.9\%& 15.0\% \\
\cline{2-17}
&\multirow{3}{*}{1.0}&NP&55.6\%	&29.1\%	&5.5\% \\
&&DSRS&56.3\%	&36.5\%	&9.3\%\\
&&(Growth)&0.7\%& 7.4\%& 3.8\% \\
\cline{2-17}
&\multirow{3}{*}{2.0}&NP&56.2\%	&35.7\%	&13.4\%\\
&&DSRS&56.7\%	&38.4\%	&16.9\%\\
&&(Growth)&0.5\%& 2.7\%& 3.5\% \\
\cline{2-17}
&\multirow{3}{*}{4.0}&NP&57.3\%	&38.5\%	&18.5\%\\
&&DSRS&57.5\%	&39.3\%	&20.0\%\\
&&(Growth)&0.2\%& 0.8\%& 1.5\% \\
\cline{2-17}
&\multirow{3}{*}{8.0}&NP&57.5\%	&39.2\%	&21.2\%\\
&&DSRS&57.6\%	&40.1\%	&22.0\%\\
&&(Growth)&0.1\%& 0.9\%& 0.8\% \\
\hline
\multirow{18}{*}{0.50}& \multirow{3}{*}{0.25} & NP &50.9\%	&23.7\%	&3.9\%	&0.1\%	\\
&&DSRS&54.2\%	&37.6\%	&16.4\%	&2.0\%\\
&&(Growth)&3.3\%& 13.9\%& 12.5\%& 1.9\%\\
\cline{2-17}
&\multirow{3}{*}{0.5}&NP&53.0\%	&34.6\%	&16.6\%	&4.5\%	&0.3\%\\
&&DSRS&53.6\%	&40.4\%	&25.2\%	&12.9\%	&2.5\%\\
&&(Growth)&0.6\%& 5.8\%& 8.6\%& 8.4\%& 2.2\% \\
\cline{2-17}
&\multirow{3}{*}{1.0}&NP&53.7\%	&39.8\%	&23.1\%	&12.4\%	&3.6\%	&0.7\% \\
&&DSRS&54.1\%	&41.7\%	&28.2\%	&17.3\%	&9.2\%	&1.8\%\\
&&(Growth)&0.4\%& 1.9\%& 5.1\%& 4.9\%& 5.6\%& 1.1\% \\
\cline{2-17}
&\multirow{3}{*}{2.0}&NP&53.8\%	&41.2\%	&27.9\%	&17.0\%	&8.9\%	&2.9\%	&0.1\%\\
&&DSRS&54.0\%	&42.4\%	&29.3\%	&19.4\%	&11.6\%	&3.8\%	&0.6\%\\
&&(Growth)&0.2\%& 1.2\%& 1.4\%& 2.4\%& 2.7\%& 0.9\%& 0.5\% \\
\cline{2-17}
&\multirow{3}{*}{4.0}&NP&53.9\%	&42.2\%	&29.7\%	&19.0\%	&11.3\%	&4.7\%	&1.5\%\\
&&DSRS&54.0\%	&42.5\%	&30.0\%	&20.2\%	&12.8\%	&5.8\%	&1.5\%\\
&&(Growth)&0.1\%& 0.3\%& 0.3\%& 1.2\%& 1.5\%& 1.1\%& 0.0\%& \\
\cline{2-17}
&\multirow{3}{*}{8.0}&NP&54.0\%	&42.6\%	&30.0\%	&20.0\%	&12.7\%	&6.8\%	&2.3\%\\
&&DSRS&54.2\%	&42.5\%	&30.9\%	&20.6\%	&13.4\%	&6.8\%	&1.8\%\\
&&(Growth)&0.2\%&-0.1\%& 0.9\%& 0.6\%& 0.7\%& 0.0\%&-0.5\% \\
\hline
\multirow{18}{*}{1.00}& \multirow{3}{*}{0.25} & NP &40.6\%	&28.7\%	&18.0\%	&9.0\%	&3.1\%	&0.4\%	\\
&&DSRS&41.8\%	&32.6\%	&23.5\%	&16.5\%	&9.4\%	&4.5\%	&0.5\%	&0.1\%\\
&&(Growth)&1.2\%& 3.9\%& 5.5\%& 7.5\%& 6.3\%& 4.1\%\\
\cline{2-17}
&\multirow{3}{*}{0.5}&NP&40.9\%	&31.8\%	&21.8\%	&15.7\%	&9.1\%	&4.9\%	&1.6\%	&0.2\%	&0.1\%\\
&&DSRS&41.1\%	&33.6\%	&24.7\%	&19.1\%	&13.4\%	&8.5\%	&5.5\%	&2.0\%	&0.4\%	&0.1\%\\
&&(Growth)&0.2\%& 1.8\%& 2.9\%& 3.4\%& 4.3\%& 3.6\%& 3.9\%& 1.8\%& 0.3\%& \\
\cline{2-17}
&\multirow{3}{*}{1.0}&NP&40.4\%	&32.4\%	&24.0\%	&17.7\%	&12.6\%	&8.5\%	&5.0\%	&2.2\%	&0.5\%	&0.1\%	&0.1\% \\
&&DSRS&40.2\%	&33.2\%	&25.5\%	&20.0\%	&15.1\%	&10.5\%	&7.1\%	&4.2\%	&1.9\%	&0.9\%	&0.1\%\\
&&(Growth)&-0.2\%& 0.8\%& 1.5\%& 2.3\%& 2.5\%& 2.0\%& 2.1\%& 2.0\%& 1.4\%& 0.8\%& 0.0\% \\
\cline{2-17}
&\multirow{3}{*}{2.0}&NP&40.2\%	&32.6\%	&24.6\%	&18.9\%	&15.0\%	&10.1\%	&7.4\%	&4.0\%	&2.0\%	&0.7\%	&0.1\%	&0.1\%\\
&&DSRS&40.2\%	&32.8\%	&25.5\%	&20.2\%	&15.7\%	&11.5\%	&8.0\%	&5.5\%	&2.6\%	&1.5\%	&0.6\%	&0.1\%\\
&&(Growth)&0.0\%& 0.2\%& 0.9\%& 1.3\%& 0.7\%& 1.4\%& 0.6\%& 1.5\%& 0.6\%& 0.8\%& 0.5\%& 0.0\% \\
\cline{2-17}
&\multirow{3}{*}{4.0}&NP&40.0\%	&32.7\%	&25.2\%	&19.7\%	&15.5\%	&11.4\%	&8.2\%	&5.3\%	&3.0\%	&1.5\%	&0.6\%	&0.1\%	&0.1\%\\
&&DSRS&39.6\%	&32.7\%	&25.7\%	&20.2\%	&15.9\%	&12.2\%	&8.5\%	&6.5\%	&3.4\%	&1.8\%	&0.9\%	&0.4\%\\
&&(Growth)&-0.4\%& 0.0\%& 0.5\%& 0.5\%& 0.4\%& 0.8\%& 0.3\%& 1.2\%& 0.4\%& 0.3\%& 0.3\%& 0.3\% \\
\cline{2-17}
&\multirow{3}{*}{8.0}&NP&39.7\%	&32.5\%	&25.5\%	&20.2\%	&15.7\%	&12.1\%	&8.5\%	&6.3\%	&3.4\%	&2.0\%	&0.9\%	&0.5\%	&0.2\%\\
&&DSRS&39.5\%	&32.6\%	&25.5\%	&20.2\%	&15.8\%	&12.3\%	&8.6\%	&6.6\%	&3.7\%	&2.1\%	&1.1\%	&0.5\%	&0.2\%\\
&&(Growth)&-0.2\%& 0.1\%& 0.0\%& 0.0\%& 0.1\%& 0.2\%& 0.1\%& 0.3\%& 0.3\%& 0.1\%& 0.2\%& 0.0\%& 0.0\% \\
\bottomrule
\end{tabular}
}
\end{table}

\begin{table}[htbp!] 
\centering
\caption{Full experimental results for certified accuracy, standard augmentation, EGG, ImageNet}\label{full_EGG_std_ImageNet}
\resizebox{1.0\linewidth}{!}{%
\begin{tabular}{ccccc cccc ccccc cccc} 
\toprule
\multirow{2}{*}{$\sigma$} & \multirow{2}{*}{$\eta$} & Certification &\multicolumn{14}{c}{Certified accuracy at $r$}\\
\cline{4-17}
& &method &0.25  &  0.50  &  0.75  &  1.00  &  1.25  &  1.50  &  1.75  &  2.00  &  2.25  &  2.50  &  2.75  &  3.00  &  3.25  &  3.50 \\
\hline
\multirow{18}{*}{0.25}&\multirow{3}{*}{0.25}&NP	&9.9\%	\\
&&DSRS	&45.8\%	\\
&&(Growth)	& 35.9\%\\
\cline{2-17}
&\multirow{3}{*}{0.5}&NP	&44.6\%	&0.6\%	\\
&&DSRS	&54.9\%	&9.7\%	\\
&&(Growth)	& 10.3\%& 9.1\%\\
\cline{2-17}
&\multirow{3}{*}{1.0}&NP	&54.0\%	&21.1\%	&0.6\%	\\
&&DSRS	&57.0\%	&39.3\%	&1.4\%	\\
&&(Growth)	& 3.0\%& 18.2\%& 0.8\%\\
\cline{2-17}
&\multirow{3}{*}{2.0}&NP	&57.1\%	&41.7\%	&17.6\%	\\
&&DSRS	&58.4\%	&47.9\%	&24.1\%	\\
&&(Growth)	& 1.3\%& 6.2\%& 6.5\%\\
\cline{2-17}
&\multirow{3}{*}{4.0}&NP	&58.4\%	&48.0\%	&33.6\%	\\
&&DSRS	&58.7\%	&49.9\%	&36.2\%	\\
&&(Growth)	& 0.3\%& 1.9\%& 2.6\%\\
\cline{2-17}
&\multirow{3}{*}{8.0}&NP	&58.7\%	&50.0\%	&37.8\%	\\
&&DSRS	&59.1\%	&50.8\%	&38.7\%	\\
&&(Growth)	& 0.4\%& 0.8\%& 0.9\%\\
\cline{1-17}
\multirow{18}{*}{0.50}&\multirow{3}{*}{0.25}&NP	&49.7\%	&26.7\%	&0.1\%	\\
&&DSRS	&53.8\%	&41.4\%	&14.8\%	\\
&&(Growth)	& 4.1\%& 14.7\%& 14.7\%\\
\cline{2-17}
&\multirow{3}{*}{0.5}&NP	&52.1\%	&40.6\%	&26.4\%	&8.3\%	&0.1\%	\\
&&DSRS	&54.4\%	&46.3\%	&36.4\%	&22.5\%	&2.9\%	\\
&&(Growth)	& 2.3\%& 5.7\%& 10.0\%& 14.2\%& 2.8\%\\
\cline{2-17}
&\multirow{3}{*}{1.0}&NP	&52.8\%	&44.8\%	&35.7\%	&26.2\%	&16.1\%	&6.3\%	\\
&&DSRS	&54.2\%	&47.8\%	&39.9\%	&32.8\%	&22.9\%	&8.9\%	\\
&&(Growth)	& 1.4\%& 3.0\%& 4.2\%& 6.6\%& 6.8\%& 2.6\%\\
\cline{2-17}
&\multirow{3}{*}{2.0}&NP	&53.4\%	&47.0\%	&39.4\%	&33.3\%	&24.5\%	&17.4\%	&8.4\%	\\
&&DSRS	&53.8\%	&48.5\%	&41.5\%	&35.2\%	&28.9\%	&19.4\%	&11.3\%	\\
&&(Growth)	& 0.4\%& 1.5\%& 2.1\%& 1.9\%& 4.4\%& 2.0\%& 2.9\%\\
\cline{2-17}
&\multirow{3}{*}{4.0}&NP	&53.3\%	&47.7\%	&41.3\%	&35.2\%	&29.3\%	&21.3\%	&14.0\%	\\
&&DSRS	&53.6\%	&48.6\%	&42.6\%	&36.4\%	&31.0\%	&22.8\%	&14.4\%	\\
&&(Growth)	& 0.3\%& 0.9\%& 1.3\%& 1.2\%& 1.7\%& 1.5\%& 0.4\%\\
\cline{2-17}
&\multirow{3}{*}{8.0}&NP	&53.3\%	&48.3\%	&42.2\%	&36.5\%	&31.1\%	&24.0\%	&16.9\%	\\
&&DSRS	&53.6\%	&48.8\%	&42.9\%	&36.8\%	&31.8\%	&24.6\%	&16.5\%	\\
&&(Growth)	& 0.3\%& 0.5\%& 0.7\%& 0.3\%& 0.7\%& 0.6\%&-0.4\%\\
\cline{1-17}
\multirow{18}{*}{1.00}&\multirow{3}{*}{0.25}&NP	&38.8\%	&29.8\%	&15.7\%	&3.3\%	\\
&&DSRS	&41.0\%	&35.3\%	&28.4\%	&20.1\%	&7.1\%	&0.8\%	\\
&&(Growth)	& 2.2\%& 5.5\%& 12.7\%& 16.8\%\\
\cline{2-17}
&\multirow{3}{*}{0.5}&NP	&40.3\%	&34.1\%	&26.4\%	&19.0\%	&10.7\%	&4.1\%	&1.4\%	&0.1\%	\\
&&DSRS	&40.9\%	&37.0\%	&31.6\%	&26.3\%	&22.1\%	&15.2\%	&8.7\%	&3.1\%	&0.8\%	\\
&&(Growth)	& 0.6\%& 2.9\%& 5.2\%& 7.3\%& 11.4\%& 11.1\%& 7.3\%& 3.0\%\\
\cline{2-17}
&\multirow{3}{*}{1.0}&NP	&41.7\%	&36.4\%	&30.7\%	&25.5\%	&20.9\%	&15.3\%	&10.5\%	&6.3\%	&3.3\%	&1.7\%	&0.7\%	&0.2\%	\\
&&DSRS	&42.0\%	&37.9\%	&33.4\%	&29.3\%	&24.9\%	&22.0\%	&18.5\%	&13.1\%	&9.2\%	&5.0\%	&2.1\%	&0.5\%	\\
&&(Growth)	& 0.3\%& 1.5\%& 2.7\%& 3.8\%& 4.0\%& 6.7\%& 8.0\%& 6.8\%& 5.9\%& 3.3\%& 1.4\%& 0.3\%\\
\cline{2-17}
&\multirow{3}{*}{2.0}&NP	&42.5\%	&37.2\%	&32.8\%	&29.2\%	&24.7\%	&21.4\%	&17.4\%	&13.8\%	&10.1\%	&7.8\%	&5.5\%	&3.3\%	&2.2\%	&1.1\%	\\
&&DSRS	&42.9\%	&38.2\%	&34.3\%	&30.2\%	&26.8\%	&23.3\%	&21.3\%	&18.8\%	&14.1\%	&11.1\%	&8.9\%	&6.1\%	&2.2\%	&1.4\%	\\
&&(Growth)	& 0.4\%& 1.0\%& 1.5\%& 1.0\%& 2.1\%& 1.9\%& 3.9\%& 5.0\%& 4.0\%& 3.3\%& 3.4\%& 2.8\%& 0.0\%& 0.3\%\\
\cline{2-17}
&\multirow{3}{*}{4.0}&NP	&42.2\%	&38.0\%	&33.8\%	&30.8\%	&26.1\%	&23.3\%	&21.1\%	&18.2\%	&13.9\%	&11.2\%	&9.3\%	&8.1\%	&6.0\%	&4.2\%	\\
&&DSRS	&42.5\%	&38.5\%	&34.6\%	&31.3\%	&27.5\%	&23.9\%	&22.3\%	&20.2\%	&17.3\%	&13.2\%	&10.7\%	&9.2\%	&6.8\%	&4.0\%	\\
&&(Growth)	& 0.3\%& 0.5\%& 0.8\%& 0.5\%& 1.4\%& 0.6\%& 1.2\%& 2.0\%& 3.4\%& 2.0\%& 1.4\%& 1.1\%& 0.8\%&-0.2\%\\
\cline{2-17}
&\multirow{3}{*}{8.0}&NP	&42.4\%	&38.1\%	&34.5\%	&31.1\%	&26.9\%	&24.0\%	&22.2\%	&19.9\%	&16.8\%	&12.8\%	&11.0\%	&9.5\%	&8.0\%	&6.1\%	\\
&&DSRS	&42.4\%	&38.4\%	&35.0\%	&31.4\%	&28.0\%	&24.4\%	&22.6\%	&20.7\%	&18.9\%	&14.5\%	&11.7\%	&10.1\%	&8.6\%	&5.2\%	\\
&&(Growth)	& 0.0\%& 0.3\%& 0.5\%& 0.3\%& 1.1\%& 0.4\%& 0.4\%& 0.8\%& 2.1\%& 1.7\%& 0.7\%& 0.6\%& 0.6\%&-0.9\%\\
\bottomrule
\end{tabular}
}
\end{table}

\begin{table}[htbp!] 
\centering
\caption{Full experimental results for certified accuracy, standard augmentation, ESG, CIFAR10}\label{full_ESG_std_CIFAR10}
\resizebox{1.0\linewidth}{!}{%
\begin{tabular}{ccccc cccc ccccc cccc} 
\toprule 
\multirow{2}{*}{$\sigma$} & \multirow{2}{*}{$\eta$} & Certification &\multicolumn{14}{c}{Certified accuracy at $r$}\\
\cline{4-17}
& &method &0.25  &  0.50  &  0.75  &  1.00  &  1.25  &  1.50  &  1.75  &  2.00  &  2.25  &  2.50  &  2.75  &  3.00  &  3.25  &  3.50 \\
\hline
\multirow{12}{*}{0.25}&\multirow{3}{*}{1.0}&NP	&57.8\%	&40.7\%	&25.6\%	\\
&&DSRS	&57.6\%	&40.7\%	&25.1\%	\\
&&(Growth)	&-0.2\%& 0.0\%&-0.5\%\\
\cline{2-17}
&\multirow{3}{*}{2.0}&NP	&57.8\%	&40.8\%	&25.7\%	\\
&&DSRS	&57.6\%	&40.6\%	&25.1\%	\\
&&(Growth)	&-0.2\%&-0.2\%&-0.6\%\\
\cline{2-17}
&\multirow{3}{*}{4.0}&NP	&57.9\%	&40.9\%	&26.0\%	\\
&&DSRS	&57.6\%	&40.6\%	&25.0\%	\\
&&(Growth)	&-0.3\%&-0.3\%&-1.0\%\\
\cline{2-17}
&\multirow{3}{*}{8.0}&NP	&57.9\%	&40.8\%	&25.6\%	\\
&&DSRS	&57.8\%	&40.6\%	&24.9\%	\\
&&(Growth)	&-0.1\%&-0.2\%&-0.7\%\\
\cline{1-17}
\multirow{12}{*}{0.50}&\multirow{3}{*}{1.0}&NP	&54.3\%	&42.7\%	&31.7\%	&21.8\%	&14.0\%	&8.7\%	&3.5\%	\\
&&DSRS	&54.2\%	&42.6\%	&31.3\%	&21.5\%	&14.1\%	&8.3\%	&2.7\%	\\
&&(Growth)	&-0.1\%&-0.1\%&-0.4\%&-0.3\%& 0.1\%&-0.4\%&-0.8\%\\
\cline{2-17}
&\multirow{3}{*}{2.0}&NP	&54.3\%	&42.6\%	&31.6\%	&21.7\%	&14.0\%	&8.5\%	&3.6\%	\\
&&DSRS	&54.3\%	&42.6\%	&31.6\%	&21.5\%	&13.9\%	&8.7\%	&2.8\%	\\
&&(Growth)	& 0.0\%& 0.0\%& 0.0\%&-0.2\%&-0.1\%& 0.2\%&-0.8\%\\
\cline{2-17}
&\multirow{3}{*}{4.0}&NP	&54.3\%	&42.7\%	&31.5\%	&21.6\%	&14.3\%	&8.6\%	&3.6\%	\\
&&DSRS	&54.3\%	&42.6\%	&31.3\%	&21.5\%	&13.9\%	&8.2\%	&3.0\%	\\
&&(Growth)	& 0.0\%&-0.1\%&-0.2\%&-0.1\%&-0.4\%&-0.4\%&-0.6\%\\
\cline{2-17}
&\multirow{3}{*}{8.0}&NP	&54.3\%	&42.6\%	&31.7\%	&21.7\%	&14.1\%	&8.7\%	&3.5\%	\\
&&DSRS	&54.4\%	&42.6\%	&31.6\%	&21.6\%	&14.0\%	&8.1\%	&2.4\%	\\
&&(Growth)	& 0.1\%& 0.0\%&-0.1\%&-0.1\%&-0.1\%&-0.6\%&-1.1\%\\
\cline{1-17}
\multirow{12}{*}{1.00}&\multirow{3}{*}{1.0}&NP	&39.6\%	&32.6\%	&26.0\%	&20.5\%	&15.9\%	&13.0\%	&9.2\%	&7.0\%	&4.5\%	&2.5\%	&1.5\%	&0.9\%	&0.5\%	&0.2\%	\\
&&DSRS	&39.6\%	&32.5\%	&25.7\%	&20.4\%	&15.8\%	&12.8\%	&8.6\%	&6.8\%	&4.3\%	&2.3\%	&1.3\%	&0.8\%	&0.3\%	&0.1\%	\\
&&(Growth)	& 0.0\%&-0.1\%&-0.3\%&-0.1\%&-0.1\%&-0.2\%&-0.6\%&-0.2\%&-0.2\%&-0.2\%&-0.2\%&-0.1\%&-0.2\%&-0.1\%\\
\cline{2-17}
&\multirow{3}{*}{2.0}&NP	&39.6\%	&32.6\%	&25.9\%	&20.4\%	&15.9\%	&13.0\%	&9.2\%	&7.0\%	&4.6\%	&2.5\%	&1.3\%	&0.8\%	&0.3\%	&0.2\%	\\
&&DSRS	&39.5\%	&32.6\%	&25.8\%	&20.4\%	&15.8\%	&12.7\%	&8.8\%	&6.8\%	&4.5\%	&2.4\%	&1.3\%	&0.7\%	&0.2\%	&0.2\%	\\
&&(Growth)	&-0.1\%& 0.0\%&-0.1\%& 0.0\%&-0.1\%&-0.3\%&-0.4\%&-0.2\%&-0.1\%&-0.1\%& 0.0\%&-0.1\%&-0.1\%& 0.0\%\\
\cline{2-17}
&\multirow{3}{*}{4.0}&NP	&39.7\%	&32.6\%	&25.9\%	&20.4\%	&15.9\%	&12.9\%	&9.0\%	&7.0\%	&4.6\%	&2.5\%	&1.5\%	&0.8\%	&0.3\%	&0.2\%	\\
&&DSRS	&39.5\%	&32.6\%	&25.8\%	&20.3\%	&15.9\%	&12.9\%	&8.6\%	&6.9\%	&4.3\%	&2.4\%	&1.3\%	&0.8\%	&0.2\%	&0.1\%	\\
&&(Growth)	&-0.2\%& 0.0\%&-0.1\%&-0.1\%& 0.0\%& 0.0\%&-0.4\%&-0.1\%&-0.3\%&-0.1\%&-0.2\%& 0.0\%&-0.1\%&-0.1\%\\
\cline{2-17}
&\multirow{3}{*}{8.0}&NP	&39.6\%	&32.6\%	&25.8\%	&20.3\%	&15.9\%	&12.9\%	&9.2\%	&7.0\%	&4.5\%	&2.5\%	&1.3\%	&0.8\%	&0.3\%	&0.2\%	\\
&&DSRS	&39.7\%	&32.4\%	&25.8\%	&20.4\%	&15.9\%	&12.9\%	&8.9\%	&6.7\%	&4.2\%	&2.4\%	&1.3\%	&0.9\%	&0.2\%	&0.1\%	\\
&&(Growth)	& 0.1\%&-0.2\%& 0.0\%& 0.1\%& 0.0\%& 0.0\%&-0.3\%&-0.3\%&-0.3\%&-0.1\%& 0.0\%& 0.1\%&-0.1\%&-0.1\%\\

\bottomrule 
\end{tabular}
}
\end{table}

\begin{table}[htbp!] 
\centering
\caption{Full experimental results for certified accuracy, standard augmentation, ESG, ImageNet}\label{full_ESG_std_ImageNet}
\resizebox{1.0\linewidth}{!}{%
\begin{tabular}{ccccc cccc ccccc cccc} 
\toprule 
\multirow{2}{*}{$\sigma$} & \multirow{2}{*}{$\eta$} & Certification &\multicolumn{14}{c}{Certified accuracy at $r$}\\
\cline{4-17}
& &method &0.25  &  0.50  &  0.75  &  1.00  &  1.25  &  1.50  &  1.75  &  2.00  &  2.25  &  2.50  &  2.75  &  3.00  &  3.25  &  3.50 \\
\hline
\multirow{12}{*}{0.25}&\multirow{3}{*}{1.0}&NP	&59.6\%	&51.6\%	&42.2\%	\\
&&DSRS	&59.6\%	&51.5\%	&41.6\%	\\
&&(Growth)	& 0.0\%&-0.1\%&-0.6\%\\
\cline{2-17}
&\multirow{3}{*}{2.0}&NP	&59.6\%	&51.7\%	&41.9\%	\\
&&DSRS	&59.6\%	&51.6\%	&41.8\%	\\
&&(Growth)	& 0.0\%&-0.1\%&-0.1\%\\
\cline{2-17}
&\multirow{3}{*}{4.0}&NP	&59.6\%	&51.7\%	&42.0\%	\\
&&DSRS	&59.6\%	&51.5\%	&41.9\%	\\
&&(Growth)	& 0.0\%&-0.2\%&-0.1\%\\
\cline{2-17}
&\multirow{3}{*}{8.0}&NP	&59.6\%	&51.6\%	&42.2\%	\\
&&DSRS	&59.6\%	&51.5\%	&41.5\%	\\
&&(Growth)	& 0.0\%&-0.1\%&-0.7\%\\
\cline{1-17}
\multirow{12}{*}{0.50}&\multirow{3}{*}{1.0}&NP	&53.6\%	&49.3\%	&43.2\%	&38.2\%	&33.0\%	&27.2\%	&21.0\%	\\
&&DSRS	&53.6\%	&49.1\%	&43.2\%	&37.9\%	&33.0\%	&26.8\%	&19.1\%	\\
&&(Growth)	& 0.0\%&-0.2\%& 0.0\%&-0.3\%& 0.0\%&-0.4\%&-1.9\%\\
\cline{2-17}
&\multirow{3}{*}{2.0}&NP	&53.7\%	&49.2\%	&43.2\%	&38.1\%	&33.1\%	&27.3\%	&20.7\%	\\
&&DSRS	&53.6\%	&49.2\%	&43.1\%	&38.0\%	&32.9\%	&26.9\%	&19.2\%	\\
&&(Growth)	&-0.1\%& 0.0\%&-0.1\%&-0.1\%&-0.2\%&-0.4\%&-1.5\%\\
\cline{2-17}
&\multirow{3}{*}{4.0}&NP	&53.6\%	&49.2\%	&43.2\%	&38.1\%	&33.0\%	&27.5\%	&20.8\%	\\
&&DSRS	&53.6\%	&49.2\%	&43.2\%	&38.0\%	&32.9\%	&27.2\%	&19.4\%	\\
&&(Growth)	& 0.0\%& 0.0\%& 0.0\%&-0.1\%&-0.1\%&-0.3\%&-1.4\%\\
\cline{2-17}
&\multirow{3}{*}{8.0}&NP	&53.7\%	&49.2\%	&43.2\%	&38.1\%	&33.1\%	&27.4\%	&21.0\%	\\
&&DSRS	&53.6\%	&49.1\%	&43.2\%	&38.0\%	&33.0\%	&26.8\%	&19.4\%	\\
&&(Growth)	&-0.1\%&-0.1\%& 0.0\%&-0.1\%&-0.1\%&-0.6\%&-1.6\%\\
\cline{1-17}
\multirow{12}{*}{1.00}&\multirow{3}{*}{1.0}&NP	&42.7\%	&39.1\%	&35.4\%	&32.0\%	&29.5\%	&25.3\%	&23.1\%	&21.6\%	&20.0\%	&17.6\%	&13.9\%	&11.7\%	&10.5\%	&9.1\%	\\
&&DSRS	&42.6\%	&38.8\%	&35.3\%	&31.9\%	&28.9\%	&25.3\%	&23.1\%	&21.5\%	&19.9\%	&17.4\%	&13.8\%	&11.5\%	&10.3\%	&7.7\%	\\
&&(Growth)	&-0.1\%&-0.3\%&-0.1\%&-0.1\%&-0.6\%& 0.0\%& 0.0\%&-0.1\%&-0.1\%&-0.2\%&-0.1\%&-0.2\%&-0.2\%&-1.4\%\\
\cline{2-17}
&\multirow{3}{*}{2.0}&NP	&42.6\%	&39.1\%	&35.3\%	&32.1\%	&29.2\%	&25.3\%	&23.2\%	&21.6\%	&20.0\%	&17.6\%	&14.1\%	&11.7\%	&10.5\%	&9.1\%	\\
&&DSRS	&42.5\%	&39.0\%	&35.2\%	&31.7\%	&29.0\%	&25.2\%	&23.1\%	&21.5\%	&19.7\%	&17.4\%	&13.6\%	&11.4\%	&10.1\%	&8.3\%	\\
&&(Growth)	&-0.1\%&-0.1\%&-0.1\%&-0.4\%&-0.2\%&-0.1\%&-0.1\%&-0.1\%&-0.3\%&-0.2\%&-0.5\%&-0.3\%&-0.4\%&-0.8\%\\
\cline{2-17}
&\multirow{3}{*}{4.0}&NP	&42.6\%	&39.1\%	&35.4\%	&32.0\%	&29.1\%	&25.3\%	&23.1\%	&21.6\%	&19.9\%	&17.9\%	&14.5\%	&11.7\%	&10.5\%	&9.1\%	\\
&&DSRS	&42.5\%	&38.8\%	&35.2\%	&31.9\%	&29.3\%	&25.3\%	&23.1\%	&21.6\%	&19.9\%	&17.2\%	&13.6\%	&11.4\%	&10.2\%	&8.0\%	\\
&&(Growth)	&-0.1\%&-0.3\%&-0.2\%&-0.1\%& 0.2\%& 0.0\%& 0.0\%& 0.0\%& 0.0\%&-0.7\%&-0.9\%&-0.3\%&-0.3\%&-1.1\%\\
\cline{2-17}
&\multirow{3}{*}{8.0}&NP	&42.7\%	&39.0\%	&35.3\%	&31.9\%	&29.4\%	&25.3\%	&23.1\%	&21.7\%	&20.0\%	&17.5\%	&13.9\%	&11.5\%	&10.4\%	&9.3\%	\\
&&DSRS	&42.5\%	&38.8\%	&35.3\%	&32.0\%	&29.0\%	&25.2\%	&23.1\%	&21.6\%	&19.7\%	&17.3\%	&13.6\%	&11.5\%	&10.1\%	&8.4\%	\\
&&(Growth)	&-0.2\%&-0.2\%& 0.0\%& 0.1\%&-0.4\%&-0.1\%& 0.0\%&-0.1\%&-0.3\%&-0.2\%&-0.3\%& 0.0\%&-0.3\%&-0.9\%\\

\bottomrule 
\end{tabular}
}
\end{table}

\begin{table}[htbp!] 
\centering
\caption{Full experimental results for certified accuracy, Consistency, EGG, CIFAR10}\label{full_EGG_consis_CIFAR10}
\resizebox{1.0\linewidth}{!}{%
\begin{tabular}{ccccc cccc ccccc cccc} 
\toprule 
\multirow{2}{*}{$\sigma$} & \multirow{2}{*}{$\eta$} & Certification &\multicolumn{14}{c}{Certified accuracy at $r$}\\
\cline{4-17}
& &method &0.25  &  0.50  &  0.75  &  1.00  &  1.25  &  1.50  &  1.75  &  2.00  &  2.25  &  2.50  &  2.75  &  3.00  &  3.25  &  3.50 \\
\hline
\multirow{12}{*}{0.25}&\multirow{3}{*}{1.0}&NP	&61.4\%	&47.0\%	&25.2\%	\\
&&DSRS	&62.1\%	&50.7\%	&30.1\%	\\
&&(Growth)	& 0.7\%& 3.7\%& 4.9\%\\
\cline{2-17}
&\multirow{3}{*}{2.0}&NP	&61.8\%	&50.7\%	&35.1\%	\\
&&DSRS	&62.5\%	&52.0\%	&37.2\%	\\
&&(Growth)	& 0.7\%& 1.3\%& 2.1\%\\
\cline{2-17}
&\multirow{3}{*}{4.0}&NP	&62.3\%	&51.7\%	&38.2\%	\\
&&DSRS	&62.5\%	&52.2\%	&39.1\%	\\
&&(Growth)	& 0.2\%& 0.5\%& 0.9\%\\
\cline{2-17}
&\multirow{3}{*}{8.0}&NP	&62.5\%	&52.2\%	&40.2\%	\\
&&DSRS	&62.5\%	&52.6\%	&40.4\%	\\
&&(Growth)	& 0.0\%& 0.4\%& 0.2\%\\
\cline{1-17}
\multirow{12}{*}{0.50}&\multirow{3}{*}{1.0}&NP	&49.2\%	&43.1\%	&36.3\%	&28.7\%	&19.6\%	&11.6\%	\\
&&DSRS	&49.5\%	&43.7\%	&38.2\%	&33.1\%	&24.2\%	&15.4\%	\\
&&(Growth)	& 0.3\%& 0.6\%& 1.9\%& 4.4\%& 4.6\%& 3.8\%\\
\cline{2-17}
&\multirow{3}{*}{2.0}&NP	&49.3\%	&43.8\%	&37.8\%	&32.3\%	&23.5\%	&18.0\%	&9.6\%	\\
&&DSRS	&49.4\%	&44.1\%	&38.5\%	&34.4\%	&27.4\%	&19.6\%	&11.5\%	\\
&&(Growth)	& 0.1\%& 0.3\%& 0.7\%& 2.1\%& 3.9\%& 1.6\%& 1.9\%\\
\cline{2-17}
&\multirow{3}{*}{4.0}&NP	&49.3\%	&44.0\%	&38.3\%	&34.1\%	&27.4\%	&20.4\%	&14.6\%	\\
&&DSRS	&49.3\%	&44.0\%	&38.7\%	&35.4\%	&28.3\%	&21.1\%	&14.1\%	\\
&&(Growth)	& 0.0\%& 0.0\%& 0.4\%& 1.3\%& 0.9\%& 0.7\%&-0.5\%\\
\cline{2-17}
&\multirow{3}{*}{8.0}&NP	&49.3\%	&44.1\%	&38.7\%	&35.0\%	&28.4\%	&21.9\%	&15.8\%	\\
&&DSRS	&49.3\%	&44.1\%	&38.8\%	&35.3\%	&28.6\%	&22.3\%	&15.2\%	\\
&&(Growth)	& 0.0\%& 0.0\%& 0.1\%& 0.3\%& 0.2\%& 0.4\%&-0.6\%\\
\cline{1-17}
\multirow{12}{*}{1.00}&\multirow{3}{*}{1.0}&NP	&37.2\%	&32.5\%	&29.4\%	&25.2\%	&21.8\%	&17.6\%	&14.4\%	&11.7\%	&8.8\%	&6.3\%	&4.5\%	&2.6\%	&1.8\%	\\
&&DSRS	&37.4\%	&33.0\%	&29.5\%	&26.2\%	&22.8\%	&19.2\%	&16.7\%	&14.3\%	&11.3\%	&9.2\%	&6.6\%	&4.1\%	&1.4\%	\\
&&(Growth)	& 0.2\%& 0.5\%& 0.1\%& 1.0\%& 1.0\%& 1.6\%& 2.3\%& 2.6\%& 2.5\%& 2.9\%& 2.1\%& 1.5\%&-0.4\%\\
\cline{2-17}
&\multirow{3}{*}{2.0}&NP	&37.2\%	&32.6\%	&29.7\%	&25.9\%	&22.4\%	&19.0\%	&16.3\%	&13.9\%	&11.3\%	&8.9\%	&7.2\%	&5.1\%	&3.5\%	&2.2\%	\\
&&DSRS	&37.1\%	&32.3\%	&29.8\%	&26.5\%	&23.0\%	&20.6\%	&17.0\%	&14.7\%	&12.7\%	&10.5\%	&8.5\%	&6.3\%	&3.9\%	&2.5\%	\\
&&(Growth)	&-0.1\%&-0.3\%& 0.1\%& 0.6\%& 0.6\%& 1.6\%& 0.7\%& 0.8\%& 1.4\%& 1.6\%& 1.3\%& 1.2\%& 0.4\%& 0.3\%\\
\cline{2-17}
&\multirow{3}{*}{4.0}&NP	&37.1\%	&32.5\%	&29.8\%	&26.2\%	&22.7\%	&20.3\%	&16.9\%	&14.9\%	&12.4\%	&10.6\%	&8.7\%	&6.7\%	&5.1\%	&3.2\%	\\
&&DSRS	&37.0\%	&32.4\%	&29.8\%	&26.7\%	&23.1\%	&20.9\%	&17.5\%	&15.3\%	&13.0\%	&10.9\%	&9.2\%	&7.0\%	&5.2\%	&3.1\%	\\
&&(Growth)	&-0.1\%&-0.1\%& 0.0\%& 0.5\%& 0.4\%& 0.6\%& 0.6\%& 0.4\%& 0.6\%& 0.3\%& 0.5\%& 0.3\%& 0.1\%&-0.1\%\\
\cline{2-17}
&\multirow{3}{*}{8.0}&NP	&37.1\%	&32.5\%	&29.9\%	&26.4\%	&23.0\%	&20.7\%	&17.2\%	&15.2\%	&13.2\%	&10.9\%	&9.6\%	&7.5\%	&5.9\%	&4.2\%	\\
&&DSRS	&36.7\%	&32.5\%	&29.8\%	&26.6\%	&23.2\%	&20.9\%	&17.6\%	&15.5\%	&13.2\%	&11.3\%	&9.6\%	&7.8\%	&5.8\%	&3.9\%	\\
&&(Growth)	&-0.4\%& 0.0\%&-0.1\%& 0.2\%& 0.2\%& 0.2\%& 0.4\%& 0.3\%& 0.0\%& 0.4\%& 0.0\%& 0.3\%&-0.1\%&-0.3\%\\

\bottomrule 
\end{tabular}
}
\end{table}

\begin{table}[htbp!] 
\centering
\caption{Full experimental results for certified accuracy, Consistency, ESG, CIFAR10}\label{full_ESG_consis_CIFAR10}
\resizebox{1.0\linewidth}{!}{%
\begin{tabular}{ccccc cccc ccccc cccc}
\toprule 
\multirow{2}{*}{$\sigma$} & \multirow{2}{*}{$\eta$} & Certification &\multicolumn{14}{c}{Certified accuracy at $r$}\\
\cline{4-17}
& &method &0.25  &  0.50  &  0.75  &  1.00  &  1.25  &  1.50  &  1.75  &  2.00  &  2.25  &  2.50  &  2.75  &  3.00  &  3.25  &  3.50 \\
\hline
\multirow{12}{*}{0.25}&\multirow{3}{*}{1.0}&NP	&62.7\%	&52.9\%	&41.8\%	\\
&&DSRS	&62.6\%	&52.9\%	&41.6\%	\\
&&(Growth)	&-0.1\%& 0.0\%&-0.2\%\\
\cline{2-17}
&\multirow{3}{*}{2.0}&NP	&62.7\%	&53.0\%	&42.1\%	\\
&&DSRS	&62.7\%	&53.0\%	&41.4\%	\\
&&(Growth)	& 0.0\%& 0.0\%&-0.7\%\\
\cline{2-17}
&\multirow{3}{*}{4.0}&NP	&62.7\%	&53.0\%	&42.0\%	\\
&&DSRS	&62.7\%	&52.9\%	&41.6\%	\\
&&(Growth)	& 0.0\%&-0.1\%&-0.4\%\\
\cline{2-17}
&\multirow{3}{*}{8.0}&NP	&62.7\%	&53.0\%	&42.0\%	\\
&&DSRS	&62.7\%	&52.9\%	&41.7\%	\\
&&(Growth)	& 0.0\%&-0.1\%&-0.3\%\\
\cline{1-17}
\multirow{12}{*}{0.50}&\multirow{3}{*}{1.0}&NP	&49.3\%	&44.1\%	&39.2\%	&35.5\%	&29.7\%	&24.3\%	&18.7\%	\\
&&DSRS	&49.3\%	&44.1\%	&38.9\%	&35.5\%	&29.3\%	&23.7\%	&16.9\%	\\
&&(Growth)	& 0.0\%& 0.0\%&-0.3\%& 0.0\%&-0.4\%&-0.6\%&-1.8\%\\
\cline{2-17}
&\multirow{3}{*}{2.0}&NP	&49.3\%	&44.1\%	&39.2\%	&35.5\%	&29.7\%	&24.1\%	&18.7\%	\\
&&DSRS	&49.3\%	&44.1\%	&38.9\%	&35.5\%	&29.4\%	&24.0\%	&17.1\%	\\
&&(Growth)	& 0.0\%& 0.0\%&-0.3\%& 0.0\%&-0.3\%&-0.1\%&-1.6\%\\
\cline{2-17}
&\multirow{3}{*}{4.0}&NP	&49.3\%	&44.1\%	&39.1\%	&35.5\%	&29.9\%	&24.1\%	&18.6\%	\\
&&DSRS	&49.3\%	&44.0\%	&39.0\%	&35.5\%	&29.5\%	&23.8\%	&17.6\%	\\
&&(Growth)	& 0.0\%&-0.1\%&-0.1\%& 0.0\%&-0.4\%&-0.3\%&-1.0\%\\
\cline{2-17}
&\multirow{3}{*}{8.0}&NP	&49.3\%	&44.1\%	&39.3\%	&35.5\%	&29.6\%	&24.2\%	&19.0\%	\\
&&DSRS	&49.3\%	&44.1\%	&38.9\%	&35.5\%	&29.1\%	&23.8\%	&17.3\%	\\
&&(Growth)	& 0.0\%& 0.0\%&-0.4\%& 0.0\%&-0.5\%&-0.4\%&-1.7\%\\
\cline{1-17}
\multirow{12}{*}{1.00}&\multirow{3}{*}{1.0}&NP	&36.7\%	&32.6\%	&30.0\%	&26.9\%	&23.4\%	&21.0\%	&18.0\%	&16.0\%	&14.0\%	&12.2\%	&10.4\%	&8.8\%	&7.0\%	&5.4\%	\\
&&DSRS	&36.5\%	&32.3\%	&29.8\%	&27.0\%	&23.2\%	&20.9\%	&17.7\%	&15.8\%	&13.7\%	&11.8\%	&10.0\%	&8.8\%	&6.8\%	&4.6\%	\\
&&(Growth)	&-0.2\%&-0.3\%&-0.2\%& 0.1\%&-0.2\%&-0.1\%&-0.3\%&-0.2\%&-0.3\%&-0.4\%&-0.4\%& 0.0\%&-0.2\%&-0.8\%\\
\cline{2-17}
&\multirow{3}{*}{2.0}&NP	&36.8\%	&32.5\%	&29.9\%	&27.1\%	&23.3\%	&21.0\%	&18.1\%	&16.0\%	&14.0\%	&12.2\%	&10.3\%	&8.8\%	&7.1\%	&5.5\%	\\
&&DSRS	&36.6\%	&32.3\%	&29.9\%	&26.8\%	&23.3\%	&21.0\%	&17.7\%	&16.0\%	&13.8\%	&11.9\%	&10.1\%	&8.7\%	&6.7\%	&4.4\%	\\
&&(Growth)	&-0.2\%&-0.2\%& 0.0\%&-0.3\%& 0.0\%& 0.0\%&-0.4\%& 0.0\%&-0.2\%&-0.3\%&-0.2\%&-0.1\%&-0.4\%&-1.1\%\\
\cline{2-17}
&\multirow{3}{*}{4.0}&NP	&36.6\%	&32.5\%	&29.9\%	&27.0\%	&23.2\%	&21.0\%	&18.0\%	&16.1\%	&14.0\%	&12.2\%	&10.3\%	&9.0\%	&7.2\%	&5.3\%	\\
&&DSRS	&36.6\%	&32.3\%	&29.8\%	&26.9\%	&23.3\%	&20.9\%	&17.8\%	&15.7\%	&13.9\%	&11.9\%	&10.2\%	&8.8\%	&6.7\%	&4.8\%	\\
&&(Growth)	& 0.0\%&-0.2\%&-0.1\%&-0.1\%& 0.1\%&-0.1\%&-0.2\%&-0.4\%&-0.1\%&-0.3\%&-0.1\%&-0.2\%&-0.5\%&-0.5\%\\
\cline{2-17}
&\multirow{3}{*}{8.0}&NP	&36.7\%	&32.6\%	&29.8\%	&27.0\%	&23.3\%	&21.0\%	&17.9\%	&16.1\%	&13.8\%	&12.2\%	&10.3\%	&9.0\%	&7.1\%	&5.1\%	\\
&&DSRS	&36.6\%	&32.1\%	&29.8\%	&26.9\%	&23.2\%	&20.9\%	&17.6\%	&15.8\%	&13.6\%	&11.8\%	&10.1\%	&8.3\%	&6.7\%	&4.8\%	\\
&&(Growth)	&-0.1\%&-0.5\%& 0.0\%&-0.1\%&-0.1\%&-0.1\%&-0.3\%&-0.3\%&-0.2\%&-0.4\%&-0.2\%&-0.7\%&-0.4\%&-0.3\%\\

\bottomrule 
\end{tabular}
}
\end{table}

\begin{table}[htbp!] 
\centering
\caption{Full experimental results for certified accuracy, SmoothMix, EGG, CIFAR10}\label{full_EGG_mix_CIFAR10}
\resizebox{1.0\linewidth}{!}{%
\begin{tabular}{ccccc cccc ccccc cccc} 
\toprule
\multirow{2}{*}{$\sigma$} & \multirow{2}{*}{$\eta$} & Certification &\multicolumn{14}{c}{Certified accuracy at $r$}\\
\cline{4-17}
& &method &0.25  &  0.50  &  0.75  &  1.00  &  1.25  &  1.50  &  1.75  &  2.00  &  2.25  &  2.50  &  2.75  &  3.00  &  3.25  &  3.50 \\
\hline
\multirow{12}{*}{0.25}&\multirow{3}{*}{1.0}&NP	&63.3\%	&49.8\%	&27.4\%	\\
&&DSRS	&63.7\%	&53.8\%	&32.4\%	\\
&&(Growth)	& 0.4\%& 4.0\%& 5.0\%\\
\cline{2-17}
&\multirow{3}{*}{2.0}&NP	&63.8\%	&53.3\%	&38.1\%	\\
&&DSRS	&64.5\%	&55.0\%	&40.8\%	\\
&&(Growth)	& 0.7\%& 1.7\%& 2.7\%\\
\cline{2-17}
&\multirow{3}{*}{4.0}&NP	&64.2\%	&55.1\%	&42.0\%	\\
&&DSRS	&64.4\%	&55.5\%	&43.0\%	\\
&&(Growth)	& 0.2\%& 0.4\%& 1.0\%\\
\cline{2-17}
&\multirow{3}{*}{8.0}&NP	&64.6\%	&55.5\%	&43.7\%	\\
&&DSRS	&64.7\%	&55.7\%	&43.9\%	\\
&&(Growth)	& 0.1\%& 0.2\%& 0.2\%\\
\cline{1-17}
\multirow{12}{*}{0.50}&\multirow{3}{*}{1.0}&NP	&53.0\%	&46.7\%	&38.8\%	&30.1\%	&22.2\%	&12.7\%	\\
&&DSRS	&53.3\%	&47.7\%	&40.9\%	&34.3\%	&26.6\%	&16.5\%	\\
&&(Growth)	& 0.3\%& 1.0\%& 2.1\%& 4.2\%& 4.4\%& 3.8\%\\
\cline{2-17}
&\multirow{3}{*}{2.0}&NP	&53.2\%	&47.5\%	&40.3\%	&34.0\%	&26.6\%	&19.4\%	&9.6\%	\\
&&DSRS	&53.3\%	&48.1\%	&41.7\%	&35.6\%	&28.9\%	&21.2\%	&11.4\%	\\
&&(Growth)	& 0.1\%& 0.6\%& 1.4\%& 1.6\%& 2.3\%& 1.8\%& 1.8\%\\
\cline{2-17}
&\multirow{3}{*}{4.0}&NP	&53.3\%	&48.0\%	&41.3\%	&35.1\%	&29.0\%	&22.6\%	&15.5\%	\\
&&DSRS	&53.3\%	&48.1\%	&41.9\%	&35.9\%	&29.5\%	&23.4\%	&14.3\%	\\
&&(Growth)	& 0.0\%& 0.1\%& 0.6\%& 0.8\%& 0.5\%& 0.8\%&-1.2\%\\
\cline{2-17}
&\multirow{3}{*}{8.0}&NP	&53.3\%	&48.2\%	&41.6\%	&35.8\%	&29.5\%	&23.9\%	&17.5\%	\\
&&DSRS	&53.3\%	&48.3\%	&42.0\%	&36.2\%	&30.1\%	&24.3\%	&15.6\%	\\
&&(Growth)	& 0.0\%& 0.1\%& 0.4\%& 0.4\%& 0.6\%& 0.4\%&-1.9\%\\
\cline{1-17}
\multirow{12}{*}{1.00}&\multirow{3}{*}{1.0}&NP	&43.3\%	&39.3\%	&33.2\%	&27.9\%	&22.7\%	&18.2\%	&15.1\%	&10.9\%	&7.4\%	&3.5\%	&1.6\%	&0.8\%	&0.1\%	\\
&&DSRS	&43.5\%	&39.6\%	&34.5\%	&29.1\%	&24.8\%	&21.1\%	&17.0\%	&14.2\%	&10.5\%	&7.7\%	&4.0\%	&1.5\%	&0.1\%	\\
&&(Growth)	& 0.2\%& 0.3\%& 1.3\%& 1.2\%& 2.1\%& 2.9\%& 1.9\%& 3.3\%& 3.1\%& 4.2\%& 2.4\%& 0.7\%& 0.0\%\\
\cline{2-17}
&\multirow{3}{*}{2.0}&NP	&43.2\%	&39.4\%	&33.9\%	&29.2\%	&24.1\%	&20.5\%	&16.8\%	&14.1\%	&10.4\%	&7.9\%	&4.8\%	&2.0\%	&1.3\%	&0.3\%	\\
&&DSRS	&43.1\%	&39.6\%	&34.3\%	&29.3\%	&24.8\%	&21.3\%	&18.0\%	&15.2\%	&12.3\%	&9.7\%	&6.4\%	&3.7\%	&1.4\%	&0.4\%	\\
&&(Growth)	&-0.1\%& 0.2\%& 0.4\%& 0.1\%& 0.7\%& 0.8\%& 1.2\%& 1.1\%& 1.9\%& 1.8\%& 1.6\%& 1.7\%& 0.1\%& 0.1\%\\
\cline{2-17}
&\multirow{3}{*}{4.0}&NP	&43.1\%	&39.5\%	&34.3\%	&29.6\%	&24.5\%	&21.3\%	&17.7\%	&15.0\%	&12.2\%	&9.6\%	&6.7\%	&4.0\%	&2.1\%	&1.3\%	\\
&&DSRS	&42.8\%	&39.6\%	&34.4\%	&29.4\%	&25.2\%	&21.8\%	&18.3\%	&15.8\%	&12.8\%	&10.2\%	&7.7\%	&4.6\%	&2.1\%	&0.9\%	\\
&&(Growth)	&-0.3\%& 0.1\%& 0.1\%&-0.2\%& 0.7\%& 0.5\%& 0.6\%& 0.8\%& 0.6\%& 0.6\%& 1.0\%& 0.6\%& 0.0\%&-0.4\%\\
\cline{2-17}
&\multirow{3}{*}{8.0}&NP	&43.1\%	&39.5\%	&34.1\%	&29.8\%	&24.9\%	&21.7\%	&18.4\%	&15.6\%	&12.9\%	&10.1\%	&8.0\%	&5.6\%	&3.2\%	&1.6\%	\\
&&DSRS	&42.9\%	&39.5\%	&34.3\%	&29.8\%	&25.2\%	&21.9\%	&18.6\%	&15.8\%	&13.2\%	&10.6\%	&8.0\%	&5.4\%	&2.7\%	&1.3\%	\\
&&(Growth)	&-0.2\%& 0.0\%& 0.2\%& 0.0\%& 0.3\%& 0.2\%& 0.2\%& 0.2\%& 0.3\%& 0.5\%& 0.0\%&-0.2\%&-0.5\%&-0.3\%\\

\bottomrule 
\end{tabular}
}
\end{table}

\begin{table}[htbp!] 
\centering
\caption{Full experimental results for certified accuracy, SmoothMix, ESG, CIFAR10}\label{full_ESG_mix_CIFAR10}
\resizebox{1.0\linewidth}{!}{%
\begin{tabular}{ccccc cccc ccccc cccc} 
\toprule 
\multirow{2}{*}{$\sigma$} & \multirow{2}{*}{$\eta$} & Certification &\multicolumn{14}{c}{Certified accuracy at $r$}\\
\cline{4-17}
& &method &0.25  &  0.50  &  0.75  &  1.00  &  1.25  &  1.50  &  1.75  &  2.00  &  2.25  &  2.50  &  2.75  &  3.00  &  3.25  &  3.50 \\
\hline
\multirow{12}{*}{0.25}&\multirow{3}{*}{1.0}&NP	&64.8\%	&56.5\%	&46.7\%	\\
&&DSRS	&64.6\%	&56.5\%	&46.5\%	\\
&&(Growth)	&-0.2\%& 0.0\%&-0.2\%\\
\cline{2-17}
&\multirow{3}{*}{2.0}&NP	&64.8\%	&56.5\%	&46.7\%	\\
&&DSRS	&64.7\%	&56.3\%	&45.8\%	\\
&&(Growth)	&-0.1\%&-0.2\%&-0.9\%\\
\cline{2-17}
&\multirow{3}{*}{4.0}&NP	&64.8\%	&56.4\%	&46.8\%	\\
&&DSRS	&64.6\%	&56.3\%	&46.0\%	\\
&&(Growth)	&-0.2\%&-0.1\%&-0.8\%\\
\cline{2-17}
&\multirow{3}{*}{8.0}&NP	&64.8\%	&56.5\%	&46.9\%	\\
&&DSRS	&64.6\%	&56.1\%	&46.5\%	\\
&&(Growth)	&-0.2\%&-0.4\%&-0.4\%\\
\cline{1-17}
\multirow{12}{*}{0.50}&\multirow{3}{*}{1.0}&NP	&53.3\%	&48.3\%	&42.1\%	&36.7\%	&31.6\%	&26.2\%	&20.1\%	\\
&&DSRS	&53.3\%	&48.3\%	&42.0\%	&36.6\%	&31.3\%	&25.6\%	&18.1\%	\\
&&(Growth)	& 0.0\%& 0.0\%&-0.1\%&-0.1\%&-0.3\%&-0.6\%&-2.0\%\\
\cline{2-17}
&\multirow{3}{*}{2.0}&NP	&53.2\%	&48.3\%	&42.2\%	&36.7\%	&31.7\%	&26.0\%	&20.4\%	\\
&&DSRS	&53.3\%	&48.3\%	&42.0\%	&36.6\%	&31.2\%	&25.6\%	&17.6\%	\\
&&(Growth)	& 0.1\%& 0.0\%&-0.2\%&-0.1\%&-0.5\%&-0.4\%&-2.8\%\\
\cline{2-17}
&\multirow{3}{*}{4.0}&NP	&53.3\%	&48.3\%	&42.1\%	&36.6\%	&31.6\%	&26.1\%	&20.3\%	\\
&&DSRS	&53.1\%	&48.3\%	&42.0\%	&36.5\%	&31.0\%	&25.6\%	&18.3\%	\\
&&(Growth)	&-0.2\%& 0.0\%&-0.1\%&-0.1\%&-0.6\%&-0.5\%&-2.0\%\\
\cline{2-17}
&\multirow{3}{*}{8.0}&NP	&53.3\%	&48.3\%	&42.0\%	&36.5\%	&31.6\%	&26.3\%	&20.2\%	\\
&&DSRS	&53.3\%	&48.3\%	&42.0\%	&36.4\%	&31.1\%	&25.5\%	&18.5\%	\\
&&(Growth)	& 0.0\%& 0.0\%& 0.0\%&-0.1\%&-0.5\%&-0.8\%&-1.7\%\\
\cline{1-17}
\multirow{12}{*}{1.00}&\multirow{3}{*}{1.0}&NP	&43.1\%	&39.6\%	&34.4\%	&30.2\%	&25.4\%	&22.2\%	&19.0\%	&16.3\%	&13.7\%	&11.6\%	&9.1\%	&6.7\%	&4.9\%	&2.3\%	\\
&&DSRS	&42.8\%	&39.5\%	&34.4\%	&29.9\%	&25.3\%	&22.0\%	&19.1\%	&16.2\%	&13.4\%	&11.4\%	&8.9\%	&6.3\%	&4.0\%	&1.8\%	\\
&&(Growth)	&-0.3\%&-0.1\%& 0.0\%&-0.3\%&-0.1\%&-0.2\%& 0.1\%&-0.1\%&-0.3\%&-0.2\%&-0.2\%&-0.4\%&-0.9\%&-0.5\%\\
\cline{2-17}
&\multirow{3}{*}{2.0}&NP	&43.0\%	&39.7\%	&34.4\%	&30.0\%	&25.5\%	&22.5\%	&19.0\%	&16.2\%	&13.6\%	&11.5\%	&9.3\%	&6.5\%	&4.6\%	&2.1\%	\\
&&DSRS	&42.9\%	&39.4\%	&34.3\%	&29.8\%	&25.3\%	&22.1\%	&18.9\%	&16.3\%	&13.4\%	&11.6\%	&8.8\%	&6.2\%	&4.0\%	&1.7\%	\\
&&(Growth)	&-0.1\%&-0.3\%&-0.1\%&-0.2\%&-0.2\%&-0.4\%&-0.1\%& 0.1\%&-0.2\%& 0.1\%&-0.5\%&-0.3\%&-0.6\%&-0.4\%\\
\cline{2-17}
&\multirow{3}{*}{4.0}&NP	&43.0\%	&39.7\%	&34.6\%	&30.1\%	&25.5\%	&22.4\%	&19.0\%	&16.3\%	&13.7\%	&11.6\%	&9.4\%	&6.6\%	&4.8\%	&2.4\%	\\
&&DSRS	&42.9\%	&39.5\%	&34.3\%	&29.8\%	&25.4\%	&22.1\%	&19.0\%	&16.2\%	&13.6\%	&11.5\%	&9.2\%	&6.0\%	&3.9\%	&1.8\%	\\
&&(Growth)	&-0.1\%&-0.2\%&-0.3\%&-0.3\%&-0.1\%&-0.3\%& 0.0\%&-0.1\%&-0.1\%&-0.1\%&-0.2\%&-0.6\%&-0.9\%&-0.6\%\\
\cline{2-17}
&\multirow{3}{*}{8.0}&NP	&43.0\%	&39.7\%	&34.3\%	&30.0\%	&25.5\%	&22.3\%	&19.1\%	&16.2\%	&13.5\%	&11.5\%	&9.2\%	&6.5\%	&4.7\%	&2.5\%	\\
&&DSRS	&42.7\%	&39.4\%	&34.3\%	&29.8\%	&25.3\%	&22.3\%	&18.9\%	&16.3\%	&13.5\%	&11.3\%	&9.0\%	&6.2\%	&4.1\%	&1.6\%	\\
&&(Growth)	&-0.3\%&-0.3\%& 0.0\%&-0.2\%&-0.2\%& 0.0\%&-0.2\%& 0.1\%& 0.0\%&-0.2\%&-0.2\%&-0.3\%&-0.6\%&-0.9\%\\

\bottomrule 
\end{tabular}
}
\end{table}

\clearpage
\subsection{Supplemental figures for StdAug-GGS models}\label{appfigstdggs}
\begin{figure}[htbp!]
  \centering
  \includegraphics[height=7in, width=5.5in]{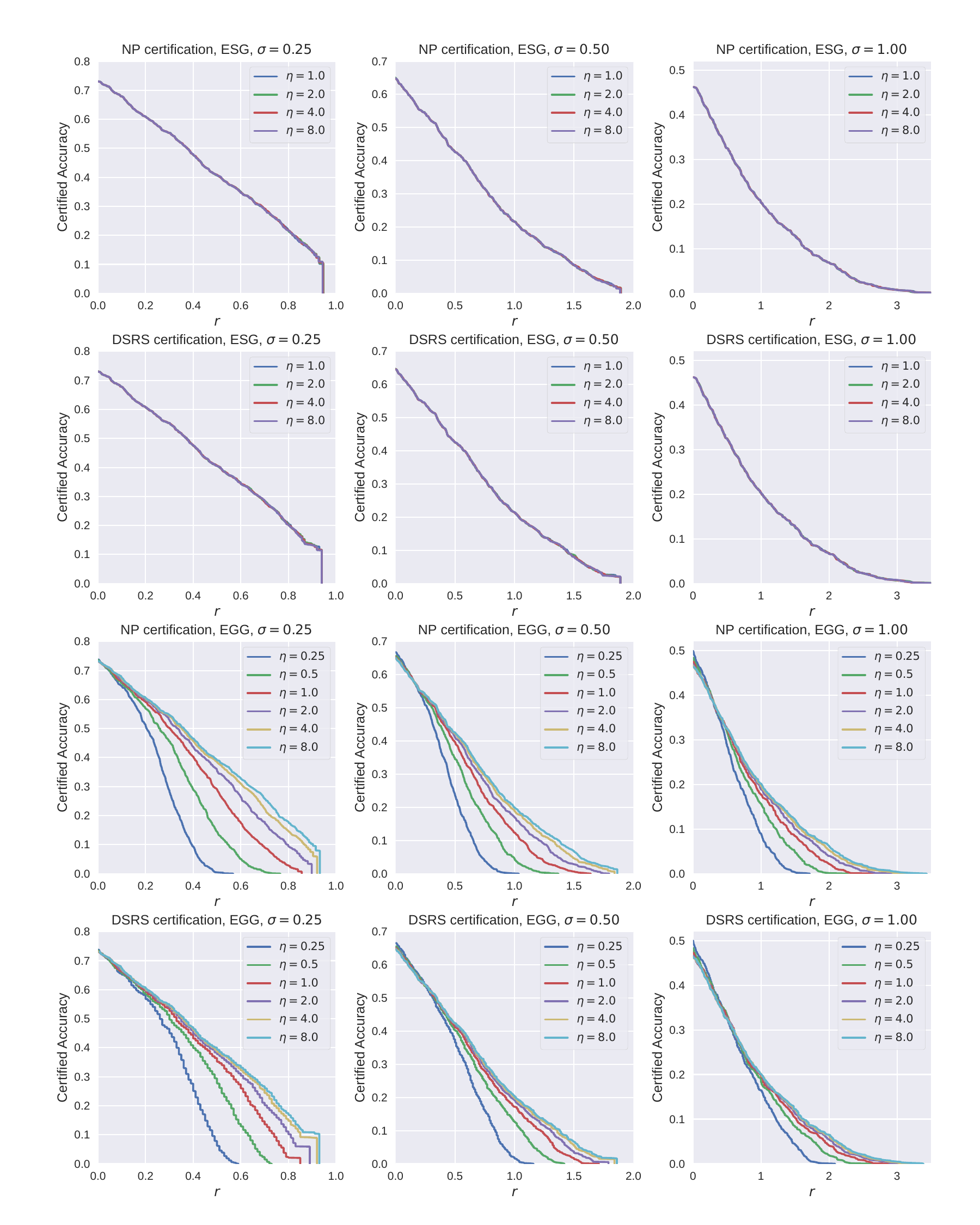}
  \caption{Certified accuracy of $\ell_2$ for standardly augmented models, on CIFAR-10 by General Gaussian, $k=1530$.}
 \label{figcohenCIFAR}
\end{figure}

\clearpage
\begin{figure}
  \centering
  \includegraphics[height=7in, width=5.5in]{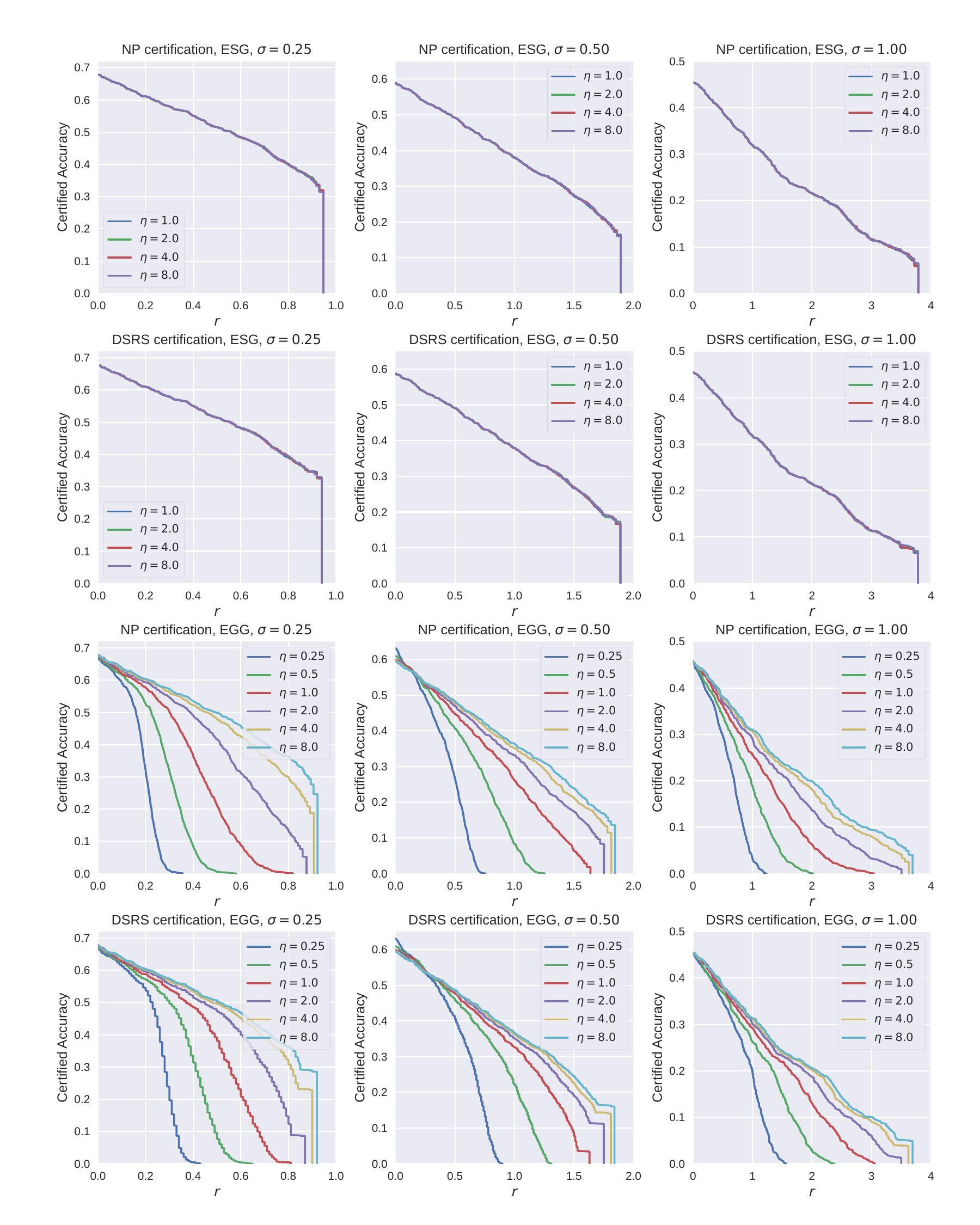}
  \caption{Certified accuracy of $\ell_2$ for standardly augmented models, on ImageNet by General Gaussian, $k=75260$.}
  \label{figcohenimgnet}
\end{figure}

\clearpage
\subsection{Supplemental figures for Consistency-GGS and SmoothMix-GGS models}\label{appfigcs}
\begin{figure}[htbp!]
 \centering
  \includegraphics[height=7in, width=5.5in]{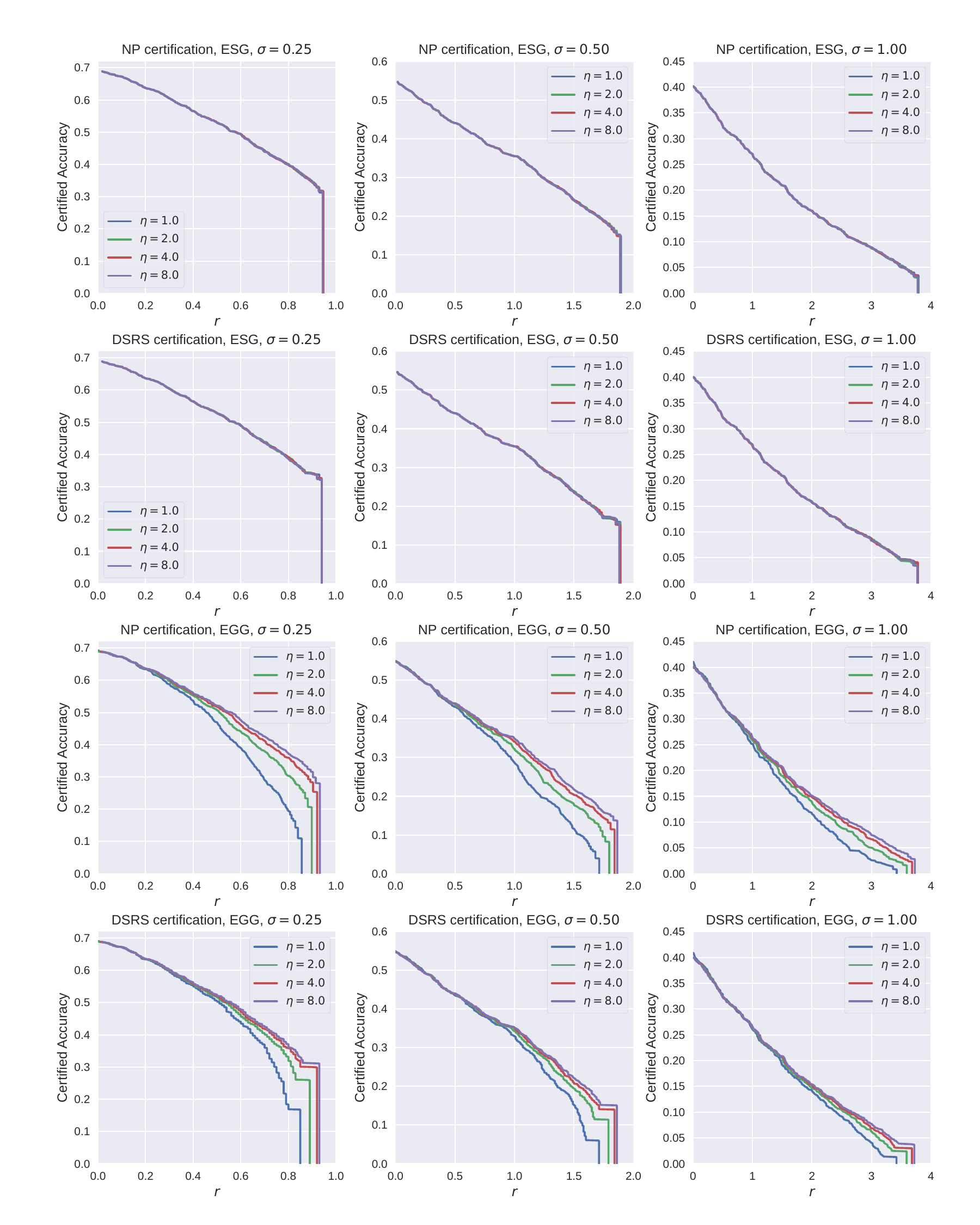}
  \caption{Certified accuracy of $\ell_2$ for Consistency models, augmented on CIFAR-10 by General Gaussian, $k=1530$.}
  \label{figconsis}
\end{figure}

\begin{figure}
  \centering
  \includegraphics[height=7in, width=5.5in]{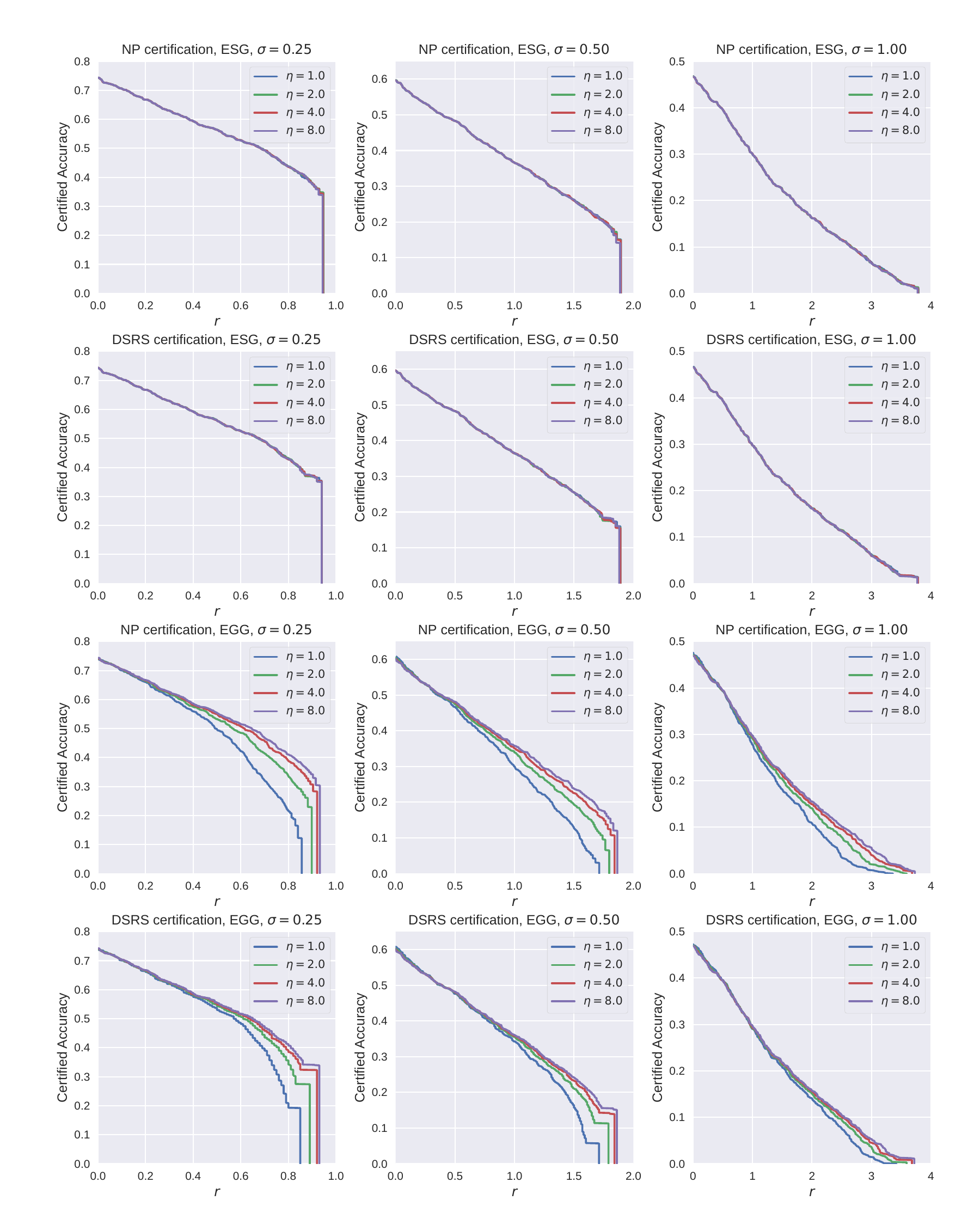}
  \caption{Certified accuracy of $\ell_2$ for SmoothMix models, augmented on CIFAR-10 by General Gaussian, $k=1530$.}
 \label{figmix}
\end{figure}

\end{document}